\newcommand{\@BIBLABEL}{\@emptybiblabel}
\newcommand{\@emptybiblabel}[1]{}
\newcommand{\lefttriangle}[3]{
\draw[anchor=mid] (0, 0) node[above] {#1} -- ++(-0.5, -0.5) -- ++(0.5, 0) -- cycle;
\draw[anchor=mid] (-0.35, -0.75) node[anchor=mid east] {#2} +(0.2, 0) node[anchor=mid west] {#3};
}
\newcommand{\righttriangle}[3]{
\draw[anchor=mid] (0, 0) node[above] {#1} -- ++(0.5, -0.5) -- ++(-0.5, 0) -- cycle;
\draw[anchor=mid] (0.15, -0.75) node[anchor=mid east] {#2} +(0.2, 0) node[anchor=mid west] {#3};
}
\newcommand{\lefttriangleinlinebottom}[1]{
\begin{scope}[scale=0.9,thick]
 \draw[anchor=mid] (0, 0) -- ++(-0.5, -0.5) -- ++(0.5, 0) -- cycle;
 \node[anchor=mid west] at (-0.15, -0.75) {#1};
\end{scope}
}
\newcommand{\righttriangleinline}[1]{
\begin{scope}[scale=0.9,thick]
\draw[anchor=mid] (0, 0) node[above] {#1} -- ++(0.5, -0.5) -- ++(-0.5, 0) -- cycle; 
\end{scope}
}
\newcommand{\righttriangleinlinebottom}[1]{
\begin{scope}[scale=0.9,thick]
 \draw[anchor=mid] (0, 0) -- ++(0.5, -0.5) -- ++(-0.5, 0) -- cycle;
 \node[anchor=mid east] at (0.15, -0.75) {#1};
\end{scope}
}
\newcommand{\predrectangle}[4]{
\draw (0, 0) rectangle (0.75, -0.5);
\draw[->, >=stealth'] (1.5, -0.5) node [above right] {#4} arc (0:180:0.5cm);
\draw[anchor=mid] (0, -0.75) node {#1} ++(0.75, 0) node {#2} +(0.75, 0) node {#3};
}
\newcommand{\predrectangleinline}{
\begin{scope}[scale=0.8,thick]
\draw (0, 0) rectangle (0.75, -0.5);
\draw[->, >=stealth'] (1.5, -0.5) arc (0:180:0.5cm);
\end{scope} 
}
\newcommand{\predright}[5]{
\draw (0, 0) node [above] {#1} -- ++(0.75, -0.5)-- ++(-0.75, 0) -- cycle;
\draw[->, >=stealth'] (0.25, -0.5) arc (180:0:0.6cm) node [above right] {#5};
\draw[anchor=mid] (0, -0.75) node {#2} ++(0.75, 0) node {#3} +(0.75, 0) node {#4};
}
\newcommand{\predrightinline}{
\begin{scope}[scale=0.8,thick]
\draw (0, 0) -- ++(0.75, -0.5)-- ++(-0.75, 0) -- cycle;
\draw[->, >=stealth'] (0.25, -0.5) arc (180:0:0.6cm);
\end{scope}
}
\newcommand{\headtriangle}[4]{
\draw (0, 0) node [above] {#1}-- ++(-0.5, -0.5) -- ++(1.0, 0) -- cycle;
\draw[anchor=mid] (-0.5, -0.75) node {#2} ++(0.5, 0) node {#3} +(0.5, 0) node {#4};
}
\newcommand{\headtriangleinline}[1]{
\begin{scope}[scale=0.8,thick]
 \draw (0, 0) node [above] {#1}-- ++(-0.5, -0.5) -- ++(1.0, 0) -- cycle;
\end{scope}
}
\newcommand{\headtriangleinlinebottom}[1]{
\begin{scope}[scale=0.8,thick]
 \draw (0, 0) -- ++(-0.5, -0.5) -- ++(1.0, 0) -- cycle;
 \node at (0, -0.8) {#1};
\end{scope}
}
\newcommand{\headtriangleinlinebottomfull}[3]{
\begin{scope}[scale=0.8,thick]
 \draw (0, 0) -- ++(-0.5, -0.5) -- ++(1.0, 0) -- cycle;
 \draw[anchor=mid] (-0.6, -0.8) node {#1} ++(0.6, 0) node {#2} +(0.6, 0) node {#3};
\end{scope}
}
\newcommand{\halfrectangle}[4]{
\draw (0.75, 0) node [above] {$b$} -- ++(-0.75, 0) -- ++(0, -0.5) -- ++(0.75, 0);
\draw[densely dashed] (0.75, 0) -- ++(0, -0.5);
\draw[->, >=stealth'] (1.5, -0.5) node [above right] {#4} .. controls (1.45, 0) and (0.95, 0) .. +(-0.75, 0);
\draw[anchor=mid] (0, -0.75) node {#1} ++(0.75, 0) node {#2} +(0.75, 0) node {#3};
}
\newcommand{\halfrectangleinline}{
\begin{scope}[scale=0.8,thick]
\draw (0.75, 0) -- ++(-0.75, 0) -- ++(0, -0.5) -- ++(0.75, 0);
\draw[densely dashed] (0.75, 0) -- ++(0, -0.5);
\draw[->, >=stealth'] (1.5, -0.5) .. controls (1.45, 0) and (0.95, 0) .. +(-0.75, 0); 
\end{scope}
}
\newcommand{\halfright}[5]{
\draw[->, >=stealth'] (0.25, -0.5) arc (180:0:0.6cm) node [above right] {#5};
\draw[->, >=stealth'] (1.45, -0.5) .. controls (1.45, 0) and (0.95, 0) .. +(-0.7, 0);
\draw (0.75, -0.2) node [above=7pt] {$b$} -- ++(-0.75, 0.2) node [above] {#1} -- ++(0, -0.5) -- ++(0.75, 0);
\draw[densely dashed] (0.75, -0.2) -- ++(0, -0.3);
\draw[anchor=mid] (0, -0.75) node {#2} ++(0.75, 0) node {#3} +(0.75, 0) node {#4};
}
\newcommand{\lefttrape}[3]{
\draw[anchor=mid] (0, 0) node[above] {#1} -- ++(-0.5, -0.25) -- ++(0, -0.25) -- ++(0.5, 0) -- cycle;
\draw[anchor=mid] (-0.35, -0.75) node[anchor=mid east] {#2} +(0.2, 0) node[anchor=mid west] {#3};
}
\newcommand{\righttrape}[3]{
\draw[anchor=mid] (0, 0) node[above] {#1} -- ++(0.5, -0.25) -- ++(0, -0.25) -- ++(-0.5, 0) -- cycle;
\draw[anchor=mid] (0.15, -0.75) node[anchor=mid east] {#2} +(0.2, 0) node[anchor=mid west] {#3};
}
\newcommand{\lefttrapeinline}[2]{
\begin{scope}[scale=0.9,thick]
\draw[anchor=mid] (0, 0) -- ++(-0.5, -0.25) -- ++(0, -0.25) -- ++(0.5, 0) -- cycle;
\draw[anchor=mid] (-0.35, -0.75) node[anchor=mid east] {#1} +(0.2, 0) node[anchor=mid west] {#2};
\end{scope}
}
\newtheorem{newdef}{Definition}[chapter]
\newtheorem{newlemma}{Lemma}[chapter]
\newtheorem{mytheorem}{Theorem}[chapter]
\newcommand*{\Ja}[1]{%
\begin{CJK}{UTF8}{min}#1\end{CJK}}
\newcommand{\REVISE}[1]{#1}
\begin{document}
\title{Left-corner Methods for Syntactic Modeling\\
with Universal Structural Constraints}
\author{Hiroshi Noji}

\beforepreface

\prefacesection{Abstract}

Explaining the syntactic variation and universals including the constraints on that variation across languages in the world is essential both from a theoretical and practical point of view.
It is in fact one of the main goals in linguistics.
In computational linguistics, these kinds of syntactic regularities and constraints could be utilized as prior knowledge about grammars, which would be valuable for improving the performance of various syntax-oriented systems such as parsers or grammar induction systems.
This thesis is about such syntactic universals.

The primary goal in this thesis is to identify better syntactic constraint or bias, that is language independent but also efficiently exploitable during sentence processing.
We focus on a particular syntactic construction called center-embedding, which is well studied in psycholinguistics and noted to cause particular difficulty for comprehension.
Since people use language as a tool for communication, one expects such complex constructions to be avoided for communication efficiency.
From a computational perspective, center-embedding is closely relevant to a {\it left-corner} parsing algorithm, which can capture the degree of center-embedding of a parse tree being constructed.
This connection suggests left-corner methods can be a tool to exploit the universal syntactic constraint that people avoid generating center-embedded structures.
We explore such utilities of center-embedding as well as left-corner methods extensively through several theoretical and empirical examinations.

We base our analysis on dependency syntax.
This is because our focus in this thesis is the language universality.
Now the number of available dependency treebanks are growing rapidly compared to the treebanks of phrase-structure grammars thanks to the recent standardization efforts of dependency treebanks across languages, such as the Universal Dependencies project.
We use these resources, consisting of more than 20 treebanks, which enable us to examine the universality of particular language phenomena, as we pursue in this thesis.

First, we quantitatively examine the universality of center-embedding avoidance using a collection of dependency treebanks.
Previous studies on center-embedding in psycholinguistics have been limited to behavioral studies focusing on particular languages or sentences.
Our study contrasts with these previous studies, and provides the first quantitative results on center-embedding avoidance.
Along with these experiments, we provide a parser that can capture the degree of center-embedding of a {\it dependency} tree being built, by extending a left-corner parsing algorithm for dependency grammars.
The main empirical finding in this study is that center-embedding is in fact a rare phenomenon across languages.
This result also suggests a left-corner parser could be utilized as a tool exploiting the universal syntactic constraints in languages.

We then explore such utility of a left-corner parser in the application of unsupervised grammar induction.
In this task, the input to the algorithm is a collection of sentences, from which the model tries to extract the salient patterns on them as a grammar.
This is a particularly hard problem although we expect the universal constraint may help in improving the performance since it can effectively restrict the possible search space for the model.
We build the model by extending the left-corner parsing algorithm for efficiently tabulating the search space except those involving center-embedding up to a specific degree.
Again, we examine the effectiveness of our approach on many treebanks, and demonstrate that often our constraint leads to better parsing performance.
We thus conclude that left-corner methods are particularly useful for syntax-oriented systems, as it can exploit efficiently the inherent universal constraints in languages.

\prefacesection{Acknowledgments}

I could not have finished writing this thesis without the support of many people.

I am greatly indebted to my advisor, Professor Yusuke Miyao for his continuous help and support throughout my PhD.
He was always giving me many constructive and encouraging suggestions when I had trouble in writing a paper, preparing presentation slides, interpreting experimental results, and so on, which cannot be listed in this space.
He is arguably the best advisor I have met so far, and also is an excellent research role model for me.

I would like to thank my thesis committee, Professor Hiroshi Nakagawa, Professor Edson Miyamoto, Professor Makoto Kanazawa, and Professor Daichi Mochihashi.
They all have different backgrounds, and gave me very valuable and insightful comments from their own perspectives.
Professor Daichi Mochihashi guided me in my masters into the research field of unsupervised learning, which was my starting point of this work focusing on unsupervised grammar induction.

My first advisor in my masters in University of Tokyo, Professor Kumiko Tanaka-Ishii, introduced me to the research filed of computational linguistics.
Though I could not interact with her in the last three years in my PhD, her advices to me had great impacts on my thinking on research.
The fundamental idea behind this thesis, exploring the universal property of language from a computational perspective, was arguably the one inspired by her philosophy on computational linguistics.

I am grateful to Mark Johnson for hosting me in total three times (!) as a visitor in Macquarie University.
Regular meetings with him were always exciting, and have sharpen my thinking on the tasks of unsupervised learning.
It was really my fortunate to have collaboration with a great researcher as you during my PhD.

As a member in Miyao lab, I had opportunities to interact with many great researchers in NII.
Special thanks to Takuya Matsuzaki, who was always welcoming informal discussion in his room, and I remember that the core research idea on this thesis, left-corner parsing, has been came up with during a conversation with him.
Pascual Mart\'{i}nez-G\'{o}mez, Yuka Tateishi, and Sumire Uematsu always gave me many constructive comments especially in my practices on presentations.
I am grateful to Bevan Johns, who proofread a part of my first draft of this thesis, and to Pontus Stenetorp, who carefully read and gave comments on my draft of the COLING paper.

I am also grateful to my lab mates, especially Han Dan and Sho Hoshino, as well as intern students to NII:
Ying Xu, who always cheered me up, encouraged me to come to the lab early in every morning, and also helped me a lot in Beijing during ACL conference; and 
Le Quang Thang, whom I really enjoyed mentoring in our supervised parsing works.


During my research life in NII, the secretaries in Miyao lab, Keiko Kigoshi and Yuki Amano, not only greatly help in many office works, but also are a few connection to the outside of the research.
I was able to continue research in NII thanks to their kind considerations.

Many thanks to my colleagues and friends at Macquarie University, especially to Dat Quoc Nguyen, Kinzang Chhogyal, and Anish Kumar.
I have enjoyed daily lunch and pub nights with you, with which I was able to stay sane without a lot of stress in my first long-term stay abroad.
I am also indebted to John Pate, who helps me a lot at the beginning of my study on unsupervised grammar induction.

Finally, I would like to thank my parents, my sister, and my brother.
Thank you for your infinite supports through my life, and thank you for making me who I am.

\afterpreface

\chapter{Introduction}
\label{chap:1}

Explaining the syntactic variation and universals including the constraints on that variation across languages in the world is essential both from a theoretical and practical point of view.
It is in fact one of the main goals in linguistics \cite{Greenberg-1963,Dryer-1992,Evans2009}.
In computational linguistics, these kinds of syntactic regularities and constraints could be utilized as prior knowledge about grammars, which would be valuable for improving the performance of various syntax-oriented systems such as parsers or grammar induction systems, e..g, by being encoded as {\it features} in a system \cite{collins-thesis1999,McDonald:2005:NDP:1220575.1220641}.
This thesis is about such syntactic universals.

Our goal in this thesis is to identify a good syntactic constraint that fits well to the natural language sentences and thus could be exploited to improve the performance of syntax-oriented systems such as parsers.
For this end, we pick up a well known linguistic phenomenon that might be universal across languages, empirically examine its language universality across diverse languages using cross-linguistic datasets, and present computational experiments to demonstrate its utility in a real application.
Along with this, we also define several computational algorithms that efficiently exploit the constraint during sentence processing.
For an application, we show that our constraint will help in the task of unsupervised syntactic parsing, or grammar induction where the goal is to find salient syntactic patterns without explicit supervision about grammars.

In linguistics, one pervasive hypothesis about the origin of such syntactic constraints is that they come from the limitations on the human cognitive mechanism and pressures associated with language acquisition and use \cite{jaeger2011language-gsc,Fedzechkina30102012}.
In other words, since the language is a tool for communication, it is natural to assume that its shape has been formed to increase the daily communication efficiency or the learnability for language learners.
The underlying commonalities in diverse languages are then understood as the outcome of such pressures that every language user might naturally suffer from.
Our focused constraint in this thesis also has its origin in the restriction of the human ability of comprehension observed in several psycholinguistic experiments, which we introduce next.

\paragraph{Center-embedding}
It is well known in the psycholinguistic literature that a nested, or center-embedded structure is particularly difficult for compherension:
\enumsentence{\# The reporter [who the senator [who Mary met attacked] ignored] the president.} \label{sent:intro:embedding}
\noindent
This sentence is called center-embedding by its syntactic construction indicated with brackets.
This observation will be the starting point of the current study.
The difficulty of center-embedded structures has been testified across a number of languages including English \cite{Gibson2000The-dependency-,Chen2005144} and Japanese \cite{COGS:COGS1067}.
Compared to these behavioral studies, the current study aims to characterize the phenomenon of center-embedding from {\it computational} and quantitative perspectives.
For instance, one significance of the current study is to show that center-embedding is in fact a rarely observed syntactic phenomenon across a variety of languages.
We verify this fact using syntactically annotated corpora, i.e., treebanks of more than 20 languages.

\paragraph{Left-corner}
Another important concept in this thesis is {\it left-corner} parsing.
A left-corner parser parses an input sentence on a {\it stack};
the distinguished property of it is that its stack depth increases only when generating, or accepting center-embedded structures.
These formal properties of left-corner parsers were studied more than 20 years ago \cite{abney91memory,conf/coling/Resnik92} although until now there exists little study concerning its empirical behaviors as well as its potential for a device to exploit syntactic regularities of languages as we investigate in this thesis.
One previous attempt for utilizing a left-corner parser in a practical application is Johnson's linear-time tabular parsing by approximating the state space of a parser by a finite state machine.
However, this trial was not successful \cite{conf/acl/Johnson98}.\footnote{The idea is that since a left-corner parser can recognize most of (English) sentences within a limited stack depth bound, e.g., 3, the number of possible stack configurations will be constant and we may construct a finite state machine for a given context-free grammar. However in practice, the grammar constant for this algorithm gets much larger, leading to $O(n^3)$ asymptotic runtime, the same as the ordinary parsing method, e.g., CKY.}

\paragraph{Dependency}
Our empirical examinations listed above will be based on the syntactic representation called {\it dependency} structures.
In computational linguistics, constituent structures have long played a central role as a representation of natural language syntax \cite{Stolcke:1995:EPC:211190.211197,P97-1003,J98-4004,klein-manning:2003:ACL} although this situation has been changed and the recent trend in the parsing community has favored dependency-based representations, which are conceptually simpler and thus often lead to more efficient algorithms \cite{Nivre2003,Yamada03,McDonald:2005:NDP:1220575.1220641,GoldbergTDP13}.
Another important reason for us to focus on this representation is that its {\it unsupervised} induction is more tractable than the constituent representation, such as phrase-structure grammars.
In fact, significant research on unsupervised parsing has been done in this decade though much of it assumes dependency trees as the underlying structure \cite{klein-manning:2004:ACL,smith-eisner-2006-acl-sa,bergkirkpatrick-EtAl:2010:NAACLHLT,marevcek-vzabokrtsky:2012:EMNLP-CoNLL,spitkovsky-alshawi-jurafsky:2013:EMNLP}.
We discuss this computational issue more in the next chapter (Section \ref{sec:2:unsupervised}).

The last, and perhaps the most essential advantage of a dependency representation is its cross-linguistic suitability.
For studying the empirical behavior of some system across a variety of languages, the resources for those languages are essential.
Compared to constituent structures, dependency annotations are available in many corpora covering more than 20 languages across diverse language families.
Each treebank typically contains thousands of sentences with manually parsed syntactic trees.
We use such large datasets to examine our hypotheses about universal properties of languages.
Though the concepts introduced above, center-embedding and left-corner parsing, are both originally defined on constituent structures, we describe in this thesis a method by which they can be extended to dependency structures via a close connection between two representations.

\section{Tasks and Motivations}
\label{sec:1:problem}

More specifically, the tasks we tackle in this thesis can be divided into the following two categories, each of which is based on specific motivations.

\subsection{Examining language universality of center-embedding avoidance}
We first examine the hypothesis that center-embedding is a language phenomenon that every language user tries to avoid {\it regardless} of language.
The quantitative study for this question across diverse languages has not yet been performed.
Two motivations exist for this analysis:
One is rather scientific:
we examine the explanatory power of center-embedding avoidance as a universal grammatical constraint.
This is ambitious though we put more weight on the second, rather practical motivation:
the possibility that avoidance of center-embedding is a good syntactic bias to restrict the space of possible tree structures of natural language sentences.
These analyses are the main topic of Chapter \ref{chap:transition}.

\subsection{Unsupervised grammar induction}
\label{sec:intro:unsupervised}
We then consider applying the constraint with center-embedding into the application of {\it unsupervised grammar induction}.
In this task, the input to the algorithm is a collection of sentences, from which the model tries to extract the salient patterns as a grammar.
This setting contrasts with the more familiar {\it supervised} parsing task in which typically some machine learning algorithm learns the mapping from a sentence to the syntactic tree based on the training examples, i.e., sentences paired with their corresponding parse trees.
In the unsupervised setting, our goal is to obtain a model that can parse a sentence without access to the correct trees for training sentences.
This is a particularly hard problem though we expect the universal syntactic constraint may help in improving the performance since it can effectively restrict the possible search space for the model.

\paragraph{Motivations}
A typical reason to tackle this task is a purely engineering one:
Although the number of languages that we can access to the resource (i.e., treebank) increases, there are still so many languages in the world for which little to no resources are available since the creation of a new treebank from scratch is still very hard and time consuming.
Unsupervised learning of grammars would be helpful for this situation, as it provides a cheap solution without requiring the manual efforts of linguistic experts.
A more realistic setting might be to use the output of an unsupervised system as the initial annotation, which could then be corrected by experts later.
In short, a better unsupervised system can reduce the effort of experts in preparing new treebanks.
This motivation can be held in any efforts of unsupervised grammar induction.

However, as we do in this thesis, the grammar induction with particular syntactic biases or constraintes would also be appealing for the following reasons as well:
\begin{itemize}
 \item We can regard this task as a typical example of more broad problems of learning syntax without explicit supervision.
       An example of such problem is a grounding task, in which the learner induces the model of (intermediate) tree structures that bridge an input sentence and its semantics, which may be represented in one of several different forms, depending on the task and corpus, e.g., the logical expression \cite{Zettlemoyer05learningto,kwiatkowksi-EtAl:2010:EMNLP} and the answer to the given question \cite{liang-jordan-klein:2011:ACL-HLT2011,berant-EtAl:2013:EMNLP,kwiatkowski-EtAl:2013:EMNLP,kushman-EtAl:2014:P14-1}.
       In these tasks, though some implicit supervision is provided, the search space is typically still very large.
       Obtaining a positive result for the current unsupervised learning, we argue, would present an important starting point for extending the current idea into such related grammar induction tasks.
       What type of supervision we should give for those tasks is also still an open problem;
       one possibility is that a good {\it prior} for general natural language syntax, as we investigate here, would reduce the amount of supervision necessary for successful learning.
       Finally, we claim that although the current study focuses on inducing dependency structures, the presented idea, avoiding center-embedding during learning, is general enough and not necessarily restricted to the dependency induction tasks.
       The main reason why we focus on dependency structures is rather computational (see Section \ref{sec:2:unsupervised}), but it may not hold in the grounded learning tasks in the previous works cited above.
       Moreover, recently more sophisticated grammars such as combinatory categorical grammars (CCGs) are shown to be learnable when appropriate light supervision is given as seed knowledge \cite{DBLP:journals/tacl/BiskH13,bisk-hockenmaier:2015:ACL-IJCNLP,AAAI159835}.
       We thus believe that the lesson from the current study will also shed light on those related learning tasks that do not assume dependency trees as the underlying structures.
 \item The final motivation is in the relevance to understanding of child language acquisition.
       Computational modeling of the language acquisition process, in particular using probabilistic models, has gained much attention in recent years \cite{Goldwater200921,kwiatkowski-EtAl:2012:EACL2012,johnson-demuth-frank:2012:ACL2012,doyle-levy:2013:NAACL-HLT}.
       Although many of those works cited above focus on modeling of relatively early acquisition problems, e.g., word segmentation from phonological inputs, some initial studies regarding acquisition mechanism of grammar also exist \cite{TACL504}.

       We argue here that our central motivation is {\it not} to get insights into the language acquisition mechanism although the structural constraint that we consider in this thesis (i.e., center-embedding) originally comes from observation of human sentence processing.
       This is because our problem setting is far from the real language acquisition scenario that a child may undergo.
       There exist many discrepancies between them;
       the most problematic one is found in the input to the learning algorithm.
       For resource reasons, the input sentences to our learning algorithm are largely written texts for adults, e.g, newswires, novels, and blogs.
       This contrasts with the relevant studies cited above on word segmentation in which the input for training is phonological forms of child directed speech, which is, however, available in only a few languages such as English.
       This poses a problem since our interest in this thesis is the language universality of the constraint, which needs many language treebanks to be evaluated.
       Another limitation of the current approach is that every model in this thesis assumes the part-of-speech (POS) of words in a sentence as its input rather than the surface form.
       This simplification makes the problem much simpler and is employed in most previous studies \cite{klein-manning:2004:ACL,smith-eisner-2006-acl-sa,bergkirkpatrick-EtAl:2010:NAACLHLT,DBLP:journals/tacl/BiskH13,grave-elhadad:2015:ACL-IJCNLP}, but it is of course an unrealistic assumption about inputs that children receive.

       Our main claim in this direction is that the success of the current approach would lead to the further study about the connection between the child language acquisition and computational modeling of the acquisition process.
       We leave the remaining discussion about this topic in the conclusion of this thesis.
\end{itemize}

\section{What this thesis is not about}
\label{sec:intro:notabout}
This thesis is not about psycholinguistic study, i.e., we do not attempt to reveal the mechanism of human sentence processing.
Our main purpose in referring to the literature in psycholinguistics is to get insights for the syntactic patterns that every language might share to some extent and thus could be exploited from a system of computational linguistics.
We would be pleased if our findings about the universal constraint affect the thinking of psycholinguists but this is not the main goal of the current thesis.

\section{Contributions}
Our contributions can be divided into the following four parts.
The first two are our conceptual, or algorithmic contributions while the latter two are the contributions of our empirical study.

\begin{description}
 \item[Left-corner dependency parsing algorithm] We show how the idea of left-corner parsing, which was originally developed for recognizing constituent structures, can be extended to dependency structures.
            We formalize this algorithm in the framework of transition-based parsing, a similar device to the pushdown automata often used to describe the parsing algorithm for dependency structures.
            The resulting algorithm has the property that its stack depth captures the degree of center-embedding of the recognizing structure.
 \item[Efficient dynamic programming] We extend this algorithm into the tabular method, i.e., chart parsing, which is necessary to combine the ideas of left-corner parsing and unsupervised grammar induction.
            In particular, we describe how the idea of head splitting \cite{Eisner:1999:EPB:1034678.1034748,eisner-2000-iwptbook}, a technique to reduce the time complexity of chart-based dependency parsing, can be applied in the current setting.
 \item[Evidence on the universality of center-embedding avoidance] We show that center-embedding is a rare construction across languages using treebanks of more than 20 languages. Such large-scale investigation has not been performed before in the literature. Our experiment is composed of two types of complementary analyses: a static, counting-based analysis of treebanks and a supervised parsing experiment to see the effect of the constraint when some amount of parse errors occurs.
 \item[Unsupervised parsing experiments with structural constraints] We finally show that our constraint does improve the performance of unsupervised induction of dependency grammars in many languages.
 \end{description}

\section{Organization of the thesis}
The following chapters are organized as follows.

\begin{itemize}
 \item In Chapter \ref{chap:bg}, we summarize the backgrounds necessary to understand the following chapters of the thesis including several syntactic representations, the EM algorithm for acquiring grammars, and left-corner parsing.
 \item In Chapter \ref{chap:corpora}, we summarize the multilingual corpora we use in our experiments in the following chapters.
 \item Chapter \ref{chap:transition} covers the topics of the first and third contributions in the previous section.
       We first define a tool, i.e., a left-corner parsing algorithm for dependency structures, for our corpus analysis in the remainder of the chapter.
 \item Chapter \ref{chap:induction} covers the remaining, second and fourth contributions in the previous section.
       Our experiments on unsupervised parsing require the formulation of the EM algorithm, which relies on chart parsing for calculating sufficient statistics.
       We thus first develop a new dynamic programming algorithm and then apply it to the unsupervised learning task.
 \item Finally, in Chapter \ref{chap:conclusion}, we summarize the results obtained in this research and give directions for future studies.
\end{itemize}

\chapter{Background}
\label{chap:bg}

The topics covered in this chapter can be largely divided into four parts.
Section \ref{sec:bg:representation} defines several important concepts for representing syntax, such as constituency and dependency, which become the basis of all topics discussed in this thesis.
We then discuss left-corner parsing and related issues in Section \ref{sec:bg:left-corner}, such as the formal definition of center-embedding, which are in particular important to understand the contents in Chapter \ref{chap:transition}.
The following two sections are more relevant to our application of unsupervised grammar induction discussed in Chapter \ref{chap:induction}.
In Section \ref{sec:2:learning}, we describe the basis of learning probabilistic grammars, such as the EM algorithm.
Finally in Section \ref{sec:2:unsupervised}, we provide the thorough survey of the unsupervised grammar induction, and make clear our motivation and standpoint for this task.

\section{Syntax Representation}
\label{sec:bg:representation}

This section introduces several representations to describe the natural language syntax appearing in this thesis, namely context-free grammars, constituency, and dependency grammars, and discuss the connection between them.
Though our focused representation in this thesis is dependency, the concepts of context-free grammars and constituency are also essential for us.
For example, context-free grammars provide the basis for probabilistic modeling of tree structures as well as parameter estimation for it;
We discuss how our dependency-based model can be represented as an instance of context-free grammars in Section \ref{sec:2:learning}.
The connection between constituency and dependency appears many times in this thesis.
For instance, the concept of {\it center-embedding} (Section \ref{sec:bg:left-corner}) is more naturally understood with constituency rather than with dependency.

This section is about syntax representation or grammars and we do not discuss {\it parsing} but to see how the analysis with a grammar looks like, we mention a {\it parse} or a parse tree, which is the result of parsing for an input string (sentence).

\subsection{Context-free grammars}
\label{sec:2:cfg}
A context-free grammar (CFG) is a useful model to describe the hierarchical syntactic structure of an input string (sentence).
Formally a CFG is a quadruple $G=(N,\Sigma,P,S)$ where $N$ and $\Sigma$ are disjoint finite set of symbols called nonterminal and terminal symbols respectively.
Terminal symbols are symbols that appear at leaf positions of a tree while nonterminal symbols appear at internal positions.
$S \in N$ is the start symbol.
$P$ is the set of rules of the form $A\rightarrow\beta$ where $A \in N$ and $\beta \in (N \cup \Sigma)^*$.

Figure \ref{fig:2:cfg} shows an example of a CFG while Figure \ref{fig:2:cfg-deriv} shows an example of a parse with that CFG.
On a parse tree {\it terminal nodes} refer to the nodes with terminal symbols (at leaf positions) while {\it nonterminal nodes} refer to other internal nodes with nonterminal symbols.
{\it Preterminal nodes} are nodes that appear just above terminal nodes (e.g., VBD in Figure \ref{fig:2:cfg-deriv}).

This model is useful because there is a well-known polynomial (cubic) time algorithm for parsing an input string with it, which also provides the basis for parameters estimation when we develop probabilistic models on CFGs (see Section \ref{sec:2:em}).

\begin{figure}[t]
 \centering
 \begin{tabular}[t]{|rll|} \hline
  S &$\rightarrow$ &NP~~VP \\
  VP&$\rightarrow$ &VBD~~NP \\
  NP&$\rightarrow$ &DT~~NN \\
  NP&$\rightarrow$ &Mary \\
  VBD&$\rightarrow$&met \\
  DT&$\rightarrow$&the \\
  NN&$\rightarrow$&senator \\\hline
 \end{tabular}
 \caption{A set of rules in a CFG in which $N = \{$S, NP, VP, VBD, DT, NN$\}$, $\Sigma = \{$Mary, met, the, senator $\}$, and $S = $ S (the start symbol).}
 \label{fig:2:cfg}
\end{figure}

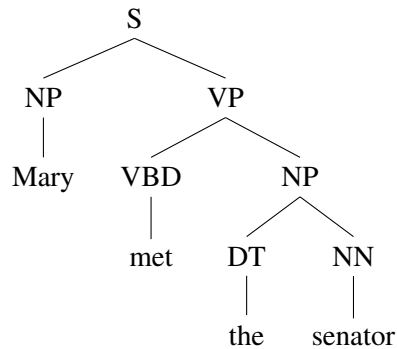
\begin{figure}[t]
 \centering
 \begin{tikzpicture}[sibling distance=10pt]
  \Tree
  [.S
  [.NP Mary ]
  [.VP [.VBD met ] [.NP [.DT the ] [.NN senator ] ] ] ]
  ]
 \end{tikzpicture}
 \caption{A parse tree with the CFG in Figure \ref{fig:2:cfg}.}
 \label{fig:2:cfg-deriv}
\end{figure}

\paragraph{Chomsky normal form}
A CFG is said to be in Chomsky normal form (CNF) if every rule in $P$ has the form $A \rightarrow B~C$ or $A \rightarrow a$ where $A,B,C \in N$ and $a \in \Sigma$;
that is, every rule is a binary rule or a unary rule and a unary rule is only allowed on a preterminal node.
The CFG in Figure \ref{fig:2:cfg} is in CNF.
We often restrict our attention to CNF as it is closely related to projective dependency grammars, our focused representation described in Section \ref{sec:2:dependency}.

\subsection{Constituency}
\label{sec:bg:constituency}
The parse in Figure \ref{fig:2:cfg-deriv} also describes the {\it constituent} structure of the sentence.
Each constituent is a group of consecutive words that function as a single cohesive unit.
In the case of tree in Figure \ref{fig:2:cfg-deriv}, each constituent is a phrase spanned by some nonterminal symbol (e.g., ``the senator'' or ``met the senator'').

We can see that the rules in Figure \ref{fig:2:cfg} define how a smaller constituents combine to form a larger constituent.
This grammar is an example of {\it phrase-structure grammars}, in which each nonterminal symbol such as NP and VP describes the syntactic role of the constituent spanned by that nonterminal.
For example, NP means the constituent is a noun phrase while VP means the one is a verb phrase.
The phrase-structure grammar is often contrasted with dependency grammars, but we note that the concept of constituency is not restricted to phrase-structure grammars and plays an important role in dependency grammars as well, as we describe next.

\subsection{Dependency grammars}

\label{sec:2:dependency}

\begin{figure}[t]
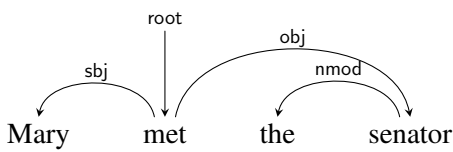

 \centering
 \begin{dependency}[theme=simple,label style={font=\sf}]
  \begin{deptext}[column sep=0.8cm]
   Mary \& met \& the \& senator \\
  \end{deptext}
  \depedge{2}{1}{${\sf sbj}$}
  \depedge{2}{4}{${\sf obj}$}
  \depedge{4}{3}{${\sf nmod}$}
  \deproot[edge unit distance=2ex]{2}{${\sf root}$}
 \end{dependency}
 \caption{Example of labelled projective dependency tree.}
 \label{fig:2:projtree}
\end{figure}

\begin{figure}[t]
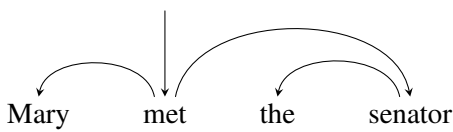

 \centering
 \begin{dependency}[theme=simple]
  \begin{deptext}[column sep=0.8cm]
   Mary \& met \& the \& senator \\
  \end{deptext}
  \depedge{2}{1}{}
  \depedge{2}{4}{}
  \depedge{4}{3}{}
  \deproot[edge unit distance=2ex]{2}{}
 \end{dependency}
 \caption{Example of unlabelled projective dependency tree.}
 \label{fig:2:projtree-unlabelled}
\end{figure}

Dependency grammars analyze the syntactic structure of a sentence as a directed tree of word-to-word dependencies.
Each dependency is represented as a directed arc from a {\it head} to a {\it dependent}, which is argument or adjunct and is modifying the head syntactically or semantically.
Figure \ref{fig:2:projtree} shows an example of an analysis with a dependency grammar.
We call these directed trees {\it dependency trees}.

The question of what is the head is a matter of debate in linguistics.
In many cases this decision is generally agreed but the analysis of certain cases is not settled, in particular those around function words \cite{zwicky199313}.
For example although ``senator'' is the head of the dependency between ``the'' and ``senator'' in Figure \ref{fig:2:projtree} some linguists argue ``the'' should be the head \cite{abney1987english}.
We discuss this problem more in Chapter \ref{chap:corpora} where we describe the assumed linguistic theory in each treebank used in our experiments.
See also Section \ref{sec:ind:eval} where we discuss that such discrepancies in head definitions cause a problem in evaluation for unsupervised systems (and our solution for that).

\paragraph{Labelled and unlabelled tree}
If each dependency arc in a dependency tree is annotated with a label describing the syntactic role between two words as in Figure \ref{fig:2:projtree}, that tree is called a {\it labeled} dependency tree.
For example the ${\sf sbj}$ label between ``Mary'' and ``met'' describes the subject-predicate relationship.
A tree is called {\it unlabeled} if those labels are omitted, as in Figure \ref{fig:2:projtree-unlabelled}.

In the remainder of this thesis, we only focus on {\it unlabeled} dependency trees although now most existing dependency-based treebanks provide labeled annotations of dependency trees.
For our purpose, dependency labels do not play the essential role.
For example, our analyses in Chapter \ref{chap:transition} are based only on the tree shape of dependency trees, which can be discussed with unlabeled trees.
In the task of unsupervised grammar induction, our goal is to induce the unlabeled dependency structures as we discuss in detail in Section \ref{sec:2:unsupervised}.

\paragraph{Constituents in dependency trees}
The idea of constituency (Section \ref{sec:bg:constituency}) is not limited to phrase-structure grammars and we can identify the constituents in dependency trees as well.
In dependency trees, a constituent is a phrase that comprises of a head and its descendants.
For example, ``met the senator'' in Figure \ref{fig:2:projtree-unlabelled} is a constituent as it comprises of a head ``met'' and its descendants ``the senator''.
Constituents in dependency trees may be more directly understood by considering a CFG for dependency grammars and the parses with it, which we describe in the following.

\subsection{CFGs for dependency grammars and spurious ambiguity}
\label{sec:bilexical}

\begin{figure}[t]
 \centering
 \begin{tabular}[t]{rlll} \hline
  \multicolumn{3}{l}{Rewrite rule}& Semantics \\ \hline
  S       &$\rightarrow$ &X[{\it a}] & Select {\it a} as the root word. \\
  X[{\it a}] &$\rightarrow$ &X[{\it a}]~~X[{\it b}] & Select {\it b} as a right modifier of {\it a}. \\
  X[{\it a}] &$\rightarrow$ &X[{\it b}]~~X[{\it a}] & Select {\it b} as a left modifier of {\it a}. \\
  X[{\it a}] &$\rightarrow$ &{\it a}          & Generate a terminal symbol. \\\hline
 \end{tabular}
 \caption{A set of template rules for converting dependency grammars into CFGs.
 {\it a} and {\it b} are lexical tokens (words) in the input sentence.
 X[{\it a}] is a nonterminal symbol indicating the head of the corresponding span is {\it a}.}
 \label{fig:2:bcfg}
\end{figure}

\begin{figure}[t]
 \centering
 \begin{tikzpicture}[sibling distance=10pt]
  \Tree
  [.S
  [.X[met]
  [.X[Mary] Mary ]
  [.X[met] [.X[met] met ] [.X[senator] [.X[the] the ] [.X[senator] senator ] ] ]
  ]
  ]
 \end{tikzpicture}
 \caption{A CFG parse that corresponds to the dependency tree in Figure \ref{fig:2:projtree-unlabelled}.}
 \label{fig:2:bcfg-deriv}
\end{figure}
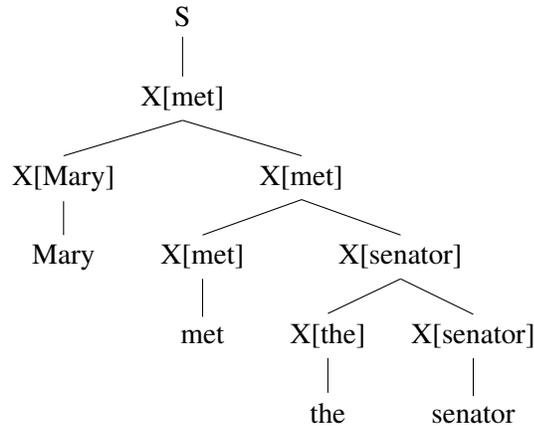

Figure \ref{fig:2:bcfg-deriv} shows an example of a CFG parse, which corresponds to the dependency tree in Figure \ref{fig:2:projtree-unlabelled} but looks very much like the constituent structure in Figure \ref{fig:2:cfg-deriv}.
With this representation, it is very clear that {\it the senator} or {\it met the senator} is a constituent in the tree.
We often rewrite an original dependency tree in this CFG form to represent the underlying constituents explicitly, in particular when discussing the extension of the concept of center-embedding and left-corner algorithm, which have originally assumed (phrase-structure-like) constituent structure, to dependency.

In this parse, every rewrite rule has one of the forms in Figure \ref{fig:2:bcfg}.
Each rule specifies one dependency arc between a head and a dependent.
For example, the rule X[senator] $\rightarrow$ X[the]~~X[senator] means that ``senator'' takes ``the'' as its left dependent.

\paragraph{Spurious ambiguity}

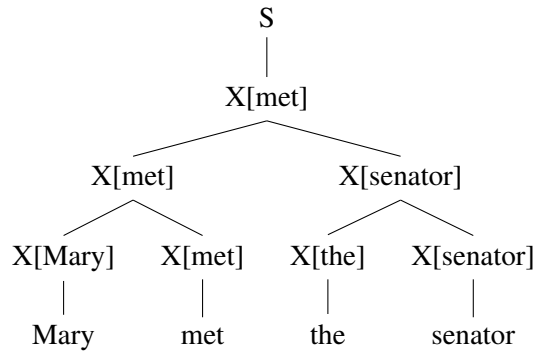
\begin{figure}[t]
 \centering
 \begin{tikzpicture}[sibling distance=10pt]
  \Tree
  [.S
  [.X[met]
  [.X[met] [.X[Mary] Mary ] [.X[met] met ] ]
  [.X[senator] [.X[the] the ] [.X[senator] senator ] ]
  ]
  ]
 \end{tikzpicture}
 \caption{Another CFG parse that corresponds to the dependency tree in Figure \ref{fig:2:projtree-unlabelled}.}
 \label{fig:2:bcfg-deriv2}
\end{figure}

On the tree in Figure \ref{fig:2:projtree-unlabelled}, we can identify ``'Mary met'' is also a constituent, which is although not a constituent in the parses in Figure \ref{fig:2:bcfg-deriv} and Figure \ref{fig:2:bcfg}.
This divergence is related to the problem of {\it spurious ambiguity}, which indicates each dependency tree may correspond to more than one CFG parse.
In fact, we can also build a CFG parse corresponding to Figure \ref{fig:2:projtree-unlabelled}, in which contrary to Figure \ref{fig:2:cfg-deriv} the constituent of ``Mary met'' is explicitly represented with the nonterminal X[met] dominating ``Mary met''.

This ambiguity becomes the problem when we analyze the type of structure for a given dependency tree, e.g., whether a tree contains any center-embedded constructions.
We will see the details and our solution for this problem later in Sections \ref{sec:oracle} and \ref{sec:memorycost}.
Another related issue with this ambiguity is that it prevents us to use the EM algorithm for learning of the models built on this CFG, which we discuss in detail in Section \ref{sec:2:sbg}.

\subsection{Projectivity}
\label{sec:bg:projective}
\begin{figure*}[t]
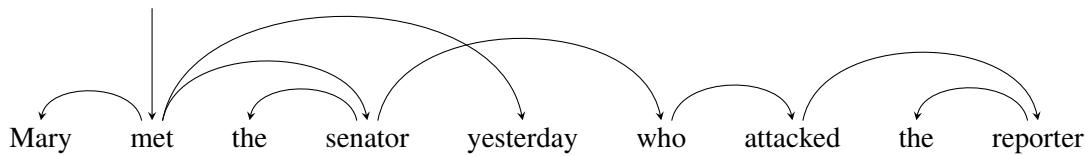

 \centering
 \begin{dependency}[theme=simple]
  \begin{deptext}[column sep=0.6cm]
   Mary \& met \& the \& senator \& yesterday \& who \& attacked \& the \& reporter \\
  \end{deptext}
  \depedge{2}{1}{}
  \depedge{2}{4}{}
  \depedge{4}{3}{}
  \depedge{2}{5}{}
  \depedge{4}{6}{}
  \depedge{6}{7}{}
  \depedge{7}{9}{}
  \depedge{9}{8}{}
  \deproot[edge unit distance=2.5ex]{2}{}
 \end{dependency}
 \caption{Example of non-projective dependency tree.}
 \label{fig:2:nonprojtree-unlabelled}
\end{figure*}

A dependency tree is called {\it projective} if the tree does not contain any {\it crossing dependencies}.
Every dependency tree appeared so far is projective.
An example of non-projective tree is shown in Figure \ref{fig:2:nonprojtree-unlabelled}.
Though we have not mentioned explicitly, the conversion method above can only handle projective dependency trees.
If we allow non-projective structures in our analysis, then the model or the algorithm typically gets much more complex \cite{mcdonald-satta:2007:IWPT2007,journals/coling/Gomez-RodriguezCW11,DBLP:journals/coling/Kuhlmann13,TACL23}.\footnote{
The maximum spanning tree (MST) algorithm \cite{McDonald:2005:NDP:1220575.1220641} enables non-projective parsing in time complexity $O(n^2)$, which is more efficient than the ordinary CKY-based algorithm \cite{Eisner:1999:EPB:1034678.1034748} though the model form (i.e., features or conditioning contexts) is restricted to be quite simple.}


Non-projective constructions are known to be relatively rare cross-linguistically \cite{nivre-EtAl:2007:EMNLP-CoNLL2007,DBLP:journals/coling/Kuhlmann13}.
Thus, along with the mathematical difficulty for handling them, often the dependency parsing algorithm is restricted to deal with only projective structures.
For example, as we describe in Section \ref{sec:2:unsupervised}, most existing systems of unsupervised dependency induction restrict their attention only on projective structures.
Note that existing treebanks contain non-projective structures in varying degree so the convention is to restrict the model to generate only projective trees and to evaluate its quality against the (possibly) non-projective gold trees.
We follow this convention in our experiments in Chapter \ref{chap:induction} and generally focus only on projective dependency trees in other chapters as well, if not mentioned explicitly.

\begin{figure*}[t]
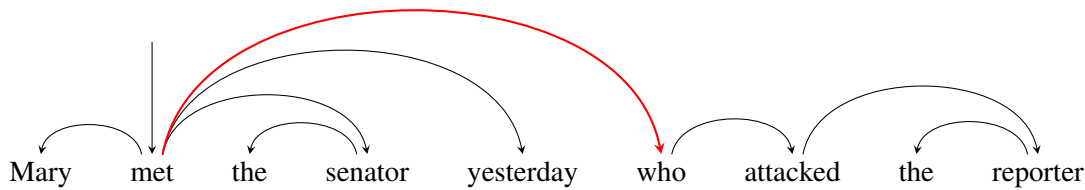

 \centering
 \begin{dependency}[theme=simple]
  \begin{deptext}[column sep=0.6cm]
   Mary \& met \& the \& senator \& yesterday \& who \& attacked \& the \& reporter \\
  \end{deptext}
  \depedge{2}{1}{}
  \depedge{2}{4}{}
  \depedge{4}{3}{}
  \depedge{2}{5}{}
  \depedge[thick,red]{2}{6}{}
  \depedge{6}{7}{}
  \depedge{7}{9}{}
  \depedge{9}{8}{}
  \deproot[edge unit distance=2.5ex]{2}{}
 \end{dependency}
 \caption{The result of pseudo projectivization to the tree in Figure \ref{fig:2:nonprojtree-unlabelled}.}
 \label{fig:2:pseudo-unlabelled}
\end{figure*}

\paragraph{Pseudo-projectivity}

There is a known technique called pseudo-projectivization \cite{nivre-nilsson:2005:ACL}, which converts any non-projective dependency trees into some projective trees with minimal modifications.
The tree in Figure \ref{fig:2:pseudo-unlabelled} shows the result of this procedure into the non-projective tree in Figure \ref{fig:2:nonprojtree-unlabelled}.\footnote{
In the original formalization, pseudo-projectivization also performs label conversions.
That is, the label on a (modified) dependency arc is changed for memorizing the performed operations;
With this memorization, the converted tree does not loose the information.
\newcite{nivre-nilsson:2005:ACL} show that non-projective dependency parsing is possible with this conversion and parsing algorithms that assume projectivity, by training and decoding with the converted forms and recovering the non-projective trees from the labeled (projective) outputs.
Since our focus in basically only unlabeled trees, we ignore those labels in Figure \ref{fig:2:pseudo-unlabelled}.
}
We perform this conversion on every tree when we analyze the structural properties of dependency trees in existing corpora in Chapter \ref{chap:transition}.
See Section \ref{sec:analysis:settings} for details.

\section{Left-corner Parsing}
\label{sec:bg:left-corner}
In this section we describe left-corner parsing and summarize related issues, e.g., its relevance to psycholinguistic studies.
Previous studies on left-corner parsing have focused only on a (probabilistic) CFG;
We will extend the discussion in this section for dependency grammars in later chapters.
In Chapter \ref{chap:transition}, we extend the idea into transition-based dependency parsing while in Chapter \ref{chap:induction}, we further extend the algorithm with efficient tabulation (dynamic programming).

A somewhat confusing fact about left-corner parsing is that there exist two variants of very different algorithms, called arc-standard and arc-eager algorithms.
The arc-standard left-corner parsing has been appeared first in the programming language literature \cite{4569645,Aho:1972:TPT:578789} and later extended for natural language parsing for improving efficiency \cite{Nederhof:1993:GLP:976744.976780} or expanding contexts captured by the model \cite{manning2000,henderson:2004:ACL}.
In the following we do {\it not} discuss these arc-standard algorithms, and only focus on the arc-eager algorithm, which has its origin in psycholinguistics \cite{Cognitive:MentalModels,abney91memory}\footnote{\newcite{Cognitive:MentalModels} introduced his left-corner parser as a cognitively plausible human parser but it has been pointed out that his parser is actually not arc-eager but arc-standard \cite{conf/coling/Resnik92}, which is (at least) not relevant to a human parser.} rather than in computer science.

\REVISE{
Left-corner parsing is closely relevant to the notion of center-embedding, a kind of branching pattern, which we characterize formally in Section \ref{sec:bg:embedding}.
We then introduce the idea of left-corner parsing through a parsing strategy in Section \ref{chap:2:left-corner-strategy} for getting intuition into parser behavior.
During Sections \ref{sec:bg:left-corner-pda} -- \ref{sec:bg:anothervariant}, we discuss the push-down automata (PDAs), a way for implementing the strategy as a parsing algorithm.
While previous studies on the arc-eager left-corner PDAs pay less attention on its theoretical properties beyond its asymptotic behavior,
in Section \ref{sec:bg:prop-left-corner}, we present a detailed, thorough analysis on the properties of the presented PDA as it plays an essential role in our exploration in the following chapters.
Although we carefully design the left-corner PDA as the realization of the presented strategy, as we see later, this algorithm differs from the one previously explained as the left-corner PDAs in the literature \cite{conf/coling/Resnik92,conf/acl/Johnson98}.
This difference is important for us.
In Section \ref{sec:bg:anothervariant} we discuss why this discrepancy occurs, as well as why we do not take the standard formalization.
Finally in Section \ref{sec:bg:psycho} we summarize the psycholinguistic relevance of the presented algorithms.
}


\subsection{Center-embedding}
\label{sec:bg:embedding}
We first define some additional notations related to CFGs that we introduced in Section \ref{sec:2:cfg}.
Let us assume a CFG $G=(N,\Sigma,P,S)$.
Then each symbol used below has the following meaning:
\begin{itemize}
 \item $A,B,C,\cdots$ are nonterminal symbols;
 \item $v,w,x,\cdots$ are strings of terminal symbols, e.g., $v \in \Sigma^*$;
 \item $\alpha,\beta,\gamma,\cdots$ are strings of terminal or nonterminal symbols, e.g., $\alpha \in (N \cup \Sigma)^*$.
\end{itemize}
In the following, we define the notion of center-embedding using left-most {\it derives} relation $\Rightarrow_{lm}$ though it is also possible to define with right-most one.
$\Rightarrow_{lm}^*$ denotes derivation in zero ore more steps while
$\Rightarrow_{lm}^+$ denotes derivation in one or more steps.
$\alpha \Rightarrow_{lm}^* \beta$ means $\beta$ can be derived from $\alpha$ by applying a list of rules in left-most order (always expanding the current left-most nonterminal).
In this order, the derivation with nonterminal symbols followed by terminal symbols, i.e., $S \Rightarrow_{lm}^+ \alpha A v$ does not appear.

For simplicity, we assume the CFG is in CNF.
It is possible to define center-embedding for general CFGs but notations are more involved, and it is sufficient for discussing our extension for dependency grammars.

Center-embedding can be characterized by the specific branching pattern found in a CFG parse, which we define precisely below.
We note that the notion of center-embedding could be defined in a different way.
In fact, as we describe later, the existence of several variants of arc-eager left-corner parsers is relevant to this arbitrariness for the definition of center-embedding.
We postpone the discussion of this issue until Section \ref{sec:bg:anothervariant}.
\begin{newdef}
 \label{def:bg:embedding}
 A CFG parse involves center-embedding if the following derivation is found in it:
  \begin{equation*}
   S \Rightarrow_{lm}^* v \underline{A} \alpha \Rightarrow_{lm}^+ v w \underline{B} \alpha \Rightarrow_{lm} v w \underline{C} D \alpha \Rightarrow_{lm}^+ v w x D \alpha; ~~ |x| \geq 2,
  \end{equation*}
 where the underlined symbol indicates that that symbol is expanded by the following $\Rightarrow$.
 The condition $|x| \geq 2$ means the constituent rooted at $C$ must comprise of more than one word.
\end{newdef}

Figure \ref{fig:2:center-embedding-pattern} shows an example of the branching pattern.
The pattern always begins with right branching edges, which are indicated by $v \underline{A} \beta \Rightarrow_{lm}^+ v w \underline{B} \beta$.
Then the center-embedding is detected if some $B$ is found which has a left child that constitutes a span larger than one word (e.g., $C$).
The final condition of the span length means the embedded subtree (rooted at $C$) has a right child.
This {\it right $\rightarrow$ left $\rightarrow$ right} pattern is the distinguished branching pattern in center-embedding.

By detecting a series of these zig-zag patterns recursively, we can measure the {\it degree} of center-embedding in a given parse.
Formally,
\begin{newdef}
  \label{def:bg:embed-depth}
 If the following derivation is found in a CFG parse:
 \begin{align}
  S \Rightarrow_{lm}^* v \underline{A} \alpha &\Rightarrow_{lm}^+ v w_1 \underline{B_1} \alpha \Rightarrow_{lm}^+ v w_1 \underline{C_1} \beta_1 \alpha \nonumber\\ 
  &\Rightarrow_{lm}^+ v w_1 w_2 \underline{B_2} \beta_1 \alpha
  \Rightarrow_{lm}^+ v w_1 w_2 \underline{C_2} \beta_2 \beta_1 \alpha \nonumber\\
  &\Rightarrow_{lm}^+ \cdots \label{eq:bg:embed-depth} \\
  &\Rightarrow_{lm}^+ v w_1 \cdots w_{m'} \underline{B_{m'}} \beta_{{m'}-1} \cdots \beta_1 \alpha
  \Rightarrow_{lm}^+ v w_1 \cdots w_{m'} \underline{C_{m'}} \beta_{m'} \beta_{{m'}-1} \cdots \beta_1 \alpha \nonumber\\
  &\Rightarrow_{lm}^+ v w_1 \cdots w_{m'} x \beta_{m'} \beta_{{m'}-1} \cdots \beta_1 \alpha;~~|x| \geq 2. \nonumber,
 \end{align}
 the degree of center-embedding in it is the maximum value $m$ among all possible values of $m'$ (i.e., $m \geq m'$).
 \end{newdef}
Each line in Eq. \ref{eq:bg:embed-depth} corresponds to the detection of additional embedding, except the last line that checks the length of the most embedded constituent.
Figures \ref{fig:2:depth-1} and \ref{fig:2:depth-2} show examples of degree one and two parses, respectively.
These are the {\it minimal} parses for each degree, meaning that degree two occurs only when the sentence length $\geq$ 6.
Note that the form of the last transform in the first line (i.e., $v w_1 \underline{B_1} \alpha \Rightarrow_{lm}^+ v w_1 \underline{C_1} \beta_1 \alpha$) does not match to the one in Definition \ref{def:bg:embedding} (i.e., $v w \underline{B} \alpha \Rightarrow_{lm} v w \underline{C} D \alpha$).
This modification is necessary because the first left descendant of $B_1$ in Eq. \ref{eq:bg:embed-depth} is not always the starting point of further center-embedding.

\label{sec:bg:center-embed}
\begin{figure}[t]
 \centering
 \begin{minipage}[t]{.4\linewidth}
  \centering
  \begin{tikzpicture}[sibling distance=12pt]
   \tikzset{level distance=25pt}
   \Tree
   [.$S$ \edge[densely dashed];
   [.$A$
   $w$
   [.$B$
   [.$C$ \edge[roof]; {~~~~~~~} ]
   $D$
   ]
   ]
   ]
  \end{tikzpicture}
  \subcaption{Pattern of center-embedding}
  \label{fig:2:center-embedding-pattern}
 \end{minipage}
 \begin{minipage}[t]{.4\linewidth}
  \centering
  \begin{tikzpicture}[sibling distance=12pt]
   \tikzset{level distance=25pt}
   \Tree
   [.$S$
    [.$A'$ $a$ ]
     [.$B$
      [.$C$ [.$B'$ $b$ ] [.$C'$ $c$ ] ]
      [.$D$ $d$ ] ] ]
  \end{tikzpicture}
  \subcaption{Degree one parse}
  \label{fig:2:depth-1}
 \end{minipage}
 \begin{minipage}[t]{.4\linewidth}
  \centering
  \vspace{10pt}
  \begin{tikzpicture}[sibling distance=12pt]
   \tikzset{level distance=25pt}
   \Tree
   [.$S$
    [.$A''$ $a'$ ]
     [.$B_1$
      [.$C_1$
       [.$A'$ $a$ ]
        [.$B_2$
         [.$C_2$ [.$B'$ $b$ ] [.$C'$ $c$ ] ]
         [.$D_2$ $d$ ] ] ]
      [.$D_1$ $d'$ ] ] ]
  \end{tikzpicture}
  \subcaption{Degree two parse}
  \label{fig:2:depth-2}
 \end{minipage}
 \begin{minipage}[t]{.4\linewidth}
  \centering
  \vspace{58pt}
  \begin{tikzpicture}[sibling distance=12pt]
   \tikzset{level distance=25pt}
   \Tree
   [.$S$ [.$B$ [.$A'$ $a$ ] [.$C$ [.$B'$ $b$ ] [.$C'$ $c$ ] ] ] [.$D$ $d$ ] ]
  \end{tikzpicture}
  \subcaption{Not center-embedding}
  \label{fig:2:not-center-embedding}
 \end{minipage}
 \caption{A parse involves center-embedding if the pattern in (a) is found in it.
 (b) and (c) are the minimal patterns with degree one and two respectively.
 (d) is the symmetry of (b) but we regard this as not center-embedding.
 }
 \label{fig:2:define-center-embedding}
\end{figure}
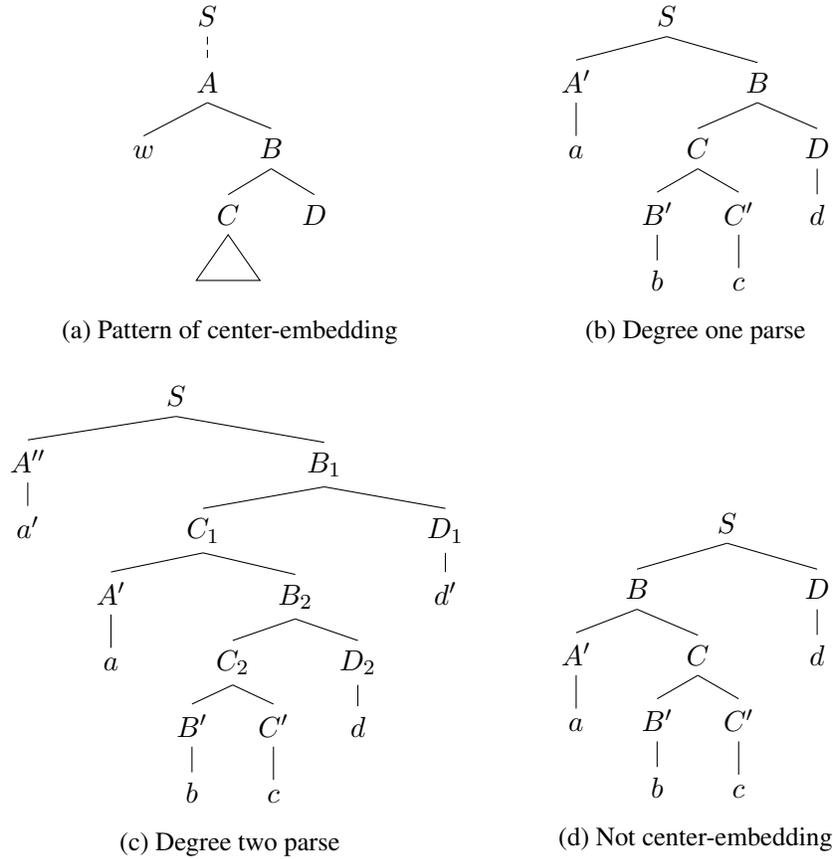

Other notes about center-embedding are summarized as follows:
\begin{itemize}
 \item It is possible to calculate the depth $m$ by just traversing every node in a parse once in a left-to-right, depth-first manner.
       The important observation for this algorithm is that the value $m'$ in Eq. \ref{eq:bg:embed-depth} is deterministic for each node, suggesting that we can fill the depth of each node top-down.
       We then pick up the maximum depth $m$ among the values found at terminal nodes.
 \item In the definitions above, the pattern of center-embedding always starts with a right directed edge.
       This means the similar, but opposite pattern found in Figure \ref{fig:2:not-center-embedding} is not center-embedding.
       Left-corner parsing that we will introduce next also distinguish only the pattern in Figure \ref{fig:2:depth-1}, not Figure \ref{fig:2:not-center-embedding}.
\end{itemize}

\subsection{Left-corner parsing strategy}
\label{chap:2:left-corner-strategy}

\begin{figure}[t]
 \centering
 \newcommand{\enum}[1]{{\small {\color{blue}{#1}}}}
 \newcommand{\mynode}[2]{\hspace{-8pt}{\enum{#1}} \hspace{-2pt}#2}
 \definecolor{g70}{gray}{0.7}
 \begin{minipage}[t]{.4\linewidth}
  \centering
  \begin{tikzpicture}[sibling distance=12pt]
   \tikzset{level distance=25pt}
   \Tree
   [.\mynode{10}{$S$}
    \edge node[left, pos=.3]{\enum{11}};
    [.\mynode{6}{$E$}
     \edge node[left, pos=.3]{\enum{7}};
     [.\mynode{2}{$F$}
      \edge node[left, pos=.3]{\enum{3}};
      [.\mynode{1}{$A$} {$a$} ]
      \edge node[right, pos=.3]{\enum{5}};
      [.\mynode{4}{$B$} {$b$} ]
     ]
     \edge node[right, pos=.3]{\enum{9}};
     [.\mynode{8}{$C$} {$c$} ]
    ]
    \edge node[right, pos=.3]{\enum{13}};
    [.\mynode{12}{$D$} {$d$} ]
   ]
  \end{tikzpicture}
  \subcaption{Left-branching}\label{subfig:left-branching}
 \end{minipage}
 \begin{minipage}[t]{.4\linewidth}
  \centering
  \begin{tikzpicture}[sibling distance=12pt]
   \tikzset{level distance=25pt}
   \Tree
   [.\mynode{2}{$S$}
    \edge node[left, pos=.3]{\enum{3}};
    [.\mynode{1}{$A$} {$a$} ]
    \edge node[right, pos=.3]{\enum{7}};
    [.\mynode{5}{$E$}
     \edge node[left, pos=.3]{\enum{6}};
     [.\mynode{4}{$B$} {$b$} ]
     \edge node[right, pos=.3]{\enum{11}};
     [.\mynode{9}{$F$}
      \edge node[left, pos=.3]{\enum{10}};
      [.\mynode{8}{$C$} {$c$} ]
      \edge node[right, pos=.3]{\enum{13}};
      [.\mynode{12}{$D$} {$d$} ]
     ]
    ]
   ]
  \end{tikzpicture}
  \subcaption{Right-branching}\label{subfig:right-branching}
 \end{minipage}
 \begin{minipage}[t]{.4\linewidth}
  \centering
  \vspace{10pt}
  \begin{tikzpicture}[sibling distance=12pt]
   \tikzset{level distance=25pt}
   \Tree
   [.\mynode{2}{$S$}
    \edge node[left, pos=.3]{\enum{3}};
    [.\mynode{1}{$A$} {$a$} ]
    \edge node[right, pos=.3]{\enum{11}};
    [.\mynode{9}{$E$}
     \edge node[left, pos=.3]{\enum{10}};
     [.\mynode{5}{$F$}
      \edge node[left, pos=.3]{\enum{6}};
      [.\mynode{4}{$B$} {$b$} ]
      \edge node[right, pos=.3]{\enum{8}};
      [.\mynode{7}{$C$} {$c$} ]
     ]
     \edge node[right, pos=.3]{\enum{13}};
     [.\mynode{12}{$D$} {$d$} ]
    ]
   ]
  \end{tikzpicture}
  \subcaption{Center-embedding}\label{subfig:embedding}
 \end{minipage}
 \begin{minipage}[t]{.4\linewidth}
  \centering
  \vspace{10pt}
  \begin{tikzpicture}[sibling distance=12pt]
   \tikzset{level distance=25pt}
   \Tree
   [.{$S$}
    \edge node[left, pos=.3]{};
    [.$A$ {$a$} ]
    \edge[g70] node[right, pos=.3]{};
    [.{\color{g70}{$E$}}
     \edge[g70] node[left, pos=.3]{};
     [.{$F$}
      \edge node[left, pos=.3]{};
      [.$B$ {$b$} ]
      \edge[g70] node[right, pos=.3]{};
      [.\color{g70}{$C$} \edge[g70]; \color{g70}{$c$} ]
     ]
     \edge[g70] node[right, pos=.3]{};
     [.\color{g70}{$D$} \edge[g70]; \color{g70}{$d$} ]
    ]
   ]
  \end{tikzpicture}
  
  \subcaption{Connected subtrees}\label{subfig:connected}
 \end{minipage}
 \caption{(a)--(c) Three kinds of branching structures with numbers on symbols and arcs showing the order of recognition with a left-corner strategy. (d) a partial parse of (c) using a left-corner strategy just after reading symbol $b$, with gray edges and symbols showing elements not yet recognized; The number of connected subtrees here is 2.}\label{fig:structures}
\end{figure}

A parsing strategy is a useful abstract notion for characterizing the properties of a parser and gaining intuition into parser behavior.
Formally, it can be understood as a particular mapping from a CFG to the push-down automata that generate the same language \cite{nederhof-satta:2004:ACL1}.
Here we follow \newcite{abney91memory} and consider a parsing strategy as a specification of the order that each node and arc on a parse is recognized during parsing.
The corresponding push-down automata can then be understood as the device that provides the operational specification to realize such specific order of recognition, as we describe in Section \ref{sec:bg:left-corner-pda}.

We first characterize left-corner parsing with a parsing strategy to discuss its notable behavior for center-embedded structures.
The left-corner parsing strategy is defined by the following order of recognizing nodes and arcs on a parse tree:
\begin{enumerate}
 \item A node is recognized when the subtree of its first (left most) child has been recognized. \label{enum:bg:strategy-node}
 \item An arc is recognized when two nodes it connects have been recognized. \label{enum:bg:strategy-arc}
\end{enumerate}

We discuss the property of the left-corner strategy based on its behavior on three kinds of distinguished tree structures called left-branching, right-branching, and center-embedding, each shown in Figure \ref{fig:structures}.
The notable property of the left-corner strategy is that it generates disconnected tree fragments only on a center-embedded structure as shown in Figure \ref{fig:structures}.
Specifically, in Figure \ref{subfig:embedding}, which is center-embedding, after reading $b$ it reaches 6 but $a$ and $b$ cannot be connected at this point.
It does not generate such fragments for other structures; e.g., for the right-branching structure (Figure \ref{subfig:right-branching}), it reaches 7 after reading $b$ so $a$ and $b$ are connected by a subtree at this point.
The number of tree fragments grows as the degree of center-embedding increases.

As we describe later, the property of the left-corner strategy is appealing from a psycholinguistic viewpoint.
Before discussing this relevance, which we summarize in Section \ref{sec:bg:psycho}, in the following we will see how this strategy can be actually realized as the parsing algorithm first.

\subsection{Push-down automata}
\label{sec:bg:left-corner-pda}
We now discuss how the left-corner parsing strategy described above is implemented as a parsing algorithm, in particular as push-down automata (PDAs), the common device to define a parsing algorithm following a specific strategy.
\REVISE{
As we mentioned, this algorithm is not exactly the same as the one previously proposed as the left-corner PDA \cite{conf/coling/Resnik92,conf/acl/Johnson98}, which we summarize in Section \ref{sec:bg:anothervariant}.

PDAs assume a CFG, and specify how to build parses with that grammar given an input sentence.
Note that for simplicity we only present algorithms specific for CFGs in CNF, although both presented algorithms can be extended for general CFGs.
}

\paragraph{Notations}

We define a PDA as a tuple $(\Sigma, Q, q_{init}, q_{final}, \Delta)$ where $\Sigma$ is an alphabet of input symbols (words) in a CFG, $Q$ is a finite set of stack symbols (items), including the initial stack symbol $q_{init}$ and the final stack symbol $q_{final}$, and $\Delta$ is a finite set of transitions.
A transition has the form $\sigma_{1} \xmapsto{a} \sigma_{2}$ where $\sigma_1,\sigma_2 \in Q^*$ and $a \in \Sigma \cup \{ \varepsilon \}$; $\varepsilon$ is an empty string.
This can be applied if the stack symbols $\sigma_{1}$ are found to be the top few symbols of the stack and $a$ is the first symbol of the unread part of the input.
After such a transition, $\sigma_1$ is replaced with $\sigma_2$ and the next input symbol $a$ is treated as having been read.
If $a=\varepsilon$, the input does not proceed.
Note that our PDA does not explicitly have a set of states;
instead, we encode each state into stack symbols for simplicity as \newcite{DBLP:journals/corr/cs-CL-0404009}.

Given a PDA and an input sentence of length $n$, a {\it configuration} of a PDA is a pair ($\sigma$, $i$) where a stack $\sigma \in Q^*$ and $i$ is an input position $1 \leq i \leq n$, indicating how many symbols are read from the input.
The initial configuration is $(q_{initial}, 0)$ and the PDA {\it recognizes} a sentence if it reaches $(q_{final}, n)$ after a finite number of transitions.

\begin{figure*}[t]
 \centering
  \begin{tabular}[t]{|lll|} \hline
   Name & Transition & Condition \\ \hline
   {\sc Shift} & $\varepsilon \xmapsto{a} A$ & $A \rightarrow a \in P$ \\
   {\sc Scan}  & $A/B \xmapsto{a} A$ & $B \rightarrow a \in P$ \\
   {\sc Prediction} & $A \xmapsto{\varepsilon} B/C$ & $B \rightarrow A~C \in P$ \\
   {\sc Composition} & $A/B~C \xmapsto{\varepsilon} A/D$ & $B \rightarrow C~D \in P$ \\ \hline
 \end{tabular}
 \caption{
 \REVISE{
 The set of transitions in a push-down automaton that parses a CFG $(N,\Sigma,P,S)$ with the left-corner strategy.
 $a \in \Sigma$; $A,B,C,D\in N$.
 The initial stack symbol $q_{init}$ is the start symbol of the CFG $S$, while the final stack symbol $q_{final}$ is an empty stack symbol $\varepsilon$.
 }
 }
 \label{fig:bg:ourpda}
\end{figure*}

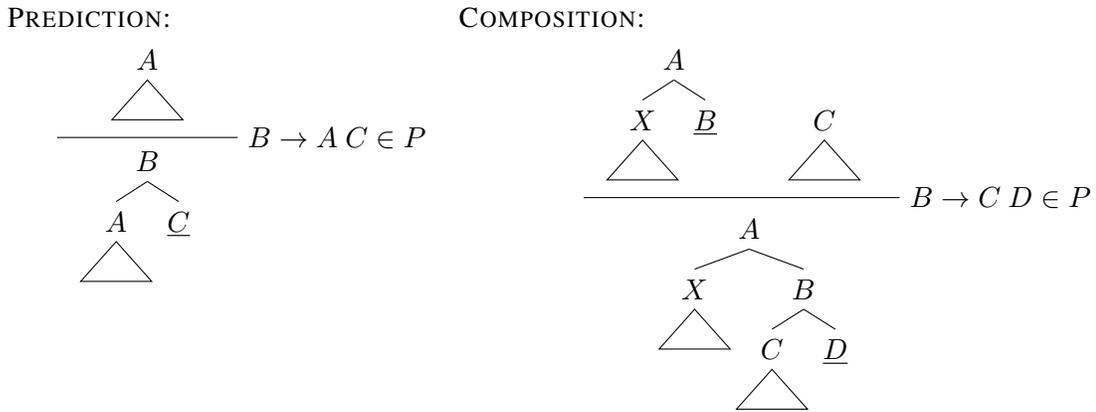
\begin{figure}[t]
 \begin{tikzpicture}[level distance=0.8cm]
  \begin{scope}[xshift=2cm,yshift=0cm]
   \node at (0, 0) [anchor=west] {\sc Prediction:};
   \begin{scope}[xshift=2cm, yshift=-0.7cm]
    \Tree
    [.$A$ \edge[roof]; {~~~~~~~~} ];
    \draw (-1.2, -0.9) -- +(2.4, 0) node[right] {$B \rightarrow A ~ C \in P$};
    \begin{scope}[xshift=0cm, yshift=-1.35cm]
     \Tree
     [.$B$ [.$A$ \edge[roof]; {~~~~~~~~} ] \underline{$C$} ];
    \end{scope}     
   \end{scope}
  \end{scope}
  \begin{scope}[xshift=8cm,yshift=0cm]
   \node at (0, 0) [anchor=west] {\sc Composition:};
   \begin{scope}[xshift=3cm, yshift=-0.7cm]
    \begin{scope}[xshift=0cm, yshift=0cm]
     \Tree
     [.$A$ [.$X$ \edge[roof]; {~~~~~~~~} ]  $\underline{B}$ ];
    \end{scope}
    \begin{scope}[xshift=2cm, yshift=-0.8cm]
     \Tree
     [.$C$ \edge[roof]; {~~~~~~~~} ];
    \end{scope}
    \draw (-1.2, -1.7) -- +(4.2, 0) node[right] {$B \rightarrow C ~ D \in P$};
    \begin{scope}[xshift=1cm, yshift=-2.25cm]
     \Tree
     [.$A$ [.$X$ \edge[roof]; {~~~~~~~~} ] [.$B$ [.$C$ \edge[roof]; {~~~~~~~~} ] \underline{$D$} ] ];
    \end{scope}     
   \end{scope}
  \end{scope}
 \end{tikzpicture}
 \caption{Graphical representations of inferences rules of {\sc Prediction} and {\sc Composition} defined in Figure \ref{fig:bg:ourpda}.
 An underlined symbol indicates that the symbol is predicted top-down.}
 \label{fig:2:pda-binary}
\end{figure}

\paragraph{PDA}


\REVISE{
We develop a left-corner PDA to achieve the recognition order of nodes and arcs by the left-corner strategy that we formulated in Section \ref{chap:2:left-corner-strategy}.
In our left-corner PDA, each stack symbol is either a nonterminal $A\in N$, or a pair of nonterminals $A/B$, where $A,B \in N$.
$A/B$ is used for representing an {\it incomplete} constituent, waiting for a subtree rooted at $B$ being substituted.
In this algorithm, $q_{init}$ is an empty stack symbol $\varepsilon$ while $q_{final}$ is the stack symbol $S$, the start symbol of a given CFG.

Figure \ref{fig:bg:ourpda} lists the set of transitions in this PDA.
{\sc Prediction} and {\sc Composition} are the key operations for achieving the left-corner strategy.
Specifically, {\sc Prediction} operation first recognizes a parent node of a subtree (rooted at $A$) bottom-up, and then predicts its sibling node top-down.
This is graphically explained in Figure \ref{fig:2:pda-binary}.
We notice that this operation just corresponds to the policy \ref{enum:bg:strategy-node} of the strategy about the order of recognizing nodes.\footnote{
\label{fn:bg:prediction}
Though the strategy postpones the recognition of the sibling node, we can interpret that the predicted sibling (i.e., $C$) by {\sc Prediction} is still not recognized.
It is recognized by {\sc Scan} or {\sc Composition}, which introduce the same node bottom-up and matches two nodes, i.e., the top-down predicted node and the bottom-up recognized node.
}
The policy \ref{enum:bg:strategy-arc} about the order of connecting nodes is also essential, and it is realized by another key operation of {\sc Composition}.
This operation involves two steps.
First, it performs the same prediction operation as {\sc Prediction} for the top stack symbol.
It is $C$ in Figure \ref{fig:bg:ourpda}, and the result is $B/D$, i.e., a subtree rooted at $B$ predicting the sibling node $D$.
It then connects this subtree and the second top subtree, i.e., $A/B$.
This is done by matching two identical nodes of different views, i.e., top-down predicted node $B$ in $A/B$ and bottom-up recognized node $B$ in $B/D$.
This matching operation is the key for achieving the policy \ref{enum:bg:strategy-arc}, which demands that two recognized nodes be connected immediately.
In {\sc Composition}, these two nodes are $A$, which is already recognized, and $B$, which is just recognized bottom-up by the first prediction step.\footnote{
As mentioned in footnote \ref{fn:bg:prediction}, we regard the predicted node $B$ in $A/B$ as not yet being recognized.
}

In the following, we distinguish two kinds of transitions in Figure \ref{fig:bg:ourpda}:
{\sc Shfit} and {\sc Scan} operations belong to {\it shift} transitions\footnote{We use small caps to refer to a specific action, e.g., {\sc Shift}, while ``shift'' refers to an action type.}, as they proceed the input position of the configuration.
This is not the case in {\sc Prediction} and {\sc Composition}, and we call them {\it reduce} transitions.

The left-corner strategy of \newcite{abney91memory} has the property that the maximum number of unconnected subtrees during enumeration equals the degree of center-embedding.
The presented left-corner PDA is an implementation of this strategy and essentially has the same property;
that is, its maximum stack depth during parsing is equal the degree of center-embedding of the resulting parse.
The example of this is shown next, while the formal discussion is provided in Section \ref{sec:bg:prop-left-corner}.
}


\begin{figure*}[t]
 \centering
  \begin{tabular}[t]{rlll} \hline
   Step & Action & Stack         & Read symbol \\ \hline
        &        & $\varepsilon$ & \\
   1    & {\sc Shift} & $A'$ & $a$ \\
   2    & {\sc Predict} & $S/B$ & \\
   3    & {\sc Shift} & $S/B~B'$ & $b$ \\
   4    & {\sc Predict} & ${\color{red}{S/B~C/C'}}$ & \\
   5    & {\sc Scan} & $S/B~C$ & $c$ \\
   6    & {\sc Composition} & $S/D$ & \\
   7    & {\sc Scan} & $S$ & $d$ \\ \hline
 \end{tabular}
 \caption{An example of parsing process by the left-corner PDA to recover the parse in Figure \ref{fig:2:depth-1} given an input sentence $a~b~c~d$.
 It is step 4 that occurs stack depth two after a reduce transition.}
 \label{fig:bg:pda-example}
\end{figure*}

\begin{figure}[t]
 \centering
  \begin{minipage}[t]{0.3\textwidth}
   \centering
   \begin{tabular}{ccc}
    $S$ &$\rightarrow$ & $A'~B$ \\
    $B$ &$\rightarrow$ & $C~D$ \\
    $C$ &$\rightarrow$ & $B'~C'$ \\
   \end{tabular}
 \end{minipage}
 \begin{minipage}[t]{0.3\textwidth}
  \centering
  \begin{tabular}{ccc}
   $A'$ &$\rightarrow$ & $a$ \\
   $B'$ &$\rightarrow$ & $b$ \\
   $C'$ &$\rightarrow$ & $c$ \\
   $D$ &$\rightarrow$ & $d$ \\
  \end{tabular}
 \end{minipage}
 \caption{A CFG that is parsed with the process in Figure \ref{fig:bg:pda-example}.}
 \label{fig:bg:cfg-embed}
\end{figure}

\paragraph{Example}

Figure \ref{fig:bg:pda-example} shows an example of parsing process given a CFG in Figure \ref{fig:bg:cfg-embed} and an input sentence $a~b~c~d$.
The parse tree contains one degree of center-embedding found in Figure \ref{fig:2:depth-1}, and this is illuminated in Figure \ref{fig:bg:pda-example} with the appearances of stack depth of two, in particular before reading symbol $c$, which exactly corresponds to the step 4 on the Figure \ref{subfig:embedding}.

\subsection{Properties of the left-corner PDA}
\label{sec:bg:prop-left-corner}

\REVISE{
In this section, we formally establish the connection between the left-corner PDA and the center-embeddedness of a parse.
The result is also essential when discussing the property of our extended algorithm for dependency grammars presented in Chapter \ref{chap:transition};
see Section \ref{sec:memorycost} for details.

The following lemmas describe the basic properties of the left-corner PDA, which will be the basis in the further analysis.

\begin{newlemma}
 \label{lemma:bg:shift-reduce}
 In a sequence of transitions to arrive the final configuration $(q_{final}, n)$ of the left-corner PDA (Figure \ref{fig:bg:ourpda}), shift (i.e., {\sc Shift} or {\sc Scan}) and reduce (i.e., {\sc Prediction} or {\sc Composition}) transitions occur alternately.
\end{newlemma}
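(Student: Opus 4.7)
The plan is to pin down a structural invariant on the stack and then deduce the alternation from transition preconditions. Concretely, define the predicate $\Phi(\sigma)$ stating that every stack symbol of $\sigma$, except possibly the topmost, has the incomplete form $X/Y$. I will argue that $\Phi$ holds at every configuration visited in a sequence of transitions that reaches $(q_{final}, n)$.

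First, I will check that the reduce transitions and Scan always preserve $\Phi$: Prediction replaces the top $A$ with $B/C$ and does not touch anything below; Scan replaces the top $A/B$ with $A$ and again leaves everything below untouched; Composition pops the top two symbols $A/B\,C$ and pushes $A/D$, so the new top is incomplete and the symbols strictly below were already incomplete by the induction hypothesis. The subtle case is Shift, which pushes a complete $A$ on top and therefore preserves $\Phi$ only when the previous top was itself incomplete or the stack was empty. I will prove separately that Shift is never applied to a complete top in a successful parse.

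The lemma behind that claim is that once the stack contains two adjacent complete symbols, it does so forever, and so it can never reduce to the single-symbol stack $S$. I would establish this with a direct case analysis: the only transition that decreases stack size is Composition, which requires the second-top to be of the form $A/B$ and hence cannot remove a complete second-top; Prediction and Scan only rewrite the top; and Shift only adds symbols. Thus, reading the stack from the bottom, the first occurrence of ``complete followed by complete'' (if any) is invariant under every transition. Since the final stack $S$ has no such pair, no such pair may ever be created in a successful run, which is exactly the statement that Shift is never applied on top of a complete symbol. This is the step I expect to require the most care, because it needs one to verify that no interleaving of further Shifts, Predictions, and Scans above the offending pair can dislodge it.

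With $\Phi$ established for every visited configuration, the alternation falls out from the preconditions of the four transition schemas. After a shift transition (Shift or Scan), the top is a bare nonterminal $A$; the applicable reduce preconditions are satisfied (Prediction always, and Composition whenever the second-top exists and is incomplete, which $\Phi$ guarantees), while a further Shift is ruled out by the argument above and Scan is ruled out because its precondition demands an incomplete top. Conversely, after a reduce transition (Prediction or Composition), the top has the form $A/B$; Prediction and Composition both require a complete top, so neither can fire, while Shift and Scan remain available. Combined with the fact that the initial configuration has an empty stack (on which only Shift is applicable) and the final configuration has a single complete $S$ on the stack (necessarily produced by a shift-type transition), this proves that shift and reduce transitions strictly alternate throughout the run, as desired.
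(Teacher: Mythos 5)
Your overall route is the same as the paper's: reduce transitions require a complete top symbol and leave an incomplete one, so two reduces can never be adjacent; shift transitions leave a complete top, so {\sc Scan} cannot follow a shift; and a {\sc Shift} after a shift would place a complete symbol on top of a complete symbol, which dooms the run because only {\sc Composition} merges stack symbols and it needs an incomplete second-top. Packaging this as the invariant $\Phi$ (everything below the top is incomplete) is fine and is essentially the paper's Lemma \ref{lemma:bg:after-reduce} in disguise.

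However, the auxiliary claim you lean on --- ``once the stack contains two adjacent complete symbols, it does so forever; the first occurrence of complete-followed-by-complete is invariant under every transition'' --- is false as stated. Right after the offending {\sc Shift} the stack looks like $\sigma\,|\,A\,|\,B$ with both $A$ and $B$ complete and $B$ on top; a subsequent {\sc Prediction} rewrites the top $B$ into $C/D$, and if $\sigma$ contains no other complete-complete adjacency, the pair you were tracking has disappeared, contradicting your invariance claim. The object that actually persists is not the pair but its lower member: a complete symbol strictly below the top (equivalently, the failure of $\Phi$). Such a symbol can never change or be removed --- it could only be touched by becoming the top or by serving as the second-top of a {\sc Composition}, the former requires popping the symbol above it (again a {\sc Composition} with it as an incomplete second-top, impossible), and the latter is ruled out because it is complete --- so the stack never again shrinks to or below its position, and in particular never reaches the final one-symbol configuration $S$. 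Substituting this corrected persistence statement for your pair-invariance claim repairs the argument, and with that fix your proof goes through and coincides with the paper's.
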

\begin{proof}
 Reduce transitions are only performed when the top symbol of the stack is {\it complete}, i.e., of the form $A$.
 Then, since each reduce transition makes the top symbol of the stack incomplete, two consecutive reduce transitions are not applicable.
 Conversely, shift transitions make the top stack symbol complete.
 We cannot perform {\sc Scan} after some shift transition, since it requires an incomplete top stack symbol.
 If we perform {\sc Shift} after a shift transition, the top two stack symbols become complete, but we cannot combine these two symbols since the only way to combine two symbols on the stack is {\sc Composition}, while it requires the second top symbol to be incomplete.
\end{proof}

\begin{newlemma}
 \label{lemma:bg:after-reduce}
 In the left-corner PDA, after each reduce transition, every item remained on the stack is an incomplete stack symbol of the form $A/B$.
\end{newlemma}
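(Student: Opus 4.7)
The plan is to prove the invariant by induction on the number of reduce transitions that have been applied, leveraging Lemma \ref{lemma:bg:shift-reduce} to control the sequence in which shift and reduce actions interleave.

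First I would observe that in the initial configuration the stack contains only $q_{init} = \varepsilon$, so the first applicable transition must be {\sc Shift} (the other three transitions each require a nonempty stack with appropriate top symbols). After the very first shift the stack is $A$ for some $A \in N$, and the first reduce must therefore be {\sc Prediction}, since {\sc Composition} requires an incomplete second-from-top symbol that simply is not present. The resulting stack is $B/C$, giving the base case.

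For the inductive step, assume that immediately after the $k$-th reduce every stack item is incomplete, and consider what can happen before and during the $(k+1)$-th reduce. By Lemma \ref{lemma:bg:shift-reduce}, exactly one shift transition ({\sc Shift} or {\sc Scan}) intervenes. Either of these leaves the stack in the form $\sigma\, A$, where $\sigma$ consists entirely of incomplete symbols (unchanged from the hypothesis in the {\sc Shift} case, or with its top $A/B$ turned into $A$ in the {\sc Scan} case, and the items below the top untouched in both). Now the $(k+1)$-th reduce is either {\sc Prediction}, which rewrites the top as $B/C$ and yields $\sigma\, B/C$, or {\sc Composition}, which consumes the top two symbols $A'/B\ A$ from $\sigma\, A$ and pushes $A'/D$, yielding $\sigma'\, A'/D$ with $\sigma'$ a prefix of $\sigma$. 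In both cases every item on the new stack is incomplete, completing the induction.

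The argument is essentially a bookkeeping exercise; the only subtlety is making sure that {\sc Composition} is validly applicable only because the inductive hypothesis guarantees that the symbol directly beneath the top complete $A$ is already incomplete, which is precisely why this lemma composes cleanly with Lemma \ref{lemma:bg:shift-reduce}. I do not anticipate a genuine obstacle beyond stating the case analysis carefully.
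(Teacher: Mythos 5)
Your proof is correct and follows essentially the same route as the paper's: both hinge on Lemma \ref{lemma:bg:shift-reduce} and on the observation that each reduce transition replaces the newly shifted (complete) top symbol with an incomplete one, leaving the rest of the stack untouched. You merely make the paper's informal ``push operation'' argument explicit as an induction on the number of reduce transitions, which is a presentational difference, not a different proof.
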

\begin{proof}
 From Lemma \ref{lemma:bg:shift-reduce}, a shift action is always followed by a reduce action, and vice versa.
 We call a pair of some shift and reduce operations a {\it push} operation.
 In each push operation, a shift operation adds at most one stack symbol on the stack, which is always replaced with an incomplete symbol by the followed reduce transition.
 Thus after a reduce transition no complete symbol remains on the stack.
\end{proof}

We can see that transitions in Figure \ref{fig:bg:pda-example} satisfy these conditions.
Intuitively, the existence of center-embedding is indicated by the accumulated incomplete symbols on the stack, each of which corresponds to each line on the derivation in Eq. \ref{eq:bg:embed-depth}.
This is formally stated as the following theorem, which establishes the connection between the stack depth of the left-corner PDA and the degree of center-embedding.

\begin{mytheorem}
 \label{thoerem:bg:stack-depth}
 Given a CFG parse, its degree of center-embedding is equal to the maximum value of the stack depth after a reduce transition minus one for recognizing that parse on the left-corner PDA.
\end{mytheorem}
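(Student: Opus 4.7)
The plan is to prove the theorem via an invariant that correlates each reduce configuration of the PDA with a chain of nodes in the parse tree, and then to derive both directions of the equality from the invariant. Throughout, let $T$ denote the target parse and fix the alternating shift/reduce sequence (guaranteed by Lemma \ref{lemma:bg:shift-reduce}) that recognises $T$.

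The invariant to maintain is this: after any reduce transition that leaves the stack in the form $\sigma_1/\tau_1 \cdots \sigma_k/\tau_k$, there is a unique chain of distinct tree nodes $\nu_1, \ldots, \nu_k$ of $T$ such that (i) $\nu_1$ is the root of $T$ and $\sigma_1 = S$; (ii) each $\nu_i$ is labelled $\sigma_i$, has two children, its right child is labelled $\tau_i$, and its left subtree's leaves have all been read so far; and (iii) for $i < k$, the node $\nu_{i+1}$ lies strictly below $\nu_i$'s right child on that right child's left spine. I would prove this by induction on the (shift, reduce) pairs: a (Shift, Predict) pair lengthens the chain by one new bottom node whose label is the parent of the freshly-shifted preterminal; (Shift, Compose) and (Scan, Predict) modify only the top of the chain; and (Scan, Compose) shortens the chain by one. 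The crucial observation in the induction is that the choice between Predict and Compose after a Shift is forced by whether the newly-shifted preterminal is the immediate left child of the predicted $\tau_k$ (Compose) or a strict left descendant of it (Predict); the latter case is exactly what populates clause (iii).

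Given the invariant, write $R_i$ for the right child of $\nu_i$ (labelled $\tau_i$). For each $i < k$, clause (iii) places $\nu_{i+1}$ strictly inside $R_i$'s left subtree and, since $\nu_{i+1}$ is an internal node of $T$ with its own right child $R_{i+1}$, the subtree under $R_i$'s left child necessarily spans at least two terminals. This is precisely the non-triviality condition $|x| \geq 2$ of Definition \ref{def:bg:embedding} applied at $B = R_i$, and the inclusions $R_{i+1} \subset R_i$ make the witnesses $R_1, \ldots, R_{k-1}$ nested as in Definition \ref{def:bg:embed-depth}. Taking the reduce configuration that achieves the maximum stack depth $d^\star$ yields the inequality $m \geq d^\star - 1$. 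For the reverse direction $m \leq d^\star - 1$, start from witnesses $B_1, \ldots, B_m$ of a center-embedding of degree $m$ in $T$ (Eq.\ \ref{eq:bg:embed-depth}) and trace the parser through the left-spine descent from the outermost $B_1$ into the deepest $B_m$: each descent from level $i$ to level $i+1$ is forced to be a (Shift, Predict) pair, because $B_{i+1}$ is a strict left descendant of the outer predicted slot and hence Compose does not apply. This grows the stack by one at each step, so at the reduce right after entering the deepest $B_m$ the chain has length at least $m+1$, giving $d^\star \geq m+1$.

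The hardest part is getting the invariant's induction right at preterminal boundaries: one must verify that the base of the left spine, where the parser either Shifts a new preterminal or Scans a predicted preterminal directly, cleanly matches the chain's extension or shortening, and that the Predict-versus-Compose dichotomy is indeed forced by the tree structure in each case. A subtler point is ensuring that clause (iii)'s strictness is preserved by every (Shift, Predict) step: if $\nu_{i+1}$ were allowed to equal $R_i$ — i.e.\ the freshly shifted preterminal were the immediate left child of $\tau_k$ — the parser would have had to Compose instead, producing no new chain element, so the strict-below condition coincides exactly with the existence of an additional nested center-embedding.
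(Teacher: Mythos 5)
Your overall plan---relating the stack contents after each reduce to the chain of ``turning points'' of a zig-zag path in the parse, and reading off both inequalities from that correspondence---is the right one, and it is essentially the route the paper takes (per-token zig-zag paths in Definition~\ref{def:bg:embedding-token} plus the explicit stack shape in Lemma~\ref{lemma:bg:before-shift}). However, the invariant you propose to carry through the induction is false of reachable configurations, so the induction as planned would fail. Clause (i) fails already at the first reduce of a left-branching parse: after shifting the first preterminal, {\sc Prediction} yields an item $F/B$ where $F$ is merely the parent of that preterminal, not $S$ and not the root; the bottom item's root only climbs the root's left spine gradually via later {\sc Prediction}s. Clause (ii) fails because {\sc Composition} slides the predicted symbol \emph{down the right spine} of the item's root while the item remains on the stack: in the paper's own running example (Figure~\ref{fig:bg:pda-example}), step~6 leaves the reduce configuration with stack $S/D$, where $D$ is the right child of $B$, two levels below $S$; similarly a degree-two parse reaches the reduce configuration $S/B_1~C_1/D_2$ with $D_2$ not a child of $C_1$. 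So the induction step for the (shift, {\sc Composition}) and (scan, {\sc Composition}) cases cannot preserve ``its right child is labelled $\tau_i$,'' and the base case cannot establish $\sigma_1=S$.

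The repair is to weaken the invariant to what is actually true: $\tau_i$ is a \emph{proper right-spine descendant} of $\nu_i$ (the hole of an incomplete constituent, not necessarily the right child), $\nu_{i+1}$ is a proper \emph{left-spine} descendant of $\tau_i$, and no condition ties $\nu_1$ to the root. This also forces a change in how you extract the witnesses for Definition~\ref{def:bg:embed-depth}: the nested nonterminals must be the holes and item roots themselves, $A=\nu_1$, $B_i=\tau_i$, $C_i=\nu_{i+1}$, with the innermost condition $|x|\geq 2$ supplied by the fact that the top item after a reduce is incomplete (its yield contains both its left child's subtree and the predicted hole), rather than by your per-level argument about ``$R_i$'s left child'' with $R_i$ the right child of $\nu_i$ (with deep holes, $R_{i+1}$ need not lie on the right spine of $R_i$'s left child, so the $R_i$ cannot serve directly as the $B_i$ of the definition). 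With these corrections your argument goes through and, in substance, reproduces the paper's proof, which establishes exactly this corrected correspondence in the form of the explicit stack configuration $A/B_1~C_1/B_2~\cdots~C_{m_e}/E$ holding just before each token is shifted. Your reverse direction is fine in spirit but likewise needs the persistence of the outer items (they cannot be popped before the embedded region is finished), which is precisely what the corrected invariant provides.
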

For example, for a CFG parse with one degree of center-embedding, the maximum stack depth after a reduce transition is two, which is indicated at step 4 in Figure \ref{fig:bg:pda-example}.
We leave the proof of this theorem in Appendix \ref{chap:app:analyze-pda}.

Note that Theorem \ref{thoerem:bg:stack-depth} says nothing about the stack depth after a shift transition, which generally is not equal to the degree of center-embedding.
We discuss this issue more when presenting the algorithm for dependency grammars; see Section \ref{sec:memorycost}.
}

\subsection{Another variant}
\label{sec:bg:anothervariant}
\begin{figure*}[t]
 \centering
  \begin{tabular}[t]{|lll|} \hline
   Name & Transition & Condition \\ \hline
   {\sc Shift} & $A \xmapsto{a} A{-}B$ & $B \rightarrow a \in P$ \\
   {\sc Scan}  & $A \xmapsto{a} \varepsilon$ & $A \rightarrow a \in P$ \\
   {\sc Prediction} & $A{-}B \xmapsto{\varepsilon} A{-}C~D$ & $C \rightarrow B~D \in P$ \\
   {\sc Composition} & $A{-}B \xmapsto{\varepsilon} C$ & $A \rightarrow B~C \in P$ \\ \hline
 \end{tabular}
 \caption{A set of transitions in another variant of the left-corner PDA appeared in Resnik (1992).
 $a \in \Sigma$; $A,B,C,D\in N$.
 Differently from the PDA in Figiure \ref{fig:bg:ourpda}, the initial stack symbol $q_{init}$ is $S$ while $q_{final}$ is an empty stack symbol $\varepsilon$.
 }
 \label{fig:bg:pda1}
\end{figure*}

\REVISE{
We now present another variant of the left-corner PDA appeared in the literature \cite{conf/coling/Resnik92,conf/acl/Johnson98}.
We will see that this algorithm has a different property with respect to the stack depth and the degree of center-embedding than Theorem \ref{thoerem:bg:stack-depth}.
In particular, this difference is relevant to the structures that are recognized as center-embedding for the algorithm, which has not been precisely discussed so far;
\newcite{journals/coling/SchulerAMS10} give comparison of two algorithms but from a different perspective.

Figure \ref{fig:bg:pda1} shows the list of possible transitions in this variant.
The crucial difference between two PDAs is in the form of initial and final stack symbols.
That is, in this PDA the initial stack symbol $q_{initial}$ is $S$, while $q_{final}$ is an empty symbol $\varepsilon$, which are opposite in the PDA that we discussed so far (Section \ref{sec:bg:left-corner-pda}).

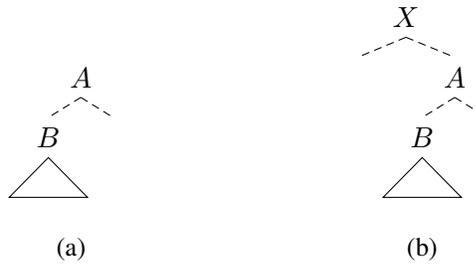
\begin{figure}[t]
 \centering
 \begin{minipage}[t]{0.3\textwidth}
  \centering
  \begin{tikzpicture}[level distance=0.8cm]
    \Tree
    [.$A$
     \edge[densely dashed];
     [.$B$ \edge[roof]; {~~~~~~~~~} ]
     \edge[densely dashed]; {\color{white}{A}}
    ]
  \end{tikzpicture}
  \subcaption{}\label{fig:bg:a-b:a}
 \end{minipage}
 \begin{minipage}[t]{0.3\textwidth}
  \centering
  \begin{tikzpicture}[level distance=0.8cm]
    \Tree
    [.$X$
    \edge[densely dashed]; {\color{white}{A}}
    \edge[densely dashed];
    [.$A$
     \edge[densely dashed];
     [.$B$ \edge[roof]; {~~~~~~~~~} ]
     \edge[densely dashed]; {\color{white}{A}}
    ]
    ]
  \end{tikzpicture}
  \subcaption{}\label{fig:bg:a-b:b}
 \end{minipage}
 \caption{Stack symbols of the left-corner PDA of Figure \ref{fig:bg:pda1}.
 Both trees correspond to symbol $A{-}B$ where $A$ is the current goal while $B$ is the recognized nonterminal.
 Note that $A$ may be a right descendant of another nonterminal (e.g., $X$), which dominates a larger subtree.}
 \label{fig:bg:a-b}
\end{figure}

Also in this variant, the form of stack symbols is different.
Instead of $A/B$, which represents a subtree waiting for $B$, it has $A{-}B$, which means that $B$ is the {\it left-corner} in a subtree rooted at $A$, and has been already recognized.
In other words, $A$ is the current goal, which the PDA tries to build, while $B$ represents a finished subtree.
This is schematically shown in Figure \ref{fig:bg:a-b:a}.

Parsing starts with $q_{initial}=S$, which immediately changes to $S{-}A$ where $A\rightarrow a$, and $a \in \Sigma$ is the initial token of the sentence.
{\sc Prediction} is similar to the one in our variant:
It expands the currently recognized structure, and also predicts the sibling symbol (i.e., $D$), which becomes a new goal symbol.
{\sc Composition} looks very different, but has the similar sense of transition.
In the symbol $A/B$, $A$ is not limited to $S$, in which case $A$ is some right descendant of another nonterminal, as depicted in Figure \ref{fig:bg:a-b:b}.
The sense of {\sc Composition} in Figure \ref{fig:bg:pda1} is that we finish recognition of the left subtree of $A$ (i.e., the tree rooted at $B$) and change the goal symbol to $C$, the sibling of $B$.
If we consider this transition in the form of Figure \ref{fig:bg:a-b:b}, it looks similar to the one in Figure \ref{fig:bg:ourpda};
that is, the corresponding transition in our variant is $X/A~B \xmapsto{\varepsilon} C$.
Instead, in the current variant, the root nonterminal of a subtree $X$ is not kept on the stack, and the goal symbol is moved from top to bottom.
This is the reason why the final stack symbol $q_{final}$ is empty.
The final goal for the PDA is always the preterminal for the last token of the sentence, which is then finally removed by {\sc Scan}.

\paragraph{Example}

\begin{figure}[t]
 \centering
  \begin{tabular}[t]{rlll} \hline
   Step & Action & Stack         & Read symbol \\ \hline
        &        & $S$ & \\
   1    & {\sc Shift} & $S{-}A'$ & $a$ \\
   2    & {\sc Composition} & $B$ & \\
   3    & {\sc Shift} & $B{-}B'$ & $b$ \\
   4    & {\sc Predict} & $B{-}C~C'$ & \\
   5    & {\sc Scan} & $B{-}C$ & $c$ \\
   6    & {\sc Composition} & $D$ & \\
   7    & {\sc Scan} & $\varepsilon$ & $d$ \\ \hline
 \end{tabular}
 \caption{Parsing process of the PDA in Figure \ref{fig:bg:pda1} to recover the parse in Figure \ref{fig:2:depth-1} given the CFG in Figure \ref{fig:bg:cfg-embed} and an input sentence $a~b~c~d$. The stack depth keeps one in every step after a shift transition.}
 \label{fig:bg:pda-example:2}
\end{figure}

This PDA has slightly different characteristics in terms of stack depth and the degree of center-embedding, which we point out here with some examples.
In particular, it regards the parse in Figure \ref{fig:2:not-center-embedding} as singly (degree one) center-embedded, while the one in Figure \ref{fig:2:depth-1} as not center-embedded.
That is, it has just the opposite properties to the PDA that we discussed in Section \ref{sec:bg:left-corner-pda}.

We first see how the situation changes for the CFG that we gave an example in Figure \ref{fig:bg:pda-example}, which analyzed the parse in Figure \ref{fig:2:depth-1}.
See Figure \ref{fig:bg:pda-example:2}.
Contrary to our variant, this PDA has the property that its stack depth after some {\it shift} transitions increases as the degree of center-embedding increases.\footnote{
This again contrasts with our variant (Theorem \ref{thoerem:bg:stack-depth}).
This is because in the PDA in \ref{fig:bg:pda1}, new stack element is introduced with a reduce transition (i.e., {\sc Prediction}), and center-embedding is detected with the followed {\sc Shift}, which does not decrease the stack depth.
In our variant, on the other hand, new stack element is introduced by {\sc Shift}.
Center-embedding is detected if this new element remains on the stack after a reduce transition (by {\sc Prediction}).
}
In this case, these are steps 3, 5, and 7, all of which has a stack with only one element.
The main reason why it does not increase the stack depth is in the first {\sc Composition} operation, which changes the stack symbol to $B$.
After that, since the outside structure of $B$ is already processed, the remaining tree looks just like left-branching, which the left-corner PDA including this variant processes without increasing the stack depth.

\begin{figure}[t]
 \centering
  \begin{tabular}[t]{rlll} \hline
   Step & Action & Stack         & Read symbol \\ \hline
        &        & $S$ & \\
   1    & {\sc Shift} & $S{-}A'$ & $a$ \\
   2    & {\sc Prediction} & $S{-}B~C$ & \\
   3    & {\sc Shift} & {\color{red}{$S{-}B~C{-}B'$}} & $b$ \\
   4    & {\sc Composition} & $S{-}B~C'$ & \\
   5    & {\sc Scan} & $S{-}B$ & $c$ \\
   6    & {\sc Composition} & $D$ & \\
   7    & {\sc Scan} & $\varepsilon$ & $d$ \\ \hline
 \end{tabular}
 \caption{Parsing process of the PDA in Figure \ref{fig:bg:pda1} to recover the parse in Figure \ref{fig:2:not-center-embedding}.
 The stack depth after a shift transition increases at step 3.}
 \label{fig:bg:pda-example:3}
\end{figure}

On the other hand, for the parse in Figure \ref{fig:2:not-center-embedding}, this PDA increases the stack depth as simulated in Figure \ref{fig:bg:pda-example:3}.
At step 2, the PDA introduces new goal symbol $C$, which remains on the stack after the followed {\sc Shift}.
This is the pattern of transitions with which this PDA increase its stack depth, and it occurs when processing the zig-zag patterns starting from left edges, not right edges as in our variant.

\paragraph{Discussion}

We have pointed out that there are two variants of (arc-eager) left-corner PDAs, which suffer from slightly different conditions under which their stack depth increases.
From an empirical point of view, the only common property is its asymptotic behavior.
That is, both linearly increase the stack depth as the degree of center-embedding increases.
The difference is rather subtle, i.e., the condition of beginning center-embedding (left edges or right edges).

Historically, the variant introduced in this section (Figure \ref{fig:bg:pda1}) has been thought as the realization of the left-corner PDA \cite{conf/coling/Resnik92,conf/acl/Johnson98}.
However, as we have seen, if we base development of the algorithm on the parsing strategy (Section \ref{chap:2:left-corner-strategy}), our variant can be seen as the correct implementation of it, as only our variant preserves the transparent relationship between the stack depth and the disconnected trees generated during enumeration by the strategy.

\newcite{conf/coling/Resnik92} did not design the algorithm based on the parsing strategy, but from an existing arc-standard left-corner PDA \cite{4569645,Cognitive:MentalModels}, which also accepts an empty stack symbol as the final configuration.
His main argument is that the arc-eager left-corner PDA can be obtained by introducing a {\sc Composition} operation, which does not exist in the arc-standard PDA.
Interestingly, there is another variant of the arc-standard PDA \cite{Nederhof:1993:GLP:976744.976780}, which instead accepts the $S$ symbol\footnote{To be precise, the stack item of \newcite{Nederhof:1993:GLP:976744.976780} is a dotted rule like [S$\rightarrow$ NP$\bullet$VP] and parsing finishes with an item of the form [S$\rightarrow\alpha\bullet$] with some $\alpha$.}. If we extend this algorithm by introducing {\sc Composition}, we get very similar algorithm to the one we presented in Section \ref{sec:bg:left-corner-pda} with the same stack depth property.

Thus, we can conclude that Resnik's argument is correct in that a left-corner PDA can be {\it arc-eager} by adding composition operations, but depending on which arc-standard PDA we employ as the basis, the resulting arc-eager PDA may have different characteristics in terms of stack depth.
In particular, the initial and final stack configurations are important.
If the based arc-standard PDA accepts the empty stack symbol as in \newcite{4569645}, the corresponding arc-eager PDA regards the pattern beginning with right edges as center-embedding.
The direction becomes opposite if we start from the PDA that accepts the non-empty stack symbol as in \newcite{Nederhof:1993:GLP:976744.976780}.

Our discussion in the following chapters is based on the variant we presented in Section \ref{sec:bg:left-corner-pda}, which is relevant to \newcite{Nederhof:1993:GLP:976744.976780}.
However, we do not make any claims such that this algorithm is superior to the variant we introduced in this section.
Both are correct arc-eager left-corner PDAs, and we argue that the choice is rather arbitrary.
This arbitrariness is further discussed next, along with the limitation of both approaches as the psycholinguistic models.

Finally, our variant of the PDA in Section \ref{sec:bg:left-corner-pda} has been previously presented in \newcite{journals/coling/SchulerAMS10} and \newcite{vanschijndel-schuler:2013:NAACL-HLT}, though they do not mention the relevance of the algorithm to the parsing strategy.
Their main concern is in the psychological plausibility of the parsing model, and they argue that this variant is more plausible due to its inherent bottom-up nature (not starting from the predicted $S$ symbol).
They do not point out the difference of two algorithms in terms of the recognized center-embedded structures as we discussed here.
}

\subsection{Psycholinguistic motivation and limitation}
\label{sec:bg:psycho}
We finally summarize left-corner parsing and relevant theories in the psycholinguistics literature.
One well known observation about human language processing is that the sentences with multiply center-embedded constructions are quite difficult to understand, while left- and right-branching constructions seem to cause no particular difficulty \cite{Miller1963-MILFMO,gibson1998dlt}.
\eenumsentence{\item[a.]\# The reporter [who the senator [who Mary met] attacked] ignored the president.\label{sent:2:embedding}
\item[b.]Mary met the senator [who attacked the reporter [who ignored the president]].
}
The sentence (\ref{sent:2:embedding}a) is an example of a center-embedded sentence while (\ref{sent:2:embedding}b) is a right-branching sentence.
This observation matches the behavior of left-corner parsers, which increase its stack depth in processing center-embedded sentences only, as we discussed above.

It has been well established that center-embedded structures are a generally difficult construction \cite{gibson1998dlt,Chen2005144}, and this connection between left-corner parsers and human behaviors motivated researchers to investigate left-corner parsers as an approximation of human parsers \cite{Roark:2001:RPP:933637,journals/coling/SchulerAMS10,vanschijndel-schuler:2013:NAACL-HLT}.
The most relevant theory in psycholinguistics that accounts for the difficulty of center-embedding is the one based on the {\it storage cost} \cite{Chen2005144,COGS:COGS1067,nakatani2008}, i.e., the cost associated with keeping incomplete materials in memory.\footnote{
Another explanation for this difficulty is retrieval-based accounts such as the {\it integration cost} \cite{gibson1998dlt,Gibson2000The-dependency-} in the dependency locality theory.
We do not discuss this theory since the connection between the integration cost and the stack depth of left-corner parsers is less obvious, and it has been shown that the integration cost itself is not sufficient to account for the difficulty of center-embedding \cite{Chen2005144,COGS:COGS1067}.
}
For example, \newcite{Chen2005144} and \newcite{COGS:COGS1067} find that people read more center-embedded sentences more slower than less center-embedded sentences, in particular when entering new embedded clauses, through their reading time experiments of English and Japanese, respectively.
This observation suggests that there exists some sort of {\it storage} component in human parsers, which is consumed when processing more nested structures, as in the stack of left-corner parsers.

However, as we claimed in Section \ref{sec:intro:notabout}, our main goal in this thesis is not to deepen understanding of the mechanism of human sentence processing.
One reason of this is that there are some discrepancies between the results in the articles cited above and the behavior of our left-corner parser, which we summarize below.
Another, and perhaps more important limitation of left-corner parsers as an approximation of human parsers is that it cannot account for the sentence difficulties not relevant to center-embedding, such as the garden path phenomena:
\enumsentence{\# The horse raced past the barn fell,} \label{sent:2:gardenpath}
in which people feel difficulty at the last verb {\it fell}.
Also there exist some cases in which nested structures {\it do} facilitate comprehension, known as {\it anti-locality} effects \cite{konieczny2000,10.2307/4490268}.
These can be accounted for by another, non-memory-based theory called expectation-based account \cite{Hale2001,Levy20081126}, which is orthogonal in many aspects to the memory-based account \cite{jaeger2011language-gsc}.
We do not delve into those problems further and in the following we focus on the issues of the former mentioned above, which is relevant to our definition of center-embedding as well as the choice of the variant of left-corner PDAs (Section \ref{sec:bg:anothervariant}).

\begin{figure}[t]
 \centering
 {\small
 \begin{tikzpicture}[sibling distance=12pt]
  \tikzset{level distance=25pt}
   \Tree
   [.S
    [.NP 
     [.NP \edge[roof]; {The reporter} ]
     [.$\bar{\textrm{S}}$
      [.WP who ]
      [.S
       [.NP 
        [.NP
         [.DT the ]
         [.NP senator ]
        ]
        [.$\bar{\textrm{S}}$
         [.WP who ]
         [.S [.NP Mery ] [.VP met ] ]
        ]
       ]
       [.VP attacked ]
      ]
     ]
    ]
    [.VP \edge[roof]; {ignored the president} ]
   ]
 \end{tikzpicture}
 }
 \caption{The parse of the sentence (\ref{sent:2:embedding}a).}
 \label{fig:bg:reporter-parse}
\end{figure}

\paragraph{Discrepancies in definitions of center-embedding}
We argue here that sometimes the stack depth of our left-corner parser {\it underestimates} the storage cost for some center-embedded sentences in which linguists predict greater difficulty for comprehension.
More specifically, though \newcite{Chen2005144} claims the sentence (\ref{sent:2:embedding}a) is {\it doubly} center-embedded, our left-corner parser recognizes this is {\it singly} center-embedded, as its parse does not contain the zig-zag pattern in Figure \ref{fig:2:depth-2} (but in Figure \ref{fig:2:depth-1}).
Figure \ref{fig:bg:reporter-parse} shows the parse.
This discrepancy occurs due to our choice for the definition of center-embedding discussed in Section \ref{sec:bg:embedding}.
In our definition (Definition \ref{def:bg:embedding}), center-embedding always starts with a right edge.
In the case like Figure \ref{fig:bg:reporter-parse}, two main constituents ``The reporter ... attacked'' and ``ignored the president'' are connected with a left edge, and this is the reason why our definition of center-embedding as well as our left-corner parser predicts that this parse is singly nested.


Here we note that although our left-corner parser underestimates the center-embeddedness in some cases, it correctly estimates the relative difficulty of sentence (\ref{sent:2:embedding}a) compared to less nested sentences below.
\eenumsentence{
\item[a.]The senator [who Mary met] ignored the president.\label{sent:bg:sigle}
\item[b.]The reporter ignored the president.}
The problem is that both sentences above are recognized as not center-embedded although some literature in psycholinguistics (e.g., \newcite{Chen2005144}) assumes it is singly center-embedded.

{\Ja{書記が}}

\REVISE{
\begin{figure}[t]
 \centering
 {\small
 \begin{tikzpicture}[sibling distance=12pt]
  \tikzset{level distance=25pt}
   \Tree
     [.S
      [.NP {\Ja{書記が}} ]
      [.VP
       [.$\bar{\textrm{S}}$
        [.S
         [.NP {\Ja{代議士が}} ]
         [.VP
          [.$\bar{\textrm{S}}$
           [.S
            [.NP {\Ja{首相が}} ]
            [.VP {\Ja{うたた寝した}} ]
           ]
           [.ADP {\Ja{と}} ]
          ]
          [.VP {\Ja{抗議した}} ]
         ]
        ]
        [.ADP {\Ja{と}} ]
       ]
       [.VP {\Ja{報告した}} ]
      ]
     ]
 \end{tikzpicture}
 }
 \caption{The parse of the sentence (\ref{sent:bg:japanese-nested}).}
 \label{fig:bg:shoki-parse}
\end{figure}
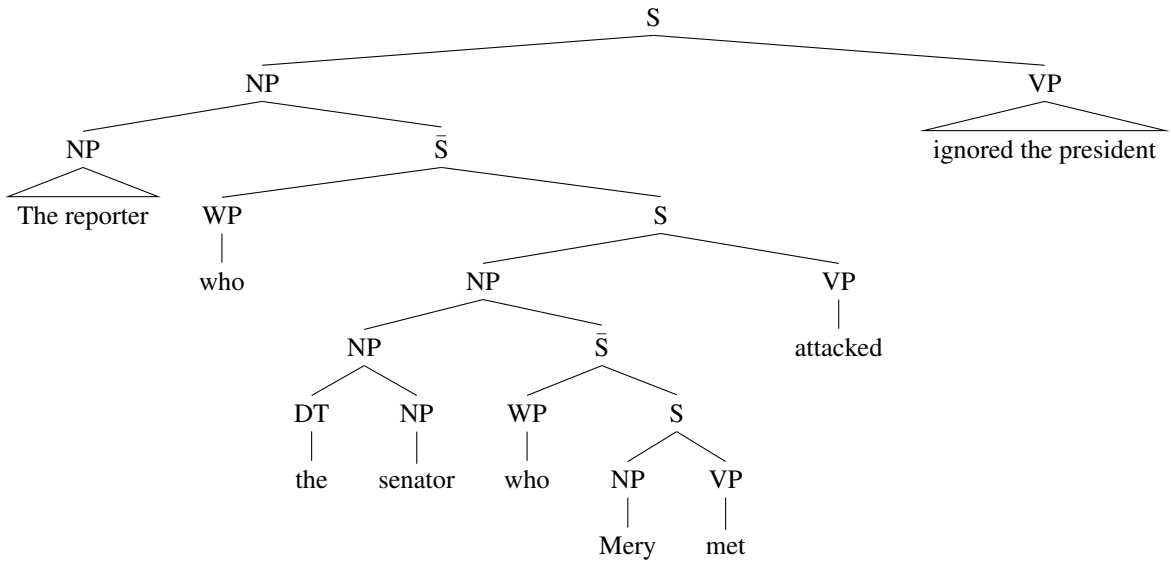
}

However, this mismatch does not mean that our left-corner parser always underestimates the predicted center-embeddedness by linguists.
\REVISE{
We give further examples below to make explicit the points.
\begin{itemize}
 \item As the example below \cite{nakatani2008} indicates, often in the parse of a Japanese sentence the degree of center-embedding matches the prediction by linguists.
 \eenumsentence{\item[\#]\Ja{書記が [代議士が [首相が うたた寝した と] 抗議した と] 報告した}\\
                         secretary-nom [congressman-nom [prime minister-nom dozed comp] protested comp] reported\\
                         The secretary reported that the congressman protested that the prime minister had dozed.\label{sent:bg:japanese-nested}
       }
       The parse is shown in Figure \ref{fig:bg:shoki-parse}, which contains the pattern in Figure \ref{fig:2:depth-2}.
       This is because two constituents ``{\Ja{書記が}}'' and ``{\Ja{代議士が ... 報告した}}'' are connected with a right edge in this case.
 \item This observation may suggest that our left-corner parser always underestimates the degree of center-embedding for specific languages, e.g., English.
       However, this is not generally true since we can make an English example in which two predictions are consistent, as in Japanese sentence, e.g., by making the sentence (\ref{sent:2:embedding}) as a large complement as follows:
       \enumsentence{\# He said [the reporter [who the senator [who Mary met] attacked] ignored the president].\label{sent:2:embedding:match}}
       In the example, ``He said'' does not cause additional embedding, as the constituent ``the reporter ... president'' is not embedded internally, and thus linguists predict that this is still doubly center-embedded.
       On the other hand, the parse now involves the pattern in Figure \ref{fig:2:depth-2}, suggesting that the predictions are consistent in this case.
\end{itemize}

The point is that since our left-corner parser (PDA) only regards the pattern starting from right edges as center-embedding, it underestimates the prediction by linguists when the direction of outermost edge in the parse is left, as in Figure \ref{fig:bg:reporter-parse}.
Though there might be some language specific tendency (e.g., English sentences might be often underestimated) we do not make such claims here, since the degree of center-embedding in our definition is determined purely in terms of the tree structure, as indicated by sentence (\ref{sent:2:embedding:match}).
We perform the relevant empirical analysis on treebanks in Chapter \ref{chap:transition}.

From the psycholinguistics viewpoint, this discrepancy might make our empirical studies in the following chapters less attractive.
However, as we noted in Section \ref{sec:intro:notabout}, our central motivation is rather to capture the universal constraint that every language may suffer from, though is computationally tractable, which we argue does not necessarily reflect correctly the difficulties reported by psycholinguistic experiments.

As might be predicted, the results so far become opposite if we employ another variant of PDA that we formulated in Section \ref{sec:bg:anothervariant}, in which the stack depth increases on the pattern starting from left edges, as in Figure \ref{fig:2:not-center-embedding}.
This variant of PDA estimates that the degree of center-embedding on the parse in Figure \ref{fig:bg:reporter-parse} will be two, while that of Figure \ref{fig:bg:shoki-parse} will be one.
This highlights that the reason of the observed discrepancies is mainly due to the computational tractability:
We can develop a left-corner parser so that its stack depth increases on center-embedded structures indicated by some zig-zag patterns, which are always starting from left (the variant of \newcite{conf/coling/Resnik92}), or right (our variant).
However, from an algorithm perspective, it is hard to allow both left and right directions, and this is the assumption of psycholinguists.

Again, we do argue that our choice for the variant of the left-corner PDA is rather arbitrary.
This choice may impact the empirical results in the following chapters, where we examine the relationships between parses on the treebanks and the incurred stack depth.
In the current study, we do not empirically compare the behaviors of two PDAs, which we leave as one of future investigations.

}

\section{Learning Dependency Grammars}
\label{sec:2:learning}

In this section we will summarize several basic ideas about learning and parsing of dependency grammars.
The dependency model with valence \cite{klein-manning:2004:ACL} is the most popular model for unsupervised dependency parsing, which will be the basis of our experiments in Chapter \ref{chap:induction}.
We formalize this model in Section \ref{sec:2:dmv} as a special instance of split bilexical grammars (Section \ref{sec:2:sbg}).
Before that, this section first reviews some preliminaries on a learning mechanism, namely, probabilistic context-free grammars (Section \ref{sec:2:pcfg}), chart parsing (Section \ref{sec:2:cky}), and parameter estimation with the EM algorithm (Section \ref{sec:2:em}).

\subsection{Probabilistic context-free grammars}
\label{sec:2:pcfg}

Here we start the discussion with probabilistic context-free grammars (PCFGs) because they will allow use of generic parsing and parameter estimation methods that we describe later.
However, we note that for the grammar to be applied these algorithms, the grammar should not necessarily be a PCFG. 
We will see that in fact split bilexical grammars introduced later cannot always be formulated as a correct PCFG.
Nevertheless, we begin this section with the discussion of PCFGs mainly because:
\begin{itemize}
 \item the ideas behind chart parsing algorithms (Sections \ref{sec:2:cky} and \ref{sec:2:em}) can be best understood with a simple PCFG; and
 \item we can obtain a natural generalization of these algorithms to handle a special class of grammars (not PCFGs) including split bilexical grammars.
       We will describe the precise condition for a grammar to be applied these algorithms later.
\end{itemize}

Formally a PCFG is a tuple $G=(N,\Sigma,P,S,\theta)$ where $(N,\Sigma,P,S)$ is a CFG (Section \ref{sec:2:cfg}) and $\theta$ is a vector of non-negative real values indexed by production rules $P$ such that
\begin{equation}
 \sum_{A\rightarrow \beta \in P_A} \theta_{A \rightarrow \beta} = 1, \label{eqn:2:normalize}
\end{equation}
where $P_A \subset P$ is a collection of rules of the form $A \rightarrow \beta$.
We can interpret $\theta_{A \rightarrow \beta}$ as the conditional probability of choosing a rule $A \rightarrow \beta$ given that the nonterminal being expanded is $A$.

With this model, we can calculate the score (probability) of a parse as the product of rules appeared on that.
Let a parse be $z$ that contains rules $r_1,r_2,\cdots$;
then the probability of $z$ under the given PCFG is
\begin{align}
 P(z|\theta) &= \prod_{r_i \in z} \theta_{r_i} \\
             &= \prod_{r \in P} \theta_r^{f(r,z)}, \label{eqn:2:pz}
\end{align}
where $f(r,z)$ is the number of occurrences of a rule $r$ in $z$.

We can also interpret a PCFG as a directed graphical model that defines a distribution over CFG parses.
The generative process is described as follows:
Starting at $S$ (start symbol), it chooses to apply a rule $S\rightarrow \beta$ with probability $\theta_{S\rightarrow \beta}$;
$\beta$ defines the symbols of the children, which are then expanded recursively to generate their subtrees.
This process stops when all the leaves of the tree are terminals.
Note that this process also generates a sentence $x$, which is obtained by concatenating every terminal symbol in $z$, meaning that:
\begin{equation}
 P(z|\theta) = P(x,z|\theta). \label{eqn:2:pz-equality}
\end{equation}

Some questions arise when applying this model to real parsing applications like grammar induction:
\begin{description}
 \item[Parsing] Given a PCFG $G$, how to find the best (highest probablity) parse among all possible parses?
 \item[Learning] Where do the probabilities, or rule weights $\theta$ come from?
\end{description}
The nice property of PCFGs is that there is a very general solution for these questions.
We first discuss the first question in Section \ref{sec:2:cky}, and then deal with the second question in Section \ref{sec:2:em}.

\subsection{CKY Algorithm}
\label{sec:2:cky}

Let us define some notations first.
We assume the input sentence is a length $n$ sentence, $x = x_1 x_2 \cdots x_n$ where $x_i \in \Sigma$.
For $i \leq j$, $x_{i,j} = x_i x_{i+1} \cdots x_{j}$ denotes an input substring.
We assume the grammar is in CNF (Section \ref{sec:2:cfg}), which makes the discussion much simpler.

Given an input sentence $x$ and a PCFG with parameters $\theta$, the goal of parsing is to solve the following argmax problem:
\begin{equation}
 z' = \arg\max_{z\in \mathcal{Z}(x)} P(z|\theta), \label{eq:2:cky-argmax}
\end{equation}
where $\mathcal{Z}(x)$ is a set of all possible parses on $x$.
Now we describe a general algorithm to solve this problem in polynomial time called the CKY algorithm, which also plays an essential role in parameter esmation of $\theta$ discussed in Section \ref{sec:2:em}.

For the moment we simplify the problem as calculating the {\it probability} of the best parse instead of the best parse itself (Eq. \ref{eq:2:cky-argmax}).
We later describe that the argmax problem can be solved with a small modification to this algorithm.

The CKY algorithm is a kind of chart parsing.
For an input string, there are too many, exponential number of parses, which prohibit to enumerate one by one.
To enable search in this large space, a chart parser divides the problem into subproblems, each of which analyze a small span $x_{i,j}$ and then is combined into an analysis of a larger span.

Let $C$ be a chart, which gives mapping from a signature of a subspan, or an item $(i,j,N)$ to a real value.
That is, each cell of $C$ keeps the probability of the best (highest score) analysis for a span $x_{i,j}$ with symbol $N$ as its root.
Algorithm \ref{alg:cky} describes the CKY parsing, in which each chart cell is filled recursively.
The procedure {\sc Fill}($i,j,N$) return a value with memoization.
This procedure is slightly different from the ones found in some textbooks \cite{manning99foundations}, which instead fill chart cells in specific order.
We do no take this approach since our memoization-based technique need not care about correct order of filling chart cells, which is somewhat involved for more complex grammars such as the one we present in Chapter \ref{chap:induction}.

\begin{algorithm}[t]
 \caption{CKY Parsing}\label{alg:cky}
 \hspace{18pt}
 \begin{minipage}{0.95\textwidth}
 \begin{algorithmic}[1]
 \Procedure{Parse}{$x$}
 \State $C[1,n,S] \leftarrow {\textrm{\sc Fill}}(1,n,S)$ \Comment{Recursively fills chart cells.}
 \State \Return $C[1,n,S]$
 \EndProcedure
 \Procedure{Fill}{$i,j,N$}
 \If{$(i,j,N) \not\in C$}
 \If{$i = j$} \label{alg:cky:fill-start}
 \State $C[i,i,N] \leftarrow \theta_{N \rightarrow x_{i}}$ \Comment{Terminal expansion.} \label{alg:cky:terminal}
 \Else
 \State $C[i,j,N] \leftarrow \max_{N \rightarrow A~B \in R; k \in [i,j]} \theta_{N \rightarrow A~B} \times {\textrm{\sc Fill}}(i,k,A) \times {\textrm{\sc Fill}}(k,j,B)$ \label{alg:cky:recursive}
 \EndIf \label{alg:cky:fill-end}
 \EndIf
 \State \Return $C[i,j,N]$
\EndProcedure
 \end{algorithmic}  
 \end{minipage}
\end{algorithm}

The crucial point in this algorithm is the recursive equation in line \ref{alg:cky:recursive}.
The assumption here is that since the grammar is context-free, the parses of subspans (e.g., spans with signatures $(i,k,A)$ and $(k,j,B)$) in the best parse of $x_{i,j}$ should also be the best parses at subspan levels.
The goal of the algorithm is to fill the chart cell for an item $(1,n,S)$ where $S$ is the start symbol, which corresponds the analysis of the full span (whole sentence).

To get the best parse, what we need to modify in the algorithm \ref{alg:cky} is to keep backpointers to the cells of the best children when filling each chart cell during lines \ref{alg:cky:fill-start}--\ref{alg:cky:fill-end}.
This is commonly done by preparing another chart, in which each cell keeps not a numerical value but backpointers into other cells in that chart.
The resulting algorithm is called the {\it Viterbi} algorithm, the details of which are found in the standard textbooks \cite{manning99foundations}.

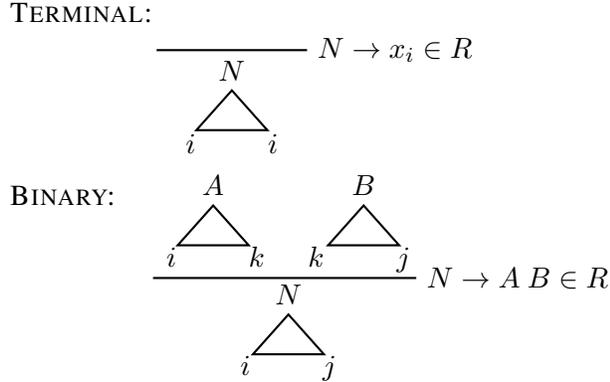
\begin{figure}[t]
 \centering
  \begin{minipage}[t]{.5\linewidth}
   \begin{tikzpicture}[thick, level distance=0.8cm]
    \node {\sc Terminal:};
    \draw (1, -0.5) -- +(2, 0) node[right] {$N\rightarrow x_i \in R$};
    \begin{scope}[xshift=2cm, yshift=-0.9cm]
     \Tree
     [.$N$ \edge[roof,thick]; {~~~~~~~~} ];
     \node at (0, -0.85cm) {$i~~~~~~~~~~i$};
    \end{scope}
   \end{tikzpicture}
  \end{minipage}
 \begin{minipage}[t]{.5\linewidth}
  \begin{tikzpicture}[thick, level distance=0.8cm]
   \node {\sc Binary:};
   \begin{scope}[xshift=2cm, yshift=0cm]
    \Tree
    [.$A$ \edge[roof,thick]; {~~~~~~~~} ];
   \end{scope}
   \begin{scope}[xshift=4cm, yshift=0cm]
    \Tree
    [.$B$ \edge[roof,thick]; {~~~~~~~~} ];
   \end{scope}
   \node at (3, -0.85) {$i~~~~~~~~~~k~~~~~~k~~~~~~~~~~j$};
   \draw (1.2, -1.1) -- +(3.5, 0) node[right] {$N\rightarrow A~B \in R$};
   \begin{scope}[xshift=3cm, yshift=-1.45cm]
    \Tree
    [.$N$ \edge[roof,thick]; {~~~~~~~~} ];
    \node at (0, -0.85cm) {$i~~~~~~~~~~j$};
   \end{scope}
  \end{tikzpicture}
 \end{minipage}
 \caption{Inference rules of the CKY algorithm. {\sc Terminal} rules correspond to the terminal expansion in line \ref{alg:cky:terminal} of the Algorithm \ref{alg:cky}; {\sc Binary} rules correspond to the one in line \ref{alg:cky:recursive}. Each rule specifies how an analysis of a larger span (below ---) is derived from the analyses of smaller spans (above ---) provided that the input and grammar satisfy the side conditions in the right of ---.}
 \label{fig:2:cky}
\end{figure}

The behavior of the CKY parsing is characterized by the procedure of filling chart cells in lines \ref{alg:cky:fill-start}--\ref{alg:cky:fill-end} of Algorithm \ref{alg:cky}.
We often write this procedure visually as in Figure \ref{fig:2:cky}.
With this specification we can analyze the time complexity of the CKY algorithm, which is $O(n^3|P|)$ where $|P|$ is the number of allowed rules in the grammar because each rule in Figure \ref{fig:2:cky} is performed only once \footnote{Once the chart cell of some item is calculated, we can access to the value of that cell in $O(1)$.} and there are at most $O(n^3|P|)$ ways of instantiations for {\sc Binary} rules.\footnote{{\sc Terminal} rules are instantiated at most $O(n|N|)$ ways, which is smaller than $O(n^3|R|)$.}

\subsection{Learning parameters with EM algorithm}
\label{sec:2:em}

Next we briefly describe how rule weights $\theta$ can be estimated given a collection of input sentences.
This is the setting of {\it unsupervised} learning.
In supervised learning, we can often learn parameters more easily by counting rule occurrences in the training treebank \cite{P97-1003,J98-4004,klein-manning:2003:ACL}.

\paragraph{EM algorithm}

Assume we have a set of training examples $\mathbf x = x^{(1)}, x^{(2)},\cdots,x^{(m)}$.
Each example is a sentence $x^{(i)} = x^{(i)}_1 x^{(i)}_2 \cdots x^{(i)}_{n_i}$ where $n_i$ is the length of the $i$-th sentence.
Our goal is to estimate parameters $\theta$ given $\mathbf x$, which is good in some criteria.

The EM algorithm is closely related to maximum likelihood estimation in that it tries to estimate $\theta$, which maximizes the following log-likelihood of the observed data $\mathbf x$:
\begin{equation}
L(\theta,\mathbf x) = \sum_{1 \leq i \leq m} \log p(x^{(i)} | \theta) = \sum_{1 \leq i \leq m} \log \sum_{z \in \mathcal{Z}(x^{(i)})} p(x^{(i)}, z | \theta), \label{eqn:2:emobj}
\end{equation}
where $p(x^{(i)}, z|\theta)$ is given by Eq. \ref{eqn:2:pz} due to Eq. \ref{eqn:2:pz-equality}.
However, calculating $\theta$ that maximizes this objective is generally intractable \cite{DEMP1977}.
The idea of EM algorithm is instead of getting the optimal $\theta$, trying to increase Eq. \ref{eqn:2:emobj} up to some point to find the locally optimal values of $\theta$, starting from some initial values $\theta_0$.
It is an iterative procedure and updates parameters as $\theta^{(0)} \rightarrow \theta^{(1)} \rightarrow \cdots$ until specific number of iterations (or until $L(\theta,\mathbf x)$ does not increase).

Each iteration of the EM algorithm proceeds as follows.
\begin{description}
 \item[E-step] Given the current parameters $\theta^{(t)}$, calculate the expected counts of each rule $e(r|\theta^{(t)})$ as
            \begin{equation}
             e(r|\theta^{(t)}) = \sum_{1 \leq i \leq m} e_{x^{(i)}}(r|\theta^{(t)}),
            \end{equation}
            where $e_{x}(r|\theta^{(t)})$ is the expected counts of $r$ in a sentence $x$, given by
            \begin{equation}
             e_{x}(r|\theta^{(t)}) = \sum_{z \in \mathcal{Z}(x)} p(z|x) f(r,z). \label{eqn:2:ex}
            \end{equation}
            where $f(r,z)$ is the number of times that $r$ appears in $z$.
            As in the parsing problem in Eq. \ref{eq:2:cky-argmax}, it is impossible to directly calculate Eq. \ref{eqn:2:ex} by enumerating every parse.
            Below, we describe how this calculation becomes possible with the dynamic programming algorithm called the inside-outside algorithm, which is similar to CKY.
 \item[M-step] Update the parameters as follows:
 \begin{equation}
  \theta^{(t+1)}_{A \rightarrow \beta} = \frac{e(A \rightarrow \beta | \theta^{(t)})}{\sum_{\alpha: A\rightarrow \alpha \in R} e(A \rightarrow \alpha | \theta^{(t)}) }
 \end{equation}
            This update is similar to the standard maximum likelihood estimation in the supervised learning setting, in which we observe the explicit counts of each rule.
            In the EM algorithm we do not explicitly observe rule counts, so we use the expected counts calculated with the previously estimated parameters.
            We can show that this procedure always increases the log likelihood (Eq. \ref{eqn:2:emobj}) until convergence though the final parameters are not globally optimum.
\end{description}

\paragraph{Inside-outside algorithm}

We now explain how the expected rule counts $e_{x}(r|\theta)$ are obtained for each $r$ given sentence $x$.
Let $r$ be a binary rule $r = A \rightarrow B~C$.
First, it is useful to decompose $e_{x}(r|\theta)$ as the expected counts on each subspan as follows:
\begin{equation}
 e_{x}(r|\theta) = \sum_{1\leq i \leq k \leq j \leq n_x} e_x(z_{i,k,j,r}|\theta).
\end{equation}
$e_x(z_{i,k,j,r}|\theta)$ is the expected counts of an event that the following fragment occurs in a parse $z$.
\begin{center}
\tikz[level distance=0.8cm]{
  \Tree 
  [.$A$
  [.$B$ \edge[roof]; {~~~~~~~~} ]
  [.$C$ \edge[roof]; {~~~~~~~~} ]
  ]
 \node at (0,-1.7) {$i~~~~~~~~~k~~~~~~~~~j$};
}
\end{center}

\noindent $z_{i,k,j,r}$ is an indicator variable\footnote{We omit dependence for $x$ for simplicity.} that is 1 if the parse contains the fragment above.
Because the expected counts for an indicator variable are the same as the conditional probability of that variable \cite{Bishop:2006:PRM:1162264}, we can rewrite $e_x(z_{i,k,j,r}|\theta)$ as follows:
\begin{align}
 e_x(z_{i,k,j,r}|\theta) &= p(z_{i,k,j,r} = 1 | x, \theta) \\
                  &= \frac{p(z_{i,k,j,r} = 1, x | \theta)}{p(x|\theta)}. \label{eqn:2:ew}
\end{align}
Intuitively the numerator in Eq. \ref{eqn:2:ew} is the total probability for generating parse trees that yield $x$ and contain the fragment $z_{i,k,j,r}$.
The denominator $p(x|\theta)$ is the marginal probability of the sentence $x$.

We first consider how to calculate $p(x|\theta)$, which can be done with a kind of CKY parsing;
what we have to modify is just to replace the $\max$ operation in the line \ref{alg:cky:recursive} in Algorithm \ref{alg:cky} by the summation operation.
Then each chart cell $C[i,j,N]$ keeps the marginal probability for a subspan $x_{i,j}$ rooted at $N$.
After filling the chart, $C[1,n_x,S]$ is the sentence marginal probability $p(x|\theta)$.
The marginal probability for the signature $(i,j,N)$ is called the {\it inside} probability, and this chart algorithm is called the inside algorithm, which calculates inside probabilities by filling chart cells recursively.

\REVISE{
Calculation of $p(z_{i,k,j,r} = 1, x | \theta)$ is more elaborate so we only sketch the idea here.
Analogous to the inside probability introduced above, we can also define the outside probability $O(i,j,N)$, which is the marginal probability for the outside of the span with signature $(i,j,N)$;
that is,
\begin{equation}
 O(i,j,N) = p(x_{1,i-1},N,x_{j+1,n}|\theta),
\end{equation}
in which $N$ roots the subspan $x_{i,j}$.
Since $I(i,j,N) = p(x_{x_{i,j}}| N,\theta)$, given $r = N \rightarrow A~B$ we obtain:
\begin{equation}
 p(z_{i,k,j,r} = 1, x_{i,j}| N, \theta) = \theta_{N \rightarrow A~B} \times I(i,k,A) \times I(k,j,B), 
\end{equation}
which is the total probability of parse trees for the subspan $x_{i,j}$ that contains the fragment indicated by $z_{i,k,j,r}$.
Combining these two terms, we obtain:
\begin{align}
 p(z_{i,k,j,r} = 1, x | \theta) &= p(z_{i,k,j,r} = 1, x_{i,j}| N, \theta) \times p(x_{1,i-1},N,x_{j+1,n}|\theta) \\
  &= \theta_{N \rightarrow A~B} \times I(i,k,A) \times I(k,j,B) \times O(i,j,N). \label{eqn:2:io}
\end{align}
}

\subsection{When the algorithms work?}
\label{sec:2:when}
So far we have assumed the underlying grammar is a PCFG, for which we have introduced two algorithms, the CKY algorithm and the EM algorithm with inside-outside calculation.
However as we noted in the beginning of Section \ref{sec:2:pcfg}, the scope of these algorithms is not limited to PCFGs.
What is the precise condition under which these algorithms can be applied?

PCFGs are a special instance of weighted CFGs, in which each rule has a weight but the sum of rule weights from a parent nonterminal (Eq. \ref{eqn:2:normalize}) is not necessarily normalized.
As we see next, the split bilexical grammars in Section \ref{sec:2:sbg} can always be converted to a weighted CFG but may not be converted to a PCFG.

The CKY algorithm can be applied to {\it any} weighted CFGs.
This is easily verified because only the assumption in the Algorithm \ref{alg:cky} is that the grammar is context-free for being able to divide a larger problem into smaller subproblems.

The condition for the inside-outside algorithm is more involved.
Let us assume that we have a generative model of a parse, which is not originally parameterized with a PCFG, and also we have a weighted CFG designed so that the score that this weighted CFG gives to a (CFG) parse is the same as the probability that the original generative model assigns to the corresponding parse in the original form (not CFG) (The model in Section \ref{sec:2:sbg} is an example of such cases.).

Then, the necessary condition on this weighted CFG is that there is no spurious ambiguity between two representations (Section \ref{sec:bilexical});
that is, a CFG parse can be uniquely converted to the parse in the original form, and vice versa.
The main reason why the spurious ambiguity cause a problem is that the quantities used to calculate expected counts (Eq. \ref{eqn:2:ew}) are not correctly defined if the spurious ambiguity exists.
For example, the sentence marginal probability $p(x|\theta)$ would not be equal to the inside probability for the whole sentence $I(1,n_x,S)$ when the spurious ambiguity exists since if there are multiple CFG derivations for a single parse in the original form the inside probability calculated with the weighted CFG would overestimate the true sentence marginal probability.
The same issue happens for another quantity in Eq. \ref{eqn:2:io}.
On the other hand, if there is one-to-one correspondence between two representations, it should hold at the smaller subspan levels and it is this transparency that guarantees the correctness of the inside-outside algorithm even if the grammar is not strictly a PCFG.

\subsection{Split bilexical grammars}
\label{sec:2:sbg}

\begin{figure}[t]
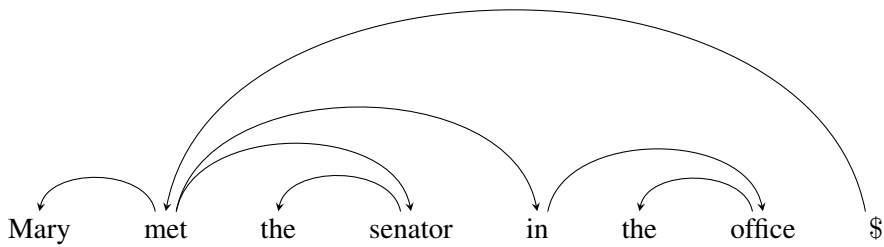

 \centering
 \begin{dependency}[theme=simple]
  \begin{deptext}[column sep=0.8cm]
   Mary \& met \& the \& senator \& in \& the \& office \& $\$$ \\
  \end{deptext}
  \depedge{2}{1}{}
  \depedge{2}{4}{}
  \depedge{4}{3}{}
  \depedge{2}{5}{}
  \depedge{5}{7}{}
  \depedge{7}{6}{}
  \depedge{8}{2}{}
 \end{dependency}
 \caption{Example of a projective dependency tree generated by a SBG. $\$$ is always placed at the end of a sentence, which has only one dependent in the left direction.}
 \label{fig:2:deptree-sbg}
\end{figure}

The split bilexical grammars, or SBGs \cite{eisner-2000-iwptbook} is a notationally simpler variant of split head-automaton grammars \cite{Eisner:1999:EPB:1034678.1034748}.
Here we describe this model as a generalization of the specific model described in Section \ref{sec:2:dmv}, the dependency model with valence (DMV) \cite{klein-manning:2004:ACL}.
We will give a somewhat in-depth explanation of this grammar below because it will be the basis of our proposal in Chapter \ref{chap:induction}.

The explanation below basically follows \newcite{eisner2010}.
A SBG defines a distribution over projective dependency trees.
This model can easily be converted to an equivalent weighted CFG, although some effort is needed to remove the {\it spurious ambiguity}.
We will show that by removing it the time complexity can also be improved from $O(n^5)$ to $O(n^3)$.
In Chapter \ref{chap:induction} we will invent the similar technique for our model that follows the left-corner parsing strategy.

\paragraph{Model}
A (probabilistic) SBG is a tuple $G_{SBG}=(\Sigma, \$, L, R)$.
$\Sigma$ is an alphabet of words that may appear in a sentence.
$\$ \not\in \Sigma$ is a distinguished root symbol, which we describe later; let $\bar \Sigma\ = \Sigma \cup \{\$\}$.
$L$ and $R$ are functions from $\bar \Sigma$ to probabilistic $\epsilon$-free finite-state automata over $\Sigma$;
that is, for each $a \in \bar \Sigma$ the SBG specifies ``left'' and ``right'' probabilistic FSAs, $L_a$ and $R_a$.
We write $q \xmapsto{~a'~} r \in R_{a}$ to denote a state transition from $q$ to $r$ by adding $a'$ to $a$'s right dependents when the current right state of $a$ is $q$.
Also each model defines $\textit{init}(L_a)$ and $\textit{init}(R_a)$ that return the set of initial states for $a$ in either direction (usually the initial state is unique given the head $a$ and the direction).
$\textit{final}(L_a)$ is a set of final states;
$q \in \textit{final}(L_a)$ means that $a$ can stop generating its left dependents.
We will show that by changing the definitions of these functions several generative models over dependency trees can be represented in a unified framework.

The model generates a sentence $x_1 x_2 \cdots x_{n} \$$ along with a parse, given the root symbol $\$$, which is always placed at the end.
An example of a parse is shown in Figure \ref{fig:2:deptree-sbg}.
SBGs define the following generative process over dependency trees:
\begin{enumerate}
 \item The root symbol $\$$ generates a left dependent $a$ from $q\in \textit{init}(L_\$)$.
       $a$ is regarded as the conventional root word in a sentence (e.g., {\it met} in Figure \ref{fig:2:deptree-sbg}).
 \item Recursively generates a parse tree.
       Given the current head $a$, the model generates its left dependents and its right dependents.
       This process is head-outward, meaning that the closest dependent is generated first.
       For example, the initial left state of {\it met}, $q \in \textit{init}(L_{\textit met})$ generates {\it Mary}.
       The right dependents are generated as follows:
       First {\it senator} is generated from $q_0 \in \textit{init}(R_{\textit met})$.
       Then the state may be changed by a transition $q_0 \xmapsto{\textit{senator}} q_1$ to $q_1 \in R_{\textit met}$, which generates {\it in}.
       The process stops when every token stops generating its left and right dependents.
\end{enumerate}

This model can generalize several generative models over dependency trees.
Given $a$, $L_a$ and $R_a$ define distributions over $a$'s left and right children, respectively.
Since the automata $L_a$ and $R_a$ have the current state, we can define several distributions by customizing the topology of state transitions.
For example if we define the automata of the form:

\begin{center}
\tikz[thick, level distance=0.8cm, >=stealth']{
  \def\state#1{
  \draw (0,0) circle (0.25);
  \draw (0,0) circle (0.5);
  \node at (45:0.75) {#1};
  }
  \state{$q_0$};
  \draw[->] (0.55, 0) -- +(1.9, 0);
  \begin{scope}[xshift=3.0cm,yshift=0cm]
   \state{$q_1$};
  \end{scope}
  \draw[rounded corners=8pt,->] (3.55, 0.25) -- ++(0.9, 0) -- ++(0, -0.5) -- ++(-0.9, 0);
}
\end{center}

\noindent it would allow the first (closet) dependent to be chosen differently from the rest ($q_0$ defines the probability of the first dependent).
If we remove $q_0$, the resulting automata are \tikz[baseline=-4pt]{
\draw (0,0) circle (2pt);
\draw (0,0) circle (5pt);
\draw[rounded corners=3pt,->,>=stealth'] (6pt,3pt) -- ++(12pt,0) -- ++(0,-6pt) -- ++(-12pt,0);
} with a single state $q_1$, so token $a$'s left (or right) dependents are conditionally independent of one another given $a$.\footnote{SBGs can also be used to encode second-order {\it adjacent} dependencies, i.e., $a$'s left or right dependent to be dependent on its sibling word just generated before, although in this case there exists more efficient factorization that leads to a better asymptotic runtime \cite{McDonald2006,johnson:2007:ACLMain}.}


\paragraph{Spurious ambiguity in naive $\boldsymbol{O(n^5)}$ parsing algorithm}

We now describe that a SBG can be converted to a weighted CFG, though the distributions associated with $L_a$ and $R_a$ cannot always be encoded in a form of a PCFG.
Also, as we see below, the grammar suffers from the spurious ambiguity, which prevent us to apply the inside-outside algorithm (see Section \ref{sec:2:when}).

The key observation for this conversion is that we can represent a subtree of SBGs as the following triangle, which can be seen as a special case of a subtree used in the ordinary CKY parsing.
\begin{center}
 \vspace{-5pt}
 \tikz[thick, level distance=0.8cm]{
 \Tree
  [.$q_1~q_2$ \edge[roof,thick]; {~~~~~~~~} ];
  \node at (0, -0.85) {$i~~~~h~~~~j$};
 }\vspace{-10pt}
\end{center}

\noindent The main difference from the ordinary subtree representation is that it is decorated with an additional index $h$, which is the position of the head word.
For example in the analysis of Figure \ref{fig:2:deptree-sbg}, a subtree on {\it met the senator} is represented by setting $i=2, j=4, h=2$.
$q_1$ and $q_2$ are the current $h$'s left and right states respectively.

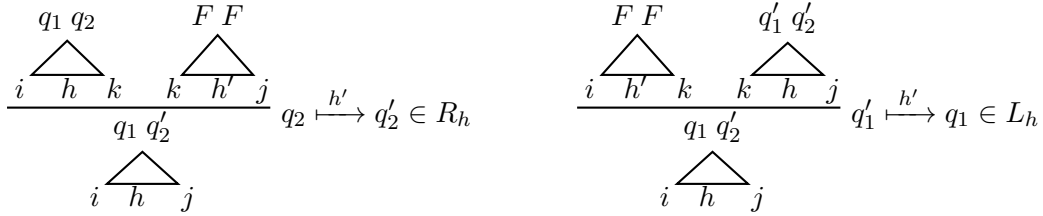
\begin{figure}[t]
 \centering
 \begin{minipage}[t]{.49\linewidth}
  \centering
  \begin{tikzpicture}[thick, level distance=0.8cm]
  \begin{scope}[xshift=2cm, yshift=0cm]
   \Tree
   [.$q_1~q_2$ \edge[roof,thick]; {~~~~~~~~} ];
  \end{scope}
  \begin{scope}[xshift=4cm, yshift=0cm]
   \Tree
   [.$F~F$ \edge[roof,thick]; {~~~~~~~~} ];
  \end{scope}
  \node at (3, -0.86) {$i~~~~~h~~~~k~~~~~~k~~~~h'~~~j$};
  \draw (1.2, -1.1) -- +(3.5, 0) node[right] {$q_2 \xmapsto{~h'~} q_2' \in R_{h}$};
  \begin{scope}[xshift=3cm, yshift=-1.45cm]
   \Tree
   [.$q_1~q_2'$ \edge[roof,thick]; {~~~~~~~~} ];
   \node at (0, -0.86cm) {$i~~~~h~~~~~j$};
  \end{scope}
  \end{tikzpicture}
 \end{minipage}
 \begin{minipage}[t]{.49\linewidth}
 \centering
 \begin{tikzpicture}[thick, level distance=0.8cm]
  \begin{scope}[xshift=2cm, yshift=0cm]
   \Tree
   [.$F~F$ \edge[roof,thick]; {~~~~~~~~} ];
  \end{scope}
  \begin{scope}[xshift=4cm, yshift=0cm]
   \Tree
   [.$q_1'~q_2'$ \edge[roof,thick]; {~~~~~~~~} ];
  \end{scope}
  \node at (3, -0.86) {$i~~~~h'~~~~k~~~~~~k~~~~h~~~~j$};
  \draw (1.2, -1.1) -- +(3.5, 0) node[right] {$q_1' \xmapsto{~h'~} q_1 \in L_{h}$};
  \begin{scope}[xshift=3cm, yshift=-1.45cm]
   \Tree
   [.$q_1~q_2'$ \edge[roof,thick]; {~~~~~~~~} ];
   \node at (0, -0.86cm) {$i~~~~h~~~~~j$};
  \end{scope}
 \end{tikzpicture}
 \end{minipage}
 \caption{Binary inference rules in the naive CFG conversion.
 $F$ means the state is a final state in that direction.
 Both left and right consequent items (below ---) have the same item but from different derivations, suggesting 1) the weighted grammar is not a PCFG;
 and 2) there is the spurious ambiguity.}
 \label{fig:2:binary-naive-sbg}
\end{figure}

We can assume a tuple ($h$, $q_1$, $q_2$) to comprise a nonterminal symbol of a CFG.
Then the grammar is a PCFG if the normalization condition (Eq. \ref{eqn:2:normalize}) is satisfied for every such tuple.
Note now each rule looks like $(h,q_1,q_2) \rightarrow \beta$.

Figure \ref{fig:2:binary-naive-sbg} explains why the grammar cannot be a PCFG.
We can naturally associate PCFG rule weights for these rules with transition probabilities given by the automata $L_h$ and $R_h$.\footnote{Here and the following, we occasionally abuse the notation and use $L_h$ or $R_h$ to mean the automaton associated with word $x_h$ at index position $h$.}
However, then the sum of rule weights of the converted CFG starting from symbol $(a,q_1,q_2')$ is not equal to 1.
The left rule of Figure \ref{fig:2:binary-naive-sbg} means the converted CFG would have rules of the form $(a,q_1,q_2') \rightarrow (a,q_1,q_2)~(a',F,F)$.
The weights associated with these rules are normalized across $a' \in \Sigma$, as state transitions are deterministic given $q_2'$ and $a'$.
The problem is that the same signature, i.e., $(a,q_1,q_2')$ on the same span can also be derived from another rule in the right side of Figure \ref{fig:2:binary-naive-sbg}.
This distribution is also normalized, meaning that the sum of rule weights is not 1.0 but 2.0, and thus the grammar is a weighted CFG, not a PCFG.
The above result also suggests that the grammar suffers from the spurious ambiguity, 
Below we describe how this ambiguity can be removed with modifications.

As a final note, the time complexity of the algorithm in Figure \ref{fig:2:binary-naive-sbg} is very inefficient, $O(n^5)$ because there are five free indexes in each rule.
This is in contrast to the complexity of original CKY parsing, which is $O(n^3)$.
The refinement described next also fix this problem, and we obtain the $O(n^3)$ algorithm for parsing general SBGs.

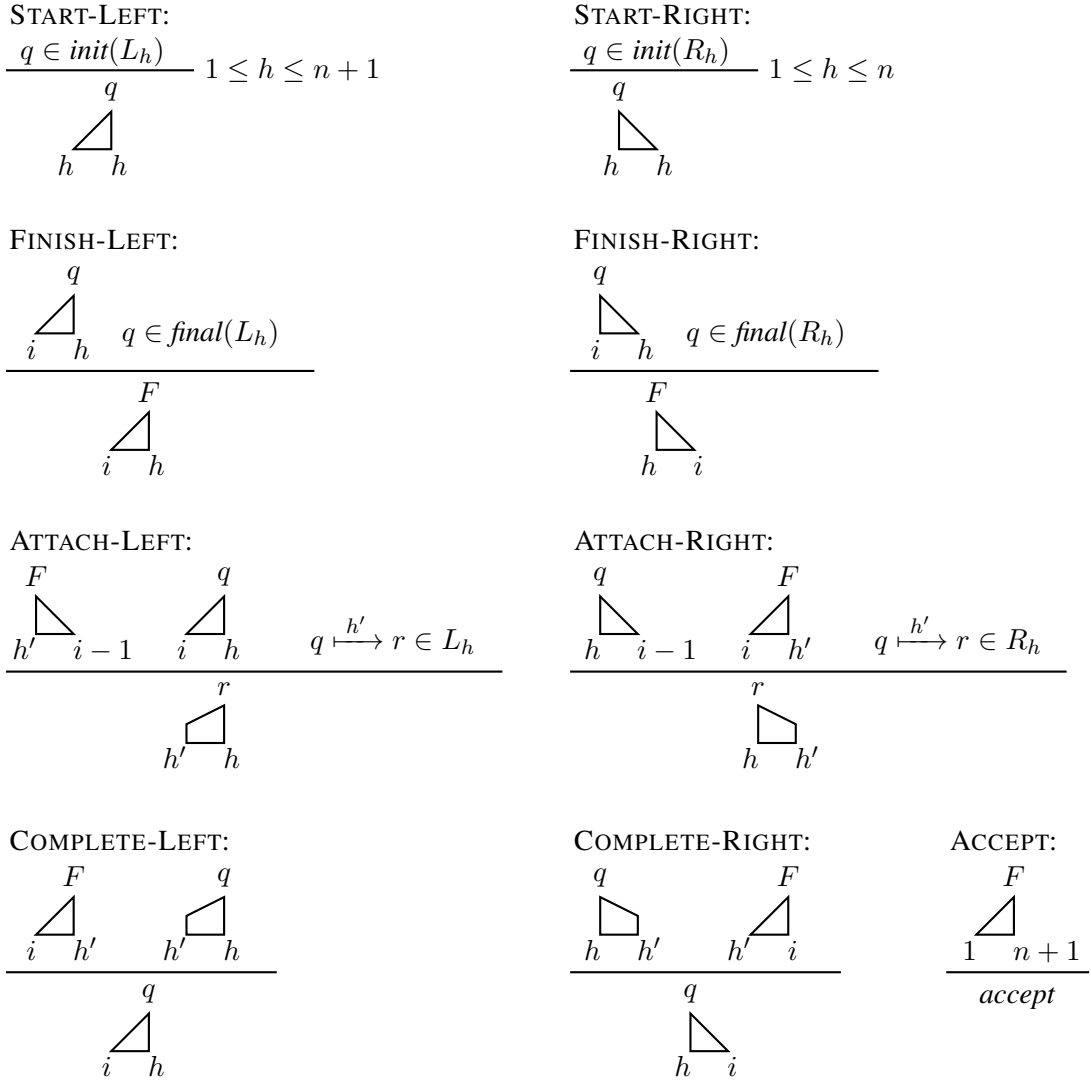
\begin{figure}[p]
 \begin{tikzpicture}[thick, level distance=0.8cm, >=stealth']
  \begin{scope}[xshift=0cm,yshift=0cm]
   \node at (0, 0) [anchor=west] {\sc Start-Left:};
   \node at (1.25, -0.5) {$q \in \textit{init}(L_h)$};
   \draw (0.1, -0.75) -- +(2.5, 0) node[right] {$1 \leq h \leq n+1$};
   \begin{scope}[xshift=1.5cm, yshift=-1.3cm]
    \lefttriangle{$q$}{$h$}{$h$};
   \end{scope}
  \end{scope}
  
  \begin{scope}[xshift=0cm,yshift=-3cm]
   \node at (0, 0) [anchor=west] {\sc Finish-Left:};
   \begin{scope}[xshift=1.0cm, yshift=-0.75cm]
    \lefttriangle{$q$}{$i$}{$h$};
   \end{scope}
   \node at (1.5, -1.25) [anchor=west] {$q \in \textit{final}(L_h)$};
   \draw (0.1, -1.75) -- +(4.1, 0);
   \begin{scope}[xshift=2.0cm, yshift=-2.3cm]
    \lefttriangle{$F$}{$i$}{$h$};
   \end{scope}
  \end{scope}
  
  \begin{scope}[xshift=7.5cm,yshift=0cm]
   \node at (0, 0) [anchor=west] {\sc Start-Right:};
   \node at (1.25, -0.5) {$q \in \textit{init}(R_h)$};
   \draw (0.1, -0.75) -- +(2.5, 0) node[right] {$1 \leq h \leq n$};
   \begin{scope}[xshift=0.75cm, yshift=-1.3cm]
    \righttriangle{$q$}{$h$}{$h$};
   \end{scope}
  \end{scope}

  \begin{scope}[xshift=7.5cm,yshift=-3cm]
   \node at (0, 0) [anchor=west] {\sc Finish-Right:};
   \begin{scope}[xshift=0.5cm, yshift=-0.75cm]
    \righttriangle{$q$}{$i$}{$h$}
   \end{scope}
   \node at (1.5, -1.25) [anchor=west] {$q \in \textit{final}(R_h)$};
   \draw (0.1, -1.75) -- +(4.1, 0);
   \begin{scope}[xshift=1.25cm, yshift=-2.3cm]
    \righttriangle{$F$}{$h$}{$i$}
   \end{scope}
  \end{scope}

  \begin{scope}[xshift=0cm,yshift=-7cm]
   \node at (0, 0) [anchor=west] {\sc Attach-Left:};
   \begin{scope}[xshift=0.5cm, yshift=-0.75cm]
    \righttriangle{$F$}{$h'$}{$i-1$}
   \end{scope}
   \begin{scope}[xshift=3.0cm, yshift=-0.75cm]
    \lefttriangle{$q$}{$i$}{$h$}
   \end{scope}
   \node at (4.0, -1.25) [anchor=west] {$q \xmapsto{~h'~} r \in L_h$};
   \draw (0.1, -1.75) -- +(6.6, 0);
   \begin{scope}[xshift=3.0cm, yshift=-2.2cm]
    \lefttrape{$r$}{$h'$}{$h$}
   \end{scope}
  \end{scope}

  \begin{scope}[xshift=7.5cm,yshift=-7cm]
   \node at (0, 0) [anchor=west] {\sc Attach-Right:};
   \begin{scope}[xshift=0.5cm, yshift=-0.75cm]
    \righttriangle{$q$}{$h$}{$i-1$}
   \end{scope}
   \begin{scope}[xshift=3.0cm, yshift=-0.75cm]
    \lefttriangle{$F$}{$i$}{$h'$}
   \end{scope}
   \node at (4.0, -1.25) [anchor=west] {$q \xmapsto{~h'~} r \in R_h$};
   \draw (0.1, -1.75) -- +(6.6, 0);
   \begin{scope}[xshift=2.6cm, yshift=-2.2cm]
    \righttrape{$r$}{$h$}{$h'$}
   \end{scope}
  \end{scope}

  \begin{scope}[xshift=0cm,yshift=-11cm]
   \node at (0, 0) [anchor=west] {\sc Complete-Left:};
   \begin{scope}[xshift=1.0cm, yshift=-0.75cm]
    \lefttriangle{$F$}{$i$}{$h'$}
   \end{scope}
   \begin{scope}[xshift=3.0cm, yshift=-0.75cm]
    \lefttrape{$q$}{$h'$}{$h$}
   \end{scope}
   \draw (0.1, -1.75) -- +(3.6, 0);
   \begin{scope}[xshift=2.0cm, yshift=-2.3cm]
    \lefttriangle{$q$}{$i$}{$h$}
   \end{scope}
  \end{scope}

  \begin{scope}[xshift=7.5cm,yshift=-11cm]
   \node at (0, 0) [anchor=west] {\sc Complete-Right:};
   \begin{scope}[xshift=0.5cm, yshift=-0.75cm]
    \righttrape{$q$}{$h$}{$h'$}
   \end{scope}
   \begin{scope}[xshift=3.0cm, yshift=-0.75cm]
    \lefttriangle{$F$}{$h'$}{$i$}
   \end{scope}
   \draw (0.1, -1.75) -- +(3.6, 0);
   \begin{scope}[xshift=1.7cm, yshift=-2.3cm]
    \righttriangle{$q$}{$h$}{$i$}
   \end{scope}
  \end{scope}

  \begin{scope}[xshift=12.5cm,yshift=-11cm]
   \node at (0, 0) [anchor=west] {\sc Accept:};
   \begin{scope}[xshift=1.0cm, yshift=-0.75cm]
    \lefttriangle{$F$}{$1$}{$n+1$};
   \end{scope}
   \draw (0.1, -1.75) -- +(1.9, 0);
   \node at (0.4, -2.1) [anchor=west] {{\it accept}};
  \end{scope}
 \end{tikzpicture}
 \caption{An algorithm for parsing SBGs in $O(n^3)$ given a length $n$ sentence.
 The $n+1$-th token is a dummy root token $\$$, which only has one left dependent (sentence root).
 $i,j,h$, and $h'$ are index of a token in the given sentence while $q,r$, and $F$ are states.
 $L_h$ and $R_h$ are left and right FSA of the $h$-th token in the sentence.
 Each item as well as a statement about a state (e.g., $r\in \textit{final}(L_p)$) has a weight and the weight of a consequent item (below ---) is obtained by the product of the weights of its antecedent items (above ---).
 }
 \label{fig:bg:ded-sbg}
\end{figure}

\paragraph{$\boldsymbol{O(n^3)}$ algorithm with head-splitting}
The main reason why the algorithm in Figure \ref{fig:2:binary-naive-sbg} causes the spurious ambiguity is because given a head there is no restriction in the order of collecting its left and right dependents.
This problem can be handled by introducing new items called {\it half constituents} denoted by
\tikz[thick]{
\draw (0, -0.2) -- ++(-0.5, -0.45) -- ++(0.5, 0) -- cycle;
} ({\it left constituents}) and 
\tikz[thick]{
\draw (0, -0.2) -- ++(0.5, -0.45) -- ++(-0.5, 0) -- cycle;
} ({\it right constituents}),
which represent left and right span separately given a head.
For example in the dependency tree in Figure \ref{fig:2:deptree-sbg}, a phrase ``Mary met'' comprises a left constituent while ``met the senator'' comprises a right constituent.
In the new algorithm these two constituents are expanded separately with each other and there is no {\it mixed} states $(q_1, q_2)$ as in the items in Figure \ref{fig:2:binary-naive-sbg}.
Eliminating these mixed states is the key to eliminate the spurious ambiguity.

Figure \ref{fig:bg:ded-sbg} shows new algorithm, which can be understood as follows:
\begin{itemize}
 \item {\sc Attach-Left} and {\sc Complete-Left} (or the {\sc Right} counterpart) are the essential components of the algorithm.
       The idea is when combining two constituents headed by $h'$ and $h$ ($h' < h$) into a large constituent headed by $h$, we decompose
       an original constituent \tikz[baseline=-20pt]{\headtriangleinlinebottom{$h'$}} into its left and right half constituents, and combine those fragments in order.
       {\sc Attach-Left} does the first part, i.e., collects the right constituent \tikz[baseline=-20pt]{\righttriangleinlinebottom{$h'$}}.
       The resulting trapezoid \tikz[baseline=-20pt]{\lefttrapeinline{$h'$}{$h$}} represents an intermediate parsing state, which means the recognition of the right half part of $h'$ has finished while the remaining left part yet unfinished.
       {\sc Complete-Left} does the second part and collects the remaining left constituent \tikz[baseline=-20pt]{\lefttriangleinlinebottom{$h'$}}.
       {\sc Attach-Right} and {\sc Complete-Right} do the opposite operations and collect the right dependents of some head.
 \item On this process, the state $F$ in both left and right constituents ensure that they can be a dependent of others.
 \item {\sc Start-Left} and {\sc Start-Right} correspond to the terminal rules of the ordinary CKY algorithm (Figure \ref{fig:2:cky})
       though we segment it into the left and right parts.
       Note that the root symbol $\$$ at the $n+1$ position only applies {\sc Start-Left} because it must not have any right dependents.
       Commonly the left automaton $L_{\$}$ is designed to have only one dependent; otherwise, the algorithm may allow the fragmental parses with more than one root tokens.
 \item Differently from the inference rules in Figure \ref{fig:2:binary-naive-sbg}, we put the state transitions, e.g., $q \xmapsto{~h'~} r \in R_h$ as antecedent items (above ---) of each rule instead of the side condition.
       These modifications are to make the weight calculation at each rule more explicit.
       Specifically, when we develop a model in this framework, each state transition, i.e., $q\in \textit{init}(L_h)$, $q \in \textit{final}(L_h)$, and $q \xmapsto{~h'~} r \in L_h$ (or for $R_h$) has an associated weight.
       Also when we run the CKY or the related algorithm, each chart cell that corresponds to some constituent (triangle or trapezoid) has a weight.
       Thus, this formulation makes explicit that the weight of the consequent item (below ---) is obtained by the product of all weights of the antecedent items (above ---).
       We describe the particular parameterization of these transitions to achieve DMV in Section \ref{sec:2:dmv}.
 \item The grammar is not in CNF since it contains unary rules at internal positions.
       The inside-outside algorithm can still be applied by assuming null element (which has weight 1) in either child position in Algorithm \ref{alg:cky}.
\end{itemize}

There is no spurious ambiguity.
However, again this grammar is not always a PCFG.
In particular, the grammar for a dependency model with valence (DMV), which we describe next, is not a PCFG.
See the {\sc Finish-Left} rule in the algorithm.
A particular model such as DMV may associate a score for this rule to explicitly model an event that a head $h$ stops generating its left dependents.
In such cases, the weights for CFG rules $(F,h) \rightarrow (q,h)$ do not define a correct (normalized) distribution given the parent symbol $(F,h)$.
This type of inconsistency happens due to discrepancy between the underlying parsing strategies in two representations:
The PCFGs assume the tree generation is a top-down process while the SBGs assume it is bottom-up.
Nevertheless we can use the inside-outside algorithm as in PCFGs because there is no spurious ambiguity and each derivation in a CFG parse correctly gives a probability that the original SBG would give to the corresponding dependency tree.

Also time complexity is improved to $O(n^3)$.
This is easily verified since there appear at most three indexes on each rule.
The reason of this reduction is we no longer use full constituents with a head index \tikz[baseline=-20pt]{\headtriangleinlinebottom{$h$}}, which itself consumes three indexes, leading to an asymptotically inefficient algorithm.

\subsection{Dependency model with valence}
\label{sec:2:dmv}

\begin{figure}[t]
 \centering
 \begin{tabular}[t]{ll}\hline
  Transition & Weight (DMV parameters) \\ \hline
  $q_0 \in \textit{init}(L_h)$ & 1.0 \\  
  $q_0 \in \textit{final}(L_h)$ & $\theta_{\textsc{s}}(\textsc{stop}|h,\leftarrow,\textsc{true})$ \\
  $q_1 \in \textit{final}(L_h)$ & $\theta_{\textsc{s}}(\textsc{stop}|h,\leftarrow,\textsc{false})$ \\
  $q_0 \xmapsto{~d~} q_1 \in L_h$ & $\theta_{\textsc{a}}(d|h, \leftarrow) \cdot \theta_{\textsc{s}}(\neg \textsc{stop}|h,\leftarrow,\textsc{true})$ \\
  $q_1 \xmapsto{~d~} q_1 \in L_h$ & $\theta_{\textsc{a}}(d|h, \leftarrow) \cdot \theta_{\textsc{s}}(\neg \textsc{stop}|h,\leftarrow,\textsc{false})$ \\\hline
 \end{tabular}
 \caption{Mappings between FSA transitions of SBGs and the weights to achieve DMV.
 $\theta_\textit{s}$ and $\theta_\textit{a}$ are parameters of DMV described in the body.
 The right cases (e.g., $q_0 \in \textit{init}(R_a)$) are omitted but defined similary.
 $h$ and $d$ are both word types, not indexes in a sentence (contrary to Figure \ref{fig:bg:ded-sbg}).
 }
 \label{fig:bg:dmv-param-as-sbg}
\end{figure}

Now it is not so hard to formulate the famous model, dependency model with valence (DMV), on which our unsupervised model is based, as a special instance of SBGs.
This can be done by defining transitions of each automaton as well as the associated weights.
In DMV, each $L_h$ or $R_h$ given head $h$ has only two states $q_0$ and $q_1$, both of which are in finished states, i.e., {\sc Finish-Left} and {\sc Finish-Right} in Figure \ref{fig:bg:ded-sbg} can always be applied.
$q_0$ is the initial state and $q_0 \xmapsto{\textit{~h~}} q_1$ while $q_1 \xmapsto{\textit{~h~}} q_1$, meaning that we only distinguish the generation process of the first dependent from others.

The associated weights for transitions in Figure \ref{fig:bg:ded-sbg} are summarized in Figure \ref{fig:bg:dmv-param-as-sbg}.
Each weight is a product of DMV parameters, which are classified into two types of multinomial distributions $\theta_\textsc{s}$ and $\theta_\textsc{a}$.
Generally we write $\theta_\textsc{type}(d|c)$ for a multinomial parameter in which {\sc type} defines a type of multinomial, $c$ is a conditioning context, and $d$ is a decision given the context.
DMV has the following two types of parameters:
\begin{itemize}
 \item $\theta_\textsc{s}(\textit{stop}|h,\textit{dir},\textit{adj})$:
       A Bernoulli random variable to decide whether or not to attach further dependents in the current direction $\textit{dir} \in \{ \leftarrow, \rightarrow \}$.
       The decision $\textit{stop} \in \{\textsc{stop},\neg \textsc{stop}\}$.
       The adjacency $\textit{adj} \in \{\textsc{true, false}\}$ is the key factor to distinguish the distributions of the first and other dependents.
       It is \textsc{true} if $h$ has no dependent yet in \textit{dir} direction.
 \item $\theta_\textsc{a}(d|h,\textit{dir})$:
       A probability that $d$ is attached as a new dependent of $h$ in \textit{dir} direction.
\end{itemize}

The key behind the success of the DMV was the introduction of the valence factor in stop probabilities \cite{klein-manning:2004:ACL}.
Intuitively, this factor can capture the difference of the expected number of dependents for each head.
For example, in English, a verb typically takes one dependent (subject) in the left direction while several dependents in the right direction.
DMV may capture this difference with a higher value of $\theta_\textsc{s}(\neg \textsc{stop}|h, \leftarrow, \textsc{true})$ and a lower value of $\theta_\textsc{s}(\neg \textsc{stop}|h, \leftarrow, \textsc{false})$.
On the other hand, in the right direction, $\theta_\textsc{s}(\neg \textsc{stop}|h, \rightarrow, \textsc{false})$ might be higher, facilitating to attach several dependents.

\paragraph{Inference}
With the EM algorithm, we try to update parameters $\theta_\textsc{s}$ and $\theta_\textsc{a}$.
This is basically done with the inside-outside algorithm though one complicated point is that some transitions in Figure \ref{fig:bg:dmv-param-as-sbg} are associated with products of parameters, not a single parameter.
This situation contrats with the original inside-outside algorithm for PCFGs where each rule is associated with only a single parameter (e.g., $A\rightarrow \beta$ and $\theta_{A\rightarrow \beta}$).

In this case the update can be done by first collecting the expected counts of each transition in a SBG, and then converting it to the expected counts of a DMV parameter.
For example, let $e_x(\textsc{Attach-Left},q,h,d|\theta)$ be the expected counts of the {\sc Attach-Left} rule between head $h$ with state $q$ and dependent $h'$ in a sentence $x$.
We can obtain $e_x(h,d,\leftarrow|\theta)$, the expected counts of an attachment parameter of DMV as follows:
\begin{equation}
 e_x(h,d,\leftarrow|\theta) = e_x(\textsc{Attach-Left},q_0,h,d|\theta) + e_x(\textsc{Attach-Left},q_1,h,d|\theta).
\end{equation}
These are then normalized to update the parameters (as in Section \ref{sec:2:em}).
Similary the counts of the non-stop decision $e_x(h,\neg \textsc{stop}, \leftarrow,\textsc{true}|\theta)$, associated with $\theta_{\textsc{s}}(\neg \textsc{stop}|h,\leftarrow,\textsc{true})$, is obtained by:
\begin{equation}
 e_x(h,\neg \textsc{stop},\leftarrow, \textsc{true}|\theta) = \sum_{h'} e_x(\textsc{Attach-Left},q_0,h,h'|\theta),
\end{equation}
where $h'$ is a possible left dependent (word type) of $h$.

\subsection{Log-linear parameterization}
\label{sec:bg:loglinear}

In Chapter \ref{chap:induction}, we build our model based on an extended model of DMV with {\it features}, which we describe in this section.
We call this model {\it featurized DMV}, which first appeared in \newcite{bergkirkpatrick-EtAl:2010:NAACLHLT}.
We use this model since it is relatively a simple extension to DMV (among others) while known to boost the performance well.

The basic idea is that we replace each parameter of the DMV as the following log-linear model:
\begin{equation}
 \theta_\textsc{a}(d|h,\textit{dir}) = \frac{ \exp( \mathbf w^\intercal \mathbf f (d,h,\textit{dir},\textsc{a}) ) }{ \sum_{d'} \exp( \mathbf w^\intercal \mathbf f (d',h,\textit{dir},\textsc{a}) ) },
\end{equation}
where $\mathbf w$ is a weight vector and $\mathbf f (d,h,\textit{dir},\textsc{a})$ is a feature vector for an event that $h$ takes $d$ as a dependent in \textit{dir} direction.
Note that contrary to the more familiar log-linear models in NLP, such as the conditional random fields \cite{Lafferty:2001:CRF:645530.655813,finkel-kleeman-manning:2008:ACLMain}, it does not try to model the whole structure with a single log-linear model.
Such approaches make it possible to exploit more richer global structural features though inference gets more complex and challenging in particular in an unsupervised setting \cite{smith-eisner:2005:ACL,NIPS2014_5344}.

In this model, the features can only be exploited from the conditioning context and decision of each original DMV parameter.
The typical information captured with this method is the back-off structures between parameters.
For example, some feature in $\mathbf f$ is the one ignoring direction, which facilitates sharing of statistical strength of attachments between $h$ and $d$.
\newcite{bergkirkpatrick-EtAl:2010:NAACLHLT} also report that adding back-off features that use the coarse POS tags is effective, e.g.,
ones replacing actual $h$ or $d$ with a coarse indicator, such as whether $h$ belongs to a (coarse) noun category or not, when the original dataset provides finer POS tags (e.g., pronoun or proper noun).

The EM-like procedure can be applied to this model with a little modification, which instead of optimizing parameters $\theta$ directly, optimizes weight vector $\mathbf w$.
The E-step is exactly the same as the original algorithm.
In the M-step, we optimize $\mathbf w$ to increase the marginal log-likelihood (Eq. \ref{eqn:2:emobj}) using the gradient-based optimization method such as L-BFGS \cite{Liu89onthe} with the expected counts obtained from the E-step.
In practice, we optimize the objective with the regularization term to prevent overfitting.

\section{Previous Approaches in Unsupervised Grammar Induction}
\label{sec:2:unsupervised}

This section summarizes what has been done in the sutdy of unsupervised grammar induction in particular in this decade from \newcite{klein-manning:2004:ACL}, which was the first study breaking the simple baseline method in English experiments.
Here we focus on the setting of {\it monolingual} unsupervised parsing, which we first define in Section \ref{sec:bg:task}.
Related approaches utilizing some kind of supervised information, such as semi-supervised learning \cite{haghighi-klein:2006:COLACL} or transfer learning in which existing high quality parsers (or treebanks) for some languages (typically English) are transferred into parsing models of other languages \cite{mcdonald-petrov-hall:2011:EMNLP,naseem-barzilay-globerson:2012:ACL2012,mcdonald-EtAl:2013:Short,tackstrom-mcdonald-nivre:2013:NAACL-HLT} exist.
These approaches typically achieve higher accuracies though we do not touch here.

\subsection{Task setting}
\label{sec:bg:task}

The typical setting of unsupervised grammar induction is summarized as follows:
\begin{itemize}
 \item During training, the model learns its parameters using (unannotated) sentences only.
       Sometimes the model uses external resources, such as Wikipedia articles \cite{marevcek-straka:2013:ACL2013} to exploit some statistics (e.g., n-gram) in large corpora but does not rely on any syntactic annotations.
 \item To remedy the data sparseness (or the learning difficulty), often the model assumes part-of-speech (POS) tags as the input instead of surface forms.\footnote{
       Some work, e.g., \newcite{seginer:2007:ACLMain} does not assume this convention as we describe in Section \ref{sec:bg:unsup-other}.
       }
       This assumption greatly simplifies the problem though it loses much crucial information for disambiguation.
       For example, the model may not be able to disambiguate prepositional phrase (PP) attachments based on semantic cues as {\it supervised} parsers would do.
       Consider two phrases {\it eat sushi with tuna} and {\it eat sushi with chopsticks}.
       The syntactic structures for these two are different, but POS-based models may not distinguish between them as they both look the same under the model, e.g., {\sc verb noun adp noun}; {\sc adp} is an adposition.
       Therefore the main challenge of unsupervised grammar induction is often to acquire more basic structures or the word order, such that an adjective tends to modify a noun.
 \item The POS-based models are further divided into two categories, whether it can or cannot access to the {\it semantics} of each POS tag.
       The example of the former is \newcite{naseem-EtAl:2010:EMNLP}, which utilizes the information, e.g., a verb tend to be the root of a sentence.
       This approach is sometimes called {\it lightly} supervised learning.
       The latter approach, which we call {\it purely} unsupervised learning, does not access to such knowledge.
       In this case, the only necessary input for the model is the clustering of words, not the {\it label} for each cluster.
       This is advantageous in practice since it can be based on the output of some unsupervised POS tagger, which cannot identify the semantics (label) of each induced cluster.
       Though the problem settings are slightly different in two approaches, we discuss both here as it is unknown what kind of prior linguistic knowledge is necessary for learning grammars.
       Note that it is also an ongoing study how to achieve lightly supervised learning from the output of unsupervised POS taggers with a small amount of manual efforts \cite{Bisk:2015:ACLShort}.
\end{itemize}

\paragraph{Evaluation}
The evaluation of unsupervised systems is generally difficult and controversial.
This is particularly true in unsupervised grammar induction.
The common procedure, which most works described below employ, is to compare the system outputs and the gold annotated trees just as in the supervised case.
That is, we evaluate the quality of the system in terms of accuracy measure, which is precision, recall, and F1-score for constituent structures and an attachment score for dependency structures.
This is inherently flawed in some sense mainly because it cannot take into account the variation in the notion of linguistically {\it correct} structures.
For example, some dependency structures, such as coordination structures, are analyzed in several ways (see also Section \ref{sec:corpora:heads});
each of which is {\it correct} under a particular syntactic theory \cite{popel-EtAl:2013:ACL2013} but the current evaluation metric penalizes unless the prediction of the model matches the gold data currently used.
We do not discuss the solution to this problem here.
However, we try to minimize the effect of such variations in our experiments in Chapter \ref{chap:induction}.
See Section \ref{sec:ind:eval} for details.

\subsection{Constituent structure induction}

As we saw in Section \ref{sec:2:em}, the EM algorithm provides an easy way for learning parameters of any PCFGs.
This motivated the researchers to use the EM algorithm for obtaining syntactic trees without human efforts (annotations).
In early such attempts, the main focus for the induced structures has been phrase-structure trees.

However, it has been well known that such EM-based approaches perform poorly to recover the syntactic trees that linguists assume to be correct \cite{books/daglib/0080794,manning99foundations,carl1999}.
The reasons are mainly two-folds:
One is that the EM algorithm is just a hill climbing method so it cannot reach the global optimum solution.
Since the search space of the grammar is highly complex, this local maxima problem is particularly a severe problem;
\newcite{Carroll92twoexperiments} observed that randomly initialized EM algorithms always converge to different grammars, which all are far from the target grammar.
Another crucial problem is the inherent difficulty in the induction of PCFGs.
In the general setting, the fixed structure for the model is just the start symbol and observed terminal symbols.
The problem is that although terminal symbols are the most informative source for learning, that information does not correctly propagate to the higher level in the tree since each nonterminal label here is just an abstract symbol (hidden categorical variable) and has less meaning.
For example, when the model has a rule $y_1 \rightarrow y_2~y_3$ and $y_2$ and $y_3$ dominate some subtrees, $y_1$ dominates a larger constituent but its relevance to the yield (i.e., dominated terminal symbols) sharply diminishes.

For these reasons, so far the only successful PCFG-based constituent structure induction methods are by giving some amount of supervision, e.g., constraints on possible bracketing \cite{pereira-schabes:1992:ACL} and possible rewrite rules \cite{Carroll92twoexperiments}.
\newcite{johnson-griffiths-goldwater:2007:main} reported that the situation does not change with the sampling-based Bayesian inference method.

Non PCFG-based constituent structure induction has been explored since early 2000s with some success.
The common idea behind these approaches is not collapsing each span into the (less meaningful) nonterminal symbols.
\newcite{W01-0713} and \newcite{klein-manning:2002:ACL} are such attempts, in which the model tries to learn whether some yields (n-gram) comprises a constituent or not.
All parameters are connected to terminal symbols and thus the problem in propagating information from the terminal symbols is alleviated.
\newcite{ponvert-baldridge-erk:2011:ACL-HLT2011} present a chunking-based heuristic method to improve the performance in this line of models.
\newcite{seginer:2007:ACLMain} is another successful constituent induction system;
We discuss his method in Section \ref{sec:bg:unsup-other} as it has some relevance to our approach.

\subsection{Dependency grammar induction}
Due to the difficulty in PCFG-based constituent structure induction, most recent works in PCFG induction has focused on dependency as its underlying structures.
The dependency model with valence (DMV) \cite{klein-manning:2004:ACL} that we introduced in Section \ref{sec:2:dmv} is the most successful approach in such dependency-based models.
As we saw, this model can be represented as an instance of weighted CFGs and thus parameter estimation is possible with the EM algorithm.
This makes an extension on both model and inference easier, and results in many extensions on DMV in a decade as we summarize below.

We divide those previous approaches in largely two categories in whether the model relies on light supervision on dependency rules or not (see also Section \ref{sec:bg:task}).
The goal of every study introduced below can be seen to identify the necessary bias or supervision for an unsupervised parser to learn accurate grammars without explicitly annotated corpora.

\paragraph{Purely unsupervised approaches}

Generally, purely unsupervised methods perform worse than the other, lightly supervised approaches \cite{DBLP:conf/aaai/BiskH12}.

We first mention that the success of most works discussed below including the original DMV in \newcite{klein-manning:2004:ACL} rely on a heuristic initialization technique often called the harmonic initializer, which we describe in details in Section \ref{sec:ind:setting}.
Since the EM algorithm is the local search method, it suffers from the local optima problem, meaning that it is sensitive to the initialization.
Intuitively the harmonic initializer initializes the parameters to favor shorter dependencies.
\newcite{gimpel-smith:2012:NAACL-HLT2} reports that DMV {\it without} this initialization performs very badly;
the accuracy on English experiments (Wall Street Journal portion of the Penn treebank) significantly drops from 44.1 to 21.3.
Most works cited below rely on this technique, but some does not;
in that case, we mention it explicitly (e.g., \newcite{marevcek-vzabokrtsky:2012:EMNLP-CoNLL}).

Bayesian modeling and inference are popular approach for enhancing the probabilistic models.
In dependency grammar induction, \newcite{cohen-smith:2009:NAACLHLT09}, \newcite{headdeniii-johnson-mcclosky:2009:NAACLHLT09}, and \newcite{blunsom-cohn:2010:EMNLP} are examples of such approaches.
\newcite{cohen-smith:2009:NAACLHLT09} extend the baseline DMV model with somewhat complex priors called shared logistic normal priors, which enable to tie parameters of related POS tags (e.g., subcategories of nouns) to behave similarly.
This is conceptually similar to the feature-based log-linear model \cite{bergkirkpatrick-EtAl:2010:NAACLHLT} that we introduced in Section \ref{sec:bg:loglinear}.
They employ variational EM for the inference technique.

\newcite{headdeniii-johnson-mcclosky:2009:NAACLHLT09} develop carefully designed Bayesian generative models, which are also estimated with the variational EM.
This model is one of the few examples of a {\it lexicalized} model, i.e., utilizing words (surface forms) in additional to POS tags.
This is a {\it partially} lexicalized model, meaning that the words that appear less than 100 times in the training data is unlexicalized.
Another technique introduced in this paper is random initialization with model selection;
they report that the performance improves by running a few iteration of EM in thousands of randomly initialized models and then picking up one with the highest likelihood.
However, this procedure is too expensive and the later works do not follow it.

\newcite{blunsom-cohn:2010:EMNLP} is one of the current state-of-the-art methods in purely unsupervised approach.
In the shared task at the Workshop on Inducing Linguistic Structure (WILS) \cite{gelling-EtAl:2012:WILS}, it performs competitively to the lightly supervised CCG-based approach \cite{DBLP:conf/aaai/BiskH12} on average across 10 languages.
The model is partially lexicalized as in \newcite{headdeniii-johnson-mcclosky:2009:NAACLHLT09}.
Though the basic model is an extended model of the DMV, they encode the model on Bayesian tree substitution grammars \cite{Cohn:2010:ITG:1756006.1953031}, which enable to model larger tree fragments than the original DMV does.

\newcite{marevcek-vzabokrtsky:2012:EMNLP-CoNLL} and \newcite{marevcek-straka:2013:ACL2013} present methods that learn the grammars using some principle of dependencies, which they call the {\it reducibility} principle.
They argue that phrases that will be the dependent of another token (head) are often {\it reducible}, meaning that the sentence without such phrases is probably still grammatical.
They calculate the reducibility of each POS n-gram using large raw text corpus from Wikipedia articles and develop a model that biases highly reducible sequences to become dependents.
\newcite{marevcek-straka:2013:ACL2013} encode the reducibility on DMV and find that their method is not sensitive to initialization.
This approach nicely exploits the property of heads and dependents and they report the state-of-the-art scores on the datasets of CoNLL shared tasks \cite{buchholz-marsi:2006:CoNLL-X,nivre-EtAl:2007:EMNLP-CoNLL2007}.

Another line of studies on several extensitions or heuristics for improving DMV has been explored by Spitkovsky and colleagues.
For example, \newcite{spitkovsky-EtAl:2010:CONLL} report that sometimes the Viterbi objective instead of the EM objective leads to the better model and \newcite{spitkovsky-alshawi-jurafsky:2010:NAACLHLT} present the heuristics that starts learning from the shorter sentences only and gradually increases the training sentences.
\newcite{spitkovsky-alshawi-jurafsky:2013:EMNLP} is their final method.
While the reported results are impressive (very competitive to \newcite{marevcek-straka:2013:ACL2013}), the method, which tries to avoid the local optima with the combination of several heuristics, e.g., changing the objective, forgetting some previous counts, and changing the training examples, is quite complex and it makes difficult to analyze which component most contributes to attained improvements.

The important study for us is \newcite{smith-eisner-2006-acl-sa}, which explores a kind of {\it structural} bias to favor shorter dependencies.
This becomes one of our baseline models in Chapter \ref{chap:induction}; See section \ref{sec:ind:structural-const} for more details.
We argue however that the motivation in their experiment is slightly different from the other studies cited above and us.
Along with a bias to favor shorter dependencies, they also investigate the technique called {\it structural annealing}, in which the strength of the imposed bias is gradually relaxed.
Note here that by introducing such new techniques, the number of adjustable parameters (i.e., hyperparameters) increases.
\newcite{smith-eisner-2006-acl-sa} choose the best setting of those parameters based on the {\it supervised} model setting, in which the annotated development data is used for choosing the best model.
This is however not the unsupervised learning setting.
In our experiments in Chapter \ref{chap:induction}, we thus do not explore the annealing technique and just compare the effects of different structural biases, that is, the shorter dependency length bias and our proposing bias of limiting center-embedding.

\paragraph{Lightly supervised approaches}

\newcite{naseem-EtAl:2010:EMNLP} is the first model utilizing the light supervision on dependency rules.
The rules are specified declaratively as dependencies between POS tags such as $\textsc{verb} \rightarrow \textsc{noun}$ or $\textsc{noun} \rightarrow \textsc{adj}$.
Then the model parameters are learned with the {\it posterior regularization} technique \cite{journals/jmlr/GanchevGGT10}, which biases the posterior distribution at each EM iteration to put more weights on those specified dependency rules.
\newcite{naseem-EtAl:2010:EMNLP} design 13 universal rules in total, and show state-of-the-art scores across a number of languages.
This is not purely unsupervised approach but it gives an important upper bound on the required knowledge to achieve reasonable accuracies on this task.
For example, \newcite{DBLP:journals/tacl/BiskH13} demonstrate that the competitive scores to them is obtainable with a smaller amount of supervision by casting the model on CCG.

\newcite{sogaard:2012:WILS} present a heuristic method that does {\it not} learn anything, but just build a parse tree deterministically;
For example it always recognizes the left-most verb to be the root word of the sentence.
Although this is extremely simple, S{\o}gaard reports it beats many systems submitted in the WILS shared task \cite{gelling-EtAl:2012:WILS}, suggesting that often such declarative dependency rules alone can capture the basic word order of the language.

\newcite{grave-elhadad:2015:ACL-IJCNLP} is the recently proposed strong system, which also relies on the declarative rules.
Instead of the generative models as in most works cited above, they formulate their model in a discriminative clustering framework \cite{Xu05maximummargin}, with which the objective becomes convex and the optimality is satisfied.
This system becomes our strong baseline in the experiments in Chapter \ref{chap:induction}.
Note that their system relies on in total 12 rules between POS tags.
We explore how the competitive model to this system can be achieved with our structural constraints as well as a smaller amount of supervision.

\subsection{Other approaches}
\label{sec:bg:unsup-other}

There also exist some approaches that do not induce dependency nor constituent structures directly.
Typically for evaluation reasons the induced structure is converted to either form.

\paragraph{Common cover link}
\newcite{seginer:2007:ACLMain} presents his own grammar formalism called the common cover link (CCL), which looks similar to the dependency structure but differs in many points.
For example, in CCL, every link between words is fully connected at every prefix position in the sentence.
His parser and learning algorithm are fully incremental;
He argues that the CCL structure as well as the incremental processing constraint effectively reduces the search space of the model.

This approach may be conceptually similar to our approach in that both try to reduce the search space of the model that comes from the constraint on human sentence processing (incremental left-to-right processing).
However, his model, the grammar formalism (CCL), and the learning method are highly coupled with each other and it makes difficult to separate some component or idea in his framework for other applications.
We instead investigate the effect of our structural constraint as a single component, which is much simpler and the idea can easily be portable to other applications.

Though CCL is similar to dependency, he evaluates the quality of the output on constituent-based bracketing scores by converting the CCL output to the equivalent constituent representation.
We thus do not compare our approach to his method directly in this thesis.

\paragraph{CCG induction}
In the lexicalized grammar formalisms such as CCGs \cite{opac-b1095877}, each nonterminal symbol in a parse tree encodes semantics about syntax and is not an arbitrary symbol unlike previous CFG-based grammar induction approaches.
This observation motivates recent attempts for inducing CCGs with a small amount of supervision.

\newcite{DBLP:conf/aaai/BiskH12} and \newcite{DBLP:journals/tacl/BiskH13} present generative models over CCG trees and demonstrate that it achieves state-of-the-art scores on a number of languages in the WILS shared task dataset.
For evaluation, after getting a CCG derivation, they extract dependencies by reading off predicate and argument (or modifier) structures encoded in CCG categories.
\newcite{bisk-hockenmaier:2015:ACL-IJCNLP} present a model extension and thorough error analysis while \newcite{Bisk:2015:ACLShort} show how the idea can be applied when no identity on POS tags (e.g., whether a word cluster is {\sc verb} or not) is given with a small manual effort.

The key to the success of their approach is in the seed knowledge about category assignments for each input token.
In CCGs or related formalisms, it is known that a parse tree is build almost deterministically if every category for input tokens are assigned correctly \cite{matuzaki:2007,lewis-steedman:2014:EMNLP2014}.
In other words, the most difficult part in those parsers is the assignments of lexical categories, which highly restrict the ambiguity in the remaining parts.
Bisk and Hockenmair efficiently exploit this property of CCG by restricting possible CCG categories on POS tags.
Their seed knowledge is that a sentence root should be a verb or a noun, and a noun should be an argument of a verb.
They encode this knowledge to the model by seeding that only a verb can be assigned category ${\sf S}$ and only a noun can be assigned category ${\sf N}$.
Then, they expand the possible candidate categories for each POS tag in a bootstrap manner.
For example, a POS tag next to a verb is allowed to be assigned category ${\sf S \backslash S}$ or ${\sf S/S}$, and so on.\footnote{The rewrite rules of CCG are defined by a small set of combinatory rules.
For example, the rule ${\sf (S \backslash N)/N~~N \rightarrow S \backslash N}$ is an example of the forward application rule, which can be generally written as ${\sf X/Y~~Y \rightarrow X}$.
The backward application does the opposite: ${\sf Y~~X \backslash Y \rightarrow X}$.
}
Figure \ref{fig:bg:ccg-seed} shows an example of this bootstrapping process, which we borrow from \newcite{DBLP:journals/tacl/BiskH13}.

\begin{figure}[t]
  \centering
 \begin{tabular}{c|c|c|c}
  {\it The} & {\it man} & {\it ate} & {\it quickly} \\
  {\sc dt} & {\sc nns} & {\sc vbd} & {\sc rb} \\ \hline
  ${\sf N/N}$ & $\bm{\mathsf{N}}{\sf ,S/S}$& $\bm{\mathsf{S}}{\sf ,N/N}$&${\sf S \backslash S}$ \\
  ${\sf (S/S)/(S/S)}$ & ${\sf (N \backslash N)/(N \backslash N)}$ & ${\sf S\backslash N}$ & ${\sf (N \backslash N) \backslash (N \backslash N)}$\\
  &${\sf (N/N) \backslash (N/N)}$ & ${\sf (S/S) \backslash(S/S)}$ & \\
  &&${\sf (S \backslash S)/(S \backslash S)}$&\\
 \end{tabular}
 \caption{An example of bootstrapping process for assigning category candidates in CCG induction borrowed from Bisk and Hockenmaier (2013).
 {\sc dt, nns, vbd,} and {\sc rb} are POS tags.
 Bold categories are the initial seed knowledge, which is expanded by allowing the neighbor token to be a modifier.
 }
 \label{fig:bg:ccg-seed}
\end{figure}

After the process, the parameters of the generative model are learned using variational EM.
During this phase, the category spanning the whole sentence is restricted to be ${\sf S}$, or ${\sf N}$ if no verb exists in the sentence.
This mechanism highly restricts the search space and allows efficient learning.

Finally, \newcite{AAAI159835} explore another direction for learning CCG with small supervision.
Unlike Bisk and Hockenmaier's models that are based on gold POS tags, they try to learn the model from surface forms but with an incomplete tag dictionary mapping some words to possible categories.
The essential difference between these two approaches is how to provide the seed knowledge to the model and it is an ongoing research topic (and probably one of the main goal in unsupervised grammar induction) to specify what kind of information should be given to the model and what can be learned from such seed knowledge.

\subsection{Summary}

This section surveyed the previous studies in unsupervised and lightly supervised grammar induction.
As we have seen, dependency is the only structure that can be learned effectively with the well-studied techniques, e.g., PCFGs and the EM algorithm, except CCG, which may have a potential to replace this although the model tends to be inevitably more complex.
For simplicity, our focus in thesis is dependency, but we argue that the success in dependency induction indicates that the idea could be extended to learning of the other grammars, e.g., CCG as well as more basic CFG-based constituent structures.

The key to the success of previous dependency-based approaches can be divided into the following categories:
\begin{description}
 \item[Initialization] The harmonic initializer is known to boost the performance and used in many previous models including \newcite{cohen-smith:2009:NAACLHLT09}, \newcite{bergkirkpatrick-EtAl:2010:NAACLHLT}, and \newcite{blunsom-cohn:2010:EMNLP}.
 \item[Principles of dependency] The reducibility of \newcite{marevcek-vzabokrtsky:2012:EMNLP-CoNLL} and \newcite{marevcek-straka:2013:ACL2013} efficiently exploits the principle property in dependency and thus learning gets more stable.
 \item[Structural bias] \newcite{smith-eisner:2005:ACL} explores the effect of shorter dependency length bias, which is similar to the harmonic initialization but is more explicit.
 \item[Rules on POS tags] \newcite{naseem-EtAl:2010:EMNLP} and \newcite{grave-elhadad:2015:ACL-IJCNLP} shows parameter-based constraints on POS tags can boost the performance.
            \newcite{sogaard:2012:WILS} is the evidence that such POS tag rules are already powerful in themselves to achieve reasonable scores.
\end{description}

The most relevant approach to ours that we present in Chapter \ref{chap:induction} is the structural bias of \newcite{smith-eisner:2005:ACL};
However, as we have mentioned, they combine the technique with annealing and the selection of initialization method, which are tuned with the supervised model selection.
Thus they do not explore the effect of a single structural bias, which is the main interest in our experiments.
As another baseline, we also compare the performance with harmonic initialized models.
The reducibility and rules on POS tags possibly have orthogonal effects to the structural bias.
We will explore a small number of rules and see the combination effects with our structural constraints to get insights on the effect of our constraint when some amount of external supervision is provided.

\chapter{Multilingual Dependency Corpora}
\label{chap:corpora}

Cross-linguality is an important concept in this thesis.
In the following chapters, we explore a syntactic regularities or universals that exist in languages in several ways including a corpus analyses, a supervised parsing study (Chapter \ref{chap:transition}), and an unsupervised parsing study (Chapter \ref{chap:induction}).
All these studies were made possible by recent efforts for the development of multilingual corpora.
This chapter summarizes the properties and statistics of the dataset we use in our experiments.

First, we survey the problem of the ambiguity in the definitions of {\it head} that we noted when introducing dependency grammars in Section \ref{sec:2:dependency}.
This problem is critical for our purpose;
for example, if our unsupervised induction system performs so badly for a particular language, we do not know whether the reason is in the (possibly distinguished) annotation style or the inherent difficulty of that language (see also Section \ref{sec:ind:eval}).
In particular, we describe the duality of head, i.e., {\it function} head and {\it content} head, which is the main source of the reason why there can be several dependency representations for a particular syntactic construction.

We then summarize the characteristics of the treebanks that we use.
The first dataset, CoNLL shared tasks dataset \cite{buchholz-marsi:2006:CoNLL-X,nivre-EtAl:2007:EMNLP-CoNLL2007} is the first large collection of multilingual dependency treebanks (19 languages in total) in the literature, although is just a collection of existing treebanks and lacking annotation consistency across languages.
This dataset thus may not fully adequate for our cross-linguistic studies.
We will introduce this dataset and use it in our experiments mainly because it was our primary dataset in the preliminary version of the current study \cite{noji-miyao:2014:Coling}, which was done when more adequate dataset such as Universal Dependencies \cite{DEMARNEFFE14.1062} were not available.
We use this dataset only for the experiments in Chapter \ref{chap:transition}.
Universal Dependencies (UD) is a recent initiative to develop cross-linguistically consistent treebank annotation for many languages \cite{nivre2015ud}.
We choose this dataset as our primary resource for cross-linguistic experiments since currently it seems the best dataset that keeps the balance between the typological diversity in terms of the number of languages or language families and the annotation consistency.
We finally introduce another recent annotation project called Google universal treebanks \cite{mcdonald-EtAl:2013:Short}.
We use this dataset only for our unsupervised parsing experiments in Chapter \ref{chap:induction} mainly for comparing the performance of our model with the current state-of-the-art systems.
This dataset is a preliminary version of UD, so its data size and consistency is inferior.
We summarize the major differences of approaches in two corpora in Section \ref{sec:corpora:google}.

\section{Heads in Dependency Grammars}
\label{sec:corpora:heads}

\begin{figure}[t]
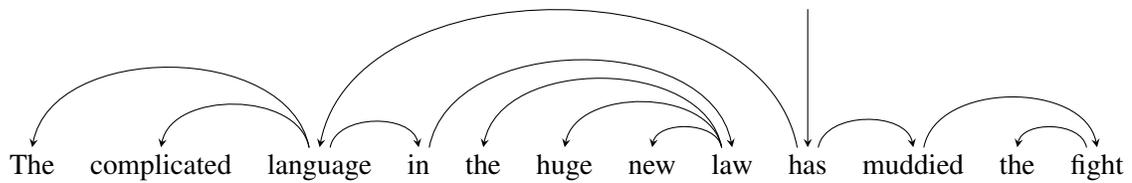
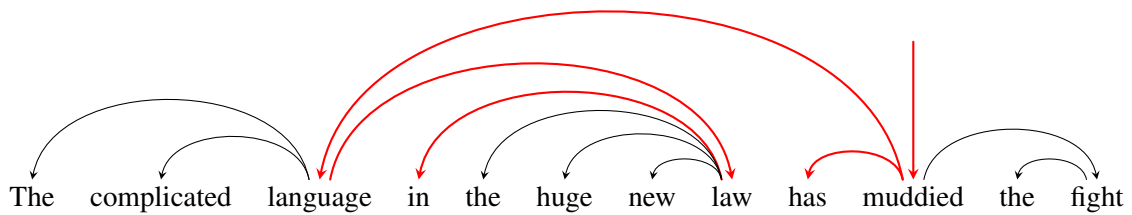
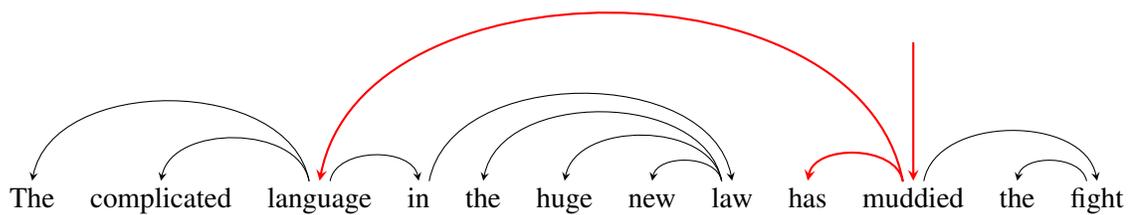

 \begin{minipage}[t]{.99\linewidth}
  \centering
  \begin{dependency} [theme=simple]
  \begin{deptext}[column sep=0.3cm]
   The \& complicated \& language \& in \& the \& huge \& new \& law \& has \& muddied \& the \& fight \\
  \end{deptext}
  \depedge{3}{1}{}
  \depedge{3}{2}{}
  \depedge{3}{4}{}
  \depedge{4}{8}{}
  \depedge{8}{7}{}
  \depedge{8}{6}{}
  \depedge{8}{5}{}
  \depedge{9}{3}{}
  \depedge{9}{10}{}
  \depedge{10}{12}{}
  \depedge{12}{11}{}
  \deproot[edge unit distance=3ex]{9}{}
  \end{dependency}
  \subcaption{Analysis on CoNLL dataset.}
  \label{subfig:corpora:conll-en}
 \end{minipage}
 \begin{minipage}[t]{.99\linewidth}
  \centering 
  \begin{dependency} [theme=simple]
  \begin{deptext}[column sep=0.3cm]
   The \& complicated \& language \& in \& the \& huge \& new \& law \& has \& muddied \& the \& fight \\
  \end{deptext}
  \depedge{3}{1}{}
  \depedge{3}{2}{}
  \depedge[red,thick]{3}{8}{}
  \depedge[red,thick]{8}{4}{}
  \depedge{8}{7}{}
  \depedge{8}{6}{}
  \depedge{8}{5}{}
  \depedge[red,thick]{10}{3}{}
  \depedge[red,thick]{10}{9}{}
  \depedge{10}{12}{}
  \depedge{12}{11}{}
  \deproot[edge unit distance=3ex,red,thick]{10}{}
  \end{dependency}
  \subcaption{Analysis on Stanford universal dependencies (UD).}
  \label{subfig:corpora:ud-en}
 \end{minipage}
 \begin{minipage}[t]{.99\linewidth}
  \centering 
  \begin{dependency} [theme=simple]
  \begin{deptext}[column sep=0.3cm]
   The \& complicated \& language \& in \& the \& huge \& new \& law \& has \& muddied \& the \& fight \\
  \end{deptext}
  \depedge{3}{1}{}
  \depedge{3}{2}{}
  \depedge{3}{4}{}
  \depedge{4}{8}{}
  \depedge{8}{7}{}
  \depedge{8}{6}{}
  \depedge{8}{5}{}
  \depedge[red,thick]{10}{3}{}
  \depedge[red,thick]{10}{9}{}
  \depedge{10}{12}{}
  \depedge{12}{11}{}
  \deproot[edge unit distance=3ex,red,thick]{10}{}
  \end{dependency}
  \subcaption{Analysis on Google universal treebanks.}
  \label{subfig:corpora:google-en}
 \end{minipage}
 \caption{Each dataset that we use employs the different kind of annotation style.
 Bold arcs are ones that do not exist in the CoNLL style tree (a).
 }
 \label{fig:corpora:compare-en}
\end{figure}

Let us first see the examples.
Figure \ref{fig:corpora:compare-en} shows how the analysis of an English sentence would be changed across the datasets we use.
Every analysis is {\it correct} under some linguistic theory.
We can see that two analyses between the CoNLL style (Figure \ref{subfig:corpora:conll-en}) and the UD style (Figure \ref{subfig:corpora:ud-en}) are largely different, in particular around function words (e.g., {\it in} and {\it has}).

\paragraph{Function and content heads}
\newcite{zwicky199313} argues that there is a duality in the notion of heads, namely, function heads and content heads.
In the view of function heads, the head of each constituent is the word that determines the syntactic role of it.
The CoNLL style tree is largely function head-based;
For example, the head in constituent ``in the huge new law'' in Figure \ref{subfig:corpora:conll-en} is ``in'', since this preposition determines the syntactic role of the phrase (i.e., prepositional phrase modifying another noun or verb phrase).
The construction of ``has muddied'' is similar;
In the syntactic view, the auxiliary ``has'' becomes the head since it is this word that determines the aspect of this sentence (present perfect).

\begin{figure}[t]
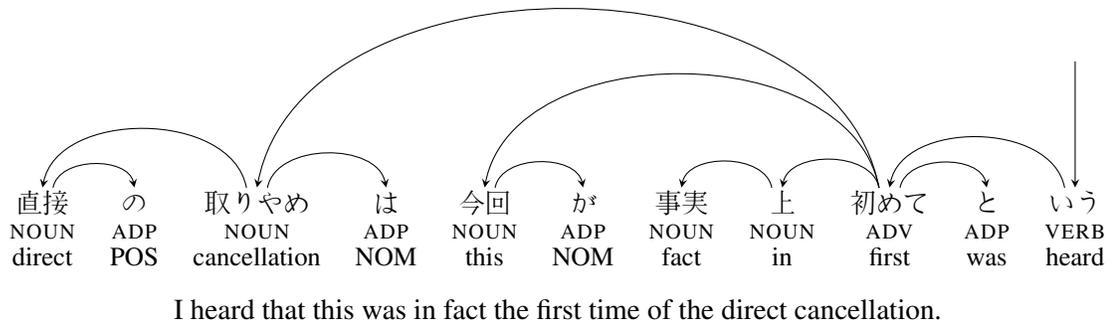

 \centering
 {\small
 \begin{dependency} [theme=simple]
  \begin{deptext}[column sep=0.3cm]
   \Ja{直接} \& \Ja{の} \& \Ja{取りやめ} \& \Ja{は} \& \Ja{今回} \& \Ja{が} \& \Ja{事実} \& \Ja{上} \& \Ja{初めて} \& \Ja{と} \& \Ja{いう} \\
   \textsc{noun} \& \textsc{adp} \& \textsc{noun} \& \textsc{adp} \& \textsc{noun} \& \textsc{adp} \& \textsc{noun} \& \textsc{noun} \& \textsc{adv} \& \textsc{adp} \& \textsc{verb} \\
   \Ja{direct} \& \Ja{POS} \& \Ja{cancellation} \& \Ja{NOM} \& \Ja{this} \& \Ja{NOM} \& \Ja{fact} \& \Ja{in} \& \Ja{first} \& \Ja{was} \& \Ja{heard} \\
  \end{deptext}
  \depedge{3}{1}{}
  \depedge{1}{2}{}
  \depedge{9}{3}{}
  \depedge{3}{4}{}
  \depedge{9}{5}{}
  \depedge{5}{6}{}
  \depedge{8}{7}{}
  \depedge{9}{8}{}
  \depedge{11}{9}{}
  \depedge{9}{10}{}
  \deproot[edge unit distance=3ex]{11}{}
 \end{dependency}
 }
 I heard that this was in fact the first time of the direct cancellation.
 \caption{A dependency tree in the Japanese UD. \textsc{noun, adv, verb,} and \textsc{adp} are assigned POS tags.}
 \label{fig:corpora:ud-jp}
\end{figure}

In another view of content heads, the head of each constituent is selected to be the word that most contributes to the semantics of it.
This is the design followed in the UD scheme \cite{nivre2015ud}.
For example, in Figure \ref{subfig:corpora:ud-en} the head of constituent ``in the huge new law'' is the noun ``law'' instead of the preposition.
Thus, in UD, every dependency arc is basically from a content word (head) to another content or function word (dependent).
Figure \ref{fig:corpora:ud-jp} shows an example of sentence in Japanese treebank of UD.
We can see that every function word (e.g., {\sc adp}) is attached to some content word, such as {\sc noun} and {\sc adv} (adverb).

\paragraph{Other variations}

\begin{figure}[t]
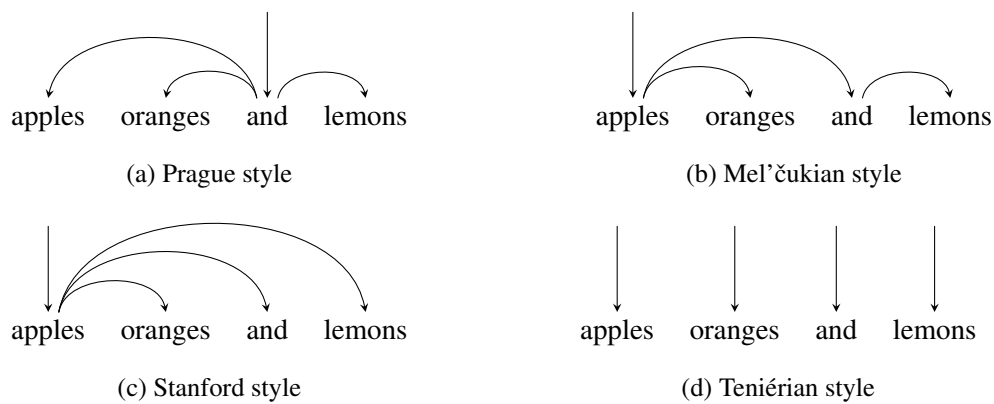

 \begin{minipage}[t]{.49\linewidth}
  \centering
  \begin{dependency} [theme=simple]
  \begin{deptext}[column sep=0.3cm]
   apples \& oranges \& and \& lemons \\
  \end{deptext}
  \depedge{3}{1}{}
  \depedge{3}{2}{}
  \depedge{3}{4}{}
  \deproot[edge unit distance=2ex]{3}{}
  \end{dependency}
  \subcaption{Prague style}
  \label{subfig:corpora:prague-coord}
 \end{minipage}
 \begin{minipage}[t]{.49\linewidth}
  \centering
  \begin{dependency} [theme=simple]
  \begin{deptext}[column sep=0.3cm]
   apples \& oranges \& and \& lemons \\
  \end{deptext}
  \depedge{1}{2}{}
  \depedge{1}{3}{}
  \depedge{3}{4}{}
  \deproot[edge unit distance=2ex]{1}{}
  \end{dependency}
  \subcaption{Mel'\v{c}ukian style}
  \label{subfig:corpora:melcuk-coord}
 \end{minipage}
 \begin{minipage}[t]{.49\linewidth}
  \centering
  \begin{dependency} [theme=simple]
  \begin{deptext}[column sep=0.3cm]
   apples \& oranges \& and \& lemons \\
  \end{deptext}
  \depedge{1}{2}{}
  \depedge{1}{3}{}
  \depedge{1}{4}{}
  \deproot[edge unit distance=2ex]{1}{}
  \end{dependency}
  \subcaption{Stanford style}
  \label{subfig:corpora:stanford-coord}
 \end{minipage}
 \begin{minipage}[t]{.49\linewidth}
  \centering
  \begin{dependency} [theme=simple]
  \begin{deptext}[column sep=0.3cm]
   apples \& oranges \& and \& lemons \\
  \end{deptext}
  \deproot[edge unit distance=2ex]{1}{}
  \deproot[edge unit distance=2ex]{2}{}
  \deproot[edge unit distance=2ex]{3}{}
  \deproot[edge unit distance=2ex]{4}{}
  \end{dependency}
  \subcaption{Teni\'{e}rian style}
  \label{subfig:corpora:tenierian-coord}
 \end{minipage}
 \caption{Four styles of annotation for coordination.}
 \label{fig:corpora:coord}
\end{figure}

Another famous construction that has several variations in analysis is coordination, which is inherently {\it multiple-head} construction and is difficult to deal with in dependency.
\newcite{popel-EtAl:2013:ACL2013} give detailed analysis of coordination structures employed in several existing treebanks.
There are roughly four families of approaches \cite{halmedt} in existing treebanks as shown in Figure \ref{fig:corpora:coord}.
Each annotation style has the following properties:
\begin{description}
 \item[Prague] All conjuncts are headed by the conjunction \cite{cs}.
 \item[Mel'\v{c}ukian] The first/last conjunct is the head, and others are organized in a chain \cite{Melcuk:1988}.
 \item[Stanford] The first conjunct is the head, others are attached directly to it \cite{mm2008stdm}.
 \item[Teni\'{e}rian] There is no common head, and all conjuncts are attached directly to the node modified by the coordination structure \cite{Tesniere1959}.
\end{description}

Note that through our experiments we do not make any claims on which annotation style is the most appropriate for dependency analysis.
In other words, we do not want to commit to a particular linguistic theory.
The main reason why we focus on UD is that it is the dataset with the highest annotation consistency across languages now available, as we describe in the following.

\section{CoNLL Shared Tasks Dataset}

This dataset consists of 19 language treebanks used in the CoNLL shared tasks 2006 \cite{buchholz-marsi:2006:CoNLL-X} and 2007 \cite{nivre-EtAl:2007:EMNLP-CoNLL2007}, in which the task was the multilingual supervised dependency parsing.
See the list of languages and statistics in Table \ref{tab:corpora:conll}.
There is generally no annotation consistency across languages in various constructions.
For example, the four types of coordination annotation styles all appear in this dataset;
Prague style is used in, e.g., Arabic, Czech, and English, while Mel'\v{c}ukian style is found in, e.g., German, Japanese, and Swedish, etc.
Function and content head choices are also mixed across languages as well as the constructions in each language.
For example, in English, the basic style is function head-based while some exceptions are found in e.g., infinitive marker in a verb phrase, such as ``... allow executives to report ...'' in which the head of ``to'' is ``report'' instead of ``allow''.
The idea of regarding a determiner as a head is the extreme of function head-based view \cite{abney1987english,Hudson:2004-01-01T00:00:00:0929-998X:7}, and most treebanks treat a noun as a head while the determiner head is also employed in some treebank, such as Danish.
\newcite{halmedt} gives more detailed survey on the differences of annotation styles in this dataset.

\begin{table}[t]
\centering
\scalebox{1.0}{
\begin{tabular}[t]{l r r r r r r r} \hline
Language & \#Sents. & \multicolumn{4}{c}{\#Tokens} & Punc. & Av. len.\\
         &          & $\leq$10& $\leq$15 & $\leq$20 & $\leq\infty$ & (\%) & \\ \hline
Arabic & 3,043 & 2,833 & 5,193 & 8,656 & 116,793 & 8.3 & 38.3\\
Basque & 3,523 & 7,865 & 19,351 & 31,384 & 55,874 & 18.5 & 15.8\\
Bulgarian & 13,221 & 34,840 & 75,530 & 114,687 & 196,151 & 14.3 & 14.8\\
Catalan & 15,125 & 9,943 & 31,020 & 66,487 & 435,860 & 11.6 & 28.8\\
Chinese & 57,647 & 269,772 & 326,275 & 337,908 & 342,336 & 0.0 & 5.9\\
Czech & 25,650 & 48,452 & 110,516 & 191,635 & 437,020 & 14.7 & 17.0\\
Danish & 5,512 & 10,089 & 24,432 & 40,221 & 100,238 & 13.9 & 18.1\\
Dutch & 13,735 & 40,816 & 75,665 & 110,118 & 200,654 & 11.2 & 14.6\\
English & 18,791 & 13,969 & 47,711 & 106,085 & 451,576 & 12.2 & 24.0\\
German & 39,573 & 66,741 & 164,738 & 292,769 & 705,304 & 13.5 & 17.8\\
Greek & 2,902 & 2,851 & 8,160 & 16,076 & 70,223 & 10.1 & 24.1\\
Hungarian & 6,424 & 8,896 & 23,676 & 42,796 & 139,143 & 15.5 & 21.6\\
Italian & 3,359 & 5,035 & 12,350 & 21,599 & 76,295 & 14.7 & 22.7\\
Japanese & 17,753 & 52,399 & 81,561 & 105,250 & 157,172 & 11.6 & 8.8\\
Portuguese & 9,359 & 13,031 & 30,060 & 54,804 & 212,545 & 14.0 & 22.7\\
Slovene & 1,936 & 4,322 & 9,647 & 15,555 & 35,140 & 18.0 & 18.1\\
Spanish & 3,512 & 3,968 & 9,716 & 18,007 & 95,028 & 12.5 & 27.0\\
Swedish & 11,431 & 20,946 & 55,670 & 96,907 & 197,123 & 10.9 & 17.2\\
Turkish & 5,935 & 21,438 & 34,449 & 44,110 & 69,695 & 16.0 & 11.7\\\hline
\end{tabular}
}
 \caption{Overview of CoNLL dataset (mix of training and test sets).
 Punc. is the ratio of punctuation tokens in a whole corpus.
 Av. len. is the average length of a sentence.}
 \label{tab:corpora:conll}
\end{table}

The dataset consists of the following treebanks.
Note that some languages (Arabic, Chinese, Czech, and Turkish) are used in both 2006 and 2007 shared tasks in different versions; in which case we use only 2007 data.
Also a number of treebanks, such as Basque, Chinese, English, etc, are annotated originally in phrase-structure trees, which are converted to dependency trees with heuristics rules extracting a head token from each constituent.
\begin{description}
 \item[Arabic:] Prague Arabic Dependency Treebank 1.0 \cite{ar}.
 \item[Basque:] 3LB Basque treebank \cite{eu}.
 \item[Bulgarian:] BulTreeBank \cite{bg}.
 \item[Catalan:] The Catalan section of the CESS-ECE Syntactically and Semantically Annotated Corpora \cite{catalan}.
 \item[Chinese:] Sinica treebank \cite{sinica}.
 \item[Czech:] Prague Dependency Treebank 2.0 \cite{cs}.
 \item[Danish:] Danish Dependency Treebank \cite{da}.
 \item[Dutch:] Alpino treebank \cite{nl}.
 \item[English:] The Wall Street Journal portion of the Penn Treebank \cite{Marcus93buildinga}.
 \item[German:] TIGER treebank \cite{de}.
 \item[Greek:] Greek Dependency Treebank \cite{el}.
 \item[Hungarian:] Szeged treebank \cite{hu}.
 \item[Italian:] A subset of the balanced section of the Italian SyntacticSemantic Treebank \cite{it}.
 \item[Japanese:] Japanese Verbmobil treebank \cite{ja}. This is mainly the collection of speech conversations and thus the average length is relatively short.
 \item[Portuguese:] The Bosque part of the Floresta sint\'{a}(c)tica \cite{pt} covering both Brazilian and European Portuguese.
 \item[Slovene:] Slovene Dependency Treebank \cite{sl}.
 \item[Spanish:] Cast3LB \cite{spanish}.
 \item[Swedish:] Talbanken05 \cite{sv}.
 \item[Turkish:] METU-Sabancı Turkish Treebank used in CoNLL 2007 \cite{tr}.
\end{description}

\section{Universal Dependencies}

UD is a collection of treebanks each of which is designed to follow the annotation guideline based on the Stanford typed dependencies \cite{mm2008stdm}, which is in most cases content head-based as we mentioned in Section \ref{sec:corpora:heads}.
We basically use the version 1.1 of this dataset, from which we exclude Finnish-FTB since UD also contains another Finnish treebank, which is larger, and add Japanese, which is included in version 1.2 dataset first.
Typically a treebank is created by first transforming trees in an existing treebank with some script into the trees to follow the annotation guideline, and then manually correcting the errors.

Another characteristic of this dataset is the set of POS tags and dependency labels are consistent across languages.
Appendix \ref{chap:app:ud-pos} summarizes the POS tagset of UD.
We do not discuss dependency labels since we omit them.

Below is the list of sources of treebanks.
We omit the languages if the source is the same as the CoNLL dataset described above.
Note the source of some languages, such as English and Japanese, are changed from the previous dataset.
See Table \ref{tab:corpora:ud} for the list of all 19 languages as well as the statistics.
\begin{description}
 \item[Croatian:] SETimes.HR \cite{hr}.
 \item[English:] English Web \cite{silveira14gold}.
 \item[Finnish:] Turku Dependency Treebank \cite{fi}.
 \item[German:] Google universal treebanks (see Section \ref{sec:corpora:google}).
 \item[Hebrew:] Hebrew Dependency Treebank \cite{he}.
 \item[Indonesian:] Google universal treebanks (see Section \ref{sec:corpora:google}).
 \item[Irish:] Irish Dependency Treebank \cite{ga}.
 \item[Japanese]: Kyoto University Text Corpus 4.0 \cite{Kawahara02constructionof,ja-ud}.
 \item[Persian]: Uppsala Persian Dependency Treebank \cite{faupdt}.
 \item[Spanish]: Google universal treebanks (see Section \ref{sec:corpora:google}).
\end{description}

\begin{table}[t]
\centering
\scalebox{1.0}{
\begin{tabular}[t]{l r r r r r r r} \hline
Language & \#Sents. & \multicolumn{4}{c}{\#Tokens} & Punc. & Av. len.\\
         &          & $\leq$10& $\leq$15 & $\leq$20 & $\leq\infty$ & & \\ \hline
Basque & 5,273 & 19,597 & 38,612 & 51,305 & 60,563 & 17.3 & 11.4\\
Bulgarian & 9,405 & 27,903 & 58,386 & 84,318 & 125,592 & 14.3 & 13.3\\
Croatian & 3,957 & 3,850 & 12,718 & 26,614 & 87,765 & 12.9 & 22.1\\
Czech & 87,913 & 160,930 & 377,994 & 654,559 & 1,506,490 & 14.6 & 17.1\\
Danish & 5,512 & 10,089 & 24,432 & 40,221 & 100,238 & 13.8 & 18.1\\
English & 16,622 & 36,189 & 74,361 & 115,511 & 254,830 & 11.7 & 15.3\\
Finnish & 13,581 & 39,797 & 85,601 & 123,036 & 181,022 & 14.6 & 13.3\\
French & 16,468 & 13,988 & 51,525 & 106,303 & 400,627 & 11.1 & 24.3\\
German & 15,918 & 24,418 & 74,400 & 135,117 & 298,614 & 13.0 & 18.7\\
Greek & 2,411 & 2,229 & 6,707 & 13,493 & 59,156 & 10.6 & 24.5\\
Hebrew & 6,216 & 5,527 & 17,575 & 35,128 & 158,855 & 11.5 & 25.5\\
Hungarian & 1,299 & 1,652 & 5,196 & 9,913 & 26,538 & 14.6 & 20.4\\
Indonesian & 5,593 & 6,890 & 23,009 & 42,749 & 121,923 & 14.9 & 21.7\\
Irish & 1,020 & 1,901 & 3,695 & 6,202 & 23,686 & 10.6 & 23.2\\
Italian & 12,330 & 24,230 & 51,033 & 79,901 & 277,209 & 11.2 & 22.4\\
Japanese & 9,995 & 6,832 & 24,657 & 54,395 & 267,631 & 10.8 & 26.7\\
Persian & 6,000 & 6,808 & 18,011 & 34,191 & 152,918 & 8.7 & 25.4\\
Spanish & 16,006 & 10,489 & 40,087 & 88,665 & 432,651 & 11.0 & 27.0\\
Swedish & 6,026 & 13,045 & 31,343 & 51,333 & 96,819 & 10.7 & 16.0\\\hline
\end{tabular}
 }
 \caption{Overview of UD dataset (mix of train/dev/test sets).
 Punc. is the ratio of punctuation tokens in a whole corpus.
 Av. len. is the average length of a sentence.}
 \label{tab:corpora:ud}
\end{table}

\section{Google Universal Treebanks}
\label{sec:corpora:google}

This dataset is a collection of 12 languages treebanks, i.e., Brazilian-Portuguese, English, Finnish, French, German, Italian, Indonesian, Japanese, Korean, Spanish and Swedish.
Most treebanks are created by hand in this project except the following two languages:
\begin{description}
 \item[English:] Automatically convert from the Wall Street Journal portion of the Penn Treebank \cite{Marcus93buildinga} (with a different conversion method than the CoNLL dataset).
 \item[Swedish:] Talbanken05 \cite{sv} as in CoNLL dataset.
\end{description}

Basically every treebank follow the annotation guideline based on the Stanford typed dependencies as in UD, but contrary to UD, the annotation of Google treebanks is not fully content head-based.
As we show in Figure \ref{subfig:corpora:google-en}, it annotates specific constructions in function head-based, in particular {\sc adp} phrases.

We do not summarize the statistics of this dataset here as we use it only in our experiments in Chapter \ref{chap:induction} where we will see the statistics of the subset of the data that we use (see Section \ref{sec:ind:dataset}).

\chapter{Left-corner Transition-based Dependency Parsing}
\label{chap:transition}

Based on several recipes introduced in Chapter \ref{chap:bg}, we now build a left-corner parsing algorithm operating on dependency grammars.
In this chapter, we formalize the algorithm as a transition system for dependency parsing \cite{Nivre:2008} that roughly corresponds to the dependency version of a push-down automaton (PDA).

We have introduced PDAs with the left-corner parsing strategy for CFGs (Section \ref{sec:bg:left-corner-pda}) as well as a conversion method of any projective dependency trees into an equivalent CFG parse (Section \ref{sec:bilexical}).
Thus one may suspect that it is straightforward to obtain a left-corner parsing algorithm for dependency grammars by, e.g., developing a CFG parser that will build a CFG parse encoding dependency information at each nonterminal symbol.

\REVISE{
In this chapter, however, we take a different approach to build an algorithm in a non-trivial way.
One reason for this is because such a CFG-based approach cannot be an {\it incremental} algorithm.
On the other hand, our algorithm in this chapter is incremental; that is, it can construct a partial parse on the stack, without seeing the future input tokens.
Incrementality is important for assessing parser performance with a comparison to other existing parsing methods, which basically assume incremental processing.
We perform such empirical comparison in Sections \ref{sec:analysis} and \ref{sec:parse}.

For example, let us assume to build a parse in Figure \ref{subfig:abigdog:cfg}, which corresponds to the CFG parse for a dependency tree on ``a big dog''.
To recognize this parse on the left-corner PDA in Section \ref{sec:bg:left-corner-pda}, after shifting token ``a'' (which becomes X[a]), the PDA may covert it to the symbol ``X[dog]/X[dog]''.
However, for creating such a symbol, we have to know that ``dog'' will appear on the remaining inputs at this point, which is impossible in incremental parsing.
This contrasts with the left-corner parser for phrase-structure grammars that we considered in Section \ref{sec:bg:left-corner-pda} in which there is only a finite inventory of nonterminal, which might be predicted.

The algorithm we formalize in this chapter does not introduce such symbols to enable incremental parsing.
We do so by introducing a new concept, a {\it dummy} node, which efficiently abstracts the predicted structure of a subtree in a compact way.
Another important point is since this algorithm directly operates on a dependency tree (not via a CFG form), we can get intuition into how the left-corner parser builds a dependency parse tree.
This becomes important when developing efficient tabulating algorithm with head-splitting (Section \ref{sec:2:sbg}) in Chapter \ref{chap:induction}.
}

\REVISE{
We formally define our algorithm as a {\it transition system}, a stack-based formalization like push-down automata and is the most popular way for obtaining algorithms for dependency grammars \cite{Nivre2003,Yamada03,Nivre:2008,RodriguezNivre2013divisible}.
As we discussed in Section \ref{sec:bg:left-corner-pda}, a left-corner parser can capture the degree of center-embedding of a construction by its stack depth.
Our algorithm preserves this property, and its stack depth increases only when processing dependency structures involving center-embedding.
}

The empirical part of this chapter comprises of two kinds of experiments:
First, we perform a corpus analysis to show that our left-corner algorithm consistently requires less stack depth to recognize annotated trees relative to other algorithms across languages.
The result also suggests the existence of a syntactic universal by which deeper center-embedding is a rare construction across languages, which has not yet been quantitatively examined cross-linguistically.
The second experiment is a supervised parsing experiment, which can be seen as an alternative way to assess the parser's ability to capture important syntactic regularities.
In particular, we will find that the parser using our left-corner algorithm is consistently less sensitive to the decoding constraints of stack depth bound across languages.
Conversely, the performance of other dependency parsers such as the arc-eager parser is largely affected by the same constraints.

\REVISE{
The motivation behind these comparisons is to examine whether the stack depth of a left-corner parser is in fact a meaningful measure to explain the syntactic universal among other alternatives, which would be valuable for other applications such as unsupervised grammar induction that we explore in Chapter \ref{chap:induction}.
}

The first experiment is a {\it static} analysis, which strictly analyzes the observed tree forms in the treebanks, while the second experiment takes {\it parsing errors} into account.
Though the result of the first experiment seems clearer to claim a universal property of language, the result of the second experiment might also be important for real applications.
Specifically we will find that the rate of performance drop with a decoding constraint is smaller than the expected value from the coverage result of the first experiment.
This suggests that a good approximation of the observed syntactic structures in treebanks is available from a highly restricted space if we allow small portion of parse errors.
Since real applications always suffer from parse errors, this result is more appealing for finding a good constraint to restrict the possible tree structures.

This chapter proceeds as follows:
Since our empirical concern is the relative performance of our left-corner algorithm compared to existing transition-based algorithms, we begin the discussion in this chapter with a survey of stack depth behavior in existing algorithms in Section \ref{sec:others}.
This discussion is an extension of a preliminary survey about the incrementality of transition systems by \newcite{nivre:2004:IncrementalParsing}, which is (to our knowledge) the only study discussing how stack elements increase for a particular dependency structures in some algorithm.
Then, in Section \ref{sec:left-corner}, we develop our new transition system that follows a left-corner parsing strategy for dependency grammars and discuss the formal properties of the system, such as the spurious ambiguity of the system and its implications, which are closely relevant to the spurious ambiguity problem we discussed in Section \ref{sec:bilexical}.
The empirical part is devoted to Sections \ref{sec:analysis} and \ref{sec:parse}, focusing on the static corpus analysis and supervised parsing experiments, respectively.
Finally, we give discussion along with the the relevant previous studies in Section \ref{sec:relatedwork} to conclude this chapter.

The preliminary version of this chapter appeared as \newcite{noji-miyao:2015:jnlp}, which was itself an extension of \newcite{noji-miyao:2014:Coling}.
Although these previous versions limited the dataset to the one in the CoNLL shared tasks \cite{buchholz-marsi:2006:CoNLL-X,nivre-EtAl:2007:EMNLP-CoNLL2007}, we add new analysis on Universal dependencies \cite{DEMARNEFFE14.1062} (see also Chapter \ref{chap:corpora}).
The total number of analyzed treebanks is 38 in total across 26 languages.

\section{Notations}
\label{sec:trans:notation}
We first introduce several important concepts and notations used in this chapter.

\paragraph{Transition system}

Every parsing algorithm presented in this chapter can be formally defined as a transition system.
The description below is rather informal; See \newcite{Nivre:2008} for more details.
A transition system is an abstract state machine that processes sentences and produces parse trees.
It has a set of {\it configurations} and a set of {\it transition actions} applied to a configuration.
Each system defines an {\it initial configuration} given an input sentence.
The parsing process proceeds by repeatedly applying an action to the current configuration.
After a finite number of transitions the system arrives at a {\it terminal configuration}, and the dependency tree is read off the terminal configuration.

Formally, each configuration is a tuple $(\sigma,\beta,A)$;
here, $\sigma$ is a stack, and we use a vertical bar to signify the append operation, e.g., $\sigma=\sigma'|\sigma_1$ denotes $\sigma_1$ is the topmost element of stack $\sigma$.
Further, $\beta$ is an input buffer consisting of token indexes that have yet to be processed;
here, $\beta=j|\beta'$ indicates that $j$ is the first element of $\beta$.
Finally, $A \subseteq V_w \times V_w$ is a set of arcs given $V_w$, a set of token indexes for sentence $w$.

\paragraph{Transition-based parser}

We distinguish two similar terms, a transition system and a transition-based parser in this chapter.
A transition system formally characterizes how a tree is constructed via transitions between configurations.
On the other hand, a parser is built on a transition system, and it selects the best {\it action sequence} (i.e., the best parse tree) for an input sentence probably with some scoring model.
Since a transition system abstracts the way of constructing a parse tree, when we mention a {\it parsing algorithm}, it often refers to a transition system, not a parser.
Most of the remaining parts of this chapter is about transition systems, except Section \ref{sec:parse}, in which we compare the performance of several parsers via supervised parsing experiments.

\paragraph{Center-embedded dependency structure}

\begin{figure}[t]
 \centering
 \begin{minipage}[t]{.3\linewidth}
  \centering
  \begin{dependency}[theme=simple]
   \begin{deptext}[column sep=0.5cm]
    a \& big \& dog \\
   \end{deptext}
   \depedge{3}{1}{}
   \depedge{3}{2}{}
  \end{dependency}
  \subcaption{}\label{subfig:abigdog}
 \end{minipage}
 \begin{minipage}[t]{.3\linewidth}
  \centering
  \begin{tikzpicture}[sibling distance=10pt]
   \Tree
   [.X[dog]
     [.X[a] a ]
     [.X[dog]
       [.X[big] big ]
       [.X[dog] dog ]
     ]
   ]
  \end{tikzpicture}
  \subcaption{}\label{subfig:abigdog:cfg}
 \end{minipage}

 \vspace{10pt}
 \begin{minipage}[t]{.25\linewidth}
  \centering
  \begin{dependency}[theme=simple]
   \begin{deptext}[column sep=0.3cm]
    dogs \& run \& fast \\
   \end{deptext}
   \depedge{2}{1}{}
   \depedge{2}{3}{}
  \end{dependency}
  \subcaption{}\label{subfig:dogsrunfast}
 \end{minipage}
 \begin{minipage}[t]{.35\linewidth}
  \centering
  \begin{tikzpicture}[sibling distance=10pt]
   \Tree
   [.X[run]
     [.X[run] [.X[dogs] dogs ] [.X[run] run ]
     ]
     [.X[fast] fast ]
   ]
  \end{tikzpicture}
  \subcaption{}\label{subfig:dogsrunfast:cfg1}
 \end{minipage}
 \begin{minipage}[t]{.35\linewidth}
  \centering
  \begin{tikzpicture}[sibling distance=10pt]
   \Tree
   [.X[run]
     [.X[dogs] dogs ]
     [.X[run]
       [.X[run] run ]
       [.X[fast] fast ]
     ]
   ]
  \end{tikzpicture}
  \subcaption{}\label{subfig:dogsrunfast:cfg2}
 \end{minipage}
 \caption{Conversions from dependency trees into CFG parses; (a) can be uniquely converted to (b), while (c) can be converted to both (d) and (e).}\label{fig:reduction}
\end{figure}
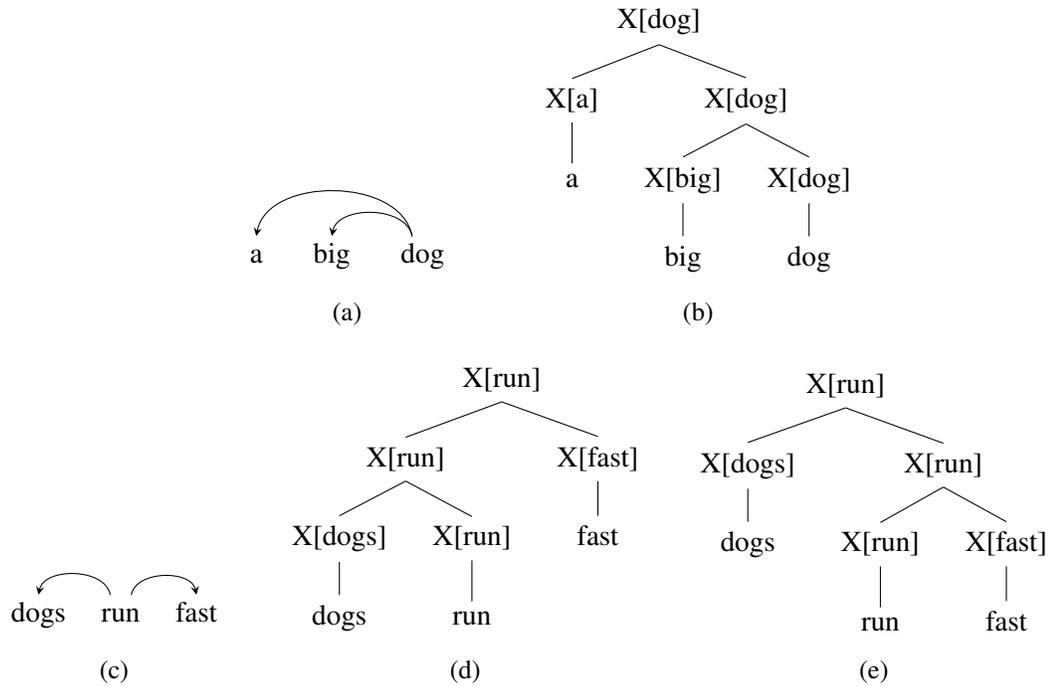

The concept of center-embedding introduced in Section \ref{sec:bg:embedding} is originally defined on a constituent structure, or a CFG parse.
Remember that a dependency tree also encodes constituent structures implicitly (see Figure \ref{fig:reduction}) but the conversion from a dependency tree into a CFG parse (in CNF) is not unique, i.e., there is a spurious ambiguity (see Section \ref{sec:bilexical}).
This ambiguity implies there is a subtlety for defining the degree of center-embedding for a dependency structure.

We argue that the tree structure of a given dependency tree (i.e., whether it belongs to center-embedding) cannot be determined by a given tree itself;
We can determine the tree structure of a dependency tree {\it only if} we have some one-to-one conversion method from a dependency tree to a CFG parse.
For example some conversion method may always convert a tree of Figure \ref{subfig:dogsrunfast} into the one of Figure \ref{subfig:dogsrunfast:cfg1}.
In other words, the tree structure of a dependency tree should be discussed along with such a conversion method.
We discuss this subtlety more in Section \ref{sec:oracle}.

We avoid this ambiguity for a while by restricting our attention to the tree structures like Figure \ref{subfig:abigdog} in which we can obtain the corresponding CFG parse uniquely.
For example the dependency tree in Figure \ref{subfig:abigdog} is an example of a {\it right-branching} dependency tree.
Similarly we call a given dependency tree is center-embedding, or left- (right-)branching, depending on the implicit CFG parse when there is no conversion ambiguity.

\section{Stack Depth of Existing Transition Systems}
\label{sec:others}
This section surveys how the stack depth of existing transition systems grows given a variety of dependency structures.
These are used as baseline systems in our experiments in Sections \ref{sec:analysis} and \ref{sec:parse}.

\subsection{Arc-standard}
\label{sec:standard}

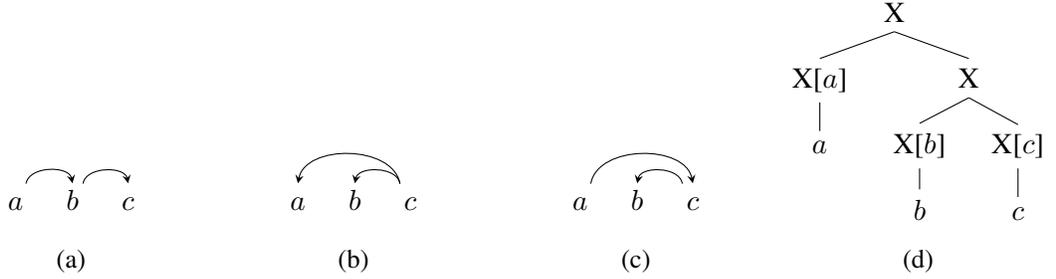
\begin{figure}[t]
 \centering
 \begin{minipage}[b]{.24\linewidth}
  \centering
  \begin{dependency}[theme=simple]
   \begin{deptext}[column sep=0.4cm]
    $a$ \& $b$ \& $c$ \\
   \end{deptext}
   \depedge{1}{2}{}
   \depedge{2}{3}{}
  \end{dependency}
  \subcaption{}\label{subfig:right-dep1}
 \end{minipage}
 \begin{minipage}[b]{.24\linewidth}
  \centering
  \begin{dependency}[theme=simple]
   \begin{deptext}[column sep=0.4cm]
    $a$ \& $b$ \& $c$ \\
   \end{deptext}
   \depedge{3}{1}{}
   \depedge{3}{2}{}
  \end{dependency}
  \subcaption{}\label{subfig:right-dep2}
 \end{minipage}
 \begin{minipage}[b]{.24\linewidth}
  \centering
  \begin{dependency}[theme=simple]
   \begin{deptext}[column sep=0.4cm]
    $a$ \& $b$ \& $c$ \\
   \end{deptext}
   \depedge{1}{3}{}
   \depedge{3}{2}{}
  \end{dependency}
  \subcaption{}\label{subfig:right-dep3}
 \end{minipage}
  \begin{minipage}[b]{.24\linewidth}
   \centering
   \begin{tikzpicture}[sibling distance=10pt]
    \tikzset{level distance=25pt}
    \Tree
    [.X [.X[$a$] $a$ ] [.X [.X[$b$] $b$ ] [.X[$c$] $c$ ] ] ]
   \end{tikzpicture}   
   \subcaption{}\label{subfig:right-dep-constituency}
  \end{minipage}
  \caption{(a)-(c) Right-branching dependency trees for three words and (d) the corresponding CFG parse.}
  \label{fig:right-deps}
\end{figure}

The arc-standard system \cite{nivre:2004:IncrementalParsing} consists of the following three transition actions, with $(h, d)$ representing a dependency arc from $h$ (head) to $d$ (dependent).
\begin{itemize}
 \item {\sc Shift}: $(\sigma,j|\beta,A) \mapsto (\sigma | j, \beta, A)$;
 \item {\sc LeftArc}: $(\sigma|\sigma'_2|\sigma'_1, \beta, A) \mapsto (\sigma|\sigma'_1, \beta, A \cup \{ (\sigma'_1, \sigma'_2) \})$;
 \item {\sc RightArc}: $(\sigma|\sigma'_2|\sigma'_1, \beta, A) \mapsto (\sigma|\sigma'_2, \beta, A \cup \{ (\sigma'_2, \sigma'_1) \})$.
\end{itemize}

We first observe here that the stack depth of the arc-standard system increases linearly for a right-branching structure, such as $a^\curvearrowright b^\curvearrowright c^\curvearrowright \cdots$, in which the system first shifts all words on the stack before connecting each pair of words.
\newcite{nivre:2004:IncrementalParsing} analyzed this system and observed that stack depth grows when processing a dependency tree that becomes right-branching with a CFG conversion.
Figure \ref{fig:right-deps} shows these dependency trees for three words;
the system must construct a subtree of $b$ and $c$ before connecting $a$ to either, thus increasing stack depth.
This occurs because the system builds a tree in a bottom-up manner, i.e., each token collects all dependents before being attached to its head.
The arc-standard system is essentially equivalent to the push-down automaton of a CFG in CNF with a bottom-up strategy \cite{nivre:2004:IncrementalParsing}, so it has the same property as the bottom-up parser for a CFG.
This equivalence also indicates that its stack depth increases for center-embedded structures.

\subsection{Arc-eager}
The arc-eager system \cite{Nivre2003} uses the following four transition actions:
\begin{itemize}
 \item {\sc Shift}: $(\sigma,j|\beta,A) \mapsto (\sigma | j, \beta, A)$;
 \item {\sc LeftArc}: $(\sigma|\sigma'_1, j|\beta, A) \mapsto (\sigma, j|\beta, A \cup \{ (j, \sigma'_1) \})$ ~~~ (if $\neg \exists k, (k,\sigma'_1) \in A$);
 \item {\sc RightArc}: $(\sigma|\sigma'_1, j|\beta, A) \mapsto (\sigma|\sigma'_1|j, \beta, A \cup \{ (\sigma'_1, j) \})$;
 \item {\sc Reduce}: $(\sigma|\sigma'_1,\beta,A) \mapsto (\sigma,\beta,A)$ ~~~ (if $\exists k, (k,\sigma'_1) \in A$).
\end{itemize}
Note that {\sc LeftArc} and {\sc Reduce} are not always applicable.
{\sc LeftArc} requires that $\sigma'_1$ is not a dependent of any other tokens, while {\sc Reduce} requires that $\sigma'_1$ is a dependent of some token (attached to its head).
These conditions originate from the property of the arc-eager system by which each element on the stack may not be disjoint.
In this system, two successive tokens on the stack may be combined with a left-to-right arc, i.e., $a^\curvearrowright b$, thus constituting a {\it connected component}.

\begin{table}[t]
 \centering
 \begin{tabular}[t]{lccc} \hline
                  &Left-branching &Right-branching  &Center-embedding \\ \hline
  Arc-standard    &$O(1)$         &$O(n)$       &$O(n)$ \\
  Arc-eager       &$O(1)$         &$O(1\sim n)$ &$O(1\sim n)$ \\ 
  Left-corner     &$O(1)$         &$O(1)$       &$O(n)$ \\ \hline
 \end{tabular}
 \caption{Order of required stack depth for each structure for each transition system.
 $O(1\sim n)$ means that it recognizes a subset of structures within a constant stack depth but demands linear stack depth for the other structures.
 }
 \label{tab:order}
\end{table}

For this system, we slightly abuse the notation and define stack depth as the number of connected components, not as the number of tokens on the stack, since our concern is the syntactic bias that may be captured with measures on the stack.
With the definition based on the number of tokens on the stack, the arc-eager system would have the same stack depth properties as the arc-standard system.
As we see below, the arc-eager approach has several interesting properties with this modified definition.\footnote{
The stack of the arc-eager system can be seen as the stack of stacks; i.e., each stack element itself is a stack preserving a connected subtree (a right spine).
Our definition of stack depth corresponds to the depth of this stack of stacks.
}

From this definition, unlike the arc-standard system, the arc-eager system recognizes the structure shown in Figure \ref{subfig:right-dep1} and more generally $a^\curvearrowright b^\curvearrowright c^\curvearrowright \cdots$ within constant depth (just one) since it can connect all tokens on the stack with consecutive {\sc RightArc} actions.
More generally, the stack depth of the arc-eager system never increases as long as all dependency arcs are left to right.
This result indicates that the construction of the arc-eager system is no longer purely bottom-up and makes it difficult to formally characterize the stack depth properties based on the tree structure.

We argue two points regarding the stack depth of the arc-eager system.
First, it recognizes a subset of the right-branching structures within a constant depth, as we analyzed above, while increasing stack depth linearly for other right-branching structures, including the trees shown in Figures \ref{subfig:right-dep2} and \ref{subfig:right-dep3}.
Second, it recognizes a subset of the center-embedded structures within a constant depth, such as \tikz[baseline=-.5ex]{
\node at (0.0, 0.0) {$a^\curvearrowright b ^\curvearrowright c ~~ d$,};
\draw[line width=0.35pt,->] (-0.16,0.08) .. controls(-0.09,0.3) and (0.50,0.3) .. (0.57,0.08);
}, which becomes center-embedded when converted to a constituent tree with all arcs left-to-right.
For other center-embedded structures, the stack depth grows linearly as with the arc-standard system.

We summarize the above results in Table \ref{tab:order}.
The left-corner transition system that we propose next has the properties of the third row of the table, and its stack depth grows only on center-embedded dependency structures.

\subsection{Other systems}
All systems in which stack elements cannot be connected have the same properties as the arc-standard system because of their bottom-up constructions including the hybrid system of \newcite{kuhlmann-gomezrodriguez-satta:2011:ACL-HLT2011}.
\newcite{kitagawa-tanakaishii:2010:Short} and \newcite{sartorio-satta-nivre:2013:ACL2013} present an interesting variant that attaches one node to another node that may not be the head of a subtree on the stack.
We do not explore these systems in our experiments because their stack depth essentially has the same properties as the arc-eager system, e.g., their stack depth does not always grow on center-embedded structures, although it grows on some kinds of right-branching structures.

\section{Left-corner Dependency Parsing}
\label{sec:left-corner}

In this section, we develop our dependency transition system with the left-corner strategy.
Our starting point is the push-down automaton for a CFG that we developed in Section \ref{sec:bg:left-corner-pda}.
We will describe how the idea in this automaton can be extended for dependency trees by introducing the concept of {\it dummy nodes} that abstract the prediction mechanism required to achieve the left-corner parsing strategy.

\subsection{Dummy node}

The key characteristic of our transition system is the introduction of a dummy node in a subtree, which is needed to represent a subtree containing predicted structures, such as the symbol $A/B$ in Figure \ref{fig:bg:ourpda}, which predicts an existence of $B$ top-down.
To intuitively understand the parser actions, we present a simulation of transitions for the sentence shown in Figure \ref{subfig:right-dep2} for which all existing systems demand a linear stack depth.
Our system first shifts $a$ and then conducts a {\it prediction} operation that yields subtree \hspace{-3pt}\tikz[baseline=-1.ex]{
\draw[->] (-0.12,-0.08) -- (-0.5,-0.25);
\node at (0,0) {$x$};
\node at (-0.6,-0.3) {$a$};
}, where $x$ is a dummy node.
Here, we predict that $a$ will become a left dependent of an incoming word.
Next, it shifts $b$ to the stack and then conducts a {\it composition} operation to obtain a tree \hspace{-3pt}\tikz[baseline=-1.ex]{
\draw[->] (-0.12,-0.02) -- (-0.6,-0.25);
\draw[->] (-0.03,-0.1) -- (-0.2,-0.25);
\node at (0,0) {$x$};
\node at (-0.7,-0.3) {$a$};
\node at (-0.25,-0.3) {$b$};
}.
Finally, $c$ is inserted into the position of $x$, thus recovering the tree.

\subsection{Transition system}
\label{subsec:transitionsystem}

Our system uses the same notation for a configuration as other systems presented in Section \ref{sec:others}.
Figure \ref{fig:config} shows an example of a configuration in which the $i$-th word in a sentence is written as $w_i$ on the stack.
Each element on the stack is a list representing a right spine of a subtree, which is similar to \newcite{kitagawa-tanakaishii:2010:Short} and \newcite{sartorio-satta-nivre:2013:ACL2013}.
Here, right spine $\sigma_i = \langle \sigma_{i1},\sigma_{i2},\cdots,\sigma_{ik} \rangle$ consists of all nodes in a descending path from the head of $\sigma_i$, i.e., from $\sigma_{i1}$, taking the rightmost child at each step.
We also write $\sigma_i = \sigma'_i | \sigma_{ik}$, meaning that $\sigma_{ik}$ is the rightmost node of spine $\sigma_i$.
Each element of $\sigma_i$ is an index of a token in a sentence or a subtree rooted at a dummy node, $x(\lambda)$, where $\lambda$ is the set of left dependents of $x$.
We state that right spine $\sigma_i$ is {\it complete} if it does not contain any dummy nodes, while $\sigma_i$ is {\it incomplete} if it contains a dummy node.\footnote{\REVISE{$\sigma_i$ with a dummy node corresponds to a stack symbol of the form $A/B$ in the left-corner PDA, which we called incomplete in Section \ref{sec:bg:left-corner-pda}.
Thus, the meaning of these notions (i.e., complete and incomplete) is the same in two algorithms.
The main reason for us to use spine-based notation stems from our use of a dummy node, which postpones the realization of dependency arcs connected to it.
To add arcs appropriately to $A$ when a dummy is filled with a token, it is necessary to keep the surrounding information of the dummy node (this occurs in {\sc Insert} and {\sc RightComp}), which can be naturally traced by remembering each right spine.
}}

\REVISE{
All transition actions in our system are defined in Figure \ref{fig:actions}.
{\sc Insert} is essentially the same as the {\sc Scan} operation in the original left-corner PDA for CFGs (Figure \ref{fig:bg:ourpda}).
Other changes are that we divide {\sc Prediction} and {\sc Composition} into two actions, left and right.
As in the left-corner PDA, by a shift action, we mean {\sc Shift} or {\sc Insert}, while a reduce action means one of prediction and composition actions.
}

\begin{figure}[t]
 \hspace{4.0cm}Stack: \hspace{1.5cm}Buffer:

 \vspace*{0.2cm}\hspace{4.0cm}
 \begin{tikzpicture}[edge from parent/.style={draw,->},sibling distance=15pt,level distance=20pt]

  \path[draw] (0, 0) -- ++(2, 0) -- ++(0, -0.6) -- ++(-2, 0);
  \node at (0.5, -0.3) {$w_2$}
		[sibling distance=0.68cm]
    child[missing]
		child { node {$w_1$} }
    child[missing]
    child {
			node {$x$}
				[sibling distance=0.47cm]
				child { node {$w_3$} }
				child { node {$w_5$}
          [sibling distance=3pt]
          child { node {$w_4$} }
          child[missing]
        }
				child[missing]
        child[missing]
		};
  \node at (1.7, -0.3) {$w_6$}
    child[missing]
    child{ node {$w_7$} };

  \path[draw] (4.5, 0) -- ++(-2.0, 0) -- ++(0, -0.6) -- ++(2.0, 0);

  \node at (3.45, -0.3) {$w_8,w_9,\cdots$};

  \node at (7,-1.0) {$
  \begin{aligned}
   \sigma &= [\langle 2,x(\{3,5\}) \rangle, \langle 6,7 \rangle ] \\
   \beta &= [8,9,\cdots,n] \\
   A &= \{(2,1),(5,4),(6,7)\}
  \end{aligned}
  $};
 \end{tikzpicture}
 \caption{Example configuration of a left-corner transition system.}
 \label{fig:config}
\end{figure}
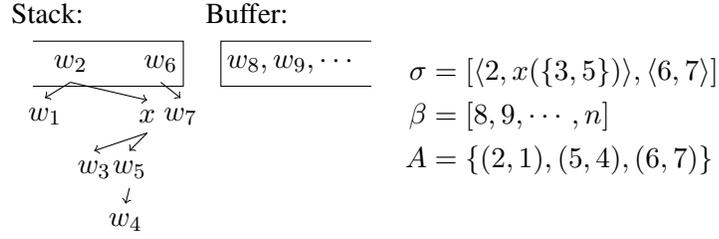

\paragraph{Shift Action}
\REVISE{
As in the left-corner PDA, {\sc Shift} moves a token from the top of the buffer to the stack.
{\sc Insert} corresponds to the {\sc Scan} operation of the PDA, and replaces a dummy node on the top of the stack with a token from the top of the buffer.
Note that before doing a shift action, a dummy $x$ can be replaced by any words, meaning that arcs from and to $x$ are unspecified.
This is the key to achieve incremental parsing (see the beginning of this chapter).
It is {\sc Insert} that these arcs are first specified, by filling the dummy node with an actual token.
As in the left-corner PDA, the top element of the stack must be complete after a shift action.
}

\paragraph{Reduce Action}

\REVISE{
As in the left-corner PDA, a reduce action is applied when the top element of the stack is complete, and changes it to an incomplete element.

{\sc LeftPred} and {\sc RightPred} correspond to {\sc Prediction} in the left-corner PDA.
Figure \ref{fig:correspondence} describes the transparent relationships between them.
{\sc LeftPred} makes the head of the top stack element (i.e., $\sigma_{11}$) as a left dependent of a new dummy $x$, while {\sc RightPred} predicts a dummy $x$ as a right dependent of $a$.
In these actions, if we think the original and resulting dependency forms in CFG, the correspondence to {\sc Prediction} in the PDA is apparent.
Specifically, the CFG forms of the resulting trees in both actions are the same.
The only difference is the head label of the parent symbol, which is $x$ in {\sc LeftPred} while $a$ in {\sc RightPred}.
}

\begin{figure}[t!]
 \centering
 \begin{tabular}[t]{|lll|} \hline
  {\sc Shift} &$(\sigma, j|\beta, A) \mapsto (\sigma | \langle j \rangle, \beta, A)$ & \\
  {\sc Insert} &$(\sigma | \langle \sigma'_1 | i | x(\lambda) \rangle), j|\beta, A) \mapsto (\sigma | \langle \sigma'_1 | i | j \rangle, \beta, A \cup \{ (i,j) \} \cup \{ \cup_{k\in\lambda} (j,k) \}$ & \\ \hline
  {\sc LeftPred}&$(\sigma | \langle \sigma_{11}, \cdots \rangle, \beta, A) \mapsto (\sigma | \langle x(\sigma_{11}) \rangle, \beta, A)$ & \\
  {\sc RightPred}&$(\sigma | \langle \sigma_{11}, \cdots \rangle, \beta, A) \mapsto (\sigma | \langle \sigma_{11}, x(\emptyset) \rangle, \beta, A)$ & \\
  {\sc LeftComp}&$(\sigma | \langle \sigma_2'| x(\lambda) \rangle | \langle \sigma_{11},\cdots \rangle, \beta, A) \mapsto (\sigma | \langle \sigma_2'| x(\lambda \cup \{\sigma_{11}\})\rangle, \beta, A)$ & \\
  {\sc RightComp}& $(\sigma | \langle \sigma_2'| x(\lambda) \rangle | \langle \sigma_{11},\cdots \rangle, \beta, A) \mapsto (\sigma | \langle \sigma_2'| \sigma_{11} | x(\emptyset) \rangle, \beta, A \cup \{\cup_{k\in\lambda} (\sigma_{11},k)\})$ & \\ \hline
 \end{tabular}
 \caption{Actions of the left-corner transition system including two shift operations (top) and reduce operations (bottom).}
 \label{fig:actions}
\end{figure}

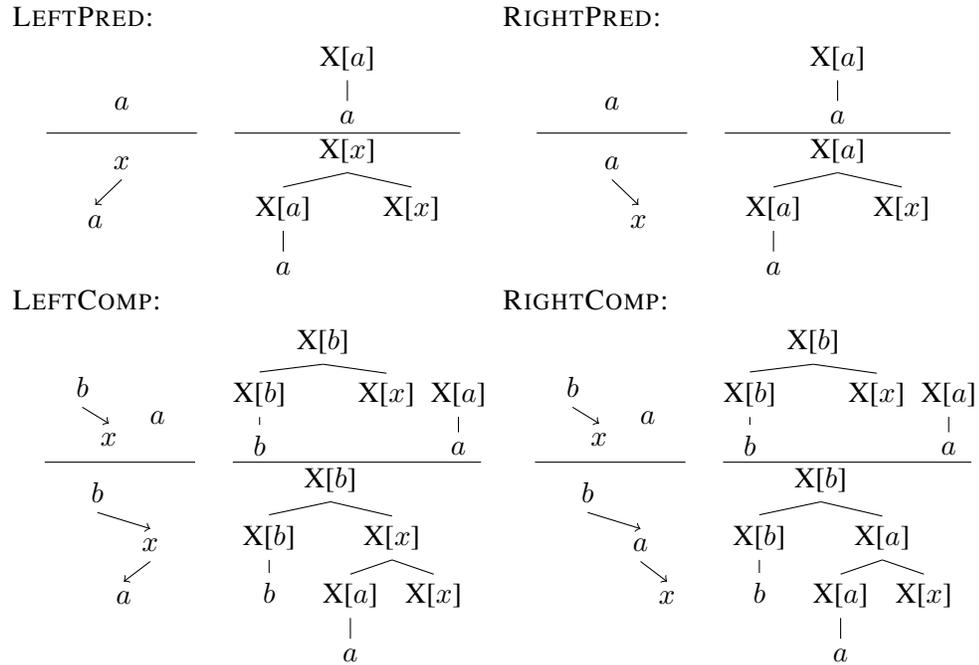
\begin{figure}[t]
 \centering
 \begin{tabular}[t]{ll}
  {\sc LeftPred:}&{\sc RightPred:} \\
  \parbox{0.4\linewidth}{%
  \hspace{10pt}
  \begin{tikzpicture}[level distance=22pt, sibling distance=20pt,edge from parent/.style={draw,->}]
   \node (a) at (1.0, -0.6) {$a$};
   \path[draw] (0, -1.0) -- +(2.0, 0.0);

   \node at (1.0, -1.4) {$x$}
    child { node {$a$} }
    child[missing];

   \begin{scope}[xshift=4.0cm,yshift=-0.1cm,edge from parent/.style={draw}]
    \Tree[.X[$a$] $a$ ]
   \end{scope}
   \path[draw] (2.5, -1.0) -- +(3.0, 0.0);
   
   \begin{scope}[xshift=4.0cm,yshift=-1.33cm,edge from parent/.style={draw}]
    \Tree[.X[$x$] [.X[$a$] $a$ ] [.X[$x$] ] ]
   \end{scope}
  \end{tikzpicture}  
  }  &{%
  \parbox{0.4\linewidth}{%
  \hspace{10pt}
  \begin{tikzpicture}[level distance=22pt, sibling distance=20pt,edge from parent/.style={draw,->}]
   \node (a) at (1.0, -0.6) {$a$};
   \path[draw] (0, -1.0) -- +(2.0, 0.0);

   \node at (1.0, -1.4) {$a$}
    child[missing]
    child { node {$x$} };

   \begin{scope}[xshift=4.0cm,yshift=-0.1cm,edge from parent/.style={draw}]
    \Tree[.X[$a$] $a$ ]
   \end{scope}
   \path[draw] (2.5, -1.0) -- +(3.0, 0.0);
   
   \begin{scope}[xshift=4.0cm,yshift=-1.33cm,edge from parent/.style={draw}]
    \Tree[.X[$a$] [.X[$a$] $a$ ] [.X[$x$] ] ]
   \end{scope}
  \end{tikzpicture}
  }} \\
  {\sc LeftComp:}&{\sc RightComp:} \\
  \parbox{0.4\linewidth}{%
  \hspace{10pt}
  \begin{tikzpicture}[level distance=20pt, sibling distance=20pt,edge from parent/.style={draw,->}]
   \node (a) at (1.5, -1.0) {$a$};
   \node (b) at (0.5, -0.6) {$b$}
    child[missing]
    child { node {$x$} };
   \path[draw] (0, -1.6) -- +(2.0, 0.0);

   \node at (0.7, -2.0) {$b$}
    [sibling distance=40pt]
    child[missing]
    child {
     node {$x$}
      [sibling distance=20pt]
      child { node {$a$} }
      child[missing] };

   \begin{scope}[xshift=5.5cm,yshift=-0.8cm,edge from parent/.style={draw}]
    \Tree[.X[$a$] $a$ ]
   \end{scope}
   
   \path[draw] (2.5, -1.6) -- +(3.3, 0.0);
   
   \begin{scope}[xshift=3.7cm,yshift=-0.1cm,edge from parent/.style={draw},level distance=20pt]
    \Tree[.X[$b$] [.X[$b$] $b$ ] X[$x$] ]
   \end{scope}
   \begin{scope}[xshift=3.8cm,yshift=-1.93cm,edge from parent/.style={draw},level distance=22pt,sibling distance=3pt]
    \Tree[.X[$b$] [.X[$b$] $b$ ] [.X[$x$] [.X[$a$] $a$ ] X[$x$] ] ]
   \end{scope}
  \end{tikzpicture}  
  }  &{%
  \parbox{0.4\linewidth}{%
  \hspace{10pt}
  \begin{tikzpicture}[level distance=20pt, sibling distance=20pt,edge from parent/.style={draw,->}]
   \node (a) at (1.5, -1.0) {$a$};
   \node (b) at (0.5, -0.6) {$b$}
    child[missing]
    child { node {$x$} };
   \path[draw] (0, -1.6) -- +(2.0, 0.0);

   \node at (0.7, -2.0) {$b$}
    [sibling distance=40pt]
    child[missing]
    child {
     node {$a$}
      [sibling distance=20pt]
      child[missing]
      child { node {$x$} } };

   \begin{scope}[xshift=5.5cm,yshift=-0.8cm,edge from parent/.style={draw}]
    \Tree[.X[$a$] $a$ ]
   \end{scope}
   \path[draw] (2.5, -1.6) -- +(3.3, 0.0);
   
   \begin{scope}[xshift=3.7cm,yshift=-0.1cm,edge from parent/.style={draw},level distance=20pt]
    \Tree[.X[$b$] [.X[$b$] $b$ ] X[$x$] ]
   \end{scope}
   \begin{scope}[xshift=3.8cm,yshift=-1.93cm,edge from parent/.style={draw},level distance=22pt,sibling distance=3pt]
    \Tree[.X[$b$] [.X[$b$] $b$ ] [.X[$a$] [.X[$a$] $a$ ] X[$x$] ] ]
   \end{scope}
  \end{tikzpicture}
  }} \\
 \end{tabular}
 \caption{Correspondences of reduce actions between dependency and CFG.
 We only show minimal example subtrees for simplicity.
 However, $a$ can have an arbitrary number of children, so can $b$ or $x$, provided $x$ is on a right spine and has no right children.}
 \label{fig:correspondence}
\end{figure}

\REVISE{
A similar correspondence holds between {\sc RightComp}, {\sc LeftComp}, and {\sc Composition} in the PDA.
We can interpret {\sc LeftComp} as two step operations as in {\sc Composition} in the PDA (see Section \ref{sec:bg:left-corner-pda}):
It first applies {\sc LeftPred} to the top stack element, resulting in \hspace{-3pt}\tikz[baseline=-1.ex]{
\draw[->] (-0.12,-0.08) -- (-0.5,-0.25);
\node at (0,0) {$x$};
\node at (-0.6,-0.3) {$a$};
}, and then unifies two $x$s to comprise a subtree.
The connection to {\sc Composition} is apparent from the figure.
{\sc RightComp}, on the other hand, first applies {\sc RightPred} to $a$, and then combines the resulting tree and the second top element on the stack.
This step is a bit involved, which might be easier to understand with the CFG form.
On the CFG form, two unified nodes are X[$x$], which is predicted top-down, and X[$a$], which is recognized bottom-up (with the first {\sc RightPred} step).\footnote{If the parent of X[$x$] is also X[$x$], all of them are filled with $a$ recursively.
This situation corresponds to the case in which dummy node $x$ in the dependency tree has multiple left dependents, as in the resulting tree by {\sc LeftComp} in Figure \ref{fig:correspondence}.
}
Since $x$ can be unified with any tokens, at this point, $x$ in X[$x$] is filled with $a$.
Returning to dependency, this means that we insert the subtree rooted at $a$ (after being applied {\sc RightPred}) into the position of $x$ in the second top element.

Note that from Figure \ref{fig:correspondence}, we can see that the dummy node $x$ can only appears in a right spine of a CFG subtree.
Now, we can reinterpret {\sc Insert} action on the CFG subtree, which attaches a token to a (predicted) preterminal X[$x$], as in {\sc Scan} of the PDA, and then fills every $x$ in a right spine with a shifted token.
This can be seen as a kind of unification operation.

\paragraph{Relationship to the left-corner PDA}

As we have seen so far, though our transition system directly operates dependency trees, we can associate every step with a process to expand (partial) CFG parses as in the manner that the left-corner PDA would do.\footnote{
To be precise, we note that this CFG-based expansion process cannot be written in the form used in the left-corner PDA.
For example, if we write items in {\sc LeftComp} and {\sc RightComp} in Figure \ref{fig:correspondence} in the form of $A/B$, both results in the same transition: X[$b$]/X[$x$] X[$a$] $\mapsto$ X[$b$]/X[$x$].
This is due to our use of a dummy node $x$, which plays different roles in two actions (e.g., {\sc RigthComp} assumes the first $x$ is $a$) but the difference is lost with this notation.
We thus claim that a CFG-based expansion step corresponds to a step in the left-corner PDA in that every action in the former expands the tree in the same way as the corresponding action of the left-corner PDA, as explained by Figure \ref{fig:correspondence} (for reduce actions) and the body (for {\sc Insert}); the equivalence of {\sc Shift} is apparent.
}
In every step, the transparency of two tree forms, i.e., dependency and CFG, is always preserved.
This means that at the final configuration the CFG parse would be the one corresponding to the resulting dependency tree, and also at each step the stack depth is identical to the one that is incurred during parsing the final CFG parse with the original left-corner PDA.
We will see this transparency with an example next.
The connection between the stack depth and the degree of center-embedding, that we established in Theorem \ref{thoerem:bg:stack-depth} for the PDA, also directly holds in this transition system.
We restate this for our transition system in Section \ref{sec:memorycost}.

\paragraph{Example}

\begin{figure*}[t]
 \centering
  \begin{tabular}[t]{rllll} \hline
   Step & Action         & Stack ($\sigma$)   & Buffer ($\beta$) & Set of arcs ($A$) \\ \hline
        &                & $\varepsilon$ & $a~b~c~d$ & $\emptyset$ \\
   1    & {\sc Shift}    & $\langle a \rangle$           & $b~c~d$   & $\emptyset$ \\
   2    & {\sc RightPred}& $\langle a, x(\emptyset) \rangle$ & $b~c~d$ & $\emptyset$ \\
   3    & {\sc Shift}    & $\langle a, x(\emptyset) \rangle \langle b \rangle$ & $c~d$ & $\emptyset$ \\
   4    & {\sc RightPred}& {\color{red}{$\langle a, x(\emptyset) \rangle \langle b, x(\emptyset) \rangle$}} & $c~d$ & $\emptyset$ \\
   5    & {\sc Insert}   & $\langle a, x(\emptyset) \rangle \langle b, c \rangle$ & $d$ & ${(b,c)}$ \\
   6    & {\sc RightComp}& $\langle a, b, x(\emptyset) \rangle$ & $d$ & ${(b, c), (a, b)}$ \\
   7    & {\sc Insert}    & $\langle a, b, d \rangle$ &  & ${(b, c), (a, b), (b, d)}$ \\ \hline
 \end{tabular}
 \caption{An example parsing process of the left-corner transition system.}
 \label{fig:trans:example}
\end{figure*}

For an example, Figure \ref{fig:trans:example} shows the transition of configurations during parsing a tree \tikz[baseline=-.5ex]{
\node at (0.0, 0.0) {$a^\curvearrowright b ^\curvearrowright c ~~ d$,};
\draw[line width=0.35pt,->] (-0.16,0.08) .. controls(-0.09,0.3) and (0.50,0.3) .. (0.57,0.08);
} which corresponds to the parse in Figure \ref{fig:2:depth-1} and thus involves one degree of center-embedding.
Comparing to Figure \ref{fig:bg:pda-example}, we can see that two transition sequences for the PDA (for CFG) and the transition system (for dependency) are essentially the same:
the differences are that {\sc Prediction} and {\sc Composition} are changed to the corresponding actions (in this case, {\sc RightPred} and {\sc RigthComp}) and {\sc Scan} is replaced with {\sc Insert}.
This is essentially due to the transparent relationships between them that we discussed above.
As in Figure \ref{fig:bg:pda-example}, the stack depth two after a reduce action indicates center-embedding, which is step 4.
}

\paragraph{Other properties}

As in the left-corner PDA, this transition system also performs shift and reduce actions alternately (the proof is almost identical to the case of PDA).
Also, given a sentence of length $n$, the number of actions required to arrive at the final configuration is $2n-1$, because every token except the last word must be shifted once and reduced once, and the last word is always inserted as the final step.

Every projective dependency tree is derived from at least one transition sequence with this system, i.e., our system is {\it complete} for the class of projective dependency trees \cite{Nivre:2008}.
\REVISE{
Though we omit the proof, this can be shown by appealing to the transparency between the transition system and the left-corner PDA, which is complete for a given CFG.
}

However, our system is {\it unsound} for the class of projective dependency trees, meaning that a transition sequence on a sentence does not always generate a valid projective dependency tree.
We can easily verify this claim with an example.
Let $a~b~c$ be a sentence and consider the action sequence ``{\sc Shift LeftPred Shift LeftPred Insert}'' with which we obtain the terminal configuration of $\sigma= [ x(a), c ]; \beta=[]; A=\{ (b,c) \}$\footnote{For clarity, we use words instead of indices for stack elements.}, but this is not a valid tree.
The arc-eager system also suffers from a similar restriction \cite{NivreAEP14}, which may lead to lower parse accuracy.
Instead of fixing this problem, in our parsing experiment, which is described in Section \ref{sec:parse}, we implement simple post-processing heuristics to combine those fragmented trees that remain on the stack.

\subsection{Oracle and spurious ambiguity}
\label{sec:oracle}
This section presents and analyzes an oracle function for the transition system defined above.
An oracle for a transition system is a function that returns a correct action given the current configuration and a set of gold arcs.
The reasons why we develop and analyze the oracle are mainly two folds:
First, we use this in our empirical corpus study in Section \ref{sec:analysis};
that is, we analyze how stack depth increases during simulation of recovering dependency trees in the treebanks.
Such simulation requires the method to extract the correct sequence of actions to recover the given tree.
Second, we use it to obtain training examples for our supervised parsing experiments in Section \ref{sec:parse}.
This is more typical reason to design the oracles for transition-based parsers \cite{Nivre:2008,GoldbergTDP13}.

Below we also point out the deep connection between the design of an oracle and a conversion process of a dependency into a (binary) CFG parse, which becomes the basis of the discussion in Section \ref{sec:memorycost} on the degree of center-embedding of a given dependency tree, the problem we left in Section \ref{sec:trans:notation}.

Since our system performs shift and reduce actions interchangeably, we need two functions to define the oracle.
Let $A_g$ be a set of arcs in the gold tree and $c$ be the current configuration.
We select the next shift action if the stack is empty (i.e., the initial configuration) or the top element of the stack is incomplete as follows:
\begin{itemize}
 \item {\sc Insert}:
       Let $c=(\sigma| \langle \sigma'_1 | i | x(\lambda) \rangle, j|\beta, A)$. $i$ may not exist.
       The condition is:
       \begin{itemize}
        \item if $i$ exists, $(i,j)\in A_g$ and $j$ has no dependents in $\beta$;
        \item otherwise, $\exists k\in \lambda;(j,k)\in A_g$.
       \end{itemize}
 \item {\sc Shift}: otherwise.
\end{itemize}
If the top element on the stack is complete, we select the next reduce action as follows:
\begin{itemize}
 \item {\sc LeftComp}:
       Let $c=(\sigma| \langle \sigma'_2 | i | x(\lambda) \rangle | \langle \sigma_{11},\cdots \rangle, \beta, A)$. $i$ may not exist.
       Then
       \begin{itemize}
        \item if $i$ exists, $\sigma_{11}$ has no dependents in $\beta$ and $i$'s next dependent is the head of $\sigma_{11}$;
        \item otherwise, $\sigma_{11}$ has no dependents in $\beta$ and $k\in\lambda$ and $\sigma_{11}$ share the same head.
       \end{itemize}
 \item {\sc RightComp}:
       Let $c=(\sigma| \langle \sigma'_2 | i | x(\lambda) \rangle | \langle \sigma_{11},\cdots \rangle, \beta, A)$. $i$ may not exist.
       Then
       \REVISE{
       \begin{itemize}
        \item if $i$ exists, the rightmost dependent of $\sigma_{11}$ is in $\beta$ and $(i,\sigma_{11}) \in A_g$;
        \item otherwise, the rightmost dependent of $\sigma_{11}$ is in $\beta$ and $\exists k\in \lambda,(\sigma_{11},k)\in A_g$.
       \end{itemize}
       }
 \item {\sc RightPred}: if $c=(\sigma| \langle \sigma_{11},\cdots \rangle, \beta, A)$ and $\sigma_{11}$ has a dependent in $\beta$.
 \item {\sc LeftPred}: otherwise.
\end{itemize}
Essentially, each condition ensures that we do not miss any gold arcs by performing the transition.
This is ensured at each step so we can recover the gold tree in the terminal configuration.
We use this oracle in our experiments in Sections \ref{sec:analysis} and \ref{sec:parse}.

\paragraph{Spurious ambiguity}
Next, we observe that the developed oracle above is not a unique function to return the gold action.
Consider sentence $a ^\curvearrowleft b ^\curvearrowright c$, which is a simplification of the sentence shown in Figure \ref{subfig:dogsrunfast}.
If we apply the oracle presented above to this sentence, we obtain the following sequence:
\begin{equation}
 \textsc{Shift} \rightarrow \textsc{LeftPred} \rightarrow \textsc{Insert} \rightarrow \textsc{RightPred} \rightarrow \textsc{Insert} \label{eqn:oracleseq1}
\end{equation}
Note, however, that the following transitions also recover the parse tree:
\begin{equation}
 \textsc{Shift} \rightarrow \textsc{LeftPred} \rightarrow \textsc{Shift} \rightarrow \textsc{RightComp} \rightarrow \textsc{Insert} \label{eqn:oracleseq2}
\end{equation}
This is a kind of spurious ambiguities that we mentioned several times in this thesis (Sections \ref{sec:bilexical}, \ref{sec:2:sbg}, and \ref{sec:trans:notation}).
Although in the transition-based parsing literature some works exist to improve parser performances by utilizing this ambiguity \cite{GoldbergTDP13} or by eliminating it \cite{TACL68}, here we do not discuss such practical problems and instead {\it analyze} the differences in the transitions leading to the same tree.

Here we show that the spurious ambiguity of the transition system introduced above is essentially due to the spurious ambiguity of transforming a dependency tree into a CFG parse (Section \ref{sec:bilexical}).
We can see this by comparing the implicitly recognized CFG parses with the two action sequences above.
In sequence (\ref{eqn:oracleseq1}), {\sc RightPred} is performed at step four, meaning that the recognized CFG parse has the form $((a~b)~c)$, while that of sequence (\ref{eqn:oracleseq2}) is $(a~(b~c))$ due to its {\sc RightComp} operation.
This result indicates an oracle for our left-corner transition system implicitly binarizes a given gold dependency tree.
The particular binarization mechanism associated with the oracle presented above is discussed next.

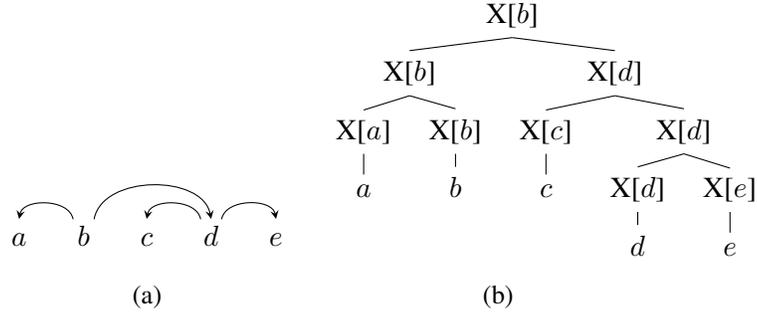
\begin{figure}[t]
 \centering
 \begin{minipage}[b]{.3\linewidth}
  \centering
  \begin{dependency}[theme=simple]
   \begin{deptext}[column sep=0.5cm]
    $a$ \& $b$ \& $c$ \& $d$ \& $e$ \\
   \end{deptext}
   \depedge{2}{1}{}
   \depedge{2}{4}{}
   \depedge{4}{3}{}
   \depedge{4}{5}{}
  \end{dependency}
  \subcaption{}\label{subfig:binarize}
 \end{minipage}
 \begin{minipage}[b]{.3\linewidth}
  \centering
  \begin{tikzpicture}[sibling distance=7pt]
   \tikzset{level distance=22pt}
   \Tree
   [.X[${b}$]
     [.X[${b}$] [.X[$a$] $a$ ] [.X[$b$] $b$ ] ]
     [.X[${d}$]
       [.X[$c$] $c$ ]
       [.X[${d}$] [.X[$d$] $d$ ] [.X[$e$] $e$ ] ]
     ]
   ]
  \end{tikzpicture}
  \subcaption{}\label{subfig:binarize:constituent}
 \end{minipage}
 \caption{Implicit binarization process of the oracle described in the body.}\label{fig:binarize}
\end{figure}

\paragraph{Implicit binarization}
We first note the property of the presented oracle that it follows the strategy of performing composition or insert operations when possible.
As we saw in the given example, sometimes {\sc Insert} and {\sc Shift} can both be valid for recovering the gold arcs, though here we always select {\sc Insert}.
Sometimes the same ambiguity exists between {\sc LeftComp} and {\sc LeftPred} or {\sc RightComp} and {\sc RightPred};
we always prefer composition.

\REVISE{
Then, we can show the following theorem about the binarization mechanism of this oracle.
\begin{mytheorem}
 \label{theorem:trans:binarize}
 The presented oracle above implicitly binarizes a dependency tree in the following manner:
 \begin{itemize}
  \item Given a subtree rooted at $h$, if the parent of it is its {\it right} side, or $h$ is the sentence root, $h$'s left children are constructed first.
  \item If the parent of $h$ is its {\it left} side, $h$'s right children are constructed first.
 \end{itemize} 
\end{mytheorem}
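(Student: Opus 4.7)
My plan is to prove the theorem by combining a CFG-correspondence lemma with a case analysis on the position of $h$'s parent, tracing the oracle's deterministic choices at each step. The crucial observation is that the implicit binarization of $h$'s children is determined by a single dichotomy: whether $h$ enters the stack by {\sc Insert} (filling an already-accumulated dummy) or by {\sc Shift} followed by {\sc RightComp}.

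First I would establish an auxiliary correspondence lemma extending Figure \ref{fig:correspondence} to handle dummies with left children accumulated via repeated {\sc LeftComp}. The lemma states that a stack element of the form $\langle \sigma_2' \mid x(\{L_1,\ldots,L_k\}) \rangle$ corresponds to the CFG fragment $\textrm{X}[x] \to \textrm{X}[L_1]\,(\textrm{X}[x] \to \textrm{X}[L_2]\,(\cdots \textrm{X}[x] \cdots))$, where all $\textrm{X}[x]$ occurrences on this self-referential right spine are unified and hence filled simultaneously when the dummy is replaced. The lemma further specifies the CFG effect of the two filling transitions: {\sc Insert} uniformly rewrites every unified $\textrm{X}[x]$ to $\textrm{X}[j]$, with the innermost one becoming the terminal $j$; {\sc RightComp} instead uniformly rewrites the outer $\textrm{X}[x]$'s to $\textrm{X}[\sigma_{11}]$ while substituting the innermost position with the compound structure $\textrm{X}[\sigma_{11}] \to (\textrm{X}[\sigma_{11}] \to \sigma_{11}) \,\textrm{X}[x']$, introducing a fresh dummy $x'$. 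This asymmetry between the two actions is the heart of the theorem.

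For Case A ($h$ is the sentence root or $h$'s parent is on $h$'s right), I trace that by the time $h$ is at the front of the buffer, $h$'s left children have already been accumulated in a top-of-stack dummy $x(\lambda)$ (via {\sc LeftPred} and {\sc LeftComp}, which are deterministically selected by the oracle). The {\sc Insert} condition fires (either because the preceding spine position $i$ is absent when $h$ is root, or because it is a proper ancestor of $h$ that does not block). By the lemma, {\sc Insert} places $h$ as the innermost terminal, so $h$'s left children $L_i$ are nested \emph{outside} the terminal $h$; subsequent right children are then attached by successive {\sc RightPred}+{\sc Insert} pairs, each of which wraps the current $\textrm{X}[h]$ structure as the \emph{left} child of a new outer $\textrm{X}[h]$. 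This yields the left-first bracketing $(\cdots((\mathit{left\text{-}children}\;h)\;R_1)\cdots R_m)$, which is exactly ``left children constructed first''. For Case B ($h$'s parent $p$ lies on $h$'s left), I show that {\sc Insert} is forced to fail whenever $h$ has pending right dependents in $\beta$ because, with $i = p$ present on the spine, the condition ``$j$ has no dependents in $\beta$'' is violated; by the oracle's ordering {\sc Shift} $h$ is then the forced shift action, and {\sc RightComp} is in turn the unique applicable reduce action (LeftComp fails by the same right-dep condition; the predictions fail because a non-trivial second-top element blocks them). By the lemma, {\sc RightComp} introduces a fresh dummy $x'$ \emph{inside} the new $\textrm{X}[h]$ structure, so the accumulated $L_i$'s remain as outer left siblings while right children, filling $x'$ next, sit inside $\textrm{X}[h]$ together with the terminal $h$. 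This yields the right-first bracketing $(L_1\,(\cdots(L_k\,(h\,(\cdots R_m)\cdots)))\cdots)$, which is ``right children constructed first''. The sub-case in which $h$ has no right children collapses to {\sc Insert} being re-enabled and is vacuous with respect to the left/right distinction. The overall argument is packaged as a structural induction on the dependency tree, since the binarization decisions at each $h$ are local to the transitions taken while the configuration has $h$ in view.

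The main obstacle is making the correspondence lemma rigorous: one must justify the ``unification'' semantics of the dummy spine (so that every $\textrm{X}[x]$ on the right spine is renamed in lockstep when the dummy is filled), and one must justify that {\sc RightComp}, in contrast to {\sc Insert}, injects a new inner compound precisely at the innermost position. Proving this asymmetry cleanly $-$ likely by induction on the length of the oracle-derived prefix of transitions reaching the current configuration $-$ is where the bulk of the work lies. Once the lemma is in hand, the case analysis on $h$'s parent's side is essentially forced by the oracle's priority ordering ({\sc Insert}$\succ${\sc Shift}, and {\sc LeftComp}$\succ${\sc RightComp}$\succ${\sc RightPred}$\succ${\sc LeftPred}) and follows by inspection.
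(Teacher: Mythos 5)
Your proposal is sound and reaches the theorem by a genuinely different organization than the paper. The paper first proves Lemma~\ref{lemma:trans:rightcomp} --- a purely negative statement about when \textsc{RightComp} can occur under the oracle (never with a dummy-rooted $\sigma_2$, never with a $\sigma_1$ that already has left children), established by simulating the oracle's preference for \textsc{Insert}/\textsc{LeftComp} --- and then derives both bullets of the theorem by contradiction: the ``wrong'' binarization in either case would require a \textsc{RightComp} in a prohibited configuration. You instead argue constructively: a positive correspondence lemma describing how \textsc{Insert} versus \textsc{Shift}+\textsc{RightComp} rewrite the dummy spine in CFG terms (which the paper only treats informally via Figure~\ref{fig:correspondence} and its footnote), followed by a direct trace of the oracle's forced choices in each case, packaged as an induction over the tree. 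The underlying engine is the same --- the oracle's priorities \textsc{Insert}$\succ$\textsc{Shift} and composition over prediction --- but your route makes the resulting bracketing explicit rather than ruling out its alternative, at the cost of having to prove the unification semantics of the dummy spine that the paper never needs to formalize.

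Two local justifications need repair, though neither sinks the plan. First, in Case B you claim \textsc{RightComp} is ``the unique applicable reduce action'' because ``the predictions fail because a non-trivial second-top element blocks them''; this is false --- \textsc{RightPred}'s precondition only inspects $\sigma_{11}$ and $\beta$, and it is satisfied whenever $h$ has a pending right dependent. What selects \textsc{RightComp} is exactly the oracle's stated preference for composition over prediction, which you do list at the end, so the conclusion stands but the parenthetical reason must be replaced. Second, in Case A your reason for \textsc{Insert} firing (``$i$ \ldots is a proper ancestor of $h$ that does not block'') is wrong: when $i$ exists, \textsc{Insert} requires $(i,h)\in A_g$, so any $i$ that is not $h$'s parent \emph{does} block it, and in Case A no already-read token can be $h$'s parent. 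The correct fact is that the dummy holding $h$'s left children is the root of its stack element (it was created by \textsc{LeftPred} from the completed subtree of $h$'s first left child), so the no-$i$ branch of \textsc{Insert} applies with some $k\in\lambda$ a gold left child of $h$; establishing this element-rootedness is precisely the invariant your induction (and, in the paper, the simulation inside the proof of Lemma~\ref{lemma:trans:rightcomp}) has to carry.
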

Figure \ref{fig:binarize} shows an example.
For example, since the parent of $d$ is $b$, which is in left, the constituent $d$ $e$ is constructed first.
}

An important observation for showing this is the following lemma about the condition for applying {\sc RightComp}.
\begin{newlemma}
 \label{lemma:trans:rightcomp}
 Let $c=(\sigma| \sigma_2 | \sigma_1, \beta, A)$ and $\sigma_2$ be incomplete (next action is a reduce action).
 Then, in the above oracle, {\sc RightComp} never occurs for a configuration on which $\sigma_2$ is rooted at a dummy, i.e., $\sigma_2 = \langle x(\lambda) \rangle$, or $\sigma_1$ has some left children, i.e., $\exists k < \sigma_{11}, (\sigma_{11},k)$.
\end{newlemma}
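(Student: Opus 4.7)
I will prove the lemma by contradiction, leveraging two hierarchical preferences of the oracle: {\sc Insert} over {\sc Shift} among shift actions, and composition over prediction among reduce actions.

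For case (a), suppose that with $\sigma_2 = \langle x(\lambda) \rangle$ the oracle selects {\sc RightComp}. The oracle's RightComp condition then forces $\exists k \in \lambda$ such that $(\sigma_{11}, k) \in A_g$. I trace backward to the time $t_s$ at which $\sigma_{11}$ was first placed on the stack. Since only {\sc Shift} can create a new stack element, $t_s$ is a {\sc Shift} that pushed $\langle \sigma_{11} \rangle$. I claim that $\sigma_2$ (in particular its $\lambda$) is unchanged between $t_s$ and the current moment. Indeed, the only action that can grow $\lambda$ is {\sc LeftComp}, but applying {\sc LeftComp} to a top pair whose top is headed by $\sigma_{11}$ would move $\sigma_{11}$ into $\lambda$, and since no transition ever extracts an element out of a $\lambda$, $\sigma_{11}$ could never reappear as the head of the current $\sigma_1$, contradiction. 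Hence the $\lambda$ at $t_s$ equals the current $\lambda$. At $t_s$ the oracle faced a choice between {\sc Insert} and {\sc Shift}, and the {\sc Insert} condition (with no preceding $i$) is exactly $\exists k \in \lambda,\ (\sigma_{11}, k) \in A_g$, which we just established. The oracle would then have preferred {\sc Insert}, contradicting the fact that {\sc Shift} was actually performed at $t_s$.

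For case (b), suppose $\sigma_{11}$ has some attached left child, i.e., $(\sigma_{11}, k) \in A$ with $k < \sigma_{11}$, and {\sc RightComp} is chosen. The only transition that inserts such an arc into $A$ is a prior {\sc Insert} that filled a dummy $\langle x(\lambda_0) \rangle$ by placing $\sigma_{11}$ there with $k \in \lambda_0$. I will show that the oracle's preference for {\sc LeftComp} over {\sc LeftPred} ensures every gold left child of $\sigma_{11}$ that lay on the stack at the moment of that {\sc Insert} was already collected inside $\lambda_0$. Consequently no $k' \in \lambda'$, the set attached to the dummy in the current $\sigma_2$, can satisfy $(\sigma_{11}, k') \in A_g$, so the RightComp condition is violated, contradicting the assumed choice.

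The main obstacle is the careful bookkeeping for case (b): I must confirm by induction on the number of transitions that the oracle's composition-first strategy never splits the gold left children of a single head across distinct dummies residing at different stack depths. In particular, I need to check that each intervening cycle of shift and reduce that operates above a pending left-predicted subtree either enlarges its $\lambda$ (via {\sc LeftComp} when the shared-head condition is satisfied) or remains an isolated subtree that eventually terminates into its own head; it cannot leave stranded left children of $\sigma_{11}$ in a lower dummy that would later fire the RightComp condition.
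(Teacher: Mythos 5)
Your case (a) is in spirit the paper's own ``prefer \textsc{Insert} over \textsc{Shift}'' argument run backwards, and it is sound up to one subtlety you gloss over: a token does not enter the stack only via \textsc{Shift}; it can become the \emph{head} of a stack element by an \textsc{Insert} (or a \textsc{RightComp}) that fills a bare dummy $\langle x(\lambda_0)\rangle$, in which case $\sigma_{11}$ immediately acquires left children. So the step ``$t_s$ is a \textsc{Shift} that pushed $\langle\sigma_{11}\rangle$'' is only legitimate once the left-children situation is excluded, i.e.\ case (a) silently leans on case (b); the two cases are not independent as you present them.

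The genuine gap is in case (b). First, your premise that \textsc{Insert} is the only transition that can put an arc $(\sigma_{11},k)$ with $k<\sigma_{11}$ into $A$ is false: \textsc{RightComp} adds exactly such arcs ($A\cup\{\cup_{k\in\lambda}(\sigma_{11},k)\}$), and excluding that source requires an induction over the transition sequence (an earlier \textsc{RightComp} that leaves $\sigma_{11}$ as head would itself be an instance of the lemma), which you acknowledge as ``bookkeeping'' but never carry out. Second, and more seriously, your conclusion only refutes the bare-dummy variant of the oracle's \textsc{RightComp} test, namely $\exists k'\in\lambda'$ with $(\sigma_{11},k')\in A_g$. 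When $\sigma_2=\langle\sigma_2'\,|\,i\,|\,x(\lambda')\rangle$ with $i$ present, the oracle's test is instead ``the rightmost dependent of $\sigma_{11}$ is in $\beta$ and $(i,\sigma_{11})\in A_g$,'' which your argument never addresses — and this is precisely the configuration the lemma must rule out for Theorem \ref{theorem:trans:binarize} (a subtree rooted at $\sigma_{11}$ that already carries left children being composed into the dummy predicted after $i$). Excluding it needs the argument the paper's simulation supplies: if $(i,\sigma_{11})\in A_g$, then at the moment the first completed left-child subtree of $\sigma_{11}$ sat on top of $\sigma_2$, the \textsc{LeftComp} condition ($i$'s next dependent equals that subtree's gold head, $\sigma_{11}$) held, so the oracle's preference for \textsc{LeftComp} over \textsc{LeftPred} prevents $\sigma_{11}$'s left children from ever being assembled into a separate element above $\sigma_2$, and hence from being attached to $\sigma_{11}$ before the \textsc{RightComp} in question. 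Without this piece, case (b) — and with it the lemma — is not established.
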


\begin{proof}
The first constraint, $\sigma_2 \neq \langle x(\lambda) \rangle$ is shown by simulating how a tree after {\sc RightComp} is created in the presented oracle.
Let us assume $\lambda = \{i\}$ (i.e., $\sigma_2$ looks like $i^\curvearrowleft x$), $j = \sigma_{11}$, and $\sigma_1$ be a subtree spanning from $i+1$ to $k$ (i.e., $i+1 \leq j \leq k$).
After {\sc RightComp}, we get a subtree rooted at $j$, which looks like $i^\curvearrowleft j^\curvearrowright x'$ where $x'$ is a new dummy node.
The oracle instead builds the same structure by the following procedures:
After building $i^\curvearrowleft x$ by {\sc LeftPred} to $i$, it first collect all left children of $x$ by consecutive {\sc LeftComp} actions, followed by {\sc Insert}, to obtain a tree $i^\curvearrowleft j$ (omit $j$'s left dependents between $i$ and $j$).
Then it collects the right children of $j$ (corresponding to $x'$) with {\sc RightPred}s.
This is because we prefer {\sc LeftComp} and {\sc Insert} over {\sc LeftPred} and {\sc Shift}, and suggests that $\sigma_2 \neq \langle x(\lambda) \rangle$ before {\sc RightComp}.
 This simulation also implies the second constraint that $\nexists k < \sigma_{11}, (\sigma_{11},k)$, since it never occurs unless {\sc LeftPred} is preferred over {\sc LeftComp}.
\end{proof}

\begin{proof}[Proof of Theorem \ref{theorem:trans:binarize}]
 Let us examine the first case in which the parent of $h$ is in the right side.
 Let this parent index be $h'$, i.e., $h < h'$.
 Note that this right-to-left arc ($h^\curvearrowleft h'$) are only created with {\sc LeftComp} or {\sc LeftPred} and in both cases $h$ must finish collecting every child before reduced, meaning that $h'$ does not affect the construction of a subtree rooted at $h$.
 This is also the case when $h$ is the sentence root.
 Now, if $h$ collects its right children first, that means $h$ collects left children via {\sc RightComp} with subtree rooted at a dummy node (which is later identified to $h$) but this never occurs by Lemma \ref{lemma:trans:rightcomp}.

 In the second case, $h' < h$.
 The arc $h'^\curvearrowleft h$ is established with {\sc RightPred} or {\sc RightComp} when the head of the top stack symbol is $h'$ (instantiated dummy node is later filled with $h$).
 In both cases, if $h$ collects its left children first, that means a subtree rooted at $h$ (the top stack symbol) with left children is substituted to the dummy node with {\sc RightComp} (see Figure \ref{fig:correspondence}).
 However, this situation is prohibited by Lemma \ref{lemma:trans:rightcomp}.
 The oracle instead collects left children of $h$ with successive {\sc LeftComp}s.
 This occurs dues to the oracle's preference for {\sc LeftComp} over {\sc LeftPred}.
\end{proof}

\paragraph{Other notes}
\begin{itemize}
 \item The property of binarization above also indicates that the designed oracle is optimal in terms of stack depth, i.e., it always minimizes the maximum stack depth for a dependency tree, since it will minimize the number of turning points of the zig-zag path.
 \item If we construct another oracle algorithm, we would have different properties regarding implicit binarization, in which case Lemma \ref{lemma:trans:rightcomp} would not be satisfied.
 \item \REVISE{
       Combining the result in Section \ref{subsec:transitionsystem} about the transparency between the stack depth of the transiton system and the left-corner PDA, it is obvious that at each step of this oracle, the incurred stack depth to recognize a dependency tree equals to the stack depth incurred by the left-corner PDA during recognizing the CFG parse given by the presented binarization.
       }
 \item In Section \ref{sec:analysis}, we use this oracle to evaluate the ability of the left-corner parser to recognize natural language sentences within small stack depth bound.
       Note if our interest is just to examine rarity of center-embedded constructions, that is possible without running the oracle in entire sentences, by just counting the degree of center-embedding of the binarized CFG parses.
       The main reason why we do not employ such method is because our interests are not limited to rarity of center-embedded constructions but also lie in the relative performance of the left-corner parser to capture syntactic regularities among other transition-based parsers, such as the arc-eager parser.
       This comparison seems more meaningful when we run every transition system on the same sentences.
       To make this comparison clearer, we next give more detailed analysis on the stack depth behavior of our left-corner transition system.
\end{itemize}

\subsection{Stack depth of the transition system}
\label{sec:memorycost}
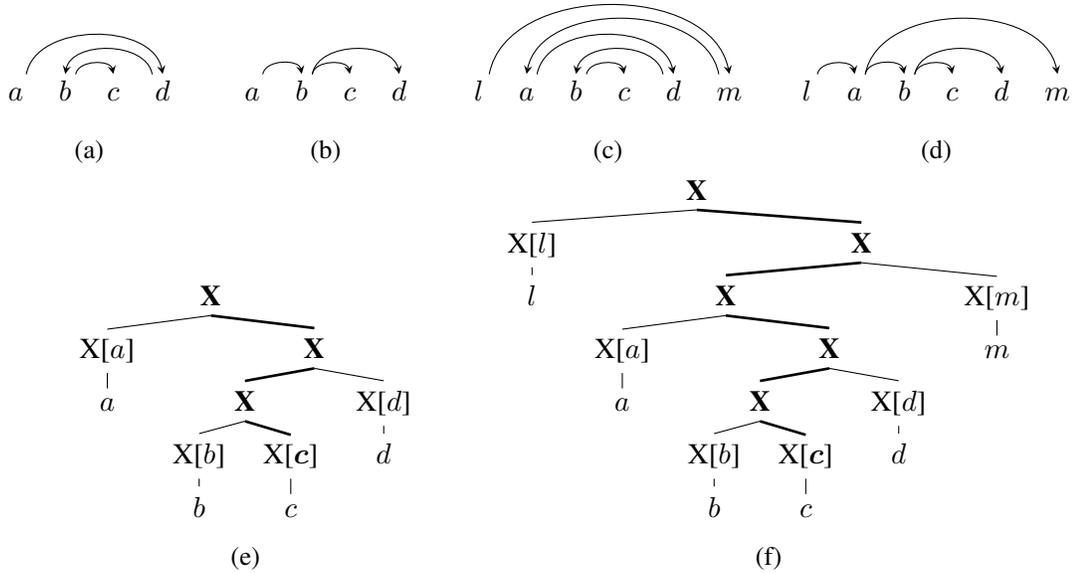
\begin{figure}[t]
 \centering
 \begin{minipage}[b]{.2\linewidth}
  \centering
   \begin{dependency}[theme=simple]
    \begin{deptext}[column sep=0.3cm]
     $a$ \& $b$ \& $c$ \& $d$ \\
    \end{deptext}
   \depedge{1}{4}{}
   \depedge{4}{2}{}
   \depedge{2}{3}{}
   \end{dependency}
   \subcaption{}
  \end{minipage}
  \begin{minipage}[b]{.2\linewidth}
   \centering
   \begin{dependency}[theme=simple]
    \begin{deptext}[column sep=0.3cm]
     $a$ \& $b$ \& $c$ \& $d$ \\
    \end{deptext}
   \depedge{1}{2}{}
   \depedge{2}{3}{}
   \depedge{2}{4}{}
   \end{dependency}
   \subcaption{}
 \end{minipage}
 \begin{minipage}[b]{.28\linewidth}
  \centering
   \begin{dependency}[theme=simple]
    \begin{deptext}[column sep=0.3cm]
     $l$ \& $a$ \& $b$ \& $c$ \& $d$ \& $m$ \\
    \end{deptext}
    \depedge{1}{6}{}
    \depedge{6}{2}{}
    \depedge{2}{5}{}
    \depedge{5}{3}{}
    \depedge{3}{4}{}
   \end{dependency}
   \subcaption{}
  \end{minipage}
  \begin{minipage}[b]{.28\linewidth}
   \centering
   \begin{dependency}[theme=simple]
    \begin{deptext}[column sep=0.3cm]
     $l$ \& $a$ \& $b$ \& $c$ \& $d$ \& $m$ \\
    \end{deptext}
    \depedge{1}{2}{}
    \depedge{2}{3}{}
    \depedge{3}{4}{}
    \depedge{3}{5}{}
    \depedge{2}{6}{}
   \end{dependency}
   \subcaption{}
  \end{minipage}
 
 \begin{minipage}[b]{.4\linewidth}
  \centering
  \begin{tikzpicture}[sibling distance=7pt]
   \tikzset{level distance=20pt}
   \Tree
   [.{\bf X}
     [.X[$a$] $a$ ]
     \edge[line width=1pt];
     [.{\bf X}
       \edge[line width=1pt];
       [.{\bf X}
         [.X[$b$] $b$ ]
         \edge[line width=1pt];
         [.X[${\boldsymbol c}$] $c$ ] ]
       [.X[$d$] $d$ ]
     ]
   ]
  \end{tikzpicture}
  \subcaption{}
 \end{minipage}
 \begin{minipage}[b]{.5\linewidth}
  \centering
  \begin{tikzpicture}[sibling distance=7pt]
   \tikzset{level distance=20pt}
   \Tree
   [.{\bf X}
     [.X[$l$] $l$ ]
     \edge[line width=1pt];
     [.{\bf X}
       \edge[line width=1pt];
       [.{\bf X}
         [.X[$a$] $a$ ]
         \edge[line width=1pt];
         [.{\bf X}
           \edge[line width=1pt];
           [.{\bf X}
             [.X[$b$] $b$ ]
             \edge[line width=1pt];
             [.X[${\boldsymbol c}$] $c$ ]
           ]
           [.X[$d$] $d$ ]
         ]
       ]
       [.X[$m$] $m$ ]
     ]
   ]
  \end{tikzpicture}
  \subcaption{}
 \end{minipage}
 \caption{Center-embedded dependency trees and zig-zag patterns observed in the implicit CFG parses: (a)--(b) depth one, (c)--(d) depth two, (e) CFG parse for (a) and (b), and (f) CFG parse for (c) and (d).}
 \label{fig:level}
\end{figure}

We finally summarize the property of the left-corner transition system in terms of the stack depth.
To do so, let us first introduce two measure, depth$_{re}$ and depth$_{sh}$, with the former representing the stack depth after a reduce step and the latter representing the stack depth after a shift step.
Then, we have:
\begin{itemize}
 \item Depth$_{re} \leq 1$ unless the implicit CFG parse does not contain center-embedding (i.e., is just left-linear or right-linear).
       This linearly increases as the degree of center-embedding increases.
 \item Depth$_{sh} \leq 2$ if the implicit CFG parse does not contain center-embedding.
       The extra element on the stack occurs with a {\sc Shift} action, but it does not imply the existence of center-embedding.
       This linearly increases as the degree of center-embedding increases.
\end{itemize}
The first statement about depth$_{re}$ directly comes from Theorem \ref{thoerem:bg:stack-depth} for the left-corner PDA.
The second statement is about depth$_{sh}$, which we did not touch for the PDA.
Figure \ref{fig:level} shows examples of how the depth of center-embedding increases, with the distinguished zig-zag patterns in center-embedded structures shown in bold.
Note that depth$_{re}$ can capture the degree of center-embedding correctly, by $\max \textrm{depth}_{re} -1$ (Theorem \ref{thoerem:bg:stack-depth}), while depth$_{sh}$ may not;
for example, for parsing a right-branching structure $a^\curvearrowright b^\curvearrowright c$, $b$ must be {\sc Shift}ed (not inserted) before being reduced, resulting in depth$_{sh} = 2$.
We do not precisely discuss the condition with which an extra factor of depth$_{sh}$ occurs.
Of importance here is that both depth$_{re}$ and depth$_{sh}$ increase as the depth of center-embedding in the implicit CFG parse increases, though they may differ only by a constant (just one).

\section{Empirical Stack Depth Analysis}
\label{sec:analysis}

In this section, we evaluate the cross-linguistic coverage of our developed transition system.
We compare our system with other systems by observing the required stack depth as we run oracle transitions for sentences on a set of typologically diverse languages.
We thereby verify the hypothesis that our system consistently demands less stack depth across languages in comparison with other systems.
Note that this claim is not obvious from our theoretical analysis (Table \ref{tab:order}) since the stack depth of the arc-eager system is sometimes smaller than that of the left-corner system (e.g., a subset of center-embedding), which suggests that it may possibly provide a more meaningful measure for capturing the syntactic regularities of a language.

\subsection{Settings}
\label{sec:analysis:settings}
\paragraph{Datasets}
We use two kinds of multilingual corpora introduced in Chapter \ref{chap:corpora}, CoNLL dataset and Universal dependencies (UD), both of which comprises of 19 treebanks.
Below, the first part of analyses in Sections \ref{sec:analysis:general}, \ref{sec:random}, and \ref{sec:token} are performed on CoNLL dataset while the latter analyses in Sections \ref{sec:trans:ud-result} and \ref{sec:trans:relax} are based on UD.

Since all systems presented in this chapter cannot handle nonprojective structures \cite{Nivre:2008}, we projectivize all nonprojective sentences using pseudo-projectivization \cite{nivre-nilsson:2005:ACL} implemented in the MaltParser \cite{NivreMAL07} (see also Section \ref{sec:bg:projective}).
We expect that this modification does not substantially change the overall corpus statistics as nonprojective constructions are relatively rare \cite{nivre-EtAl:2007:EMNLP-CoNLL2007}.
Some treebanks such as the Prague dependency treebanks (including Arabic and Czech) assume that a sentence comprises multiple independent clauses that are connected via a dummy root token.
We place this dummy root node at the end of each sentence, because doing so does not change the behaviors for sentences with a single root token in all systems and improves the parsing accuracy of some systems such as arc-eager across languages as compared with the conventional approach in which the dummy token is placed only at the beginning of each sentence \cite{journals/coling/BallesterosN13}.

\paragraph{Method}
We compare three transition systems: arc-standard, arc-eager, and left-corner.
For each system, we perform oracle transitions for all sentences and languages, measuring stack depth at each configuration.
The arc-eager system sometimes creates a subtree at the beginning of a buffer, in which case we increment stack depth by one.

\paragraph{Oracle}
We run an oracle transition for each sentence with each system.
For the left-corner system, we implemented the algorithm presented in Section \ref{sec:oracle}.
For the arc-standard and arc-eager systems, we implemented oracles preferring reduce actions over shift actions, which minimizes the maximum stack depth.

\subsection{Stack depth for general sentences}
\label{sec:analysis:general}

For each language in CoNLL dataset, we count the number of configurations of a specific stack depth while performing oracles on all sentences.
Figure \ref{fig:load-comparison} shows the cumulative frequencies of configurations as the stack depth increases for the arc-standard, arc-eager, and left-corner systems.
The data answer the question as to which stack depth is required to cover X\% of configurations when recovering all gold trees.
Note that comparing absolute values here is less meaningful since the minimal stack depth to construct an arc is different for each system, e.g., the arc-standard system requires at least two items on the stack, while the arc-eager system can create a right arc if the stack contains one element.
Instead, we focus on the universality of each system's behavior for different languages.

As discussed in Section \ref{sec:standard}, the arc-standard system can only process left-branching structures within a constant stack depth;
such structures are typical in head-final languages such as Japanese or Turkish, and we observe this tendency in the data.
The system performs differently in other languages, so the behavior is not consistent across languages.

The arc-eager and left-corner systems behave similarly for many languages, but we observe that there are some languages for which the left-corner system behaves similarly across numerous languages, while the arc-eager system tends to incur a larger stack depth.
In particular, except Arabic, the left-corner system covers over 90\% (specifically, over 98\%) of configurations with a stack depth $\leq 3$.
The arc-eager system also has 90\% coverage in many languages with a stack depth $\leq 3$, though some exceptions exist, e.g., German, Hungarian, Japanese, Slovene, and Turkish.

We observe that results for Arabic are notably different from other languages.
We suspect that this is because the average length of each sentence is very long (i.e., 39.3 words; see Table \ref{tab:token} for overall corpus statistics).
\newcite{buchholz-marsi:2006:CoNLL-X} noted that the parse unit of the Arabic treebank is not a sentence but a paragraph in which every sentence is combined via a dummy root node.
To remedy this inconsistency of annotation units, we prepared the modified treebank, which we denote as Arabic$^*$ in the figure, by treating each child tree of the root node as a new sentence.\footnote{We removed the resulting sentence if the length was one.}
The results then are closer to other language treebanks, especially Danish, which indicates that the exceptional behavior of Arabic largely originates with the annotation variety.
From this point, we review the results of Arabic$^*$ instead of the original Arabic treebank.

\subsection{Comparing with randomized sentences}
\label{sec:random}
\begin{figure}[p]
\centering
 \resizebox{0.95\textwidth}{!}
 {\includegraphics[]{./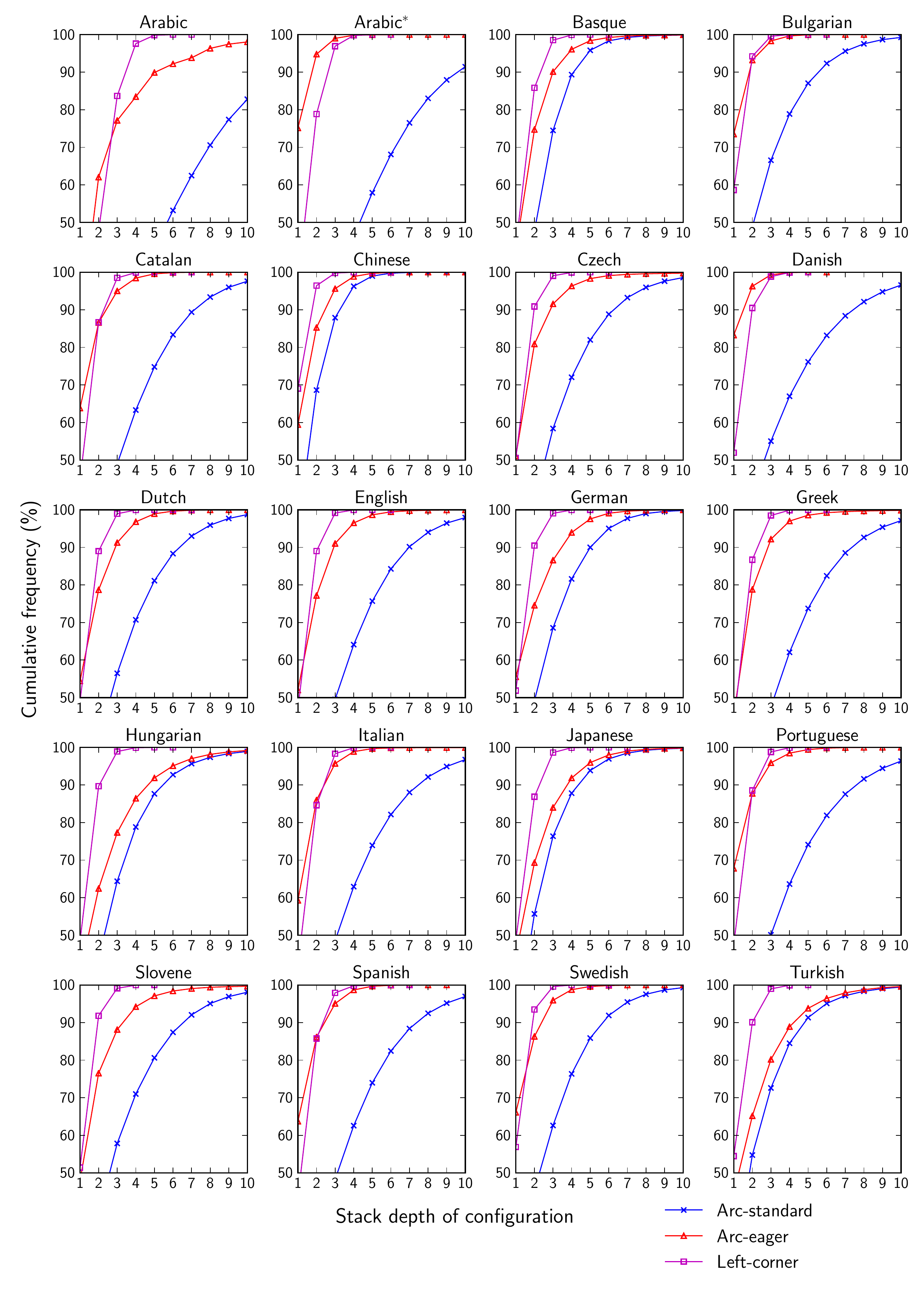}}
\caption{Crosslinguistic comparison of the cumulative frequencies of stack depth during oracle transitions.}
 \label{fig:load-comparison}
\end{figure}

\begin{figure}[p]
\centering
 \resizebox{0.95\textwidth}{!}
 {\includegraphics[]{./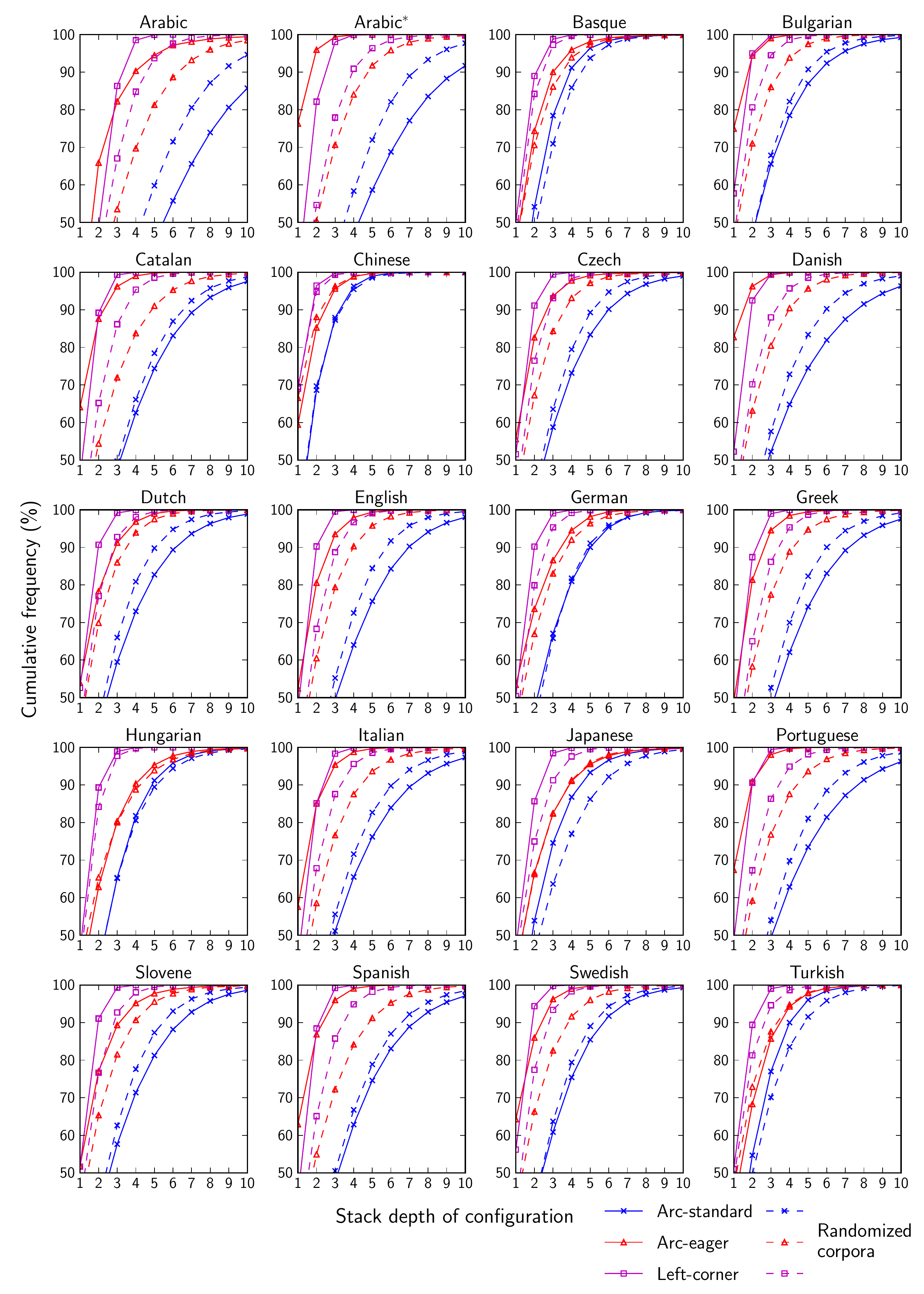}}
\caption{Stack depth results in corpora with punctuation removed; the dashed lines show results on randomly reordered sentences.}
 \label{fig:load-comparison-random}
\end{figure}

The next question we examine is whether the observation from the last experiment, i.e., that the left-corner parser consistently demands less stack depth, holds only for naturally occurring or grammatically correct sentences.
We attempt to answer this question by comparing oracle transitions on original treebank sentences and on (probably) grammatically incorrect sentences.
We create these incorrect sentences using the method proposed by \newcite{gildea-temperley:2007:ACLMain}.
We reorder words in each sentence by first extracting a directed graph from the dependency tree, and then randomly reorder the children of each node while preserving projectivity.
Following \newcite{gildea-temperley:2007:ACLMain}, we remove punctuation from all corpora in this experiment beforehand, since how punctuation is attached to words is not essential.

The dotted lines shown in Figure \ref{fig:load-comparison-random} denote the results of randomized sentences for each system.
There are notable differences in required stack depth between original and random sentences in many languages.
For example, with a stack depth $\leq 3$, the left-corner system cannot reach 90\% of configurations in many randomized treebanks such as Arabic$^*$, Catalan, Danish, English, Greek, Italian, Portuguese, and Spanish.
These results suggest that our system demands less stack depth only for naturally occurring sentences.
For Chinese and Hungarian, the differences are subtle;
however, the differences are also small for the other systems, which implies that these corpora have biases on graphs to reduce the differences.

\subsection{Token-level and sentence-level coverage results}
\label{sec:token}

\begin{table}[p]
\centering
\scalebox{0.90}{
\begin{tabular}[t]{l l r r r r r r r r r r} \hline
& & \multicolumn{1}{r}{{\bf Arabic}}&\multicolumn{1}{r}{{\bf Arabic$^*$}}&\multicolumn{1}{r}{{\bf Basque}}&\multicolumn{1}{r}{{\bf Bulgarian}}&\multicolumn{1}{r}{{\bf Catalan}}&\multicolumn{1}{r}{{\bf Chinese}}&\multicolumn{1}{r}{{\bf Czech}}\\
\multicolumn{2}{l}{\#Sents.} &3,043&4,102&3,523&13,221&15,125&57,647&25,650\\
\multicolumn{2}{l}{Av. len.} &39.3&28.0&16.8&15.8&29.8&6.9&18.0\\ \hline
Token & $\leq 1$& 22.9/21.8&52.8/55.9&57.3/62.7&79.5/80.0&66.2/69.4&83.6/83.6&74.7/74.0\\
      & $\leq 2$& 63.1/65.4&89.6/92.2&92.1/93.3&98.1/98.7&94.8/96.8&98.3/98.3&96.6/97.3\\
      & $\leq 3$& 92.0/94.1&98.9/99.4&99.2/99.3&99.8/99.9&99.5/99.8&99.9/99.9&99.7/99.8\\
      & $\leq 4$& 99.1/99.5&99.9/99.9&99.9/99.9&99.9/99.9&99.9/99.9&99.9/99.9&99.9/99.9\\
\\
Sent. & $\leq 1$& 7.0/7.4&20.8/21.4&15.5/20.8&37.3/39.4&14.7/16.9&58.3/58.3&32.0/34.2\\
      & $\leq 2$& 26.8/27.8&55.4/59.3&69.8/75.8&90.7/93.0&68.3/75.5&95.0/95.0&83.9/86.6\\
      & $\leq 3$& 57.6/61.6&91.7/94.5&95.8/97.0&99.3/99.7&95.6/98.3&99.7/99.7&98.2/99.1\\
      & $\leq 4$& 90.9/94.4&99.5/99.8&99.7/99.8&99.9/99.9&99.7/99.9&99.9/99.9&99.8/99.9\\ \hline
\
& & \multicolumn{1}{r}{{\bf Danish}}&\multicolumn{1}{r}{{\bf Dutch}}&\multicolumn{1}{r}{{\bf English}}&\multicolumn{1}{r}{{\bf German}}&\multicolumn{1}{r}{{\bf Greek}}&\multicolumn{1}{r}{{\bf Hungarian}}&\multicolumn{1}{r}{{\bf Italian}}\\
\multicolumn{2}{l}{\#Sents.} &5,512&13,735&18,791&39,573&2,902&6,424&3,359\\
\multicolumn{2}{l}{Av. len.} &19.1&15.6&25.0&18.8&25.1&22.6&23.7\\ \hline
Token & $\leq 1$& 71.3/75.2&70.2/73.4&69.2/71.3&66.9/66.7&66.7/66.8&65.6/64.1&62.8/64.1\\
      & $\leq 2$& 95.6/97.4&95.9/96.8&96.3/97.5&94.5/94.5&95.2/96.2&95.1/94.9&94.0/94.2\\
      & $\leq 3$& 99.6/99.8&99.7/99.8&99.7/99.9&99.5/99.5&99.6/99.8&99.5/99.5&99.5/99.5\\
      & $\leq 4$& 99.9/99.9&99.9/99.9&99.9/99.9&99.9/99.9&99.9/100&99.9/99.9&99.9/99.9\\
\\
Sent. & $\leq 1$& 26.1/29.7&33.0/37.3&13.5/16.7&22.7/23.7&20.7/22.5&14.0/14.7&25.0/27.3\\
      & $\leq 2$& 77.9/83.4&83.4/87.3&73.4/80.0&71.3/72.8&76.6/80.8&69.3/70.4&76.0/77.2\\
      & $\leq 3$& 96.8/98.9&98.2/98.7&97.8/99.0&96.3/96.6&97.4/98.4&95.8/96.2&97.3/97.5\\
      & $\leq 4$& 99.8/99.9&99.8/99.9&99.8/99.9&99.7/99.7&99.8/100&99.7/99.7&99.8/99.8\\ \hline
\
& & \multicolumn{1}{r}{{\bf Japanese}}&\multicolumn{1}{r}{{\bf Portuguese}}&\multicolumn{1}{r}{{\bf Slovene}}&\multicolumn{1}{r}{{\bf Spanish}}&\multicolumn{1}{r}{{\bf Swedish}}&\multicolumn{1}{r}{{\bf Turkish}}\\
\multicolumn{2}{l}{\#Sents.} &17,753&9,359&1,936&3,512&11,431&5,935\\
\multicolumn{2}{l}{Av. len.} &9.8&23.7&19.1&28.0&18.2&12.7\\ \hline
Token & $\leq 1$& 57.1/55.0&68.7/73.0&76.4/74.9&64.0/67.0&78.5/80.1&65.8/62.7\\
      & $\leq 2$& 90.6/89.5&95.5/97.5&97.1/97.3&93.4/96.1&98.1/98.6&93.9/93.7\\
      & $\leq 3$& 99.1/99.0&99.6/99.9&99.7/99.8&99.1/99.8&99.9/99.9&99.4/99.5\\
      & $\leq 4$& 99.9/99.9&99.9/99.9&99.9/100&99.9/99.9&99.9/99.9&99.9/99.9\\
\\
Sent. & $\leq 1$& 57.3/58.1&27.1/30.8&34.0/40.1&17.8/20.2&32.0/34.4&37.6/38.8\\
      & $\leq 2$& 81.8/81.8&78.7/85.1&85.7/88.5&66.1/73.5&87.8/90.3&80.1/81.0\\
      & $\leq 3$& 97.0/97.1&97.4/99.1&98.3/99.0&94.5/97.9&99.1/99.6&97.1/97.5\\
      & $\leq 4$& 99.8/99.8&99.8/99.9&99.9/100&99.2/99.9&99.9/99.9&99.8/99.8\\ \hline
\end{tabular}
 }
 \caption{Token-level and sentence-level coverage results of left-corner oracles with depth$_{re}$.
 Here, the right-hand numbers in each column are calculated from corpora that exclude all punctuation, e.g., 92\% of tokens in Arabic are covered within a stack depth $\leq 3$, while the number increases to 94.1 when punctuation is removed.
 Further, 57.6\% of sentences (61.6\% without punctuation) can be parsed within a maximum depth$_{re}$ of three, i.e., the maximum degree of center-embedding is at most two in 57.6\% of sentences.
 Av. len. indicates the average number of words in a sentence.}\label{tab:token}
\end{table}

As noted in Section \ref{sec:memorycost}, the stack depth of the left-corner system in our experiments thus far is not the exact measurement of the degree of center-embedding of the construction due to an extra factor introduced by the {\sc Shift} action.
In this section, we focus on depth$_{re}$, which matches the degree of center-embeddeding and may be more applicable to some applications.

Table \ref{tab:token} shows token- and sentence-level statistics with and without punctuations.
The token-level coverage of depth $\leq 2$ substantially improves from the results shown in Figure \ref{fig:load-comparison} in many languages, consistently exceeding 90\% except for Arabic$^*$, which indicates that many configurations of a stack depth of two in previous experiments are due to the extra factor caused by the {\sc Shift} action rather than the deeper center-embedded structures.
Results showing that the token-level coverage reaches 99\% in most languages with depth$_{re} \leq 3$ indicate that the constructions with the degree three of center-embedding occurs rarely in natural language sentences.
Overall, sentence-level coverage results are slightly decreased, but they are still very high, notably 95\% -- 99\% with depth$_{re} \leq 3$ for most languages.

\subsection{Results on UD}
\label{sec:trans:ud-result}

\begin{figure}[p]
\centering
 \resizebox{0.95\textwidth}{!}
 {\includegraphics[]{./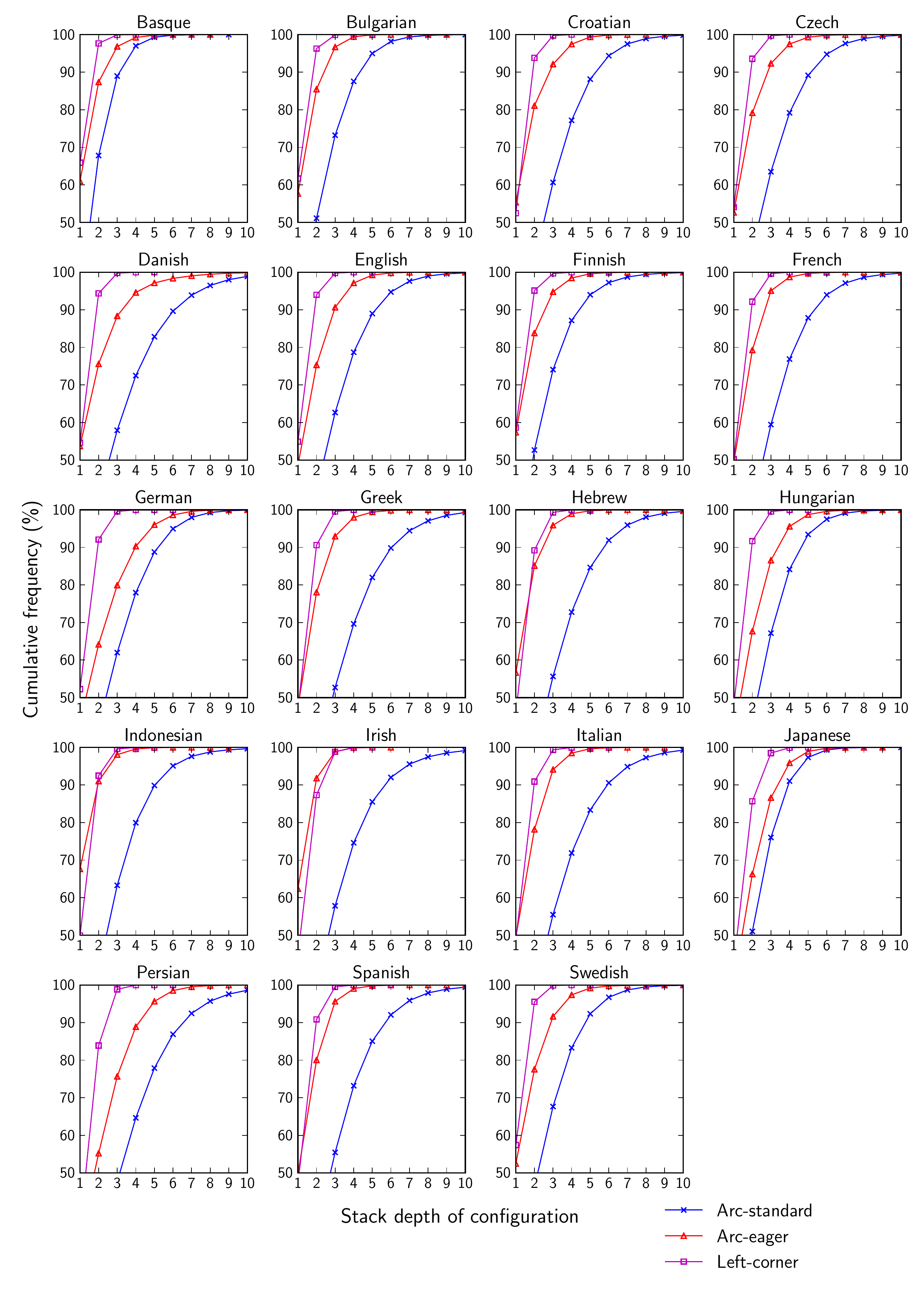}}
\caption{Stack depth results in UD.}
 \label{fig:load-comparison-ud}
\end{figure}

In the following two sections, we move on to UD, in which annotation styles are more consistent across languages.
Figure \ref{fig:load-comparison-ud} shows the result of the same analysis as the comparison in Section \ref{fig:load-comparison} on CoNLL dataset (i.e., Figure \ref{fig:load-comparison}).
We do not observe substantial differences between CoNLL dataset and UD.
Again, the left-corner system is the most consistent across languages.
This result is interesting in that it indicates the stack depth constraint of the left-corner system is less affected by the choice of annotation styles, since the annotation of UD is consistently content head-based while that of CoNLL dataset is (although consistently is lower) mainly function head-based.\footnote{
\REVISE{A theoretical analysis of the effect of the annotation style is interesting, but is beyond the scope of the current study.
We only claim that substantial differences are not observed in the present empirical analysis.
Generally speaking, two dependency representations based on content-head and function-head, do not lead to the identical CFG representation with binarization, but as the meanings that they encode are basically the same (with different notions of head) we expect that the resulting differences in CFG forms are not substantial.
}}
We will see this tendency in more detail by analyzing token-based statistics based on depth$_{re}$ below.

\subsection{Relaxing the definition of center-embedding}
\label{sec:trans:relax}
The token-level analysis on CoNLL dataset in Section \ref{sec:token} (Table \ref{tab:token}) reveals that in most languages depth$_{re} \leq 2$ is a sufficient condition to cover most constructions but there are often relatively large gaps between depth$_{re} \leq 1$ (i.e., no center-embedding) and depth$_{re} \leq 2$ (i.e., at most one degree of center-embedding).
We explore in this section constraints that exist in the middle between these two.
We do so by relaxing the definition of center-embedding that we discussed in Section \ref{sec:bg:embedding}.

Recall that in our definition of center-embedding (Definition \ref{def:bg:embed-depth}), we check whether the length of the most embedded constituent is larger than one (i.e., $|x|\geq 2$ in Eq. \ref{eq:bg:embed-depth}).
In other words, the minimal length of the most embedded constituent for center-embedded structures is {\it two} in this case.
Here, we relax this condition;
for example, if assume the minimal length of most embedded clause is {\it three}, we recognize some portion of singly center-embedded structures (by Definition \ref{def:bg:embed-depth}), in which the size of embedded constituent is one or two, to be not center-embedded.

Due to the transparency between the stack depth and the degree of center-embedding, this can be achieved by not increasing depth$_{re}$ when the size (number of tokens including the dummy node) of the top stack element does not exceed the threshold, which is one in default (thus no reduction occurs).


\begin{figure}[t]
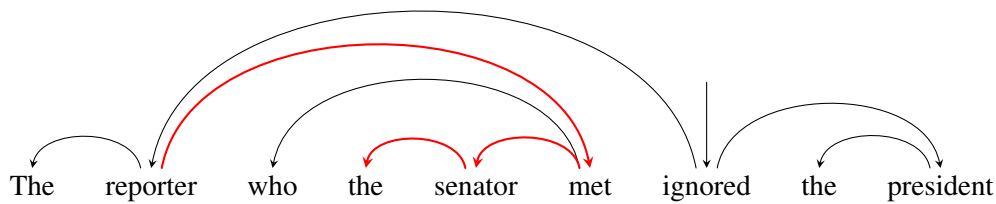

 \centering
 \begin{dependency}[theme=simple]
  \begin{deptext}[column sep=0.5cm]
   The \& reporter \& who \& the \& senator \& met \& ignored \& the \& president \\
  \end{deptext}
  \depedge{2}{1}{}
  \depedge{7}{2}{}
  \depedge{6}{3}{}
  \depedge[thick,red]{5}{4}{}
  \depedge[thick,red]{6}{5}{}
  \depedge[thick,red]{2}{6}{}
  \deproot[edge unit distance=2ex]{7}{}
  \depedge{9}{8}{}
  \depedge{7}{9}{}
 \end{dependency}
 \caption{Following Definition \ref{def:bg:embed-depth}, this tree is recognized as singly center-embedded while is not center-embedded if ``the senator'' is replaced by one word.
 Bold arcs are the cause of center-embedding (zig-zag pattern).}
 \label{fig:trans:relax}
\end{figure}

\paragraph{Motivating example}
In Section \ref{sec:bg:psycho}, we showed that the following sentence is recognized as not center-embedded when we follow Definition \ref{def:bg:embed-depth}:
\enumsentence{
The reporter [who {\bf Mary} met] ignored the president.
}
However, we can see that the following sentence is recognized as singly center-embedded:
\enumsentence{
The reporter [who {\bf the senator} met] ignored the president.
}
Figure \ref{fig:trans:relax} shows the UD-style dependency tree with the emphasis on arcs causing center-embedding.
This observation suggests many constructions that requires depth$_{re} = 2$ might be caught by relaxing the condition of center-embedding discussed above.

\paragraph{Result}
Figure \ref{fig:load-comparison-relax-ud} is the result with such relaxed conditions.
Here we also show the effect of changing maximum sentence length.
We can see in some languages, such as Hungarian, Japanese, and Persian, the effect of this relaxation is substantial while the changes in other languages are rather modest.
We can also see that in most languages depth two is a sufficient condition to conver most constructions, which is again consistent with our observation in CoNLL dataset (Section \ref{sec:token}).

We will explore this relaxation again in the supervised experiments we present below.
Interestingly, there we will observe that the improvements with those relaxations are more substantial in parsing experiments (Section \ref{sec:trans:ud-parse}).

\begin{figure}[p]
\centering
 \resizebox{0.95\textwidth}{!}
 {\includegraphics[]{./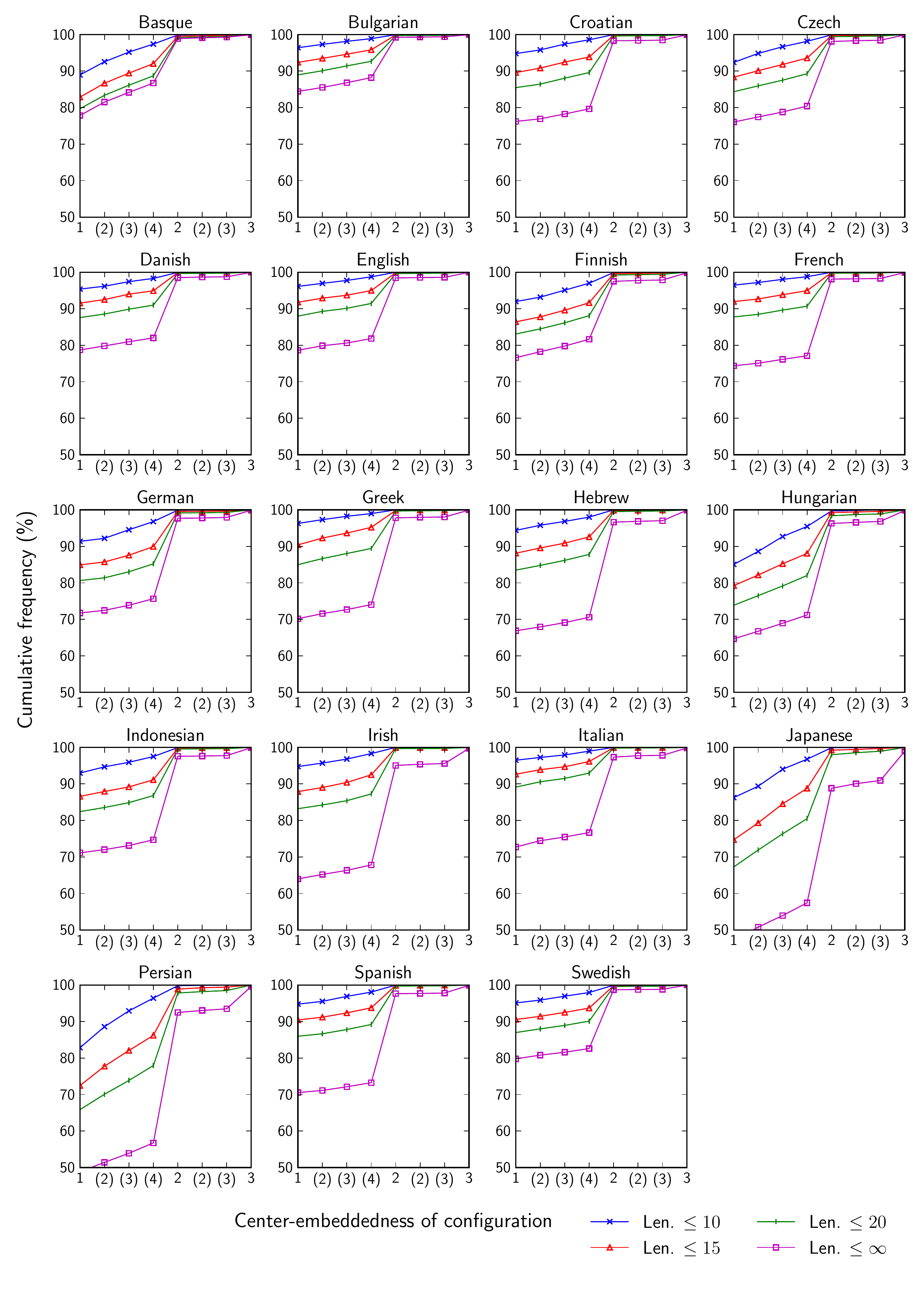}}
 \vspace{-10pt}
 \caption{Stack depth results in UD with a left-corner system (depth$_{re}$) when the definition of center-embedding is relaxed.
 The parenthesized numbers indicate the size of allowed constituents at the bottom of embedding.
 For example (2) next to 2 indicates we allow depth = 3 if the size of subtree on the top of the stack is 1 or 2.
 ${\sf Len.}$ is the maximum sentence length.}
 \label{fig:load-comparison-relax-ud}
\end{figure}

\section{Parsing Experiment}
\label{sec:parse}

Our final experiment is the parsing experiment on unseen sentences.
A transition-based dependency parsing system is typically modeled with a structured discriminative model, such as with the structured perceptron and beam search \cite{zhang-clark:2008:EMNLP,huang-sagae:2010:ACL}.
We implemented and trained the parser model in this framework to investigate the following questions:
\begin{itemize}
 \item How does the stack depth bound at decoding affect parsing performance of each system?
       The underlying concern here is basically the same as in the previous oracle experiment discussed in Section \ref{sec:analysis}, i.e., to determine whether the stack depth of the left-corner system provides a meaningful measure for capturing the syntactic regularities.
       More specifically, we wish to observe whether the observation from the last experiment, i.e., that the behavior of the left-corner system is mostly consistent across languages, also holds with parse errors.
 \item Does our parser perform better than other transition-based parsers?
       One practical disadvantage of our system is that its attachment decisions are made more eagerly, i.e., that it has to commit to a particular structure at an earlier point;
       however, this also means the parser may utilize rich syntactic information as features that are not available in other systems.
       We investigate whether these rich features help disambiguation in practice.
 \item Finally, we examine parser performance of our system under a restriction on features to prohibit lookahead on the buffer.
       This restriction is motivated by the previous model of probabilistic left-corner parsing \cite{journals/coling/SchulerAMS10} in which the central motivation is its cognitive plausibility.
       We report how accuracies drop with the cognitively motivated restriction and discuss a future direction to improve performance.
\end{itemize}

In the following we will investigate the above questions mainly with CoNLL dataset, as in our analysis in Section \ref{sec:analysis}.
In Section \ref{sec:exp:setting}, we explain several experimental setups.
We first compare the performances in the standard English experiments in Section \ref{sec:exp:english-devel}, and then present experiments in CoNLL dataset in Section \ref{sec:exp:conll-parse}.
Finally, we summarize the results in UD in Section \ref{sec:trans:ud-parse}.

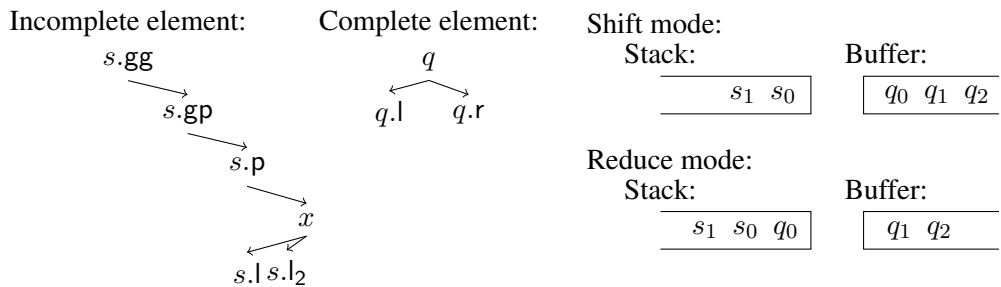
\begin{figure}[t]
 \centering
 \begin{tikzpicture}[level distance=20pt, sibling distance=35pt,edge from parent/.style={draw,->}]
  \node at (0, 0) {Incomplete element:};
  \node at (0, -0.5) {$s.{\sf gg}$}
       [sibling distance=45pt]
       child[missing]
       child { node {$s.{\sf gp}$}
         [sibling distance=45pt]
         child[missing]
         child { node {$s.{\sf p}$}
           [sibling distance=45pt]
           child[missing]
           child { node {$x$}
             [sibling distance=15pt]
             child {node {$s.{\sf l}$}}
             child {node {$s.{\sf l_2}$}}
             child[missing]
             child[missing]
           }
         }
       };
  \node at (4.0, 0) {Complete element:};
  \node at (4.0, -0.5) {$q$}
    [sibling distance=30pt]
    child { node {$q.{\sf l}$} }
    child { node {$q.{\sf r}$} };

  \node (R) at (7.0, 0) {Shift mode:};

  \node (s) [below right =0.4cm and 0.5cm of R.west,anchor=west] {Stack:};
  \node (b) [right=1.7cm of s] {Buffer:};

  \draw ($(s)+(0,-0.3)$) -- ($(s)+(2.0,-0.3)$) -- ($(s)+(2.0,-0.8)$) -- ($(s)+(0,-0.8)$);
  \draw ($(s)+(4.5,-0.3)$) -- ($(s)+(2.7,-0.3)$) -- ($(s)+(2.7,-0.8)$) -- ($(s)+(4.5,-0.8)$);

  \node (s1) [below right=0.55cm and 1.4cm of s.west,anchor=west] {$s_1 ~~ s_0$};
  \node[right=0.9cm of s1] {$q_0 ~~ q_1 ~~ q_2$};

  \node (R) [below=1.8cm of R.west,anchor=west] {Reduce mode:};
  \node (s) [below right =0.4cm and 0.5cm of R.west,anchor=west] {Stack:};
  \node (b) [right=1.7cm of s] {Buffer:};

  \draw ($(s)+(0,-0.3)$) -- ($(s)+(2.0,-0.3)$) -- ($(s)+(2.0,-0.8)$) -- ($(s)+(0,-0.8)$);
  \draw ($(s)+(4.5,-0.3)$) -- ($(s)+(2.7,-0.3)$) -- ($(s)+(2.7,-0.8)$) -- ($(s)+(4.5,-0.8)$);

  \node (s1) [below right=0.55cm and 0.9cm of s.west,anchor=west] {$s_1 ~~ s_0 ~~ q_0$};
  \node[right=0.9cm of s1] {$q_1 ~~ q_2$};
 \end{tikzpicture}
 \caption{(Left) Elementary features extracted from an incomplete and complete node, and
 (Right) how feature extraction is changed depending on whether the next step is shift or reduce.
 }\label{fig:feature}
\end{figure}
\begin{table}[t]
 \centering
 \begin{tabular}{l l l l} \hline
  $s_0.{\sf p.w}$ & $s_0.{\sf p.t}$ & $s_0.{\sf l.w}$ & $s_0.{\sf l.t}$ \\
  $s_1.{\sf p.w}$ & $s_1.{\sf p.t}$ & $s_1.{\sf l.w}$ & $s_1.{\sf l.t}$ \\ 
  $s_0.{\sf p.w} \circ s_0.{\sf p.t}$ & $s_0.{\sf l.w} \circ s_0.{\sf l.t}$ & $s_1.{\sf p.w} \circ s_1.{\sf p.t}$ & $s_1.{\sf l.w} \circ s_1.{\sf l.t}$ \\
  $q_0.{\sf w}$ & $q_0.{\sf t}$ & $q_0.{\sf w} \circ q_0.{\sf t}$\\ 
  $s_0.{\sf p.w} \circ s_0.{\sf l.w}$ & $s_0.{\sf p.t} \circ s_0.{\sf l.t}$ & \\ \hline
  $s_0.{\sf p.w} \circ s_1.{\sf p.w}$ & $s_0.{\sf l.w} \circ s_1.{\sf l.w}$ & $s_0.{\sf p.t} \circ s_1.{\sf p.t}$ & $s_0.{\sf l.t} \circ s_1.{\sf l.t}$ \\ \hline
  $s_0.{\sf p.w} \circ q_0.{\sf w}$ & $s_0.{\sf l.w} \circ q_0.{\sf w}$ & $s_0.{\sf p.t} \circ q_0.{\sf t}$ & $s_0.{\sf l.t} \circ q_0.{\sf t}$ \\
  $s_0.{\sf p.w} \circ q_0.{\sf w} \circ q_0.{\sf p}$ & $s_0.{\sf p.w} \circ q_0.{\sf w} \circ s_0.{\sf p.t}$ & $s_0.{\sf l.w} \circ q_0.{\sf w} \circ s_0.{\sf l.t}$ & $s_0.{\sf l.w} \circ q_0.{\sf w} \circ s_0.{\sf l.t}$ \\
  $s_0.{\sf p.w} \circ s_0.{\sf p.t} \circ q_0.{\sf t}$ & $s_0.{\sf l.w} \circ s_0.{\sf l.t} \circ q_0.{\sf t}$\\ \hline
  $q_0.{\sf t} \circ q_0.{\sf l.t} \circ q_0.{\sf r.t}$ & $q_0.{\sf w} \circ q_0.{\sf l.t} \circ q_0.{\sf r.t}$ \\ \hline
  $s_0.{\sf p.t} \circ s_0.{\sf gp.t} \circ s_0.{\sf gg.t}$ & $s_0.{\sf p.t} \circ s_0.{\sf gp.t} \circ s_0.{\sf l.t}$ & $s_0.{\sf p.t} \circ s_0.{\sf l.t} \circ s_0.{\sf l_2.t}$ & $s_0.{\sf p.t} \circ s_0.{\sf gp.t} \circ q_0.{\sf t}$\\
  $s_0.{\sf p.t} \circ s_0.{\sf l.t} \circ q_0.{\sf t}$ & $s_0.{\sf p.w} \circ s_0.{\sf l.t} \circ q_0.{\sf t}$ & $s_0.{\sf p.t} \circ s_0.{\sf l.w} \circ q_0.{\sf t}$ & $s_0.{\sf l.t} \circ s_0.{\sf l_2.p} \circ q_0.{\sf t}$ \\
  $s_0.{\sf l.t} \circ s_0.{\sf l_2.t} \circ q_0.{\sf t}$ & $s_0.{\sf p.t} \circ q_0.{\sf t} \circ q_0.{\sf l.t}$ & $s_0.{\sf p.t} \circ q_0.{\sf t} \circ q_0.{\sf r.t}$ \\
  $s_1.{\sf p.t} \circ s_0.{\sf p.t} \circ s_0.{\sf l.t}$ & $s_1.{\sf p.t} \circ s_0.{\sf l.t} \circ q_0.{\sf t}$ & $s_1.{\sf l.t} \circ s_0.{\sf l.t} \circ q_0.{\sf t}$ & $s_1.{\sf l.t} \circ s_0.{\sf l.t} \circ q0.{\sf t}$ \\
  $s_1.{\sf l.t} \circ s_0.{\sf p.t} \circ q_0.{\sf p}$ \\ \hline
 \end{tabular}
 \caption{Feature templates used in both full and restricted feature sets, with ${\sf t}$ representing POS tag and ${\sf w}$ indicating the word form, e.g., $s_0.{\sf l.t}$ refers to the POS tag of the leftmost child of $s_0$. $\circ$ means concatenation.}\label{tab:feature}
\end{table}
\begin{table}[t]
 \centering
 \begin{tabular}{l l l l} \hline
  $q_0.{\sf t} \circ q_1.{\sf t}$ & $q_0.{\sf t} \circ q_1.{\sf t} \circ q_2.{\sf t}$ & $s_0.{\sf p.t} \circ q_0.{\sf p} \circ q_1.{\sf p} \circ q_2.{\sf p}$ & $s_0.{\sf l.t} \circ q_0.{\sf t} \circ q_1.{\sf t} \circ q_2.{\sf t}$ \\
  $s_0.{\sf p.w} \circ q_0.{\sf t} \circ q_1.{\sf t}$ & $s_0.{\sf p.t} \circ q_0.{\sf t} \circ q_1.{\sf t}$ & $s_0.{\sf l.w} \circ q_0.{\sf t} \circ q_1.{\sf t}$ & $s_0.{\sf l.t} \circ q_0.{\sf t} \circ q_1.{\sf t}$ \\ \hline
 \end{tabular}
 \caption{Additional feature templates only used in the full feature model.}\label{tab:additional} 
\end{table}

\subsection{Feature}

The feature set we use is explained in Figure \ref{fig:feature} and Tables \ref{tab:feature} and \ref{tab:additional}.
Our transition system is different from other systems in that it has two modes, i.e., a shift mode in which the next action is either {\sc Shift} or {\sc Insert} and a reduce mode in which we select the next reduce action, thus we use different features depending on the current mode.
Figure \ref{fig:feature} shows how features are extracted from each node for each mode.
In reduce mode, we treat the top node of the stack as if it were the top of buffer ($q_0$), which allows us to use the same feature templates in both modes by modifying only the definitions of elementary features $s_i$ and $q_i$.
A similar technique has been employed in the transition system proposed by \newcite{sartorio-satta-nivre:2013:ACL2013}.

To explore the last question, we develop two feature sets.
Our full feature set consists of features shown in Tables \ref{tab:feature} and \ref{tab:additional}.
For the limited feature set, we remove all features that depend on $q_1$ and $q_2$ in Figure \ref{fig:feature}, which we list in Table \ref{tab:additional}.
Here, we only look at the top node on the buffer in shift mode.
This is the minimal amount of lookahead in our parser and is the same as the previous left-corner PCFG parsers \cite{journals/coling/SchulerAMS10}, which are cognitively motivated.

Our parser cannot capture a head and dependent relationship directly at each reduce step, because all interactions between nodes are via a dummy node, which may be a severe limitation from a practical viewpoint;
however, we can exploit richer context from each subtree on the stack, as illustrated in Figure \ref{fig:feature}.
We construct our feature set with many nodes around the dummy node, including the parent (${\sf p}$), grandparent (${\sf gp}$), and great grandparent (${\sf gg}$).

\subsection{Settings}
\label{sec:exp:setting}

We compare parsers with three transition systems: arc-standard, arc-eager, and left-corner.
The feature set of the arc-standard system is borrowed from \newcite{huang-sagae:2010:ACL}.
For the arc-eager system, we use the feature set of \newcite{zhang-nivre:2011:ACL-HLT2011} from which we exclude features that rely on arc label information.

We train all models with different beam sizes in the violation fixing perceptron framework \cite{huang-fayong-guo:2012:NAACL-HLT}.
Since our goal is not to produce a state-of-the-art parsing system, we use gold POS tags as input both in training and testing.

As noted in Section \ref{subsec:transitionsystem}, the left-corner parser sometimes fails to generate a single tree, in which case the stack contains a complete subtree at the top position (since the last action is always {\sc Insert}) and one or more incomplete subtrees.
If this occurs, we perform the following post-processing steps:
\begin{itemize}
 \item We collapse each dummy node in an incomplete tree. More specifically, if the dummy node is the head of the subtree, we attach all children to the sentence (dummy) root node; otherwise, the children are reattached to the parent of the dummy node.
 \item The resulting complete subtrees are all attached to the sentence (dummy) root node.
\end{itemize}

\subsection{Results on the English Development Set}
\label{sec:exp:english-devel}
\begin{figure}[p]
\centering
 \begin{minipage}[b]{.49\linewidth}
  \centering
  \resizebox{0.9\textwidth}{!}
  {\includegraphics[]{./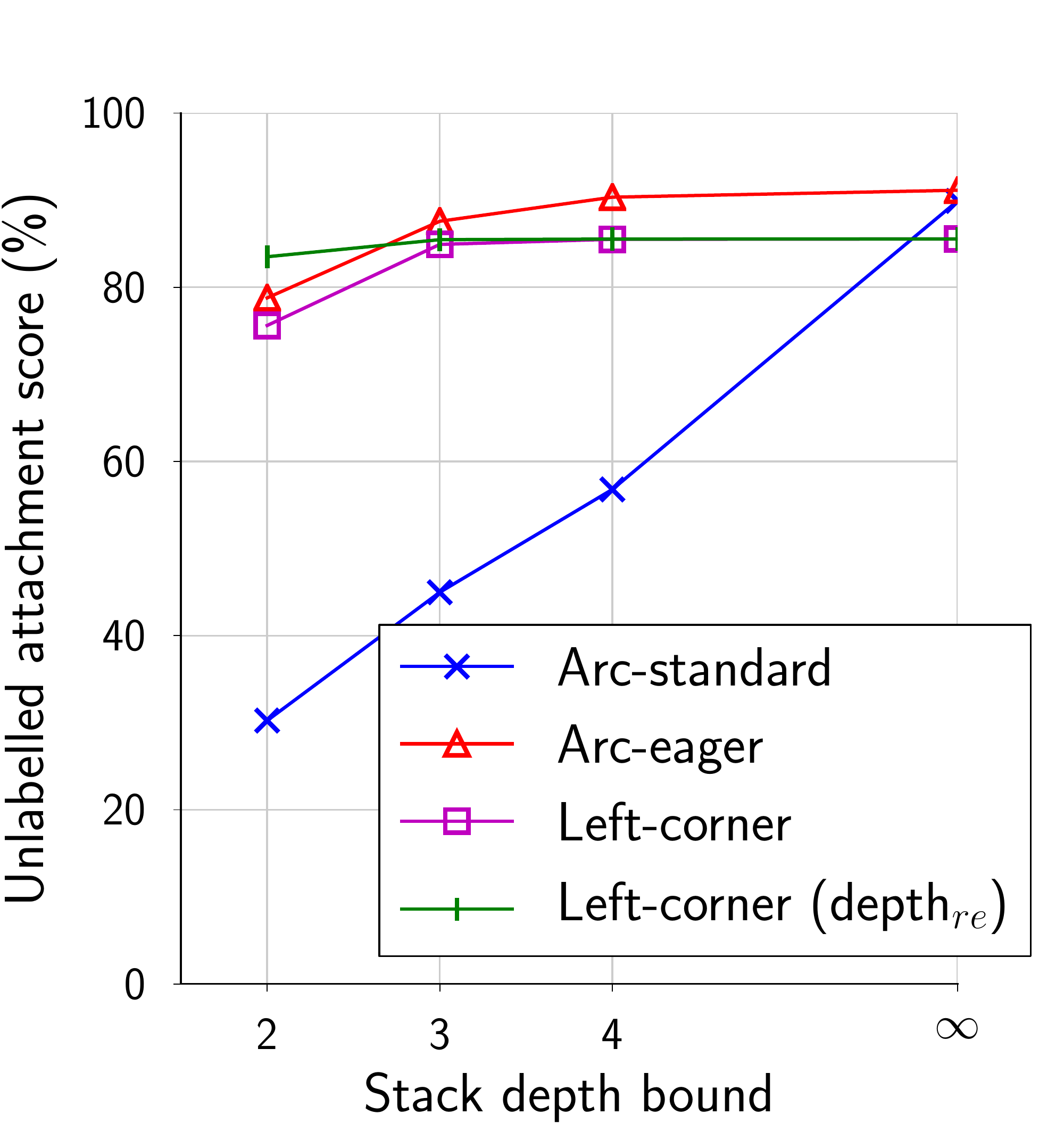}}
  \subcaption{$b=1$}
 \end{minipage}
 \begin{minipage}[b]{.49\linewidth}
  \centering
  \resizebox{0.9\textwidth}{!}
  {\includegraphics[]{./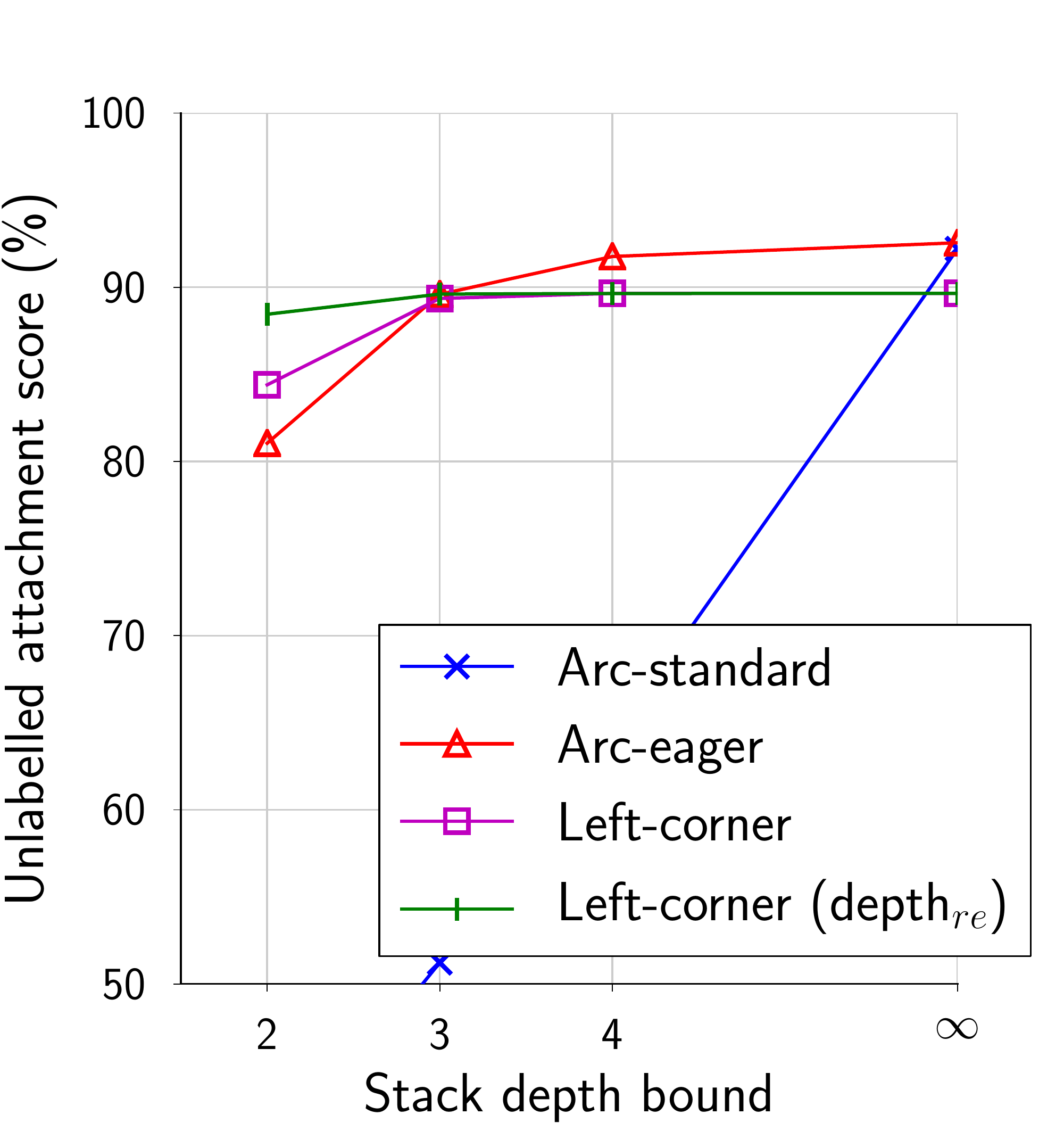}}
  \subcaption{$b=2$}
 \end{minipage}
 \begin{minipage}[b]{.49\linewidth}
  \centering
  \resizebox{0.9\textwidth}{!}
  {\includegraphics[]{./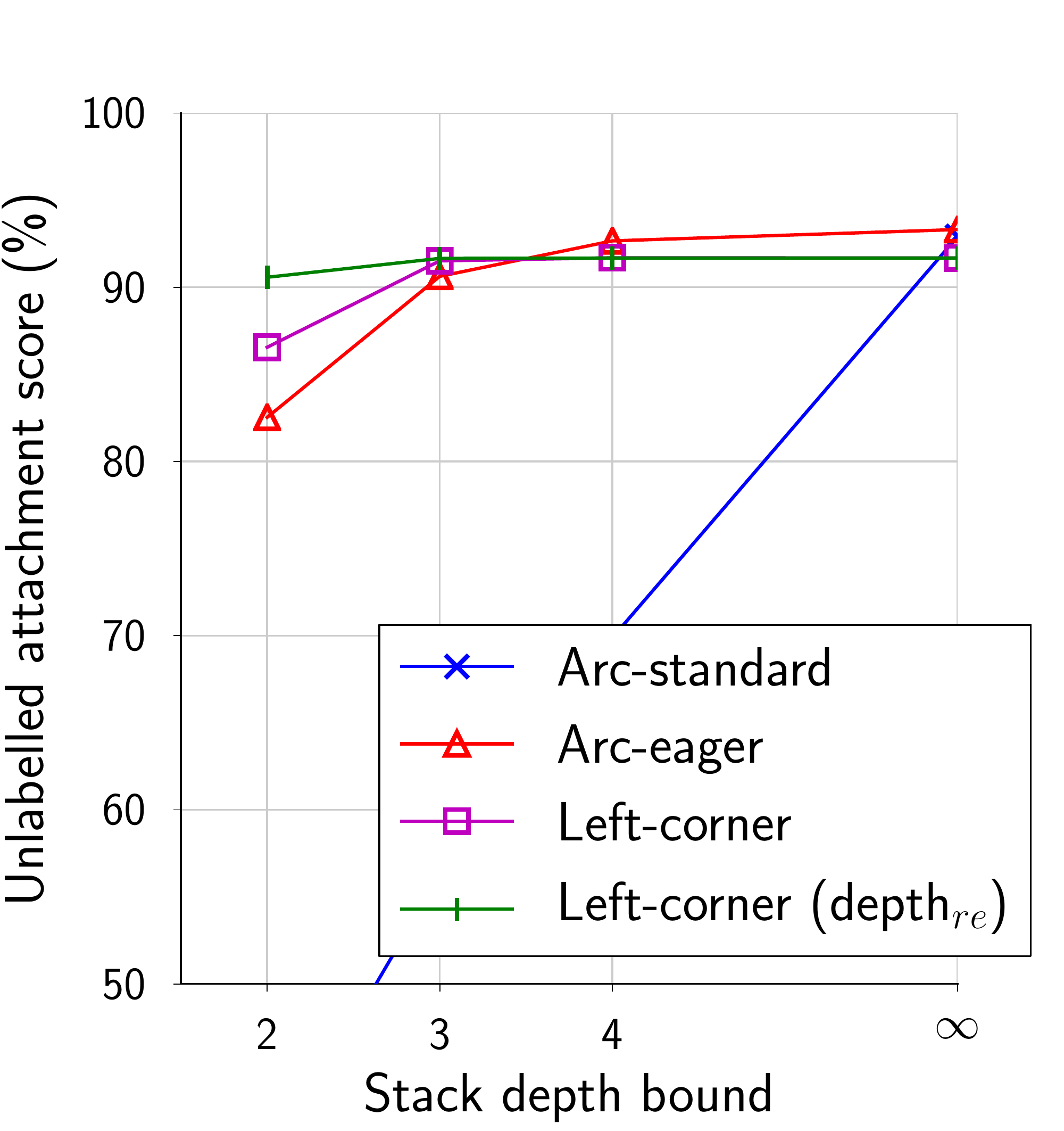}}
  \subcaption{$b=8$}
 \end{minipage}
 \begin{minipage}[b]{.49\linewidth}
  \centering
  \resizebox{0.9\textwidth}{!}
  {\includegraphics[]{./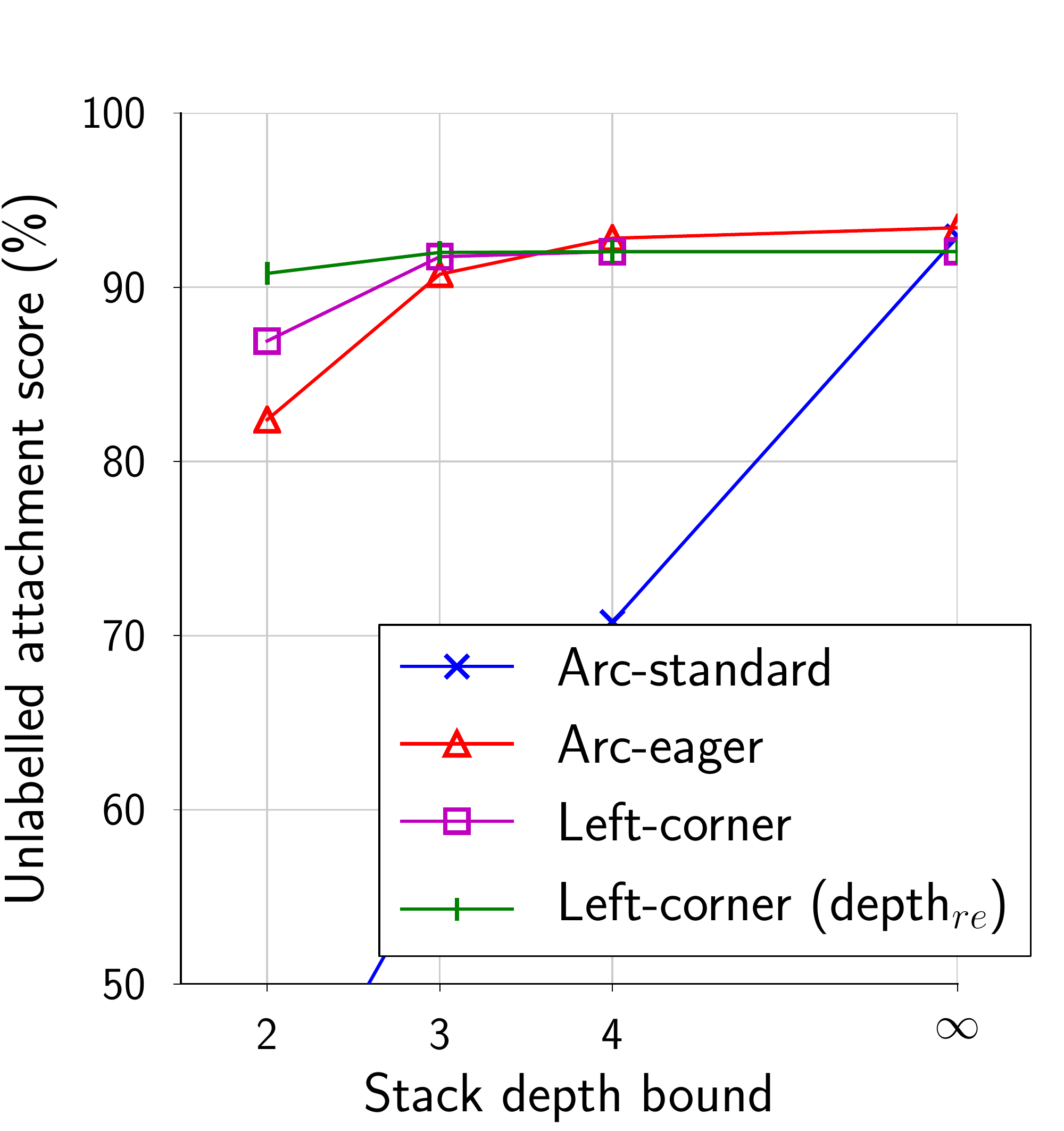}}
  \subcaption{$b=16$}
 \end{minipage}
 \caption{Accuracy vs. stack depth bound at decoding for several beam sizes ($b$).}
 \label{fig:accuracy-depth}
\end{figure}
 
We first evaluate our system on the common English development experiment.
We train the model in section 2-21 of the WSJ Penn Treebank \cite{Marcus93buildinga}, which are converted into dependency trees using the LTH conversion tool\footnote{http://nlp.cs.lth.se/software/treebank\_converter/}.

\paragraph{Impact of Stack Depth Bound}
To explore the first question posed at the beginning of this section, we compare parse accuracies under each stack depth bound with several beam sizes, with results shown in Figure \ref{fig:accuracy-depth}.
In this experiment, we calculate the stack depth of a configuration in the same way as our oracle experiment (see Section \ref{sec:analysis:settings}), and when expanding a beam, we discard candidates for which stack depth exceeds the maximum value.
As discussed in Section \ref{sec:token}, for the left-corner system, depth$_{re}$ might be a more linguistically meaningful measure, so we report scores with both definitions.\footnote{
This can be achieved by allowing any configurations after a shift step.
}
The general tendency across different beam sizes is that our left-corner parser (in particular with depth$_{re}$) is much less sensitive to the value of the stack depth bound.
For example, when the beam size is eight, the accuracies of the left-corner (depth$_{re}$) are 90.6, 91.7, 91.7, and 91.7 with increased stack depth bounds, while the corresponding scores are 82.5, 90.6, 92.6, and 93.3 in the arc-eager system.
This result is consistent with the observation in our oracle coverage experiment discussed in Section \ref{sec:analysis}, and suggests that a depth bound of two or three might be a good constraint for restricting tree candidates for natural language with no (or little) loss of recall.
Next, we examine whether this observation is consistent across languages.

\begin{figure}[t]
\centering
 \resizebox{0.75\textwidth}{!}
 {\includegraphics[]{./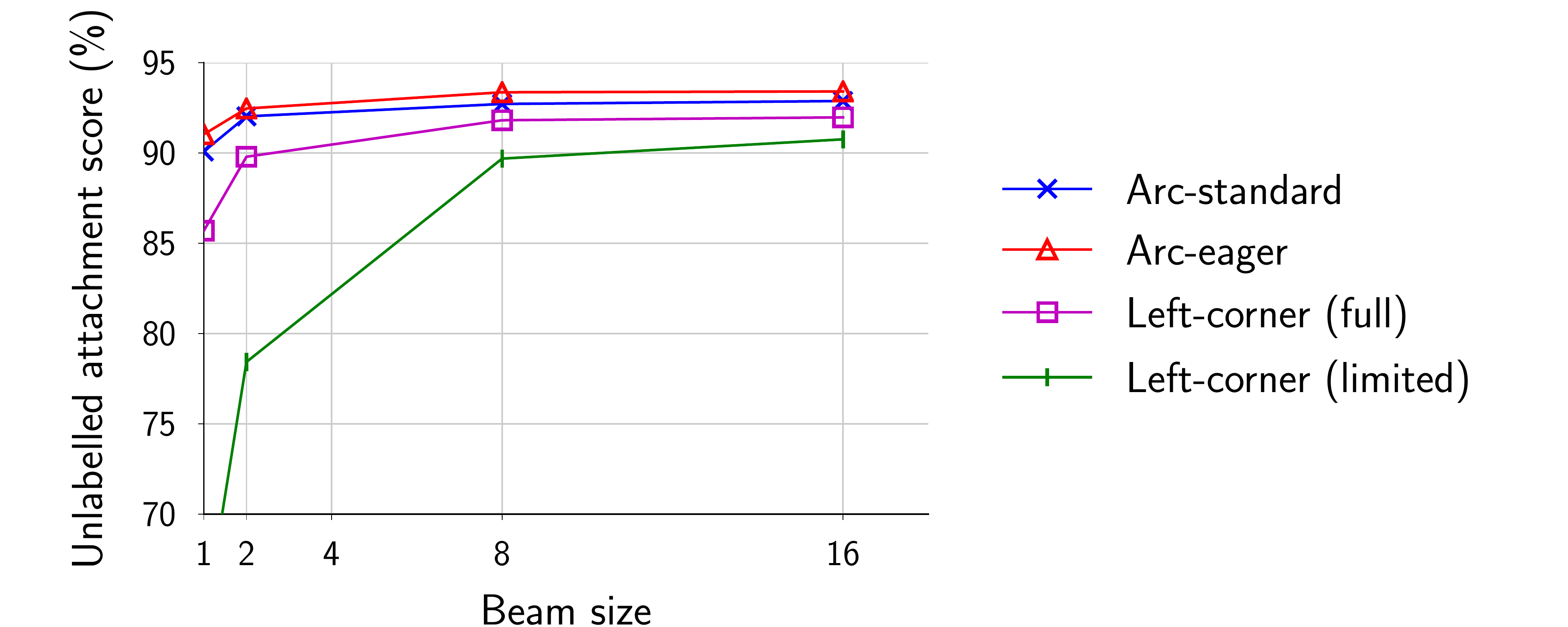}}
 \caption{Accuracy vs. beam size for each system on the English Penn Treebank development set.
 Left-corner (full) is the model with the full feature set, while Left-corner (limited) is the model with the limited feature set.}
 \label{fig:plot}
\end{figure}

\paragraph{Performance without Stack Depth Bound}
Figure \ref{fig:plot} shows accuracies with no stack depth bound when changing beam sizes.
We can see that the accuracy of the left-corner system (full feature) is close to that of the other two systems, but some gap remains.
With a beam size of 16, the scores are
left-corner: 92.0;
arc-standard: 92.9;
arc-eager: 93.4.
Also, the score gaps are relatively large at smaller beam sizes;
e.g., with beam size 1, the score of the left-corner is 85.5, while that of the arc-eager is 91.1.
This result indicates that the prediction with our parser might be structurally harder than other parsers even though ours can utilize richer context from subtrees on the stack.

\paragraph{Performance of Limited Feature Model}
Next we move on to the results with cognitively motivated limited features (Figure \ref{fig:plot}).
When the beam size is small, it performs extremely poorly (63.6\% with beam size 1).
This is not surprising since our parser has to deal with each attachment decision much earlier, which seems hard without lookahead features or larger beam.
However, it is interesting that it achieves a reasonably higher score of 90.6\% accuracy with beam size 16.
In the previous constituency left-corner parsing experiments that concerned their cognitive plausibility \cite{journals/coling/SchulerAMS10,TOPS:TOPS12034}, typically the beam size is quite huge, e.g., 2,000.
The largest difference between our parser and their systems is the model: our model is discriminative while their models are generative.
Though discriminative models are not popular in the studies of human language processing \cite{keller:2010:Short}, the fact that our parser is able to output high quality parses with such smaller beam size would be appealing from the cognitive viewpoint (see Section \ref{sec:relatedwork} for further discussion).

\subsection{Result on CoNLL dataset}
\label{sec:exp:conll-parse}
\begin{figure}[p]
\centering
 \resizebox{0.95\textwidth}{!}
 {\includegraphics[]{./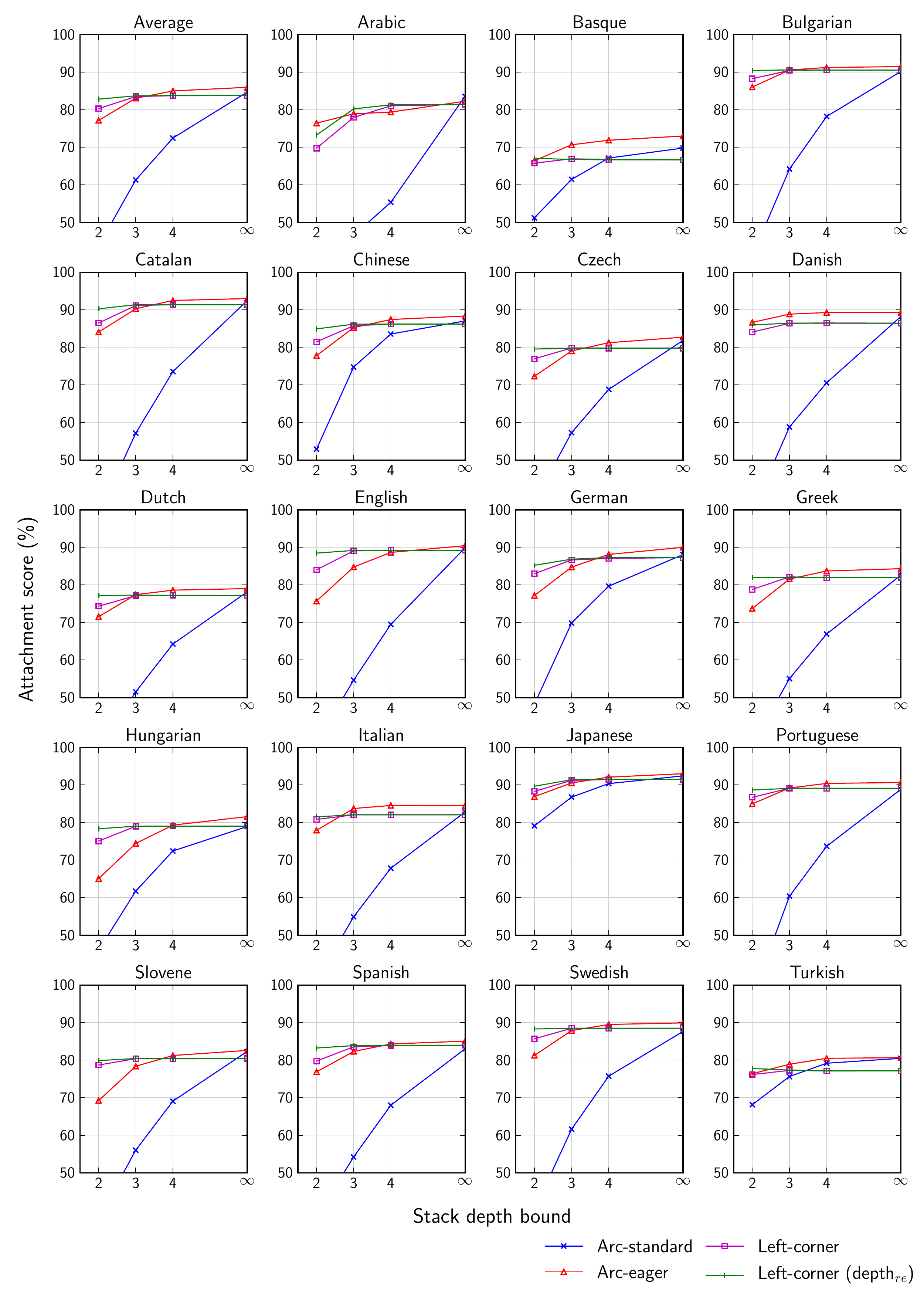}}
\caption{Accuracy vs. stack depth bound in CoNLL dataset.}
 \label{fig:conll_depth_to_accuracy}
\end{figure}

We next examine whether the observations above with English dataset are general across languages using CoNLL dataset.
Note that although we train on the projectivized corpus, evaluation is against the original nonprojective trees.
As our systems are unlabeled, we do not try any post-deprojectivization \cite{nivre-nilsson:2005:ACL}.
In this experiment, we set the beam size to 8.

\paragraph{Effect of Stack Depth Bound}
The cross-linguistic results with stack depth bounds are summarized in Figure \ref{fig:conll_depth_to_accuracy} from which we can see that the overall tendency of each system is almost the same as the English experiment.
Little accuracy drops are observed between models with bounded depth 2 or 3 and the model without depth bound in the left-corner (depth$_{re}$), although the score gaps are larger in the arc-eager.
The arc-standard parser performs extremely poorly with small depth bounds except Japanese and Turkish, and this is consistent with our observation that the arc-standard system demands less stack depth only for head-final languages (Section \ref{sec:analysis:general}).

Notably, in some cases the scores of the left-corner parser (depth$_{re}$) drop when loosing the depth bound (see Basque, Danish, and Turkish), meaning that the stack depth bound of the left-corner sometimes help disambiguation by ignoring linguistically implausible structures (deep center-embedding) during search.
The result indicates the parser performance could be improved by utilizing stack depth information of the left-corner parser, though we leave further investigation as a future work.

\paragraph{Performance without Stack Depth Bound}
Table \ref{tab:parse} summarizes the results without stack depth bounds.
Again, the overall tendency is the same as the English experiment.
The arc-eager performs the best except Arabic.
In some languages (e.g., Bulgarian, English, Spanish, and Swedish), the left-corner (full) performs better than the arc-standard, while the average score is 1.1 point below.
This difference and the average difference between the arc-eager and the arc-standard (85.8 vs. 84.6) are both statistically significant (p $<$ 0.01, the McNemar test).
We can thus conclude that our left-corner parser is not stronger than the other state-of-the-art parsers even with rich features.

\begin{table}[t]
 \centering
 \begin{tabular}[t]{lcccc} \hline
  & {\bf Arc-standard} & {\bf Arc-eager} & {\bf Left-corner} & {\bf Left-corner} \\
  &                    &                 & {\bf full}        & {\bf limited}     \\ \hline
  Arabic & 83.9 & 82.2 & 81.2 & 77.5 \\
  Basque & 70.5 & 72.8 & 66.8 & 64.6 \\
  Bulgarian & 90.2 & 91.4 & 89.9 & 88.1 \\
  Catalan & 92.5 & 93.3 & 91.4 & 89.3 \\
  Chinese & 87.3 & 88.4 & 86.8 & 83.6 \\
  Czech & 81.5 & 82.3 & 80.1 & 77.2 \\
  Danish & 88.0 & 89.1 & 86.8 & 85.5 \\
  Dutch & 77.7 & 79.0 & 77.4 & 74.9 \\
  English & 89.6 & 90.3 & 89.0 & 85.8 \\
  German & 88.1 & 90.0 & 87.2 & 85.7 \\
  Greek & 82.2 & 84.0 & 82.0 & 80.7 \\
  Hungarian & 79.1 & 80.9 & 79.0 & 75.8 \\
  Italian & 82.3 & 84.8 & 81.7 & 79.4 \\
  Japanese & 92.5 & 92.9 & 91.3 & 90.7 \\
  Portuguese & 89.2 & 90.6 & 88.9 & 87.1 \\
  Slovene & 82.3 & 82.3 & 80.8 & 77.1 \\
  Spanish & 83.0 & 85.0 & 83.8 & 80.6 \\
  Swedish & 87.2 & 90.0 & 88.5 & 87.0 \\
  Turkish & 80.8 & 80.8 & 77.5 & 75.4 \\
  \\
  Average & 84.6 & 85.8 & 83.7 & 81.4 \\ \hline
 \end{tabular}
 \caption{Parsing results on CoNLL X and 2007 test sets with no stack depth bound (unlabeled attachment scores).}\label{tab:parse} 
\end{table}

\paragraph{Performance of Limited Feature Model}
With limited features, the left-corner parser performs worse in all languages. The average score is about 2 point below the full feature models (83.7\% vs. 81.4\%) and shows the same tendency as in the English development experiment.
This difference is also statistically significant (p $<$ 0.01, the McNemar test).
The scores of English are relatively low compared with the results in Table \ref{fig:plot}, probably because the training data used in the CoNLL 2007 shared task is small, about half of our development experiment, to reduce the cost of training with large corpora for the shared task participants \cite{nivre-EtAl:2007:EMNLP-CoNLL2007}.

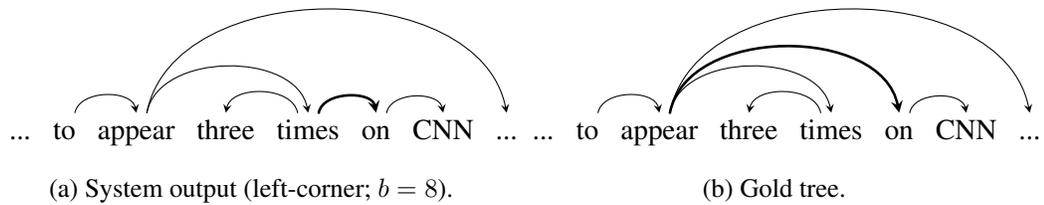
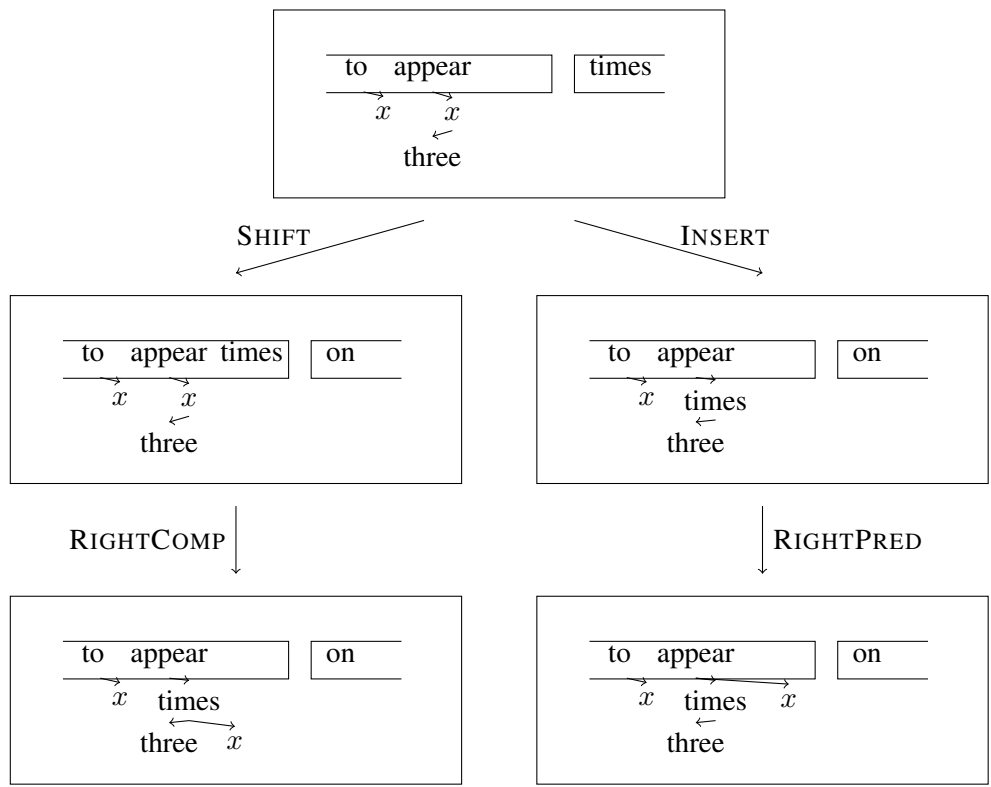
\begin{figure}[p]
\centering
 \begin{minipage}[b]{0.45\linewidth}
  \centering
  \begin{dependency}[theme=simple]
   \begin{deptext}[column sep=0.12cm]
    ... \& to \& appear \& three \& times \& on \& CNN \& ... \\
   \end{deptext}
   \depedge{2}{3}{}
   \depedge{3}{5}{}
   \depedge{5}{4}{}
   \depedge[line width=0.2ex]{5}{6}{}
   \depedge{6}{7}{}
   \depedge{3}{8}{}
  \end{dependency}
  \subcaption{System output (left-corner; $b=8$).}
 \end{minipage}
 \begin{minipage}[b]{0.45\linewidth}
  \centering
  \begin{dependency}[theme=simple]
   \begin{deptext}[column sep=0.12cm]
    ... \& to \& appear \& three \& times \& on \& CNN \& ... \\
   \end{deptext}
   \depedge{2}{3}{}
   \depedge{3}{5}{}
   \depedge{5}{4}{}
   \depedge[line width=0.2ex]{3}{6}{}
   \depedge{6}{7}{}
   \depedge{3}{8}{}
  \end{dependency}
  \subcaption{Gold tree.}
 \end{minipage}
 \begin{minipage}[b]{\linewidth}

  \centering
  \vspace{10pt}
  \begin{tikzpicture}[edge from parent/.style={draw,->},sibling distance=15pt,level distance=16pt]

   \node (begin) at (-3, 0.0) {};
   
   \draw ($(begin)$) rectangle ($(begin)+(6, -2.5)$);

   \node (s) [below right=0.6 and 0.7 of begin.west, anchor=west] {};
   \draw ($(s)+(0,0)$) -- ($(s)+(3.0,0.0)$) -- ($(s)+(3.0,-0.5)$) -- ($(s)+(0,-0.5)$);
   \draw ($(s)+(4.5,0)$) -- ($(s)+(3.3,0)$) -- ($(s)+(3.3,-0.5)$) -- ($(s)+(4.5,-0.5)$);
   
   \node[anchor=south west] (t) at ($(s.west)+(0.25, -0.49)$) {to{\color{white}{p}}}
     child[missing]
     child { node {$x$} };
   
   \node[anchor=south west] (t) at ($(s.west)+(0.9, -0.49)$) {appear}
     child[missing]
     child { node {$x$}
       child { node {three}}
       child[missing] };
   \node[anchor=south west] (t) at ($(s.west)+(3.5, -0.49)$) {times{\color{white}{p}}};

   \draw[->] (-1.0, -2.8) -> (-3.5, -3.5);
   \node at (-3.0, -3.0) {{\sc Shift}};

   \draw[->] (1.0, -2.8) -> (3.5, -3.5);
   \node at (3.0, -3.0) {{\sc Insert}};

   \draw[->] (-3.5, -6.6) -> (-3.5, -7.5) node[left,pos=0.5] {{\sc RightComp}};
   \draw[->] (3.5, -6.6) -> (3.5, -7.5) node[right,pos=0.5] {{\sc RightPred}};

   \node (begin) at (-6.5, -3.8) {};
   
   \draw ($(begin)$) rectangle ($(begin)+(6, -2.5)$);

   \node (s) [below right=0.6 and 0.7 of begin.west, anchor=west] {};
   \draw ($(s)+(0,0)$) -- ($(s)+(3.0,0.0)$) -- ($(s)+(3.0,-0.5)$) -- ($(s)+(0,-0.5)$);
   \draw ($(s)+(4.5,0)$) -- ($(s)+(3.3,0)$) -- ($(s)+(3.3,-0.5)$) -- ($(s)+(4.5,-0.5)$);
   
   \node[anchor=south west] (t) at ($(s.west)+(0.25, -0.49)$) {to{\color{white}{p}}}
     child[missing]
     child { node {$x$} };
   
   \node[anchor=south west] (t) at ($(s.west)+(0.9, -0.49)$) {appear}
     child[missing]
     child { node {$x$}
       child { node {three}}
       child[missing] };
   \node[anchor=south west] (t) at ($(s.west)+(1.9, -0.49)$) {{\color{white}{p}}times};
   \node[anchor=south west] (t) at ($(s.west)+(3.5, -0.49)$) {on{\color{white}{p}}};
   
   \node (begin) at (0.5, -3.8) {};
   
   \draw ($(begin)$) rectangle ($(begin)+(6, -2.5)$);

   \node (s) [below right=0.6 and 0.7 of begin.west, anchor=west] {};
   \draw ($(s)+(0,0)$) -- ($(s)+(3.0,0.0)$) -- ($(s)+(3.0,-0.5)$) -- ($(s)+(0,-0.5)$);
   \draw ($(s)+(4.5,0)$) -- ($(s)+(3.3,0)$) -- ($(s)+(3.3,-0.5)$) -- ($(s)+(4.5,-0.5)$);
   
   \node[anchor=south west] (t) at ($(s.west)+(0.25, -0.49)$) {to{\color{white}{p}}}
     child[missing]
     child { node {$x$} };
   
   \node[anchor=south west] (t) at ($(s.west)+(0.9, -0.49)$) {appear}
     child[missing]
     child { node {times}
       child { node {three}}
       child[missing] };
   \node[anchor=south west] (t) at ($(s.west)+(3.5, -0.49)$) {on{\color{white}{p}}};

   \node (begin) at (-6.5, -7.8) {};
   
   \draw ($(begin)$) rectangle ($(begin)+(6, -2.5)$);

   \node (s) [below right=0.6 and 0.7 of begin.west, anchor=west] {};
   \draw ($(s)+(0,0)$) -- ($(s)+(3.0,0.0)$) -- ($(s)+(3.0,-0.5)$) -- ($(s)+(0,-0.5)$);
   \draw ($(s)+(4.5,0)$) -- ($(s)+(3.3,0)$) -- ($(s)+(3.3,-0.5)$) -- ($(s)+(4.5,-0.5)$);
   
   \node[anchor=south west] (t) at ($(s.west)+(0.25, -0.49)$) {to{\color{white}{p}}}
     child[missing]
     child { node {$x$} };
   
   \node[anchor=south west] (t) at ($(s.west)+(0.9, -0.49)$) {appear}
     child[missing]
     child { node {times}
       child { node {three}}
       child { node[right=0.1] {$x$}} };
   \node[anchor=south west] (t) at ($(s.west)+(3.5, -0.49)$) {on{\color{white}{p}}};

   \node (begin) at (0.5, -7.8) {};
   
   \draw ($(begin)$) rectangle ($(begin)+(6, -2.5)$);

   \node (s) [below right=0.6 and 0.7 of begin.west, anchor=west] {};
   \draw ($(s)+(0,0)$) -- ($(s)+(3.0,0.0)$) -- ($(s)+(3.0,-0.5)$) -- ($(s)+(0,-0.5)$);
   \draw ($(s)+(4.5,0)$) -- ($(s)+(3.3,0)$) -- ($(s)+(3.3,-0.5)$) -- ($(s)+(4.5,-0.5)$);
   
   \node[anchor=south west] (t) at ($(s.west)+(0.25, -0.49)$) {to{\color{white}{p}}}
     child[missing]
     child { node {$x$} };

   \node[anchor=south west] (t) at ($(s.west)+(0.9, -0.49)$) {appear}
     child[missing]
     child[missing]
     child {
       node {times}
         child { node {three}}
         child[missing] }
     child { node[right=0.2] {$x$} };
   \node[anchor=south west] (t) at ($(s.west)+(3.5, -0.49)$) {on{\color{white}{p}}};
   
  \end{tikzpicture}
 \end{minipage}
 \caption{(a)-(b) Example of a parse error by the left-corner parser that may be saved with external syntactic knowledge (limited features and beam size 8).
 (c) Two corresponding configuration paths: the left path leads to (a) and the right path leads to (b).}
 \label{fig:error:example}
\end{figure}

Finally, though the overall score of the left-corner parser is lower,
we suspect that it could be improved by inventing new features, in particular those with external syntactic knowledge.
The analysis below is based on the result with limited features, but we expect a similar technique would also be helpful to the full feature model.

As we have discussed (see the beginning of Section \ref{sec:parse}), an attachment decision of the left-corner parser is more eager, which is the main reason for the lower scores.
One particular difficulty with the left-corner parser is that the parser has to decide whether each token has further (right) arguments with no (or a little) access to the actual right context.
Figure \ref{fig:error:example} shows an example of a parse error in English caused by the left-corner parser with limited features (without stack depth bound).
This is a kind of PP attachment error on {\it on CNN}, though the parser has to deal with this attachment decision implicitly before observing the attached phrase ({\it on CNN}).
When the next token in the buffer is {\it times} (Figure \ref{fig:error:example}(c)), performing {\sc Shift} means {\it times} would take more than one argument in future, while performing {\sc Insert} means the opposite:
{\it times} does not take any arguments.
Resolving this problem would require knowledge on {\it times} that it often takes no right arguments (while {\it appear} generally takes several arguments), but it also suggests that the parser performance could be improved by augmenting such syntactic knowledge on each token as new features, such as with distributional clustering \cite{koo-carreras-collins:2008:ACLMain,BohnetJMA13}, supertagging \cite{ouchi-duh-matsumoto:2014:EACL2014-SP}, or refined POS tags \cite{mueller-EtAl:2014:EMNLP2014}.
All those features are shown to be effective in transition-based dependency parsing;
we expect those are particularly useful for our parser though the further analysis is beyond the scope of this chapter.
In PCFG left-corner parsing, \newcite{TOPS:TOPS12034} reported accuracy improvement with symbol refinements obtained by the Berkeley parser \cite{petrov-EtAl:2006:COLACL} in English.

\subsection{Result on UD}
\label{sec:trans:ud-parse}

Figure \ref{fig:ud_depth_to_accuracy} shows the results in UD.
Again the performance tendency is not changed from the CoNLL dataset;
on average, the left-corner with depth$_{re}$ can parse sentences without dropping accuracies but other systems are largely affected by the constraints.

\begin{figure}[p]
\centering
 \resizebox{0.95\textwidth}{!}
 {\includegraphics[]{./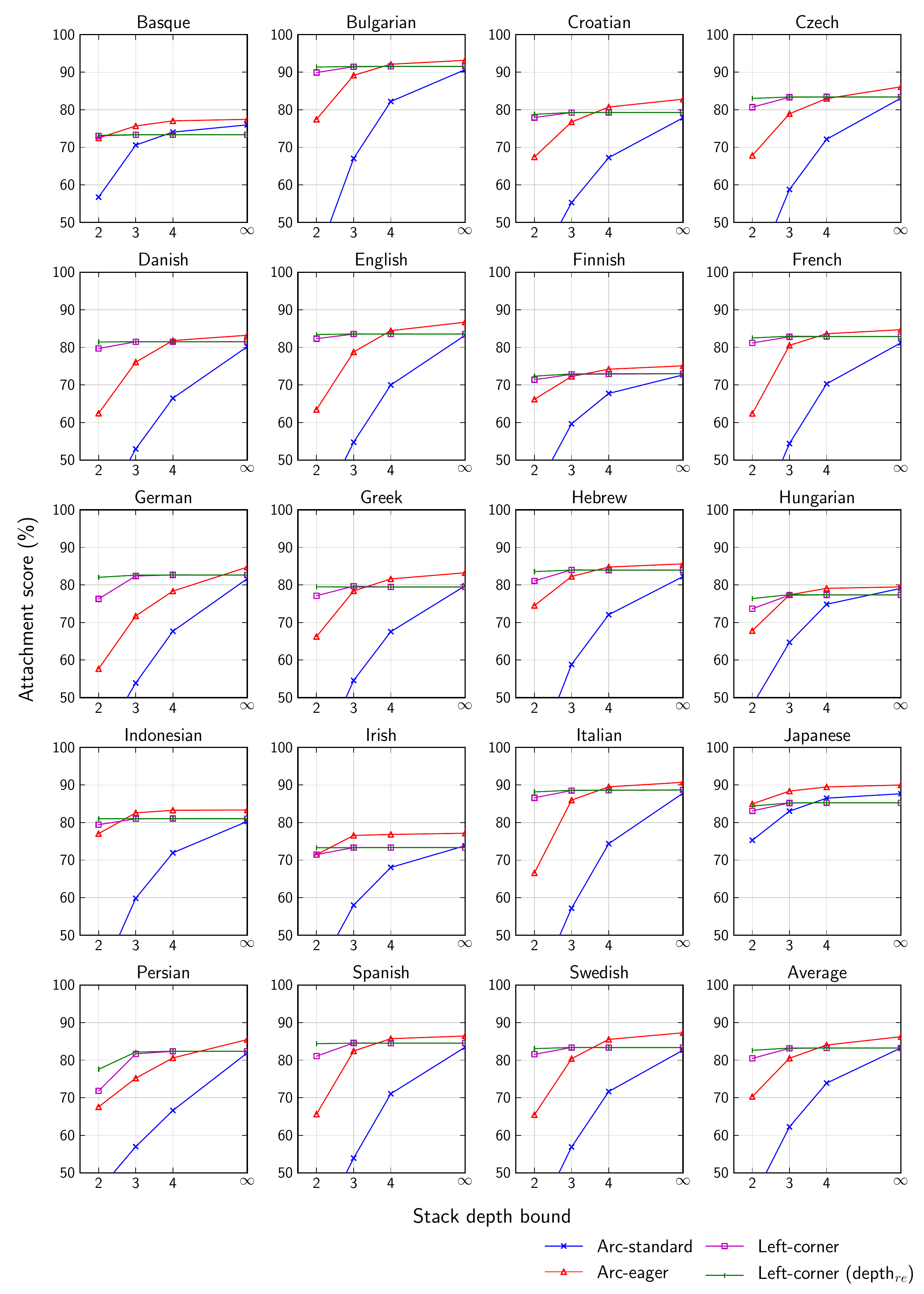}}
\caption{Accuracy vs. stack depth bound in UD.}
 \label{fig:ud_depth_to_accuracy}
\end{figure}

We further examine the behavior of the left-corner parser by relaxing the definition of center-embedding which we discussed in Section \ref{sec:trans:relax}.
Figure \ref{fig:ud_token_depth_to_accuracy} shows the result when we change the definition of depth$_{re}$.
It is intersting to see that compared to Figure \ref{fig:load-comparison-relax-ud},
the number of languages in which this relaxation had a greater impact increases;
 e.g., in Croatian, Czech, Danish, Finnish, Hungarian, Indonesian, Persian, and Swedish, there is about 10\% improvements from the original depth$_{re} \leq 1$ to the relaxed depth$_{re} 1 (3)$ (i.e., when three word constituents are allowed to be embedded).
The reason of this might be in the characteristics of the supervised parsers, which more freely explore the search space (compared to the statistic analysis in Figure \ref{fig:load-comparison-relax-ud}). 


\begin{figure}[p]
\centering
 \resizebox{0.95\textwidth}{!}
 {\includegraphics[]{./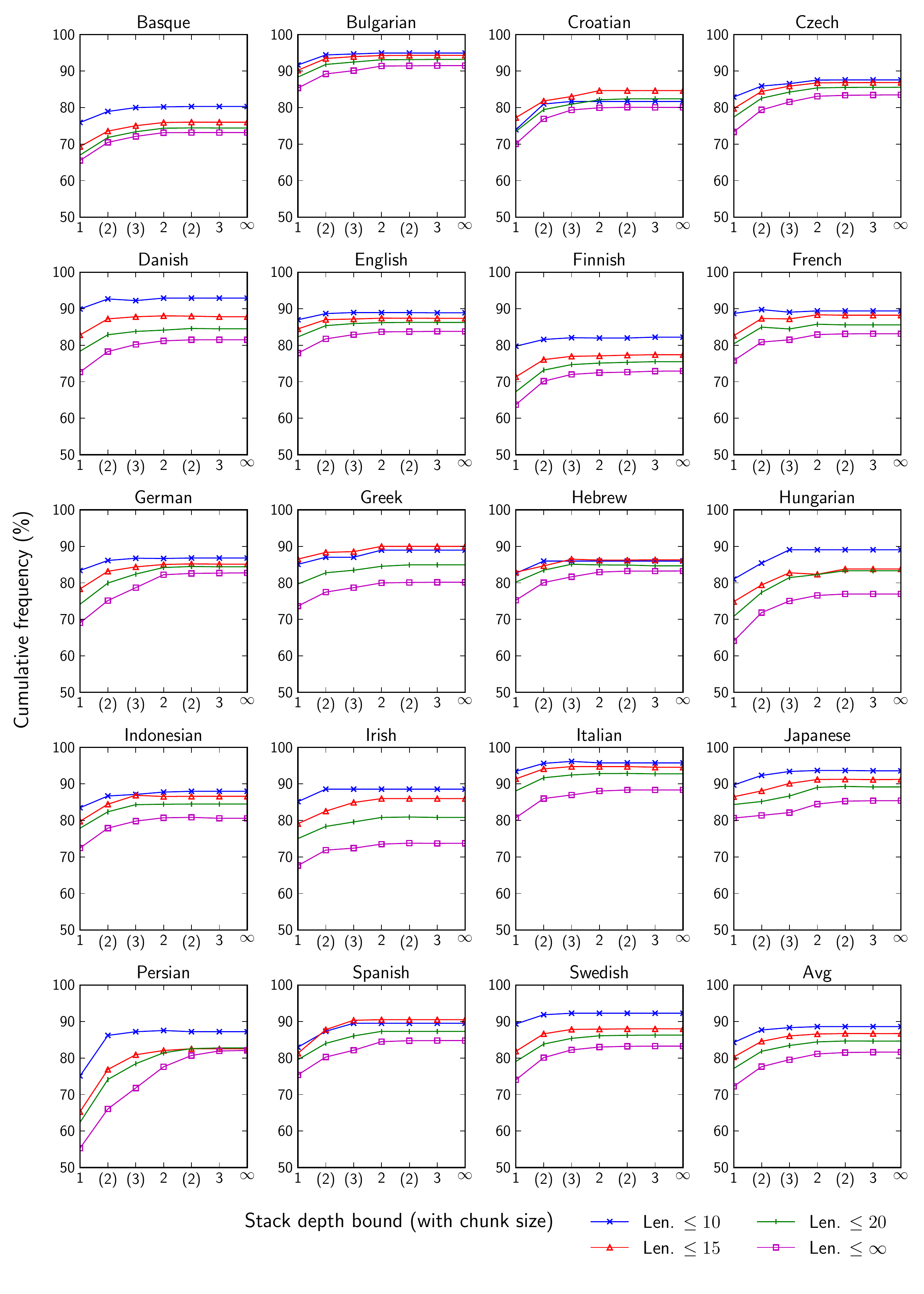}}
\caption{Accuracy vs. stack depth bound with left-corner parsers on UD with different maximum length of test sentences.}
 \label{fig:ud_token_depth_to_accuracy}
\end{figure}

\section{Discussion and Related Work}
\label{sec:relatedwork}

We have presented the left-corner parsing algorithm for dependency structures and showed that our parser demands less stack depth for recognizing most of natural language sentences.
The result also indicates the existence of universal syntactic biases that center-embedded constructions are rare phenomena across languages.
We finally discuss the relevance of the current study to the previous works.

We have reviewed previous works about left-corner parsing (for CFGs) in Section \ref{sec:bg:left-corner} though have little mentioned previous works that study the empirical property of the left-corner parsers.
\newcite{Roark:2001:RPP:933637} is the first attempt of the empirical study.
His idea is instead of modeling left-corner transitions directly as in our parser, incorporating the left-corner strategy into a CFG parser via a left-corner grammar transform \cite{conf/acl/Johnson98}.
This design makes the overall parsing system top-down and makes it possible to compare the pure top-down and the left-corner parsing systems in a unified way.
Note also that as his method is based on \newcite{conf/acl/Johnson98}, the parsing mechanism is basically the same as the left-corner PDA that we introduced as another variant in Section \ref{sec:bg:anothervariant}.
\newcite{journals/coling/SchulerAMS10} examine the empirical coverage result of the left-corner PDA that we formalized in Section \ref{sec:bg:left-corner-pda}, though the experiment is limited on English.

Most of previous left-corner parsing models have been motivated by the study of cognitively plausible parsing models, an interdisciplinary research on psycholinguistics and computational linguistics \cite{keller:2010:Short}.
Though we also evaluated our parser with cognitively motivated limited feature models and got an encouraging result, this is preliminary and we do not claim from this experiment that our parser is cross-linguistically cognitively plausible.
Our parser is able to parse most sentences within a limited stack depth bound.
However, it is skeptical whether there is any connection between the stack of our parser and memory units preserved in human memory.
\newcite{vanschijndel-schuler:2013:NAACL-HLT} calculated several kinds of {\it memory cost} obtained from a configuration of their left-corner parser and discussed which cost is more significant indicator to predict human reading time data, such as the current stack depth and the integration cost in the dependency locality theory \cite{Gibson2000The-dependency-}, which is obtained by calculating the distance between two subtrees at composition.
Discussing cognitive plausibility of a parser requires such kind of careful experimental setup, which is beyond the scope of the current work.

Our main focus in this chapter is rather a syntactic bias exist in language universally.
In this view, our work is more relevant to previous dependency parsing model with a constraint on possible tree structures \cite{eisner2010}.
They studied parsing with a hard constraint on dependency length based on the observation that grammar may favor a construction with shorter dependency lengths \cite{gildea-temperley:2007:ACLMain,DBLP:journals/cogsci/GildeaT10}.
Instead of prohibiting longer dependency lengths, our method prohibits deeper center-embedded structures, and we have shown that this bias is effective to restrict natural language grammar.
The two constraints, length and center-embedding, are often correlated since center-embedding constructions typically lead to longer dependency length.
It is therefore an interesting future topic to explore which bias is more essential for restricting grammar.
This question can be perhaps explored through unsupervised dependency parsing tasks \cite{klein-manning:2004:ACL}, where such kind of light supervision has significant impact on the performance \cite{smith-eisner-2006-acl-sa,marevcek-vzabokrtsky:2012:EMNLP-CoNLL,DBLP:journals/tacl/BiskH13}.

We introduced a dummy node for representing a subtree with an unknown head or dependent.
Recently, Menzel and colleagues \cite{beuck2013,kohn-menzel:2014:ACL} have also studied dependency parsing with a dummy node.
While conceptually similar, the aim of introducing a dummy node is different between our approach and theirs:
We need a dummy node to represent a subtree corresponding to that in Resnik's algorithm, while they introduced it to confirm that every dependency tree on a sentence prefix is fully connected.
This difference leads to a technical difference;
a subtree of their parser can contain more than one dummy node, while we restrict each subtree to containing only one dummy node on a right spine.

\chapter{Grammar Induction with Structural Constraints}
\label{chap:induction}

In the previous chapter, we formulated a left-corner dependency parsing algorithm as a transition system in which its stack size grows only for center-embedded constructions.
Also, we investigated how much the developed parser can capture the syntactic biases found in the manually developed treebanks, and found that very restricted stack depth such as two or one (by allowing small consituents to be embedded) suffices to describe most syntactic constructions across languages.

In this chapter, we will investigate whether the found syntactic bias in the previous chapter would be helpful for the task of {\it unsupervised grammar induction}, where the goal is to learn the model of finding hidden syntactic structures given the surface strings (or part-of-speeches) alone;
see Section \ref{sec:2:unsupervised} for overviews.

There are a number of motivations to consider unsupervised grammar induction, in particular with the {\it universal} syntactic biases as we discussed in Chapter \ref{chap:1}.
Among them our primary motivation is to investigate a good {\it prior} that would be useful for restricting possible tree structures for general natural language sentences (regardless of language).
The structure that we aim to induce is dependency structure;
though this choice mainly stems from computational reasons rather than philosophical ones, i.e., the dependency structure is currently the most feasible structure to be learned, we argue the lesson from the current study would be useful for the problem of inducing other structures including constituent-based representations, e.g., HPSG or CCG.

Another interesting reason to tackle this problem is to understand the mechanism of child language acquisition.
In particular, since the structural constraint that we impose originally is motivated by psycholinguistic observations (Section \ref{sec:bg:psycho}), we can regard the current task as controlled experiments to see whether the (memory) limitation that children may suffer from may in turn facilitate the acquisition of language.
This is however not our primary motivation since there are large gaps between the actual environment in which children acquire language and the current task;
see Section \ref{sec:intro:unsupervised} for the detailed discussion.
We therefore think the current study to be a starting point for the modeling of a more realistic acquisition scenario, such as the joint inference of word categories and syntax.

As in the previous chapter, this chapter starts with the conceptual part, in which the main focus is the learning algorithm with structural constraints, then followed by the empirical part that focuses on experiments.
Our model is basically the dependency model with valence \cite{klein-manning:2004:ACL} that we formalized as a special instance of split bilexical grammar (SBG) in Section \ref{sec:2:dmv}.
We describe how this model can be encoded in a chart parser that simulates left-corner dependency parsing as presented in the previous chapter, which captures the {\it center-embeddedness} of a subderivation at each chart entry.
Intuitively, with this technique we can bias the model to prefer some syntactic patterns, e.g., that do not contain many center-embedding.
We discuss the high level idea and mathematical formalization of this approach in Section \ref{sec:ind:overview} and then present a new chart parsing algorithm that simulates split bilexical grammars in a left-corner parsing strategy \ref{sec:ind:simulate}.
We then empirically evaluate whether such structural constraints would help to learn good parameters for the model (Sections \ref{sec:ind:setup} and \ref{sec:ind:exp}).
As in the previous chapter, we study this effect across diverse languages;
the total number of treebanks that we use is 30 across 24 languages.


Our main empirical finding is that the constraint on center-embeddedness brings at least the same or superior effects as the closely related structural bias on dependency {\it length} \cite{smith-eisner-2006-acl-sa}, i.e., the preference for {\it shorter} dependencies.
In particular, we find that our bias often outperforms length-based ones when additional syntactic cues are given to the model, such as the one that the sentence root should be a verb or a noun.
For example, when such a constraint on the root POS tag is given, our method that penalizes center-embeddedness achieves an attachment score of 62.0 on Google universal treebanks (averaged across 10 languages, evaluated on length $\leq 10$ sentences), which is superior to the model with the bias on dependency length (58.6) and the model utilizing a larger number of hand crafted rules between POS tags (56.0) \cite{naseem-EtAl:2010:EMNLP}.


\section{Approach Overview}
\label{sec:ind:overview}


\subsection{Structure-Constrained Models}

Every model presented in this section can be formalized as the following joint model over a sentence $x$ and a parse tree $z$:
\begin{equation}
 p(x,z|\theta) = \frac{p_\textsc{orig}(x,z|\theta) \cdot f(z,\theta)}{Z(\theta)} \label{eqn:ind:joint}
\end{equation}
where $p_\textsc{orig}(x,z|\theta)$ is a (baseline) model, such as DMV.
$f(z,\theta)$ assigns a value between $[0,1]$ for each $z$, i.e., it works as a penalty or a cost, {\it reducing} the original probability depending on $z$.
One such penalty that we consider is prohibiting any trees that contain any center-embedding, which is represented as follows:
\begin{equation}
 f(z,\theta) = \left\{ \begin{array}{cl}
  1& \textrm{if } z \textrm{ contains no center-embedding;} \\
  0& \textrm{otherwise.} \\
                      \end{array}
               \right. \label{eqn:ind:cost}
\end{equation}
In Section \ref{sec:ind:simulate}, we present a way to encode such a penalty term during the CKY-style algorithm.

Though $f(z,\theta)$ works as adding a penalty to each original probability, the distribution $p(x,z|\theta)$ is still normalized;
here $Z(\theta) = \sum_{x,z} p_\textsc{orig}(x,z|\theta) \cdot f(z,\theta)$.

Intuitively, $f(z,\theta)$ models the preferences that the original model $p_\textsc{orig}(x,z|\theta)$ does not explicitly encode.
Note that we do not try to learn $f(z,\theta)$;
every constraint is given as an {\it external} constraint.

Note that this simple approach to combine two models is not entirely new and has been explored several times.
For example, \newcite{pereira-schabes:1992:ACL} present an EM algorithm that relies on partially bracketed information and \newcite{smith-eisner-2006-acl-sa} model $f(z,\theta)$ as the dependency length-based penalty term.
We explore several kinds of $f(z,\theta)$ in our experiments including the existing one, e.g., dependency length, and our new idea, center-embeddedness, examining which kind of structural constraint is most helpful for learning grammars in a cross-linguistic setting.
Below, we discuss the issues on learning of this model.
The main result was previously shown in \newcite{smith-2006} though we summarize it here in our own terms defined in Chapter \ref{chap:bg} for completeness.

\subsection{Learning Structure-Constrained Models}

At first glance, the normalization constant $Z(\theta)$ in Eq. \ref{eqn:ind:joint} seems to prevent the use of the EM algorithm for parameter estimation for this model.
We show here that in practice we need not care about this constant and the resulting EM algorithm will increase the likelihood of the model of Eq. \ref{eqn:ind:joint}.

Recall that the EM algorithm collects expected counts for each rule $r$, $e(r|\theta)$ at each E-step and then normalizes the counts to update the parameters.
We decomposed $e(r|\theta)$ into the counts for each span of each sentence as follows:
\begin{equation}
 e(r|\theta) = \sum_{x \in \mathbf x} e_x(r|\theta) = \sum_{x \in \mathbf x} \sum_{0 \leq i \leq k \leq j \leq n_x} e_x(z_{i,k,j,r}|\theta).
\end{equation}

We now show that correct $e_x(z_{i,k,j,r}|\theta)$ under the model (Eq. \ref{eqn:ind:joint}) is obtained without a need to compute $Z(\theta)$.
Let $q(x,z|\theta) = p_{\textsc{orig}}(x,z|\theta) \cdot f(z,\theta)$ be an {\it unnormalized} (i.e., deficient) distribution over $x$ and $z$.
Then $p(x,z|\theta) = q(x,z|\theta)/Z(\theta)$.
Note that we can use the inside-outside algorithm to collect counts under the deficient distribution $q(x,z|\theta)$.
For example, we can obtain the (deficient) sentence marginal probability $q(x|\theta) = \sum_{z\in \mathcal{Z}(x)} q(x,z|\theta)$ by modifying rule probabilities appropriately.
More specifically, in the case of eliminating center-embedding, our chart may record the current stack depth at each chart entry that corresponds to some subderivation, and then assign zero probability to a chart entry if the stack depth exceeds some threshold.

We can represent $e_x(z_{i,k,j,r}|\theta)$ using $q(x,z|\theta)$ instead of $p(x,z|\theta)$, which is more complex.
The key observation is that each $e_x(z_{i,k,j,r}|\theta)$ is represented as the ratio of two quantities:
\begin{equation}
 e_x(z_{i,k,j,r}|\theta) = \frac{p(z_{i,k,j,r} = 1, x | \theta)}{p(x|\theta)}. \label{eqn:ind:expected}
\end{equation}
Calculating these quantities is hard due to the normalization constant.
However, as we show below, the normalization constant is canceled in the course of computing the ratio, meaning that the expected counts (under the correct distribution) are obtained with the inside-outside algorithm under the unnormalized distribution $q(x,z|\theta)$.
Let us first consider the denominator in Eq. \ref{eqn:ind:expected}, which can be rewritten as follows:
\begin{equation}
 p(x|\theta) = \sum_{z\in \mathcal Z(x)} p(x,z|\theta) = \sum_{z\in \mathcal Z(x)} \frac{q(x,z|\theta)}{Z(\theta)} = \frac{q(x|\theta)}{Z(\theta)}.
\end{equation}
For the numerator, we first observe that
\begin{align}
 p(z_{i,k,j,r} = 1, x |\theta) &= \sum_{z\in\mathcal{Z}(x)} p(z_{i,k,j,r} = 1, z, x |\theta) \\
 &= \sum_{z\in\mathcal{Z}(x)} p(z, x |\theta) p(z_{i,k,j,r} = 1|z) \\
 &= \sum_{z\in\mathcal{Z}(x)} p(z, x |\theta) \mathbb{I} (z_{i,j,k,r} \in z) \\
 &=  \frac{\sum_{z\in\mathcal{Z}(x)} q(x,z|\theta) \mathbb{I} (z_{i,j,k,r} \in z)}{Z(\theta)}, \label{eq:ind:suff}
\end{align}
where $\mathbb{I}(c)$ is an identity function that returns $1$ if $c$ is satisfied and $0$ otherwise.
The numerator in Eq. \ref{eq:ind:suff} is the value that the inside-outside algorithm calculates for each $z_{i,j,k,r}$ (with Eq. \ref{eqn:2:io}), which we write as $q(z_{i,k,j,r} = 1, x | \theta)$.
Thus, we can skip computing the normalization constant in Eq. \ref{eqn:ind:expected} by replacing the quantities with the ones under $q(x,z|\theta)$ as follows:
\begin{equation}
 e_x(z_{i,k,j,r}|\theta) = \frac{p(z_{i,k,j,r} = 1, x | \theta)}{p(x|\theta)} = \frac{q(z_{i,k,j,r} = 1, x | \theta)}{q(x|\theta)}.
\end{equation}
The result indicates that by running the inside-outside algorithm {\it as if} our model is deficient, using $q(x,z|\theta)$ in place of $p(z,x|\theta)$, we can obtain the model with higher likelihood of $p(x,z|\theta)$ (Eq. \ref{eqn:ind:joint}).
Note that the viterbi parse can also be obtained using $q(x,z|\theta)$ since $\arg\max_{z}q(x,z|\theta) = \arg\max_{z} p(x,z|\theta)$ holds.


\section{Simulating split-bilexical grammars with a left-corner strategy}
\label{sec:ind:simulate}
Here we present the main theoretical result in this chapter.
In Section \ref{sec:2:learning} we showed that the parameters of the very general model for dependency trees called split-bilexical grammars (SBGs) can be learned using the EM algorithm with CKY-style inside-outside calculation.
Also, we formalized the left-corner dependency parsing algorithm as a transition system in Chapter \ref{chap:transition}, which enables capturing the {\it center-embeddedness} of the current derivation via {\it stack depth}.
We combine these two parsing techniques in a non-trivial way, and obtain a new chart parsing method for split-bilexical grammars that enables us to calculate center-embeddedness of each subderivation at each chart entry.


\REVISE{
We describe the algorithm based on the inference rules with items (as the triangles in Section \ref{sec:2:sbg}).
The basic idea is that we {\it memoize} the subderivations of left-corner parsing, which share the same information and look the same under the model.
Basically, each chart item is a stack element;
for example, an item \tikz[baseline=-20pt]{\headtriangleinlinebottomfull{$i$}{$h$}{$j$}} abstracts (complete) subtrees on the stack headed by $h$ spanning $i$ to $j$.
Each inference rule then roughly corresponds to an action of the transition system.\footnote{We also introduce extra rules, which are needed for encoding parameterization of SBGs, or achieving head-splitting as we describe later.}
Thus, if we extract one derivation from the chart, which is a set of inference rules, it can be mapped to a particular sequence of transition actions.
On this chart, each item is further decorated with the current stack depth, which is the key to capture the center-embeddedness efficiently during dynamic programming.


A particular challenge for efficient tabulation is similar to the one that we discussed in Section \ref{sec:2:sbg};
that is, we need to eliminate the spurious ambiguity for correct parameter estimation and for reducing time complexity.
In Section \ref{sec:ind:algorithm}, we describe how this can be achieved by applying the idea of head-splitting into the tabulation of left-corner parsing.
In the following two sections we discuss some preliminaries for developing the algorithm, i.e., how to handle dummy nodes on the chart (Section \ref{sec:ind:handling}) and a note on parameterization of SBGs with the left-corner strategy (Section \ref{sec:bg:head-outward}).
}

\subsection{Handling of dummy nodes}
\label{sec:ind:handling}
An obstacle when designing chart items abstracting many derivations is the existence of predicted nodes, which were previously abstracted with {\it dummy} nodes in the transition system.
Unfortunately, we cannot use the same device in our dynamic programming algorithm because it leads to very inefficient asymptotic runtime.
Figure \ref{fig:ind:dummy-is-bad} explains the reason for this inefficiency.
In the transition system, we postponed scoring of attachment preferences between a dummy token and its left dependents (e.g., Figure \ref{subfig:ind:dummy1}) until {\it filling} the dummy node with an actual token by an {\sc Insert} action;
this mechanism makes the algorithm fully incremental, though it requires remembering every left dependent token (see {\sc Insert} in Figure \ref{fig:actions}) at each step.
This tracking of child information is too expensive for our dynamic programming algorithm.
To solve this problem, we instead fill a dummy node with an actual token when the dummy is first introduced (not when {\sc Insert} is performed).
This is impossible in the setting of a transition system since we do not observe the unread tokens in the portion of the sentence remaining in the buffer.
Figure \ref{subfig:ind:pred-rect} shows an example of an item used in our dynamic programming, which does not abstracts the predicted token as a dummy node, but abstracts the construction of child subtrees spanning $i$ to $j$ below the predicted node $p$.
An arc from $p$ indicates that at least one token between $i$ and $j$ is a dependent of $p$, although the number of dependents as well as the positions are unspecified.

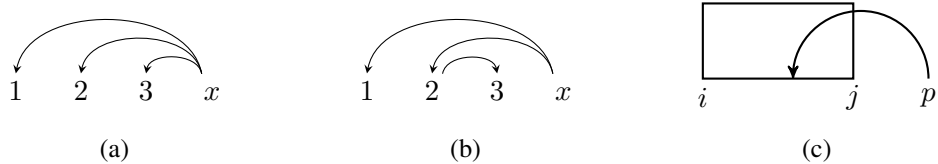
\begin{figure}[t]
 \centering
 \begin{minipage}[b]{.3\linewidth}
  \centering
  \begin{dependency}[theme=simple]
   \begin{deptext}[column sep=0.5cm]
    $1$ \& $2$ \& $3$ \& $x$ \\
   \end{deptext}
   \depedge{4}{1}{}
   \depedge{4}{2}{}
   \depedge{4}{3}{}
  \end{dependency}
  \subcaption{}\label{subfig:ind:dummy1}
 \end{minipage}
 \begin{minipage}[b]{.3\linewidth}
  \centering
  \begin{dependency}[theme=simple]
   \begin{deptext}[column sep=0.5cm]
    $1$ \& $2$ \& $3$ \& $x$ \\
   \end{deptext}
   \depedge{4}{1}{}
   \depedge{4}{2}{}
   \depedge{2}{3}{}
  \end{dependency}
  \subcaption{}
 \end{minipage}
 \begin{minipage}[b]{.3\linewidth}
  \centering
  \begin{tikzpicture}[thick, level distance=0.8cm, >=stealth']
   \draw (0,0) rectangle (2.0, -1.0);
   \draw[->] (3, -1.0) arc (0:180:0.9cm);
   \draw (0, -1.25) node {$i$} ++(2.0, 0) node {$j$} +(1.0, 0) node {$p$};
  \end{tikzpicture}
  \subcaption{}\label{subfig:ind:pred-rect}
 \end{minipage}
 \caption{Dummy nodes ($x$ in (a) and (b)) in the transition system cannot be used in our transition system because with this method, we have to remember every child token of the dummy node to calculate attachment scores at the point when the dummy is filled with an actual token, which leads to an exponential complexity.
 We instead abstract trees in a different way as depicted in (c) by not abstracting the predicted node $p$ but filling with the actual word ($p$ points to some index in a sentence such that $j < p \leq n$).
 If $i=1, j=3$, this representation abstracts both tree forms of (a) and (b) with some fixed $x$ (corresponding to $p$).
 }
 \label{fig:ind:dummy-is-bad}
\end{figure}

\subsection{Head-outward and head-inward}
\label{sec:bg:head-outward}
The generative process of SBGs described in Section \ref{sec:2:sbg} is {\it head-outward}, in that its state transition $q_1 \xmapsto{~a~} q_2$ is defined as a process of {\it expanding} the tree by generating a new symbol $a$ which is the most distant from its head when the current state is $q_1$.
The process is called {\it head-inward} (e.g., \newcite{alshawi:1996:ACL}) if it is reversed, i.e., when the closest dependent of a head on each side is generated last.
Note that the generation process of left-corner parsing cannot be described fully head-outward.
In particular, its generation of left dependents of a head is inherently head-inward since a parser builds a tree from left to right.
For example, the tree of Figure \ref{subfig:ind:dummy1} is constructed by first doing {\sc LeftPred} when $1$ is recognized, and then attaching $2$ and $3$ in order.
Fortunately, these two processes, head-inward and head-outward, can generally be interchanged by reversing transitions \cite{Eisner:1999:EPB:1034678.1034748}.
In the algorithm described below, we model its left automaton $L_a$ as a head-inward process while right automaton $R_a$ as a head-outward process.
Specifically, that means if we write $q_1 \xmapsto{~a'~} q_2 \in L_a$, the associated weight for this transition is $p(a'|q_2)$ instead of $p(a'|q_1)$.
We also do not modify the meaning of sets $\textit{final}(L_a)$ and $\textit{init}(L_a)$; i.e., the left state is initialized with $q \in \textit{final}(L_a)$ and finishes with $q \in \textit{init}(L_a)$.


\subsection{Algorithm}
\label{sec:ind:algorithm}

\begin{figure}[p]
 \begin{tikzpicture}[thick, level distance=0.8cm, >=stealth']
  \begin{scope}[xshift=0cm,yshift=0cm]
   \node at (0, 0) [anchor=west] {\sc Shift-Left:};
   \node at (1.25, -0.5) {$q \in \textit{final}(L_h)$};
   \draw (0.1, -0.75) -- +(2.5, 0) node[right] {$1 \leq h \leq n$};
   \begin{scope}[xshift=1.5cm, yshift=-1.3cm]
    \lefttriangle{$q:d$}{$h$}{$h$};
   \end{scope}
  \end{scope}
  
  \begin{scope}[xshift=0cm,yshift=-4cm]
   \node at (0, 0) [anchor=west] {\sc Finish-Left:};
   \begin{scope}[xshift=1.0cm, yshift=-0.75cm]
    \lefttriangle{$q:d$}{$i$}{$h$};
   \end{scope}
   \node at (1.5, -1.25) [anchor=west] {$q \in \textit{init}(L_h)$};
   \draw (0.1, -1.75) -- +(4.1, 0);
   \begin{scope}[xshift=2.0cm, yshift=-2.3cm]
    \lefttriangle{$I:d$}{$i$}{$h$};
   \end{scope}
  \end{scope}
  
  \begin{scope}[xshift=5cm,yshift=0cm]
   \node at (0, 0) [anchor=west] {\sc Shift-Right:};
   \node at (1.25, -0.5) {$q \in \textit{init}(R_h)$};
   \draw (0.1, -0.75) -- +(2.5, 0) node[right] {$1 \leq h \leq n$};
   \begin{scope}[xshift=0.75cm, yshift=-1.3cm]
    \righttriangle{$q:d$}{$h$}{$h$};
   \end{scope}
  \end{scope}

  \begin{scope}[xshift=5cm,yshift=-4.cm]
   \node at (0, 0) [anchor=west] {\sc Finish-Right:};
   \begin{scope}[xshift=0.5cm, yshift=-0.75cm]
    \righttriangle{$q:d$}{$h$}{$i$}
   \end{scope}
   \node at (1.5, -1.25) [anchor=west] {$q \in \textit{final}(R_h)$};
   \draw (0.1, -1.75) -- +(4.1, 0);
   \begin{scope}[xshift=1.25cm, yshift=-2.3cm]
    \righttriangle{$F:d$}{$h$}{$i$}
   \end{scope}
  \end{scope}

  \begin{scope}[xshift=10cm,yshift=0cm]
   \node at (0, 0) [anchor=west] {\sc Insert-Left:};
   \begin{scope}[xshift=0.25cm, yshift=-0.5cm]
    \predrectangle{$i$}{$p-1$}{$p$}{$q:d$}
   \end{scope}
   \node at (2.75, -1.0) [anchor=west] {$q \in \textit{init}(L_p)$};
   \draw (0.1, -1.5) -- +(5.1, 0);
   \begin{scope}[xshift=2.5cm, yshift=-2.05cm]
    \lefttriangle{$I:d$}{$i$}{$p$}
   \end{scope}
  \end{scope}

  \begin{scope}[xshift=10cm,yshift=-4cm]
   \node at (0, 0) [anchor=west] {\sc Insert-Right:};
   \begin{scope}[xshift=0.5cm, yshift=-0.75cm]
    \predright{$r:d$}{$h$}{$p-1$}{$p$}{$q$}
   \end{scope}
   \node at (2.75, -1.25) [anchor=west] {$q \in \textit{init}(L_p)$};
   \draw (0.1, -1.75) -- +(5.1, 0);
   \begin{scope}[xshift=2.0cm, yshift=-2.3cm]
    \righttriangle{$r:d$}{$h$}{$p$}
   \end{scope}
  \end{scope}

  \begin{scope}[xshift=0cm,yshift=-8cm]
   \node at (0, 0) [anchor=west] {\sc LeftPred:};
   \begin{scope}[xshift=1.0cm, yshift=-0.75cm]
    \headtriangle{$:d$}{$i$}{$h$}{$j$}
   \end{scope}
   \node at (2.5, -0.75) [anchor=west] {$r \in \textit{final}(L_p)$};
   \node at (2.5, -1.25) [anchor=west] {$r \xmapsto{~h~} q \in L_p$};
   \draw (0.1, -1.75) -- +(5.1, 0);
   \begin{scope}[xshift=1.5cm, yshift=-2cm]
    \predrectangle{$i$}{$j$}{$p$}{$q:d$}
   \end{scope}
  \end{scope}

  \begin{scope}[xshift=6cm,yshift=-8cm]
   \node at (0, 0) [anchor=west] {\sc RightPred:};
   \begin{scope}[xshift=1.0cm, yshift=-0.75cm]
    \righttriangle{$r:d$}{$h$}{$i$}
   \end{scope}
   \node at (2.5, -0.75) [anchor=west] {$q \in \textit{final}(L_p)$};
   \node at (2.5, -1.25) [anchor=west] {$r \xmapsto{~p~} r' \in R_h$};
   \draw (0.1, -1.75) -- +(5.1, 0);
   \begin{scope}[xshift=1.5cm, yshift=-2.25cm]
    \predright{$r':d$}{$h$}{$i$}{$p$}{$q$}
   \end{scope}
  \end{scope}

  \begin{scope}[xshift=0cm,yshift=-12cm]
   \node at (0, 0) [anchor=west] {\sc Combine:};
   \begin{scope}[xshift=1.0cm, yshift=-0.75cm]
    \lefttriangle{$I:d$}{$i$}{$h$}
   \end{scope}
   \begin{scope}[xshift=2.5cm, yshift=-0.75cm]
    \righttriangle{$F:d$}{$h$}{$j$}
   \end{scope}
   \draw (0.2, -1.75) -- +(3.1, 0);
   \begin{scope}[xshift=1.5cm, yshift=-2.25cm]
    \headtriangle{$:d$}{$i$}{$h$}{$j$}
   \end{scope}
  \end{scope}

  \begin{scope}[xshift=5cm,yshift=-12cm]
   \node at (0, 0) [anchor=west] {\sc Accept:};
   \begin{scope}[xshift=1.25cm, yshift=-0.75cm]
    \lefttriangle{$I:1$}{}{};
    \draw[anchor=mid] (-0.65, -0.75) node {$1$} +(1.15, 0) node {$n+1$};
    \end{scope}
   \draw (0.2, -1.75) -- +(2.1, 0);
   \node at (1.25, -2.1) {\textit{accept}};
  \end{scope}
  
 \end{tikzpicture}
 \caption{An algorithm for parsing SBGs with a left-corner strategy in $O(n^4)$ given a sentence of length $n$, except the composition rules which are summarized in Figure \ref{fig:ind:ded-comp}.
 The $n+1$-th token is a dummy root token $\$$, which only has one left dependent (sentence root).
 $i,j,h,p$ are indices of tokens.
 The index of a head which is still collecting its dependents is decorated with a state (e.g., $q$).
 $L_h$ and $R_h$ are left and right FSAs of SBGs given a head index $h$, respectively;
 we reverse the proces of $L_h$ to start with $q\in\textit{final}(L_h)$ and finish with $q\in\textit{init}(L_h)$ (see the body).
 Each item is also decorated with depth $d$ that corresponds to the stack depth incurred when building the corresponding tree with left-corner parsing.
 Since an item with larger depth is only required for composition rules, the depth is unchanged with the rules above, except {\sc Shift-*}, which corresponds to {\sc Shift} transition and can be instantiated with arbitrary depth.
 Note that {\sc Accept} is only applicable for an item with depth 1, which guarantees that the successful parsing process remains a single tree on the stack.
 Each item as well as a statement about a state (e.g., $r\in \textit{final}(L_p)$) has a weight and the weight of a consequence item is obtained by the product of the weights of its antecedent items.
 }
 \label{fig:ind:ded1}
\end{figure}
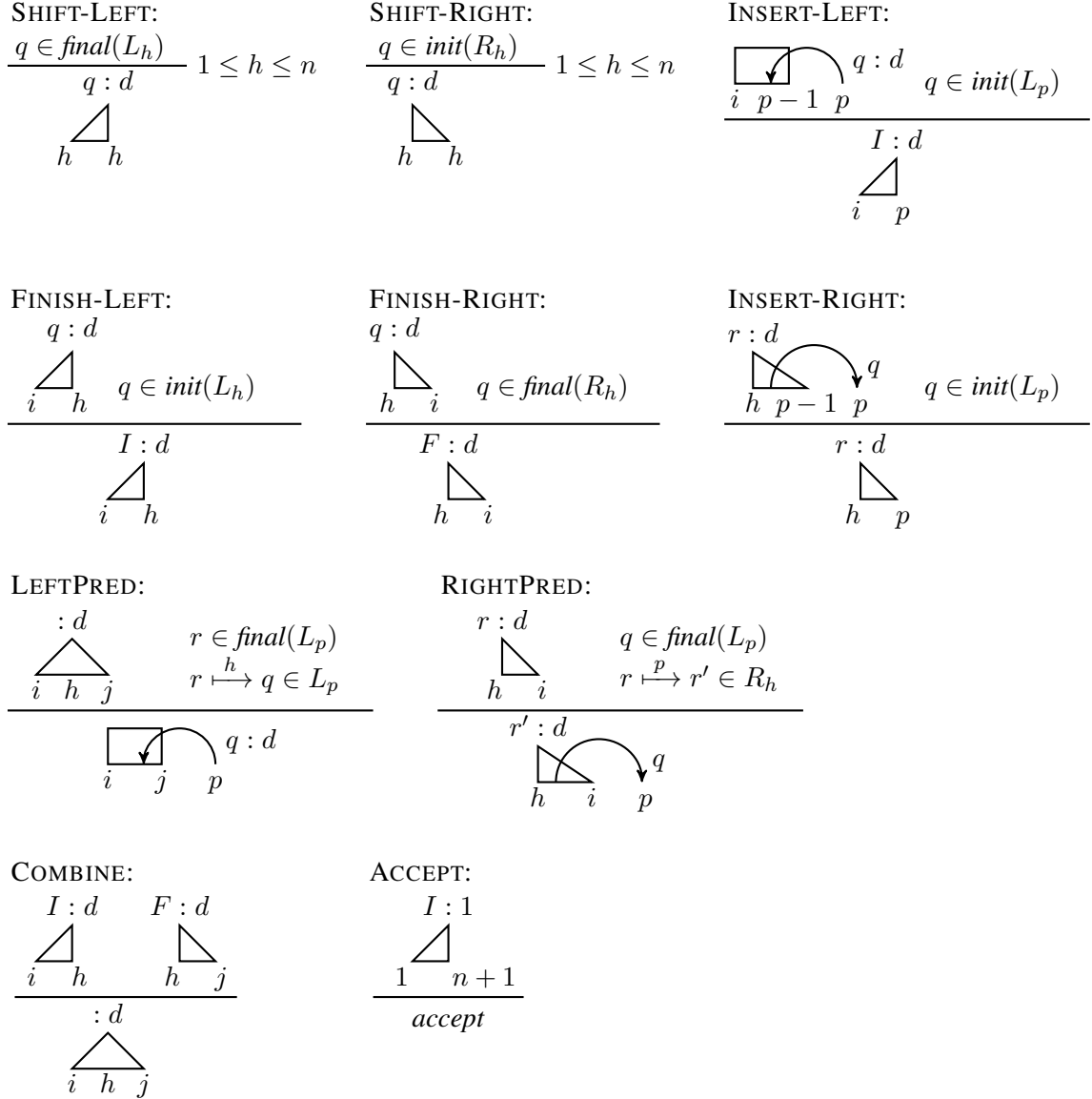

\begin{figure}[p]
 \begin{tikzpicture}[thick, level distance=0.8cm, >=stealth']
  \begin{scope}[xshift=0cm,yshift=0cm]
   \node at (0, 0) [anchor=west] {\sc LeftComp-L-1:};
   \begin{scope}[xshift=0.25cm, yshift=-0.5cm]
    \predrectangle{$i$}{$j$}{$p$}{$q:d$}
   \end{scope}
   \begin{scope}[xshift=4cm, yshift=-0.5cm]
    \lefttriangle{$I:d$}{$j+1$}{$h$}
   \end{scope}
   \draw (0.1, -1.5) -- +(4.4, 0);
   \node at (4.8, -1.25)[anchor=west,text width=3cm] {
   \begin{equation*}
    b= \left\{ \begin{array}{cl}
       1& \textrm{if } h-(j+1) \geq 1 \\
       0& \textrm{otherwise.} \\
              \end{array}
       \right.
   \end{equation*}
   };
   \begin{scope}[xshift=1.75cm, yshift=-2.05cm]
    \halfrectangle{$i$}{$h$}{$p$}{$q:d$}
   \end{scope}
  \end{scope}
  
  \begin{scope}[xshift=0cm,yshift=-4cm]
   \node at (0, 0) [anchor=west] {\sc LeftComp-L-2:};
   \begin{scope}[xshift=0.25cm, yshift=-0.75cm]
    \halfrectangle{$i$}{$h$}{$p$}{$q:d$}
   \end{scope}
   \begin{scope}[xshift=3.5cm, yshift=-0.75cm]
    \righttriangle{$F:d'$}{$h$}{$j$}
   \end{scope}
   \node [anchor=west] at (4.5, -1.25) {$q \xmapsto{~h~} q' \in L_p$};
   \draw (0.1, -1.75) -- +(6.6, 0);
   \node at (7.0, -1.5)[anchor=west,text width=3cm] {
   \begin{equation*}
    d'= \left\{ \begin{array}{ll}
       d+1& \textrm{if } b=1 \textrm{ or } (j-h) \geq 1 \\
       d& \textrm{otherwise.} \\
              \end{array}
       \right.
   \end{equation*}
   };
   \begin{scope}[xshift=2.5cm, yshift=-2.0cm]
    \predrectangle{$i$}{$j$}{$p$}{$q':d$}
   \end{scope}
  \end{scope}
  
  \begin{scope}[xshift=0cm,yshift=-8cm]
   \node at (0, 0.) [anchor=west] {\sc LeftComp-R-1:};
   \begin{scope}[xshift=0.25cm, yshift=-0.75cm]
    \predright{$r:d$}{$i$}{$j$}{$p$}{$q$}
   \end{scope}
   \begin{scope}[xshift=4cm, yshift=-0.75cm]
    \lefttriangle{$I:d$}{$j+1$}{$h$}
   \end{scope}
   \draw (0.1, -1.75) -- +(4.4, 0);
   \node at (4.8, -1.5)[anchor=west,text width=3cm] {
   \begin{equation*}
    b= \left\{ \begin{array}{ll}
       1& \textrm{if } h-(j+1) \geq 1 \\
       0& \textrm{otherwise.} \\
              \end{array}
       \right.
   \end{equation*}
   };
   \begin{scope}[xshift=1.75cm, yshift=-2.3cm]
    \halfright{$r:d$}{$i$}{$h$}{$p$}{$q$}
   \end{scope}
  \end{scope}

  \begin{scope}[xshift=0cm,yshift=-12cm]
   \node at (0, 0) [anchor=west] {\sc LeftComp-R-2:};
   \begin{scope}[xshift=0.25cm, yshift=-0.75cm]
    \halfright{$r:d$}{$i$}{$h$}{$p$}{$q$}
   \end{scope}
   \begin{scope}[xshift=3.5cm, yshift=-0.75cm]
    \righttriangle{$F:d'$}{$h$}{$j$}
   \end{scope}
   \node [anchor=west] at (4.5, -1.25) {$q \xmapsto{~h~} q' \in L_p$};
   \draw (0.1, -1.75) -- +(6.6, 0);
   \node at (7.0, -1.5)[anchor=west,text width=3cm] {
   \begin{equation*}
    d'= \left\{ \begin{array}{ll}
       d+1& \textrm{if } b=1 \textrm{ or } (j-h) \geq 1 \\
       d& \textrm{otherwise.} \\
              \end{array}
       \right.
   \end{equation*}
   };
   \begin{scope}[xshift=2.5cm, yshift=-2.3cm]
    \predright{$r:d$}{$i$}{$j$}{$p$}{$q'$}
   \end{scope}
  \end{scope}

  \begin{scope}[xshift=0cm,yshift=-16cm]
   \node at (0, 0) [anchor=west] {\sc RightComp:};
   \begin{scope}[xshift=0.25cm, yshift=-0.75cm]
    \predright{$r:d$}{$i$}{$h-1$}{$h$}{$q$}
   \end{scope}
   \begin{scope}[xshift=3.5cm, yshift=-0.75cm]
    \righttriangle{$q':d'$}{$h$}{$j$}
   \end{scope}
   \node [anchor=west] at (4.5, -0.15) {$q \in \textit{init}(L_h)$};
   \node [anchor=west] at (4.5, -0.75) {$q' \xmapsto{~p~} q'' \in \textit{final}(R_h)$};
   \node [anchor=west] at (4.5, -1.35) {$s \in \textit{final}(L_p)$};
   \draw (0.1, -1.75) -- +(6.6, 0);
   \node at (7.0, -1.5)[anchor=west,text width=3cm] {
   \begin{equation*}
    d'= \left\{ \begin{array}{ll}
       d+1& \textrm{if } (j-h) \geq 1 \\
       d& \textrm{otherwise.} \\
              \end{array}
       \right.
   \end{equation*}
   };
   \begin{scope}[xshift=2.5cm, yshift=-2.3cm]
    \predright{$r:d$}{$i$}{$j$}{$p$}{$s$}
   \end{scope}
  \end{scope}
  
 \end{tikzpicture}
 \caption{The composition rules that are not listed in Figure \ref{fig:ind:ded1}.
 {\sc LeftComp} is devided into two cases, {\sc LeftComp-L-*} and {\sc LeftComp-R-*} depending on the position of the dummy (predicted) node on the left antecedent item (corresponding to the second top element on the stack).
 They are further divided into two processes, {\sc 1} and {\sc 2} for achieving head-splitting.
 $b$ is an additional annotation on an intermediate item for correct depth computation in {\sc LeftComp}.
 }
 \label{fig:ind:ded-comp}
\end{figure}

Figures \ref{fig:ind:ded1} and \ref{fig:ind:ded-comp} describe the algorithm that parses SBGs with the left-corner parsing strategy.
Each inference rule can be basically mapped to a particular action of the transition, though the mapping is sometimes not one-to-one.
For example {\sc Shift} action is divided into two cases, {\sc Left} and {\sc Right}, for achieving head-splitting.
Some actions, e.g., {\sc Insert} ({\sc Insert-Left} and {\sc Insert-Right}) and {\sc LeftComp} ({\sc LeftComp-L-*} and {\sc LeftComp-R-*}) are further divided into two cases depending on tree structure of a stack element, i.e., whether a predicted node (a dummy) is a head of the stack element or some right dependent.







Each chart item preserves the current stack depth $d$. 
The algorithm only accepts an item spanning the whole sentence (including the dummy root symbol $\$$ at the end of the sentence) with the stack depth one and this condition certifies that the derivation can be converted to a valid sequence of transitions.
When our interest is to eliminate derivations that contain the specific depth of center-embedding, it can be achieved by assigning zero weight to every chart cell in which the depth exceeds the threshold.

See {\sc LeftComp-L-1} and {\sc LeftComp-L-2} ({\sc LeftComp-R-*} is described later);
these are the points where deeper stack depth might be detected.
These two rules are the result of decomposing a single {\sc LeftComp} action in the transition system into the {\it left phase}, which collects only left half constituent given a head $h$, and {\it right phase}, which collects the remaining.
Figure \ref{fig:ind:split-leftcomp} describes how this decomposition looks like in the transition system.
As shown in Figure \ref{subfig:ind:left-comp-after-decomp}, we imagine that the top subtree on the stack is divided into a left and right constituents.\footnote{
This assumption does not change the incurred stack depth at each step since in the left-corner transition system, right half dependents of a head are collected only after its left half construction is finished.
Splitting a head as in Figure \ref{subfig:ind:left-comp-after-decomp} means that we treat these left and right parsing processes independently.
}
In Figure \ref{fig:ind:split-leftcomp} we number each subtree on the stack from left to right.
Note that then the number of the rightmost (top) element on the stack corresponds to the stack depth of the configuration.
This value corresponds to the depth annotated on each item in the algorithm, such as $d'$ in \hspace{-8pt}\tikz[anchor=mid]{\righttriangleinline{$F:d'$}} appeared as the right antecedent item of {\sc LeftComp-L-2} in Figure \ref{fig:ind:ded-comp}.
Then, since the left antecedent item of the same rule, i.e., \tikz[anchor=mid]{\halfrectangleinline}, corresponds to the second top element on the stack, its depth $d$ is generally smaller by one, i.e., $d=d'-1$.

\begin{figure}[t]
 \centering
 \begin{minipage}{0.45\textwidth}
  \centering
  \begin{tikzpicture}[thick,edge from parent/.style={draw,->},sibling distance=15pt,level distance=20pt]
   \node at (-0.25, 0) {Stack:};
   \draw (0, -0.5) -- ++(3, 0) -- ++(0, -0.5) -- ++(-3, 0);
   \node at (0.75, -0.75) {$x$}
   child { node[anchor=mid] {$a$} }
   child[missing];
   \node at (1.5, -0.75) {$x$}
   child { node[anchor=mid] {$b$} }
   child[missing];
   \node[anchor=mid] at (2.25, -0.75) {$d$}
   child { node {$c$} }
   child { node {$e$} };

   \draw (0.75, -0.25) node {\color{red}{$1$}} ++(0.75, 0) node {\color{red}{$2$}} ++(0.75, 0) node {\color{red}{$3$}};

   \node at (3.8, 0) {Buffer:};
   \draw (5.0, -0.5) -- ++(-1.5, 0) -- ++(0, -0.5) -- ++(1.5, 0);
   \node[anchor=west] at (3.75, -0.75) {$f ~ \cdots$};
  \end{tikzpicture}  
  \subcaption{A configuration of the transition system.}
  \label{subfig:ind:left-comp-transition}
 \end{minipage}
 \begin{minipage}{0.45\textwidth}
  \centering
  \begin{tikzpicture}[thick,edge from parent/.style={draw,->},sibling distance=15pt,level distance=20pt]
   \node at (-0.25, 0) {Stack:};
   \draw (0, -0.5) -- ++(3, 0) -- ++(0, -0.5) -- ++(-3, 0);
   \node at (0.75, -0.75) {$x$}
   child { node[anchor=mid] {$a$} }
   child[missing];
   \node at (1.5, -0.75) {$x$}
   child { node[anchor=mid] {$b$} }
   child[missing];
   \node[anchor=mid] at (2.25, -0.75) {$d$}
   child { node {$c$} }
   child[missing];
   \node[anchor=mid] at (2.5, -0.75) {$d$}
   child[missing]
   child { node {$e$} };

   \draw (0.75, -0.25) node {\color{red}{$1$}} ++(0.75, 0) node {\color{red}{$2$}} ++(0.9, 0) node {\color{red}{$3$}};

   \node at (3.8, 0) {Buffer:};
   \draw (5.0, -0.5) -- ++(-1.5, 0) -- ++(0, -0.5) -- ++(1.5, 0);
   \node[anchor=west] at (3.75, -0.75) {$f ~ \cdots$};
  \end{tikzpicture}
  \subcaption{After head splitting.}
  \label{subfig:ind:left-comp-after-decomp}
 \end{minipage}
 \begin{minipage}{0.45\textwidth}
  \centering
  \begin{tikzpicture}[thick,edge from parent/.style={draw,->},sibling distance=15pt,level distance=20pt]
   \node at (-0.25, 0) {Stack:};
   \draw (0, -0.5) -- ++(3, 0) -- ++(0, -0.5) -- ++(-3, 0);
   \node at (0.5, -0.75) {$x$}
   child { node[anchor=mid] {$a$} }
   child[missing];
   \node at (1.5, -0.75) {$x$}
   child { node[anchor=mid] {$b$} }
   child { node[anchor=mid] {$d$}
    [sibling distance=7pt]
    child { node[anchor=mid] {$c$} }
    child[missing]
   }
   child[missing]
   child[missing];
   \node[anchor=mid] at (2.5, -0.75) {$d$}
   child[missing]
   child { node {$e$} };

   \draw (0.5, -0.25) node {\color{red}{$1$}} ++(1.0, 0) node {\color{red}{$2$}} ++(0.9, 0) node {\color{red}{$3$}};

   \node at (3.8, 0) {Buffer:};
   \draw (5.0, -0.5) -- ++(-1.5, 0) -- ++(0, -0.5) -- ++(1.5, 0);
   \node[anchor=west] at (3.75, -0.75) {$f ~ \cdots$};
  \end{tikzpicture}
  \subcaption{After {\sc LeftComp-L-1} on (b).}
 \end{minipage}
 \begin{minipage}{0.45\textwidth}
  \centering
  \begin{tikzpicture}[thick,edge from parent/.style={draw,->},sibling distance=15pt,level distance=20pt]
   \node at (-0.25, 0) {Stack:};
   \draw (0, -0.5) -- ++(3, 0) -- ++(0, -0.5) -- ++(-3, 0);
   \node at (0.5, -0.75) {$x$}
   child { node[anchor=mid] {$a$} }
   child[missing];
   \node at (1.5, -0.75) {$x$}
   child { node[anchor=mid] {$b$} }
   child { node[anchor=mid] {$d$}
     [sibling distance=9pt]
     child { node[anchor=mid] {$c$} }
     child { node[anchor=mid] {$e$} }
   }
   child[missing]
   child[missing];

   \draw (0.75, -0.25) node {\color{red}{$1$}} ++(0.75, 0) node {\color{red}{$2$}};

   \node at (3.8, 0) {Buffer:};
   \draw (5.0, -0.5) -- ++(-1.5, 0) -- ++(0, -0.5) -- ++(1.5, 0);
   \node[anchor=west] at (3.75, -0.75) {$f ~ \cdots$};
  \end{tikzpicture}
  \subcaption{After {\sc LeftComp-L-2} on (c).}
 \end{minipage}
 \caption{We decompose the {\sc LeftComp} action defined for the transition system into two phases, {\sc LeftComp-L-1} and {\sc LeftComp-L-2}, each of which collects only left or right half constituent of a subtree on the top of the stack.
 A number above each stack element is the stack depth decorated on the corresponding chart item.}
 \label{fig:ind:split-leftcomp}
\end{figure}
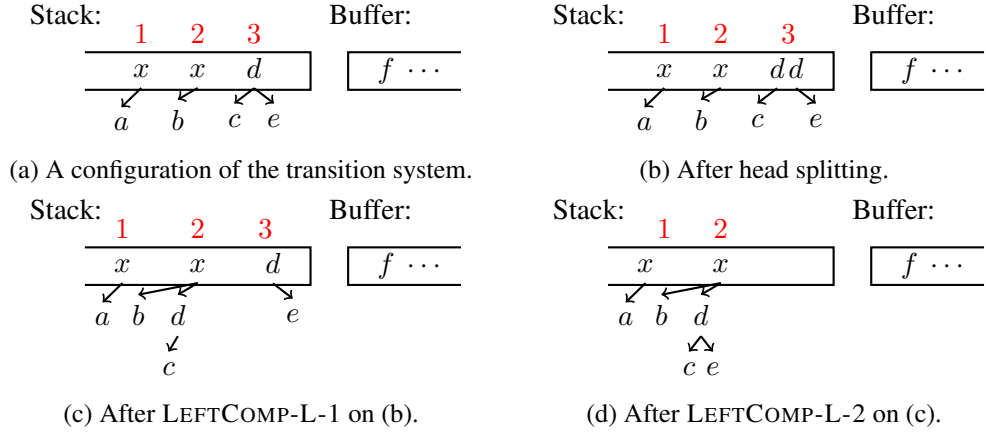

One complicated point with this depth calculation is that larger stack depth should not always be detected during this computation.
Recall the discussion in Section \ref{sec:memorycost} that there are two kinds of stack depth that we called depth$_{re}$ and depth$_{sh}$, in which only depth$_{re}$ correctly captures the center-embeddedness of a construction.
Depth$_{sh}$ is the depth of a configuration after a shift action, on which the top element is {\it complete}, i.e., contains no dummy (predicted) node.
Note that the depth annotated on each subtree in Figure \ref{fig:ind:ded-comp} is in fact depth$_{sh}$, as the right antecedent item of each rule (corresponds to the top element on the stack) does not contain a predicted node.
Since our goal is to capture the center-embeddedness during parsing, we fix this discrepancy with a small trick, which is described as the side condition of {\sc LeftComp-L-1} and {\sc LeftComp-L-2} in Figure \ref{fig:ind:ded-comp}.
The point at which only depth$_{sh}$ increases by 1 is when a shifted token is immediately reduced with a following composition rule.
We treat this process as a special case and do not increase the stack depth when the span length of the subtree that is reduced with a composition rule (right antecedent) is 1.

The remained problem is that because we split each constituent into left and right constituents, we cannot calculate the size of the reduced constituent immediately.
The additional variable $b$ in Figure \ref{fig:ind:ded-comp} is introduced for checking this condition.
$b$ is set to 1 if the left part, i.e., {\sc LeftComp-L-1} collects a constituent with span length greater than one ($h=j+1$ indicates one word constituent).
If $b=1$, regardless of the size of remaining right constituent, the second phase, or the right part {\sc LeftComp-L-2} always increases the stack depth (i.e., $d'=d+1$).
$b=0$ means the size of left constituent is zero;
in this case, the stack depth is increased when the size of the right constituent is greater than one.
In summary, the stack depth of the right antecedent item is increased {\it unless} three indices, $j+1$ and $h$ in {\sc LeftComp-L-1} and $j$ in {\sc LeftComp-L-2} are identical, which occurs when the reduced constituent is a single shifted token.

Finally, we note that we do not modify the depth of the right antecedent item of {\sc LeftComp-L-1}.
This is correct since the consequent item of this rule \tikz[anchor=mid]{\halfrectangleinline}, which waits for a right half constituent of $h$, can only be used as an antecedent of the following {\sc LeftComp-L-2} rule.
The role of {\sc LeftComp-L-1} is just to annotate the head of the reduced constituent and the span length.
Then {\sc LeftComp-L-2} checks the depth condition and calculates the weight associated with {\sc LeftComp} action (i.e., $q\xmapsto{~h~}q' \in L_p$).

The other part of the algorithm can be understood as follows:
\begin{itemize}
 \item {\sc LeftComp-R-*} is almost the same as {\sc LeftComp-L-*}.
       The difference is in the tree shape of the left antecedent item;
       in the {\sc R} case, it is a right half constituent with a predicted node \tikz[anchor=mid]{\predrightinline}, which corresponds to a subtree in which the predicted node is not the head but the tail of the right spine.
       We distinguish these two trees in the algorithm since they often behave very differently as shown in Figure \ref{fig:ind:ded1}.
       Note that \tikz[anchor=mid]{\predrightinline} has two FSA states since it contains two head tokens collecting its dependents (i.e., the head of the tree and the predicted token).
 \item Differently from {\sc LeftComp}, {\sc RightComp} is summarized as a single rule while it seems a bit complicated.
       In the algorithm, {\sc RightComp} can only be performed when the predicted token of \tikz[anchor=mid]{\predrightinline} finishes collecting its left dependents (indicated as the consecutive indices of $h-1$ and $h$).
       See Section \ref{sec:ind:spurious} for the reason of this restriction.
       Another condition for applying this rule is that the right state $q'$ of head $h$ must be a final state after applying transition $q' \xmapsto{~p~} q''$, which collects new rightmost dependent of $h$, i.e., $p$.
       Under these conditions, the rule performs the following parser actions: 1) attach $p$ as a right dependent of $h$; 2) finishes the right FSA of $h$; and 3) start collecting left dependents of $p$ by setting the final state to the left state of it.
 \item Some rules such as {\sc Finish-*} and {\sc Combine} do not exist in the original transition system.
       We introduce these to represent the generative process of SBGs in the left-corner algorithm.
 \item We do not annotate a state on a triangle \tikz[baseline=-10pt]{\headtriangleinline{$:d$}}.
       This is because it can only be deduced by {\sc Combine}, which combines two finished constituents with the same head.
 \item A parse always finishes with a consecutive application of {\sc LeftPred}, {\sc Insert-Left}, and {\sc Accept} after a parse spanning the sentence \tikz[baseline=-3ex,thick]{
       \draw (0, 0) -- ++(-0.4, -0.4) -- ++(0.8, 0) -- cycle;
       \draw[anchor=mid] (-0.55, -0.5) node {$1$} +(1.1, 0) node {$n$};
       } is recognized.
       {\sc LeftPred} predicts the arc between the dummy root token $\$$ at $n+1$-th position and this parse tree, and then {\sc Insert-Left} removes this predicted arc.
       Note that to get a correct parse the left FSA for $\$$ should be modified appropriately for not collecting more than one dependent (the common parameterization of DMV automatically achieve this).
\end{itemize}




\subsection{Spurious ambiguity and stack depth}
\label{sec:ind:spurious}

We have splitted each head token into left and right, which means each derivation with this algorithm has one-to-one correspondence to a dependency tree.
That is, there is no spurious ambiguity and the EM algorithm works correctly (Section \ref{sec:2:when}).
In Section \ref{sec:oracle}, we developed an oracle for a transition system that returns a sequence of gold actions given a sentence and a dependency tree, and found that the presented oracle is {\it optimal} in terms of incurred stack depth.
This optimality essentially comes from the implicit binarization mechanism of the oracle given a dependency tree (Theorem \ref{theorem:trans:binarize}).

The chart algorithm presented above has the same mechanism of binarization, and thus is optimal in terms of incurred stack depth.
Essentially this is due to our design of {\sc LeftComp} and {\sc RightComp} in Figure \ref{fig:ind:ded-comp}.
We do not allow {\sc RightComp} for a rectangle \tikz[anchor=mid]{\predrectangleinline} as in the case of {\sc LeftComp-L-*}.
Also, there is no second phase in {\sc RightComp}, meaning that the reduced constituent (i.e., the top stack element) does not have left children.
We notice that these two conditions are exactly the same as the statement in Lemma \ref{lemma:trans:rightcomp}, which was the key to prove the binarization mechanism in Theorem \ref{theorem:trans:binarize}.
When we are interested in another {\it nonoptimal} parsing algorithm, what we should do is to modify the allowed condition for {\sc LeftComp} and {\sc RightComp}, e.g., perhaps {\sc RightComp} is divided into several parts and instead the condition that {\sc LeftComp} can be applied is highly restricted.

\subsection{Relaxing the definition of center-embedding}
\label{sec:ind:modif}

In the previous chapter, we have examined a simple relaxation for the definition of center-embedding by allowing constituents up to some length to be at the top on the stack.
Here we demonstrate how this relaxation can be implemented with a simple modification to the presented algorithm.

Let us assume the situation in which we allow constructions with one degree of center-embedding {\it if} the length of embedded constituent is at most three;
that is, we allow a small part of center-embedding.
We write this as $(D,C)=(1,3)$, meaning that the maximum stack depth is generally one ($D=1$) though we partially allow depth two when the length of the embedded constituent is less than or equal to three ($C=3$).
This can be achieved by modifying the role of variable $b$ and equations in Figure \ref{fig:ind:ded-comp}.
As we have seen, the current algorithm does not increase the stack depth of right antecedent item with {\sc LeftComp} or {\sc RightComp} if the length of those reduced constituents is just one (has no dependent), which corresponds to the case of $C=1$.
Our goal is to generalize this calculation and to judge whether the length of the reduced constituent is greater than $C$ or not.
We modify the side condition of {\sc LeftComp-*-1} as follows:
\begin{equation}
 b = \max(C, h - (j + 1)).
\end{equation}
Now $b$ is a variable in the range $[0, C]$.
$b=0$ means the left constituent is one word.
Then, the side condition of {\sc LeftComp-*-2} is changed as follows:
\begin{equation}
 d' = \left\{ \begin{array}{ll}
       d+1& \textrm{if } b + (j - h) \geq C \\
       d& \textrm{otherwise.} \\
              \end{array}
       \right. \label{eqn:ind:mod-depth}
\end{equation}
For example, when $C=3$, and the left constituent is \tikz[baseline=-3ex,thick]{
       \draw (0, 0) -- ++(-0.4, -0.4) -- ++(0.4, 0) -- cycle;
       \draw[anchor=mid] (-0.55, -0.6) node {$3$} +(0.6, 0) node {$4$};
       }
while the right constituent is \tikz[baseline=-3ex,thick]{
       \draw (0, 0) -- ++(0.4, -0.4) -- ++(-0.4, 0) -- cycle;
       \draw[anchor=mid] (-0.15, -0.6) node {$4$} +(0.6, 0) node {$5$};
       },
the depth is unchanged since $b+(j-h) = 2 < 3$ in Eq. \ref{eqn:ind:mod-depth}.
Note that the algorithm may be inefficient when $C$ is larger, although it is not a practical problem as we only explore very small values such as 2 and 3.





\section{Experimental Setup}
\label{sec:ind:setup}
Now we move on to the empirical part of this chapter.
This section summarizes the experimental settings such as the datasets that we use, evaluation method, and possible constraints that we impose to the models.
In particular, we point out the crucial issue in the current evaluation metric in Section \ref{sec:ind:eval} and then propose our solution to alleviate this problem in Section \ref{sec:ind:param-const}.

\subsection{Datasets}
\label{sec:ind:dataset}
We use two different multilingual corpora for our experiments: Universal Dependencies (UD) and Google universal dependency treebanks;
the characteristics of these two corpora are summarized in Chapter \ref{chap:corpora}.
We mainly use UD in this chapter, which comprises of 20 different treebanks.
One problem of UD is that because this is the first study (in our knowledge) to use it in unsupervised dependency grammar induction, we cannot compare our models to previous state-of-the-art approaches.
The Google treebanks are used for this purpose.
It comprises of 10 languages and we discuss the relative performance of our approaches compared to the previously reported results on this dataset.

\paragraph{Preprocess}
Some treebanks of UD are annotated with multiword expressions although we strip them off for simplicity.
Also we remove every punctuation mark in every treebank (both training and testing).
This preprocessing has been performed in many previous studies.
This is easy when the punctuation is at a leaf position;
otherwise, we reattach every child token of that punctuation to its closest ancestor that is not a punctuation.

\paragraph{Input token}
Every model in this chapter only receives annotated part-of-speech (POS) tags given in the treebank.
This is a crucial limitation of the current study both from the practical and cognitive points of view as we discussed in Section \ref{sec:2:unsupervised}.
We learn the model on the {\it unified}, universal tags given in the respective corpora.
In Google treebanks, the total number of tags is 11 (excluding punctuation) while that is 16 in UD.
See Chapter \ref{chap:corpora} for more details.

\paragraph{Sentence Length}
Often unsupervised parsing systems are trained and tested on a subset of the original training/testing set by setting a maximum sentence length and ignoring every sentence longer than the threshold.
The main reason for this filtering during training is efficiency:
running dynamic programming ($O(n^4)$ in our case) for longer sentences in many number of iterations is expensive.
We therefore set the maximum sentence length during training to 15 on UD experiments, which is not so expensive to explore many parameter settings and languages.
We also evaluate our models against test sentences up to length 15.
We choose this value because we are interested more in whether our structural constraint helps to learn basic word order of each language, which may be obscured if we use full sentences as in the supervised parsing experiments since longer sentences are typically conjoined with several clauses.
This setting has been previously used in, e.g., \newcite{DBLP:journals/tacl/BiskH13}.
We call this filterd dataset UD15 in the following.

We use different filtering for Google treebanks and set the maximum length for training and testing to 10.
This is the setting of \newcite{grave-elhadad:2015:ACL-IJCNLP}, which compares several models including the previous state-of-the-art method of \newcite{naseem-EtAl:2010:EMNLP}.
See Tables \ref{tab:ind:stat-ud15} and \ref{tab:ind:stat-google10} for the statistics of the datasets.

\begin{table}[t]
 \centering
 \scalebox{1.0}{
\begin{tabular}[t]{l r r r r} \hline
Language & \#Sents. & \#Tokens & Av. len. & Test ratio\\\hline
Basque & 3,743 & 31,061 & 8.2 & 24.9\\
Bulgarian & 6,442 & 53,737 & 8.3 & 11.0\\
Croatian & 1,439 & 15,285 & 10.6 & 6.6\\
Czech & 46,384 & 388,309 & 8.3 & 12.9\\
Danish & 2,952 & 25,455 & 8.6 & 6.5\\
English & 9,279 & 67,249 & 7.2 & 14.9\\
Finnish & 10,146 & 85,057 & 8.3 & 5.2\\
French & 5,174 & 55,413 & 10.7 & 2.1\\
German & 8,073 & 82,789 & 10.2 & 6.7\\
Greek & 746 & 6,987 & 9.3 & 11.2\\
Hebrew & 1,883 & 19,057 & 10.1 & 9.7\\
Hungarian & 580 & 5,785 & 9.9 & 11.0\\
Indonesian & 2,492 & 25,731 & 10.3 & 11.5\\
Irish & 408 & 3,430 & 8.4 & 18.6\\
Italian & 5,701 & 51,272 & 8.9 & 4.3\\
Japanese & 2,705 & 28,877 & 10.6 & 25.9\\
Persian & 1,972 & 18,443 & 9.3 & 9.9\\
Spanish & 4,249 & 45,608 & 10.7 & 2.2\\
Swedish & 3,545 & 31,682 & 8.9 & 20.7\\\hline
\end{tabular}
 }
 \caption{Statistics on UD15 (after stripping off punctuations). Av. len. is the average length. Test ratio is the token ratio of the test set.}
 \label{tab:ind:stat-ud15}
\end{table}
\begin{table}[t]
\centering
\scalebox{1.0}{
\begin{tabular}[t]{l r r r r} \hline
Language & \#Sents. & \#Tokens & Av. len. & Test ratio\\\hline
German & 3,036 & 23,833 & 7.8 & 8.9\\
English & 4,341 & 31,287 & 7.2 & 5.7\\
Spanish & 1,258 & 9,731 & 7.7 & 3.2\\
French & 1,629 & 13,221 & 8.1 & 2.7\\
Indonesian & 799 & 6,178 & 7.7 & 10.7\\
Italian & 1,215 & 9,842 & 8.1 & 6.1\\
Japanese & 5,434 & 32,643 & 6.0 & 3.6\\
Korean & 3,416 & 21,020 & 6.1 & 7.7\\
Br-Portuguese & 1,186 & 9,199 & 7.7 & 11.7\\
Swedish & 1,912 & 12,531 & 6.5 & 18.5\\\hline
\end{tabular}
 }
 \caption{Statistics on Google trebanks (maximum length = 10).}
 \label{tab:ind:stat-google10}
\end{table}

\subsection{Baseline model}
\label{sec:ind:baseline}
Our baseline model is the featurized DMV model \cite{bergkirkpatrick-EtAl:2010:NAACLHLT}, which we briefly described in Section \ref{sec:bg:loglinear}.
We choose this model as our baseline since it is very simple yet performs competitively to the more complicated state-of-the-art systems.
Other more sophisticated methods exist, but they typically require much complex inference techniques \cite{spitkovsky-alshawi-jurafsky:2013:EMNLP} or external information \cite{marevcek-straka:2013:ACL2013}, which obscure the contribution of our imposing constraints.
Studying the effect of the structural constraints for these more strong models is remained for the future work.

This model contains two tunable parameters, the regularization parameter and the feature templates.
We fix the regularization parameter to 10, which is the same as the value in \newcite{bergkirkpatrick-EtAl:2010:NAACLHLT} since we did not find significant performance changes with this value in our preliminary study.
We also basically use the same feature templates as Berg-Kirkpatrick et al.;
the only difference is that we add additional backoff features for {\sc stop} probabilities that ignore both direction and adjacency, which we found slightly improves the performance.

\subsection{Evaluation}
\label{sec:ind:eval}

Evaluation is one of the {\it unsolved} problems in the unsupervised grammar induction task.
The main source of difficulty is the inherent ambiguity of the notion of {\it heads} in dependency grammar that we mentioned several times in this thesis (see Section \ref{sec:corpora:heads} for details).
Typically the quality of the model is evaluated in the same way as the supervised parsing experiments:
At test time, the model predicts dependency trees on test sentences;
then the {\it accuracy} of the prediction is measured by an {\it unlabelled attachment score} (UAS):
\begin{equation}
 \textrm{UAS} = \frac{\textrm{\# tokens whose head matches the gold head}}{\textrm{\# tokens}}
\end{equation}
The problem of this measure is that it completely ignores the ambiguity of head definitions since its score calculation is against the single gold dependency treebank.
Some attempts to alleviate the problem of UAS exist, such as direction-free (undirected) measure \cite{klein-manning:2004:ACL} and a more sophisticated measure called neutral edge detection (NED) \cite{schwartz-EtAl:2011:ACL-HLT20112}.
NED expands the set of {\it correct} dependency constructions given the predicted tree and the gold tree to save the errors that seem to be caused by annotation variations.
However NED is a too lenient metric and causes different problems.
For example, under NED (also under the undirected measure) the two trees ${\sf dogs^\curvearrowleft ran}$ and ${\sf dogs^\curvearrowright ran}$ are treated equal, although it is apparent that the correct analysis is the former.
We suspect this is the reason why many researchers have not used NED and instead select UAS while recognizing the inherent problems \cite{cohen-2011,DBLP:journals/tacl/BiskH13}.

However, the current situation is really unhealthy for our community.
For example, if we find some method that can boost UAS from 40 to 60, we cannot identify whether this improvement is due to the acquisition of essential word orders such as dependencies between nouns and adjectives, or just overfitting to the current gold treebank.
The latter case occurs, e.g., when the current gold treebank assumes that heads of prepositional phrases are the content words and the improved model changes the analysis for them from functional heads to content heads.
Since our goal is not to obtain a model that can overfit to the gold treebank in a surface level, but to understand the mechanism that the model can acquire better word orders, we want to remove the possibility to make such (fake) improvements in our experiments.

We try to minimize this problem not by revising the evaluation metric but by customizing models.
We basically use UAS since the other metrics have more serious drawbacks.
However, to avoid unexpected improvements/degradations, we constraint the model to explore only trees that may follow the conventions in the current gold data.
This is possible in our corpora as they are annotated under some consistent annotation standard (see Chapter \ref{chap:corpora}).
This approach is conceptually similar to \newcite{naseem-EtAl:2010:EMNLP}, although we do not incorporate many constraints on word orders, such as the dependencies between a verb and a noun.
The detail of the constraints we impose to the models is described next.

\subsection{Parameter-based Constraints}
\label{sec:ind:param-const}

The goal of the current experiments is to see the effect of {\it structural constraint}, which we hope to guide the model to find better parameters.
To do so, on the baseline model (Section \ref{sec:ind:baseline}) we impose several additional constraints in the framework of structural constraint model described in Section \ref{sec:ind:overview}, and examine how performance changes (we list these constraints in Section \ref{sec:ind:structural-const}).

In addition to the structural constraints, we also consider another kind of constraint that we call {\it parameter-based constraint} in the same framework, that is, as a cost function $f(z,\theta)$ in Eq. \ref{eqn:ind:cost}.
The parameter-based constraints are constraints on POS tags in a given sentence and are specified decralatively, e.g., {\it X cannot have a dependent in the sentence}.
Note that our main focus in this experiment is the effect of structural constraints.
As we describe below, the parameter-based ones are introduced to make the empirical comparison of structural constrains more meaningful.

Note that all constraints below are imposed during training only, as we found in our preliminary experiments that the constraints during decoding (at test time) make little performance changes.
This is natural in particular for parameter-based constraints since the model learns the parameters that follow the given constraints during training.


We consider the following constraints in this category:
\begin{description}
 \item[Function words in UD] This constraint is introduced to alleviate the problem of evaluation that we discussed in Section \ref{sec:ind:eval}.
            One characteristic of UD is that its annotation style is consistently content head based, that is, every function word is analyzed as a dependent of the most closely related content word.\footnote{We find small exceptions in each treebank probably due to the remaining annotation errors though they are negligibly small.}
            By forcing the model to explore only structures that follow this convention, we expect we can minimize the problem of {\it arbitrariness} of head choices.
            This constraint can easily be implemented by setting every {\sc stop} probability of DMV for function words to 1.
            We regard the following six POS tags as function words: {\sc adp}, {\sc aux}, {\sc conj}, {\sc det}, {\sc part}, and {\sc sconj}.
            Since most arbitrary constructions are around function words, we hope this makes the performance change due to other factors such as the structural constraints clearer.
            Note that this technique is still not the perfect and cannot neutralize some annotation variations such as internal structures of noun phrases;
            we do not consider further constraints to save such more complex cases.
 \item[Function words in Google treebanks]
            We consider the similar constraints on Google treebanks.
            The Google treebanks uses the following four POS tags for function words: {\sc det}, {\sc conj}, {\sc prt}, and {\sc adp}.
            {\sc prt} is a particle corresponding to {\sc part} in UD.
            As in UD it also follows the annotation standard of Stanford typed dependencies \cite{mcdonald-EtAl:2013:Short} and analyzes most function words as dependents, although it is not the case for {\sc adp}, which is in most cases analyzed as a head of the governing phrase.
            We therefore introduce another kind of constraint for {\sc adp}, which prohibits to become a dependent, i.e., {\sc adp} must have at least one dependent.
            Implementing this constraint in our dynamic programming algorithm is a bit involved compared to the previous {\it unheadable} constraints, mainly due to our {\it split-head} representation.
            We can achieve this constraint in a similar way to the constituent length memoization technique that we introduced in Figure \ref{fig:ind:ded-comp} with variable $b$.
            Specifically, at {\sc LeftComp-L-1}, we remember whether the reduced head $h$ has at least one dependent if $h$ is {\sc adp};
            then at {\sc LeftComp-L-2}, we disallow the rule application if that {\sc adp} is recognized as having no dependent.
            We also disallow {\sc Combine} if the head is {\sc adp} and the sizes of two constituents are both 1 (i.e., $i=h=j$).
            The resulting full constituent would be reduced by {\sc LeftPred} to be some dependent, which is although prohibited.
            Other function words are in most cases analyzed as dependents so we use the same restriction as the function words in UD.
 \item[Candidates for root words] This constraint is also parameter based though should be distinguished from the above two.
             Note that the constraints discussed so far are only for alleviating the problem of annotation variations in that they give no hint for acquiring basic word orders for the model such as the preference of an arc from a verb to a noun.
             This constraint is intended to give such a hint to the model by restricting possible root positions in the sentence.
             We consider two types of such constraints.
             The first one is called the {\it verb-or-noun constraint}, which restricts the possible root word of the sentence to a verb, or a noun.
             The second one is called the {\it verb-otherwise-noun constraint}, which more eagerly restricts the search space by only allowing a verb to become a root, if at least one verb exists in the sentence; otherwise, nouns become a candidate.
            If both do not exist, then every word becomes a candidate.
            In both corpora, {\sc verb} is the only POS tag for representing verbs.
            We regard three POS tags, {\sc noun}, {\sc pron}, and {\sc propn} in UD as nouns.
            In Google treebanks, {\sc noun} and {\sc pron} are considered as nouns.
            This type of knowledge is often employed in the previous unsupervised parsing models in different ways \cite{gimpel-smith:2012:NAACL-HLT2,gormley-eisner:2013:ACL2013,DBLP:conf/aaai/BiskH12,DBLP:journals/tacl/BiskH13} as {\it seed knowledge} given to the model.
            We will see how this simple hint to the target grammar affects the performance.
\end{description}

\subsection{Structural Constraints}
\label{sec:ind:structural-const}

These are the constraints that we focus on in the experiments.
\begin{description}
 \item[Maximum stack depth]
            This constraint removes parses involving center-embedding up to a specific degree and can be implemented by setting the maximum stack depth in the algorithm in Figures \ref{fig:ind:ded1} and \ref{fig:ind:ded-comp}.
            We also investigate the relaxation of the constraint with a small constituent that we described in Section \ref{sec:ind:modif}.
            Studying the effect of this constraint is the main topic in the our experiments.
 \item[Dependency length bias] We also explore another structural constraint that biases the model to prefer shorter dependency length, which has previously been examined in \newcite{smith-eisner-2006-acl-sa}.
            With this constraint, each attachment probability is changed as follows:
            \begin{equation}
             \theta'_{\textsc{a}}(a|h,\textit{dir}) = \theta_{\textsc{a}}(a|h,\textit{dir}) \cdot \exp(-\beta_{len} \cdot (|h - d| - 1)),
            \end{equation}
            where $\theta_{\textsc{a},h,d}$ is the original DMV parameter attaching $d$ as a dependent of $h$.
            Differently from \newcite{smith-eisner-2006-acl-sa}, we subtract 1 in each length cost calculation to add no penalty for an arc between adjacent words.
            As \newcite{smith-eisner-2006-acl-sa} noted, this constraint leads to the following form of $f(z,\theta)$ in Eq. \ref{eqn:ind:cost}:
            \begin{equation}
             f_{len}(z,\theta) = \exp \left( -\beta_{len} \cdot \left( \sum_{1\leq d \leq n} (|h_{z,d} - d|) - n \right) \right), \label{eqn:ind:betalen}
            \end{equation}
            where $h_{z,d}$ is the analyzed head position for a dependent at $d$.
            Notice that $\sum_{1\leq d \leq n} (|h_{z,d} - d|)$ is the sum of dependency lengths in the sentence, which means that this model tries to {\it minimize} the sum of dependency length in the tree and is closely related to the theory of dependency length minimization, a typological hypothesis that grammars may universally favor shorter dependency length \cite{gildea-temperley:2007:ACLMain,DBLP:journals/cogsci/GildeaT10,Futrell18082015,gulordava-merlo-crabbe:2015:ACL-IJCNLP}.
\end{description}


Notice that typically center-embedded constructions are accompanied by longer dependencies.
However the opposite is generally not the case;
there are many constructions that do accompany longer dependencies though do not contain center-embedding.
By comparing these two constraints, we discuss whether center-embedding is the phenomena worth considering than the simpler method of shorter length bias.


\subsection{Other Settings}
\label{sec:ind:setting}

\paragraph{Initialization}

Much previous works of unsupervised dependency induction, in particular DMV and related models, relied on heuristic initialization called {\it harmonic initialization} \cite{klein-manning:2004:ACL,bergkirkpatrick-EtAl:2010:NAACLHLT,cohen-smith:2009:NAACLHLT09,blunsom-cohn:2010:EMNLP}, which is obtained by running the first E-step of the training by setting every attachment probability between $i$ and $j$ to $(|i-j|)^{-1}$.\footnote{There is another variant of harmonic initialization \cite{smith-eisner-2006-acl-sa} though we do not explore this since the method described here is the one that is employed in \newcite{bergkirkpatrick-EtAl:2010:NAACLHLT} (p.c.), which is our baseline model (Section \ref{sec:ind:baseline}).}
Note that this method also biases the model to favor shorter dependencies.

We do not use this initialization with our structural constraints since one of our motivation is to invent a method that does not rely on such heursitics highly connected to a specific model (like DMV).
We therefore initialize the model to be a uniform model.
However, we also compare such uniform + structural constrained models to the harmonic initialized model {\it without} structural constraints to see the relative strength of our approach.

\paragraph{Decoding}
As noted above, every constraint introduced so far is only imposed during training.
At decoding (test time), we do not consider the bias term of Eq. \ref{eqn:ind:joint} and just run the Viterbi algorithm to get the best parse under the original DMV model.

\section{Empirical Analysis}
\label{sec:ind:exp}

We first check the performance differences of several settings on UD and then move on to Google treebanks to compare our approach to the state-of-the-art methods.

\subsection{Universal Dependencies}

\begin{figure}[p]
\centering
 \resizebox{0.99\textwidth}{!}
 {\includegraphics[]{./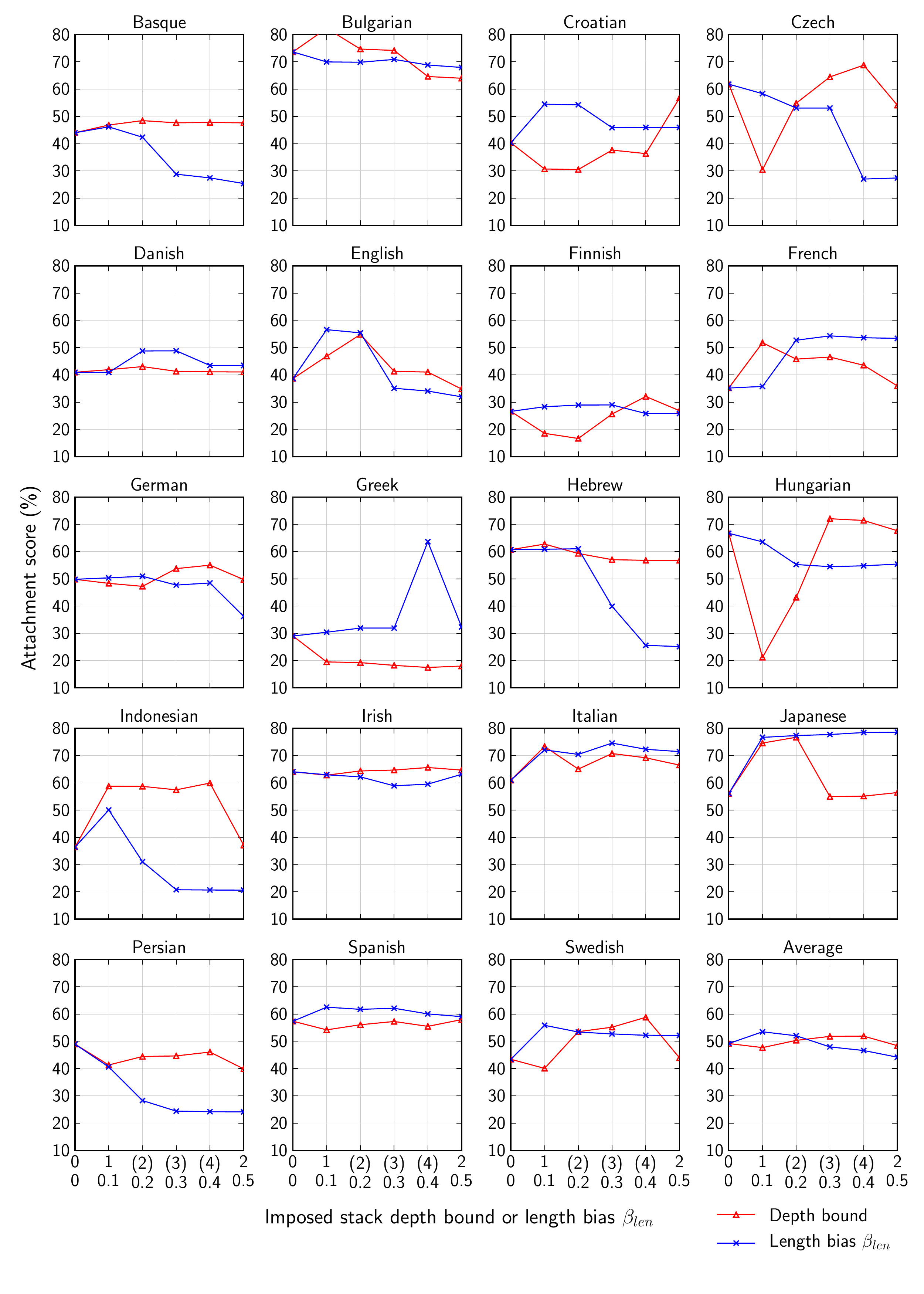}}
 \vspace{-20pt}
 \caption{Attachment accuracies on UD15 with the function word constraint and structural constrains.
 The numbers in parentheses are the maximum length of a constituent allowed to be embedded.
 For example (3) means a part of center-embedding of depth two, in which the length of embedded constituent $\leq 3$, is allowed.}
 \label{fig:ind:ud15-none}
\end{figure}

\paragraph{When no help is given to the root word}
We first see how the performance changes when using different length biases or stack depth biases (Section \ref{sec:ind:structural-const}).
The parameter-based constraint is only the function word constraint of UD, that is, any function word cannot be a head of others.
Figure \ref{fig:ind:ud15-none} summarizes the results.
Although the variance is large, we can make the following observations:
\begin{itemize}
 \item Often small (weak) length biases (e.g., $\beta_{len}=0.1$) work better than more strong biases.
       In many languages, e.g., English, Indonesian, and Croatian, the performance improves with a small bias and degrades as the bias is sharpened.
       Note that the left most point is the score with no constraints, e.g., just the uniformly initialized model.
 \item We try five different stack depth bound between depth one and depth two.
       The result shows in many cases the middle, e.g., 1(2), 1(3), and 1(4) works better.
       The performance of depth two is almost the same as the no-constraint baseline, meaning that stack depth two is too lenient to restrict the search space during learning.
       This observation is consistent with the empirical stack depth analysis in the previous chapter (Section \ref{sec:trans:ud-result}).
 \item On average, we find the best setting is the small length bias $\beta_{len}=0.1$.
       In Table \ref{tab:ind:ud15-none} we summarizes the accuracies of selected configurations in this figure, which work better, as well as the harmonic initialized models.
 \item The performance for some languages, in particular Greek and Finnish, is quite low compared to other languages.
       We inspect the output trees for these languages, and found that the model fails to identify very basic word orders, such as the tendency of a verb to be a root word.
       Essentially, the models so far do not receive such explicit knowledge about grammar, which is known to be particularly hard.
       We thus see next how performances change if a small amount of {\it seed knowledge} about the grammar is given the model.
\end{itemize}

\begin{table}[t]
 \centering
 \begin{tabular}{lccccc}\hline
  &Unif.&$C=2$&$C=3$&$\beta_{len} = 0.1$&Harmonic\\\hline
Basque&44.0&{\bf48.5}&47.6&46.1&44.5\\
Bulgarian&73.6&{\bf74.7}&74.2&70.0&72.5\\
Croatian&40.3&30.5&37.7&{\bf54.5}&47.3\\
Czech&61.8&54.8&{\bf64.5}&58.3&54.2\\
Danish&40.9&{\bf43.0}&41.3&40.9&40.9\\
English&38.7&54.8&41.3&{\bf56.6}&39.1\\
Finnish&26.6&16.7&25.6&{\bf28.3}&26.4\\
French&35.2&45.9&{\bf46.6}&35.7&34.6\\
German&49.8&47.3&{\bf53.6}&50.3&49.5\\
Greek&29.2&19.3&18.3&30.4&{\bf 49.3}\\
Hebrew&60.7&59.3&57.1&{\bf60.9}&57.3\\
Hungarian&66.7&43.2&{\bf72.2}&63.6&65.8\\
Indonesian&36.4&{\bf58.7}&57.4&50.0&40.8\\
Irish&64.1&64.5&{\bf64.8}&63.0&64.4\\
Italian&61.0&65.0&70.7&{\bf72.2}&65.8\\
Japanese&56.1&{\bf76.7}&55.0&{\bf 76.7}&48.2\\
Persian&{\bf49.0}&44.4&44.7&40.7&41.4\\
Spanish&57.3&56.0&57.3&{\bf62.5}&55.4\\
Swedish&43.4&53.6&55.2&{\bf55.9}&49.5\\\\
Avg&49.2&50.4&51.8&{\bf53.5}&49.8\\\hline
 \end{tabular}
 \caption{Accuracy comparison on UD15 for selected configurations including harmonic initialization (Harmonic).
 Unif. is a baseline model without structural constraints.
 $C$ is the allowed constituent length when the maximum stack depth is one.
 $\beta_{len}$ is strength of the length bias.
 }
 \label{tab:ind:ud15-none}
\end{table}


\begin{table}[t]
 \centering
 \begin{tabular}{lcccccccc}\hline
  & \multicolumn{4}{c}{Verb-or-noun constraint} & \multicolumn{4}{c}{Verb-otherwise-noun constraint} \\
  &Unif.&$C=2$&$C=3$&$\beta_{len} = 0.1$&Unif.&$C=2$&$C=3$&$\beta_{len} = 0.1$ \\\hline
Basque    &44.7&{\bf55.2}&54.3&46.4&{\bf55.8}&55.6&54.8&51.0\\
Bulgarian &73.4&{\bf75.8}&75.1&64.1&72.7&{\bf75.8}&75.2&70.6\\
Croatian  &40.1&{\bf52.5}&41.4&47.3&{\bf57.0}&52.5&52.5&55.8\\
Czech     &50.7&54.8&{\bf64.7}&59.2&63.2&54.9&{\bf66.3}&58.1\\
Danish    &40.9&{\bf43.1}&41.3&40.9&48.7&46.9&{\bf50.1}&47.3\\
English   &39.8&{\bf55.8}&41.3&40.2&57.2&55.2&{\bf58.5}&53.9\\
Finnish   &26.2&27.7&27.7&{\bf28.3}&40.3&32.5&34.3&{\bf40.4}\\
French    &35.7&{\bf50.9}&49.5&47.0&44.2&{\bf55.8}&54.6&42.1\\
German    &49.7&47.1&{\bf56.0}&51.2&49.5&55.7&{\bf57.4}&49.9\\
Greek     &61.7&{\bf70.0}&62.1&60.2&60.5&{\bf68.8}&62.0&60.2\\
Hebrew    &52.9&58.7&{\bf60.9}&57.5&54.8&{\bf62.6}&54.2&57.2\\
Hungarian &68.8&41.6&{\bf71.3}&63.6&69.2&65.5&{\bf72.4}&64.8\\
Indonesian&32.0&{\bf58.3}&58.1&43.6&50.2&58.6&58.5&{\bf59.4}\\
Irish     &63.1&64.5&{\bf65.2}&63.0&63.4&64.4&{\bf64.7}&63.9\\
Italian   &62.7&{\bf77.1}&73.6&72.5&69.2&65.2&69.8&{\bf72.4}\\
Japanese  &56.4&70.5&56.9&{\bf73.9}&56.9&69.0&57.0&{\bf73.5}\\
Persian   &46.9&45.1&{\bf51.2}&39.7&48.0&45.1&{\bf51.1}&41.7\\
Spanish   &46.8&56.1&57.3&{\bf63.1}&57.7&56.2&58.5&{\bf62.3}\\
Swedish   &43.5&{\bf44.8}&43.2&43.5&{\bf57.9}&53.3&53.3&56.9\\\\
Avg.&49.3&55.2&{\bf55.3}&52.9&56.7&57.6&{\bf58.2}&56.9\\\hline
 \end{tabular}
 \caption{Accuracy comparison on UD15 for selected configurations with the hard constraints on possible root POS tags.}
 \label{tab:ind:ud15-vn}
\end{table}

\paragraph{Constraining POS tags for root words}

Table \ref{tab:ind:ud15-vn} shows the results when we add two kinds of seed knowledge as parameter-based constraints (Section \ref{sec:ind:param-const}).

We first see the result with the verb-or-noun constraint.
This constraint comes from the main assumption of UD that the root token of a sentence is its main predicate, which is basically a verb, or a noun or a adjective if the main verb is copula.
We remove adjective from this set as we found it is relative rare across languages.
Interestingly in this case, the stack depth constraints ($C=2$ and $C=3$) work the best.
In particular, the average score of the length bias ($\beta_{len}=0.1$) drops.
We inspect the reason of this below.

We next see the effect of another constraint, the verb-otherwise-noun constraint, which excludes nouns from the candidate for the root if both a verb and noun exist.
This probably decreases the recall though we expect that it increases the performance as the majority of predicates is verbs.
As we expected, with this constraint the average performance of baseline uniform model increases sharply from 49.2 to 56.7 (+7.5), which is larger than any increases with structural constraint to the original baseline model.
In this case, though the change is small, again our stack depth constraints perform the best (58.2 with $C=3$);
the average score with the length bias does not increase.

\subsection{Qualitative analysis}

When we inspect the scores of the models without root POS constraints in Table \ref{tab:ind:ud15-none} and the models with the verb-or-noun constraint in Table \ref{tab:ind:ud15-vn}, we notice that the behaviors of our models with stack depth constraints and other models are often quite different.
Specifically,
\begin{itemize}
 \item It is only Greek on which the baseline uniform model improves score from the setting with no root POS constraint.
 \item In other languages, the scores of the uniform model are unchanged or dropped when adding the root POS constraint.
       For example the score for Czech drops from 61.8 to 50.7.
 \item The same tendency is observed in the models of $\beta_{len}=0.1$.
       Its score for Greek improves from 30.4 to 60.2 while other scores are often unchanged or dropped;
       an exception is French, on which the score improves from 35.7 to 47.0.
       For other languages, such as Croatian (-7.2), English (-16.4), Indonesian (-6.4), and Swedish (-12.4), the scores sharply drop.
 \item On the other hand, we observe no significant performance drops in our models with stack depth constraints (i.e., $C=2$ and $C=3$) by adding the root POS constraint.
\end{itemize}

These differences between the length constraints and stack depth constraints are interesting and may shed some light on the characteristics of two approaches.
Here we look into the output parses in English, with which the performance changes are typical, i.e., the model of $\beta_{len}=0.1$ drops while the scores of other models are almost unchanged.

\begin{figure}[p]
 \centering
 \definecolor{g70}{gray}{0.6}
 \newcommand{\gw}[1]{\color{g70}{#1}}
 \newcommand{\onthenext}{\gw{On} \& \gw{the} \& \gw{next} \& \gw{two} \& \gw{pictures} \& \gw{he} \& \gw{took} \& \gw{screenshots} \& \gw{of} \& \gw{two} \& \gw{beheading} \& \gw{video's}\\
 {\sc adp} \& {\sc det} \& {\sc adj} \& {\sc num} \& {\sc noun} \& {\sc pron} \& {\sc verb} \& {\sc noun} \& {\sc adp} \& {\sc num} \& {\sc noun} \& {\sc noun}
 }
 \begin{minipage}[b]{.99\linewidth}
  \centering
  {\small
  \begin{dependency}[theme=simple,thick]
   \begin{deptext}[column sep=0.2cm]
    \onthenext \\
    \end{deptext}
   \depedge{5}{1}{}
   \depedge{5}{2}{}
   \depedge{5}{3}{}
   \depedge{5}{4}{}
   \depedge{7}{5}{}
   \depedge{7}{6}{}
   \deproot[edge unit distance=3ex]{7}{}
   \depedge{7}{8}{}
   \depedge{12}{9}{}
   \depedge{12}{10}{}
   \depedge{12}{11}{}
   \depedge{8}{12}{}
  \end{dependency}
  }
 \subcaption{Gold parse.}
 \end{minipage}
 \begin{minipage}[b]{.99\linewidth}
  \centering
  {\small
  \begin{dependency}[theme=simple,thick]
   \begin{deptext}[column sep=0.2cm]
    \onthenext \\
    \end{deptext}
   \depedge[red]{6}{1}{}
   \depedge[red]{3}{2}{}
   \depedge{5}{3}{}
   \depedge{5}{4}{}
   \depedge[red]{6}{5}{}
   \depedge[red]{8}{6}{}
   \depedge[red]{6}{7}{}
   \depedge[red]{11}{8}{}
   \depedge[red]{10}{9}{}
   \depedge[red]{8}{10}{}
   \depedge{12}{11}{}
   \deproot[edge unit distance=3ex,red]{12}{}
  \end{dependency}
  }
 \subcaption{Output by uniform, uniform + verb-or-noun, and $\beta=0.1$ + verb-or-noun.}
 \end{minipage}
 \begin{minipage}[b]{.99\linewidth}
  \centering
  {\small
  \begin{dependency}[theme=simple,thick]
   \begin{deptext}[column sep=0.2cm]
    \onthenext \\
    \end{deptext}
   \depedge{5}{1}{}
   \depedge[red]{3}{2}{}
   \depedge{5}{3}{}
   \depedge{5}{4}{}
   \deproot[edge unit distance=3ex,red]{5}{}
   \depedge{7}{6}{}
   \depedge[red]{5}{7}{}
   \depedge{7}{8}{}
   \depedge[red]{10}{9}{}
   \depedge[red]{11}{10}{}
   \depedge[red]{8}{11}{}
   \depedge[red]{11}{12}{}
  \end{dependency}
  }
  \subcaption{Output by $C=2$, $C=2$ + verb-or-noun.}
  \label{fig:ind:c2-eng-ud}
 \end{minipage}
 \begin{minipage}[b]{.99\linewidth}
  \centering
  {\small
  \begin{dependency}[theme=simple,thick]
   \begin{deptext}[column sep=0.2cm]
    \onthenext \\
    \end{deptext}
   \depedge{5}{1}{}
   \depedge[red]{3}{2}{}
   \depedge{5}{3}{}
   \depedge{5}{4}{}
   \deproot[edge unit distance=3ex,red]{5}{}
   \depedge[red]{5}{6}{}
   \depedge[red]{6}{7}{}
   \depedge{7}{8}{}
   \depedge[red]{10}{9}{}
   \depedge[red]{11}{10}{}
   \depedge[red]{8}{11}{}
   \depedge[red]{11}{12}{}
  \end{dependency}
  }
  \subcaption{Output by $C=3$, $C=3$ + verb-or-noun.}
  \label{fig:ind:c3-eng-ud}
 \end{minipage}
 \begin{minipage}[b]{.99\linewidth}
  \centering
  {\small
  \begin{dependency}[theme=simple,thick]
   \begin{deptext}[column sep=0.2cm]
    \onthenext \\
    \end{deptext}
   \depedge[red]{4}{1}{}
   \depedge[red]{3}{2}{}
   \depedge[red]{4}{3}{}
   \depedge{5}{4}{}
   \depedge{7}{5}{}
   \depedge{7}{6}{}
   \deproot[edge unit distance=4ex]{7}{}
   \depedge{7}{8}{}
   \depedge[red]{10}{9}{}
   \depedge[red]{11}{10}{}
   \depedge{12}{11}{}
   \depedge[red]{7}{12}{}
  \end{dependency}
  }
  \subcaption{Output by $\beta_{len}=0.1$.}
  \label{fig:ind:beta-eng-ud}
 \end{minipage}
 \caption{Comparison of output parses of several models on a sentence in English UD.
 The outputs of $C=2$ and $C=3$ do not change with the root POS constraint, while the output of $\beta_{len}=0.1$ changes to the same one of the uniform model with the root POS constraint.
 Colored arcs indicate the wrong predictions.
 Note surface forms are not observed by the models (only POS tags are).
 }
 \label{fig:ind:error-eng-ud}
\end{figure}

When we compare output parses of different models, we notice that often the same tree is predicted by several different models.
Figure \ref{fig:ind:error-eng-ud} shows examples of output parses of different models.
The following observations are made with those errors.
Note that these are typical, in that the same observation can be often made on other sentences as well.
\begin{enumerate}
 \item One strong observation from Figure \ref{fig:ind:error-eng-ud} is that the output of $\beta_{len}=0.1$ reduces to that of the uniform model when the root POS constraint is added to the model.
       As can be seen in other parses, every model in fact predicts that the root token is a noun or a verb, which suggests this explicit root POS constraint is completely redundant in the case of English.
 \item Contrary to $\beta_{len}=0.1$, the stack depth constraints, $C=2$ and $C=3$, are not affected by the root POS constraint.
       This is consistent with the scores in Tables \ref{tab:ind:ud15-none} and \ref{tab:ind:ud15-vn};
       the score of $C=3$ is unchanged and that of $C=2$ increases by just 1.0 point with the root POS constraint.
 \item Whie the scores of the uniform model and $C=3$ are similar in Table \ref{tab:ind:ud15-none} (38.7 and 41.3, respectively), the properties of output parses seem very different.
       The typical errors made by $C=3$ are the root tokens, which are in most cases predicted as nouns as in Figure \ref{fig:ind:c3-eng-ud}, and arcs between nouns and verbs, which also are typically predicted as {\sc noun} $\rightarrow$ {\sc verb}.
       Contrary to these {\it local} mistakes, the uniform model often fail to capture the basic structure of a sentence.
       For example, while $C=3$ correctly identifies that ``of two beheading video's'' comprises a constituent, which modifies ``screenshots'', which in turn becomes an argument of ``took'', the parse of the uniform model is more corrupt in that we cannot identify any semantically coherent units from it.
       See also Figure \ref{fig:ind:error-eng-ud-2} where we compare outputs of these models on another sentence.
\end{enumerate}

\begin{figure}[p]
 \centering
 \definecolor{g70}{gray}{0.6}
 \newcommand{\gw}[1]{\color{g70}{#1}}
 \newcommand{\onthenext}{\gw{But} \& \gw{he} \& \gw{has} \& \gw{insisted} \& \gw{that} \& \gw{he} \& \gw{wants} \& \gw{nuclear} \& \gw{power} \& \gw{for} \& \gw{peaceful} \& \gw{purposes}\\
 {\sc conj} \& {\sc pron} \& {\sc aux} \& {\sc verb} \& {\sc sconj} \& {\sc pron} \& {\sc verb} \& {\sc adj} \& {\sc noun} \& {\sc adp} \& {\sc adj} \& {\sc noun}
 }
 \begin{minipage}[b]{.99\linewidth}
  \centering
  {\small
  \begin{dependency}[theme=simple,thick]
   \begin{deptext}[column sep=0.2cm]
    \onthenext \\
    \end{deptext}
   \depedge{4}{1}{}
   \depedge{4}{2}{}
   \depedge{4}{3}{}
   \deproot[edge unit distance=3ex]{4}{}
   \depedge{7}{5}{}
   \depedge{7}{6}{}
   \depedge{4}{7}{}
   \depedge{9}{8}{}
   \depedge{7}{9}{}
   \depedge{12}{10}{}
   \depedge{12}{11}{}
   \depedge{7}{12}{}
  \end{dependency}
  }
  \vspace{-10pt}
 \subcaption{Gold parse.}
 \end{minipage}
 \begin{minipage}[b]{.99\linewidth}
  \centering
  {\small
  \begin{dependency}[theme=simple,thick]
   \begin{deptext}[column sep=0.2cm]
    \onthenext \\
    \end{deptext}
   \depedge[red]{2}{1}{}
   \deproot[edge unit distance=3ex,red]{2}{}
   \depedge{4}{3}{}
   \depedge[red]{2}{4}{}
   \depedge[red]{6}{5}{}
   \depedge[red]{11}{6}{}
   \depedge[red]{6}{7}{}
   \depedge{9}{8}{}
   \depedge{7}{9}{}
   \depedge[red]{9}{10}{}
   \depedge{12}{11}{}
   \depedge[red]{4}{12}{}
  \end{dependency}
  }
  \vspace{-10pt}
 \subcaption{Output by the uniform model.}
 \end{minipage}
 \begin{minipage}[b]{.99\linewidth}
  \centering
  {\small
  \begin{dependency}[theme=simple,thick]
   \begin{deptext}[column sep=0.2cm]
    \onthenext \\
    \end{deptext}
   \depedge[red]{2}{1}{}
   \deproot[edge unit distance=3ex,red]{2}{}
   \depedge{4}{3}{}
   \depedge[red]{2}{4}{}
   \depedge[red]{6}{5}{}
   \depedge[red]{4}{6}{}
   \depedge[red]{6}{7}{}
   \depedge{9}{8}{}
   \depedge{7}{9}{}
   \depedge{12}{10}{}
   \depedge{12}{11}{}
   \depedge[red]{9}{12}{}
  \end{dependency}
  }
  \vspace{-10pt}
  \subcaption{Output by $C=3$.}
  \label{fig:ind:c3-eng-ud-2}
 \end{minipage}
 \begin{minipage}[b]{.99\linewidth}
  \centering
  {\small
  \begin{dependency}[theme=simple,thick]
   \begin{deptext}[column sep=0.2cm]
    \onthenext \\
    \end{deptext}
   \depedge[red]{2}{1}{}
   \depedge{4}{2}{}
   \depedge{4}{3}{}
   \deproot[edge unit distance=3ex,red]{4}{}
   \depedge[red]{6}{5}{}
   \depedge{7}{6}{}
   \depedge{4}{7}{}
   \depedge{9}{8}{}
   \depedge{7}{9}{}
   \depedge[red]{9}{10}{}
   \depedge{12}{11}{}
   \depedge[red]{4}{12}{}
  \end{dependency}
  }
  \vspace{-10pt}
  \subcaption{Output by $\beta_{len}=0.1$.}
  \label{fig:ind:beta-eng-ud-2}
 \end{minipage}
 \caption{Another comparison between outputs of the uniform model and $C=3$ in English UD.
 We also show $\beta_{len}=0.1$ for comparison.
 Although the score difference is small (see Table \ref{tab:ind:ud15-none}), the types of errors are different.
 In particular the most of parse errors by $C=3$ are at local attachments (first-order).
 For example it consistently recognizes a noun is a head of a verb, and a noun is a sentence root.
 Note an error on ``power $\rightarrow$ purposes'' is an example of PP attachment errors, which may not be solved under the current problem setting receiving only a POS tag sequence.
 }
 \label{fig:ind:error-eng-ud-2}
\end{figure}

\paragraph{Discussion}
The first observation, i.e., the output of $\beta_{len}=0.1$ + verb-or-noun reduces to that of the uniform model, is found in most other sentences as well.
Along with the results in other languages, we suspect the effect of the length bias gets weak when the root POS constraint is given.
We do not analyze the cause of this degradation more, but the discussion below on the difference between two constraints, i.e., the stack depth constraint and the length bias, might be relevant to that.

The essential difference between these two approaches is in the assumed structural form to be constrained:
The length bias (i.e., $\beta_{len}$) is a bias for each dependency arcs on the tree, while the stack depth constraint, which corresponds to the center-embeddedness, is inherently the constraint on constituent structures.
Interestingly, we can see the effect of this difference in the output parses in Figures \ref{fig:ind:error-eng-ud} and \ref{fig:ind:error-eng-ud-2}.
Note that we do not use the constraints at decoding and all differences are due to the learned parameters with the constraints during training.

Nevertheless, we can detect some typical errors in two approaches.
One difference between trees in Figure \ref{fig:ind:error-eng-ud} is in the constructions of a phrase ``On ... pictures''.
$\beta_{len}=0.1$ predicts that ``On the next two'' comprises a constituent, which modifies ``pictures'' while $C=2$ and $C=3$ predict that ``the next two pictures'' comprises a constituent, which is correct, although the head of a determiner is incorrectly predicted.
On the other hand, $\beta_{len}=0.1$ works well to find more primitive dependency arcs between POS tags, such as arcs from verbs to nouns, which are often incorrectly recognized by stack depth constraints.
Similar observations can be made in trees in Figure \ref{fig:ind:error-eng-ud-2}.
See the constructions on ``for peaceful purposes''.
In is only $C=3$ (and $C=2$ though we omit) that predicts it becomes a constituent.
In other positions, again, $\beta_{len}=0.1$ works better to find local dependency relationships.
The head of ``purposes'' is predicted differently, but this choice is equally difficult in the current problem setting (see the caption of Figure \ref{fig:ind:error-eng-ud-2}).

These observations may explain the reason why the root POS constraints work better with the stack depth constraints than the dependency length bias.
With the stack depth constraints, the main source of improvements is detections of constituents, but this constraint itself does not help to resolve some dependency relationships, e.g., the dependency direction between verbs and nouns.
The root POS constraints are thus orthogonal to this approach.
They may help to solve the remaining ambiguities, e.g., the head choice between a noun and a verb.
On the other hand, the dependency length bias is the most effective to find basic dependency relationships between POS tags while the resulting tree may contain implausible constituent units.
Thus the effect of the length bias seems somewhat overlapped with the root POS constraints, which may be the reason why they do not well collaborate with each other.

\paragraph{Other languages}

\REVISE{
We further inspect the results of some languages with exceptional behaviors seprately below.  

 \begin{description}
  \item[Japanese] In Figure \ref{fig:ind:ud15-none}, we can see that the performance of Japanese is the best with a strong stack depth constraint, such as depth 1 and $C=2$, and the performance drops when relaxing the constraint.
            This may be counterintuitive from our oracle results in Chapter \ref{chap:transition} (e.g., Figure \ref{fig:load-comparison-relax-ud}) that Japanese is the language in which the ratio of center-embedding is relatively higher.
            
            Inspecting the output parses, we found that these results are essentially due to the word order of Japanese, which is mainly head final.
            With a strong constraint (e.g., the stack depth one), the model tries to build a parse that is purely left- or right-branching.
            An easy way to create such parse is placing a root word at the beginning or the end of the sentence, and then connecting adjacent tokens from left to right, or right to left.
            This is what happened when a severe constraint, e.g., the maximum stack depth of 1 is imposed.
            Since the position of root token is in most cases correctly identified, the score gets relatively higher.
            On the other hand, when relaxing the constraint, the model also try to explore parses in which the root token is not the beginning/end of the sentence, but internal positions, and the model fail to find the head final pattern of Japanese.

            This Japanese result suggests that sometimes our stack depth constraint helps learning even when the imposed stack depth bound does not fit well to the syntax of the target language, though the learning behavior differs from our expectation.
            In this case, the model does not capture the syntax correctly in the sense that Japanese sentences cannot be parsed with a severe stack depth bound, but the model succeeded to find syntactic patterns that are a very rough approximation of the true syntax, resulting in a higher score.
 \item[Finnish] Finnish is an inflectional language with rich morphologies and with little function words.
            This is essentially the reason for consistent lower accuracies of Finnish even when the constraint on root POS tags is given.
            Recall that all our models are imposed the function word constraint (Section \ref{sec:ind:param-const}).
            Though our primary motivation to introduce this constraint is to alleviate problems in evaluation, it also greatly reduces the search space if the ratio of function words is high.
            Also at test time, a higher ratio of function words indicates a higher chance of correct attachments since the head candidates for a function word is limited to other content words.\footnote{Recall that although we remove constraints at test time the model rarely find a parse with function words at internal positions since the model is trained to avoid such parses.}
            Below is an example of a dependency tree in Finnish treebank:

            \begin{center}
            {\small
            \definecolor{g70}{gray}{0.6}
            \newcommand{\gw}[1]{\color{g70}{#1}}
               \begin{dependency}[theme=simple,thick]
               \begin{deptext}[column sep=0.2cm]
                \gw{Liikett\"a} \& \gw{ei} \& \gw{ole} \& \gw{ei} \& \gw{toimintaa} \\
                {\sc noun} \& {\sc verb} \& {\sc verb} \& {\sc verb} \& {\sc noun} \\
               \end{deptext}
               \depedge{3}{1}{}
               \depedge{3}{2}{}
               \deproot[edge unit distance=1.5ex]{3}{}
               \depedge{3}{4}{}
               \depedge{4}{5}{}
               \end{dependency}
             }
            \end{center}

            This sentence comprises of {\sc noun} and {\sc verb} only, and there are a lot of similar sentences.
            This example also explains the reason why the performance of Finnish is still low with the root POS constraints.
            Table \ref{ind:tab:func} lists the statistics about the ratio of function words in the training corpora.
            We can see that it is only Finnish that the ratio of function words is less than 10\%.
            Also, the ratio in Japanese is very high.
            This probably explains the reason for relatively high overall scores of Japanese.
            Thus, the variation of the scores across languages in the current experiment is largely explained by the ratio of function words in each language.
 \item[Greek] In Figure \ref{tab:ind:ud15-none}, the scores on Greek with the stack depth constraints are consistently worse than the uniform baseline.
            Though overall scores are low, the situation largely changes with the root POS constraints, and with them the scores get stable.

            A possible explanation for these exceptional behaviors might be the relatively small ratio of function words (Table \ref{ind:tab:func}) in the data along with the small size of the training data (Table \ref{tab:ind:stat-ud15}), both of which could be partially alleviated with the root POS constraints.

            More linguistically intuitive explanation might be that Greek is a relatively free word order language and our structural constraints do not work well for guiding the model for finding such grammars.
            However, to make such conclusion, we have to set up experiments more carefully, e.g., by eliminating the bias caused by the smaller size of the data.
            We thus leave it our future work to discuss the limitation of the current approach with a typological difference in each language.
 \end{description}

 \begin{table}[t]
  \centering
   \begin{minipage}{.35\textwidth}
    \centering
  \begin{tabular}[t]{l|l}
   & Ratio (\%) \\\hline
basque & 26.57 \\
bulgarian & 25.88 \\
croatian & 24.55 \\
czech & 20.09 \\
danish & 30.66 \\
english & 27.98 \\
finnish & {\bf 9.63} \\
french & 37.84 \\
german & 32.09 \\
greek & 16.94 \\
  \end{tabular}
  \end{minipage}
   \begin{minipage}{.35\textwidth}
    \centering
  \begin{tabular}[t]{l|l}
   & Ratio (\%) \\\hline
hebrew & 32.29 \\   
hungarian & 23.76 \\
indonesian & 19.68 \\
irish & 36.09 \\
italian & 37.73 \\
japanese & {\bf 45.14} \\
persian & 23.25 \\
spanish & 36.99 \\
swedish & 29.64 \\
  \end{tabular}
  \end{minipage}
 \caption{Ratio of function words in the training corpora of UD (sentences of length 15 or less).}
 \label{ind:tab:func}
\end{table}
 }

\subsection{Google Universal Dependency Treebanks}

So far our comparison is limited in the models of our baseline DMV model with some constraints.
Next we see the relative performance of this approach compared to the current state-of-the-art unsupervised systems on another dataset, Google treebanks.

Table \ref{tab:ind:google} shows the result.
The scores of the other systems are borrowed from \newcite{grave-elhadad:2015:ACL-IJCNLP}.
In this experiment, we only focus on the settings where the root word is restricted with the verb-otherwise-noun constraint.
Among our structural constraints, again our stack depth constraints perform the best.
In particular the scores with $C=2$ are stable across languages.

\begin{table}[t]
 \centering
 \begin{tabular}[t]{lcccccc}\hline
&Unif.&$C=2$&$C=3$&$\beta_{len} = 0.1$&Naseem10&Grave15 \\\hline
German&64.5&64.3&64.6&62.5&53.4&60.2 \\
English&57.9&59.5&57.9&56.9&66.2&62.3 \\
Spanish&68.2&71.1&70.5&69.6&71.5&68.8 \\
French&69.2&69.6&70.1&66.4&54.1&72.3 \\
Indonesian&66.8&67.4&66.0&66.7&50.3&69.7 \\
Italian&43.9&67.3&65.9&44.0&46.5&64.3 \\
Japanese&47.5&54.5&47.4&47.6&58.2&57.5 \\
Korean&28.6&30.7&28.3&43.2&48.8&59.0 \\
Br-Portuguese&63.0&67.1&62.7&62.6&46.4&68.3 \\
Swedish&67.4&67.9&67.3&66.4&64.3&66.2 \\\\
Avg&57.7&62.0&60.1&58.6&56.0&64.8 \\\hline
 \end{tabular}
 \caption{Attachment scores on Google universal treebanks (up to length 10).
 All proposed models are trained with the verb-otherwise-noun constraint.
 Naseem10 = the model with manually crafted syntactic rules between POS tags (Naseem et al., 2010);
 Grave15 = also relies on the syntactic rules but is trained discriminatively (Grave and Elhadad, 2015).}
 \label{tab:ind:google}
\end{table}

All our method outperforms the strong baseline model of \newcite{naseem-EtAl:2010:EMNLP}, which encodes manually crafted rules (12 rules) such as {\sc verb $\rightarrow$ noun} and {\sc adp $\rightarrow$ noun} via the posterior regularization method \cite{journals/jmlr/GanchevGGT10}.
Compared to this, our baseline method uses fewer syntactic rules via parameter-based constraints, in total 5 (3 for function words and the verb-otherwise-noun constraint) and is much simpler than their posterior regularization method.

\newcite{grave-elhadad:2015:ACL-IJCNLP} is a more sophisticated model, which utilizes the same syntactic rules as the Naseem et al.'s method.
Our models do not outperform this model, though it is only Korean that ours do not perform competitively to their model.

\section{Discussion}
We found that our stack depth constraints improve the performance of unsupervised grammar induction across languages and datasets in particular when some seed knowledge about grammar is given to the model.
However, we also find that in many languages the improvements from the no structural constraint baseline becomes small when such knowledge is given.
Also, the performance reaches to the current state-of-the-art method, which utilizes much more complex machine learning techniques as well as manually specific syntactic rules.
We thus speculate that our models already reach some limitation under the current problem setting, that is, learning of dependency grammar from the POS input only.

Recall that the annotation of UD is content-head based and every function word is a dependent of the most closely related content word.
This means under the current first-order model on POS tag inputs, many important information that currently a supervised parser would exploit is abandoned.
For example, the model would collapses both some noun phrase and prepositional phrase into its head (probably {\sc noun}) while this information is crucial;
e.g., an adjective cannot be attached to a prepositional phrase, etc.
One way to exploit such clue for disambiguation is to utilize the boundary information \cite{spitkovsky-alshawi-jurafsky:2012:EMNLP-CoNLL,spitkovsky-alshawi-jurafsky:2013:EMNLP}.

Typically, unsupervised learning of structures gets more challenging when employing more structurally complex models.
However, one of our strong observations from the current experiment is that our stack depth constraint rarely hinders, i.e., does not decrease the performance.
Here we have focused on very simple generative model of DMV though it may be more interesting to see what happens when imposing this constraint on more structurally complex models on which learning is much harder.
There remains many rooms for further improvements and such type of study would be important toward one of the goals of unsupervised parsing of identifying which structure can be learned without explicit supervisions \cite{bisk-hockenmaier:2015:ACL-IJCNLP}.

\REVISE{
In this work, our main focus was the imposed structural constraints (or linguistic prior) themselves, and we did not care much about the method to encode these prior knowledge.
That is, our method to inject constraints was a crude way, i.e., via hard constraints in the E step (Eq. \ref{eqn:ind:cost}), and there exist more sophisticated methods with newer machine learning techniques.
Posterior regularization (PR) that we compared the performance with \cite{naseem-EtAl:2010:EMNLP} is one of such techniques.
The crucial difference between our imposing hard constraints and PR is that PR imposes constraints in expectation, i.e., every constraint becomes a {\it soft} constraint.

If we reformulate our model with PR, then that may probably impose a soft constraint, e.g., {\it the expected number of occurrence of center-embedding up to some degree in a parse is less than X}.
{\it X} becomes 1 if we try to resemble the behavior of our hard constraints, but could be any values, and such flexibility is one advantage, which our current method cannot appreciate.
Thus, from a machine learning perspective, one interesting direction is to compare the performances of two approaches with the (conceptually) same constraints.
}

\section{Conclusion}
In this study, we have shown that our imposed stack depth constraint improves the performance of unsupervised grammar induction in many settings.
Specifically, it often does not harm the performance when it already performs well while it reinforces the relatively poorly performed models (Table \ref{tab:ind:google}).
One limitation of the current approach is that the information that the parser can utilize is very superficial (i.e., the first order model on content-head based POS tags).
However, our positive results in the current experiment are an important first step for the current line of research and encourage further study on more structurally complex model beyond the simple DMV model.

\chapter{Conclusions}
\label{chap:conclusion}

Identifying universal syntactic constraints of language is an attractive goal both from the theoretical and empirical viewpoints.
To shed light on this fundamental problem, in this thesis, we pursued the {\it universalness} of the language phenomena of center-embedding avoidance, and its practical utility in natural language processing tasks, in particular unsupervised grammar induction.
Along with these investigations, we develop several computational tools capturing the syntactic regularities stemming from center-embedding.

The tools we presented in this thesis are left-corner parsing methods for dependency grammars.
We formalized two related parsing algorithms.
The transition-based algorithm presented in Chapter \ref{chap:transition} is an incremental algorithm, which operates on the stack, and its stack depth only grows when processing center-embedded constructions.
We then considered tabulation of this incremental algorithm in Chapter \ref{chap:induction}, and obtained an efficient polynomial time algorithm with the left-corner strategy.
In doing so, we applied {\it head-splitting} techniques \cite{Eisner:1999:EPB:1034678.1034748,eisner-2000-iwptbook}, with which we removed the spurious ambiguity and reduced the time complexity from $O(n^6)$ to $O(n^4)$, both of which were essential for our application of inside-outside calculation with filtering.

Dependency grammars were the most suitable choice for our cross-linguistic analysis on language universality, and we obtained the following fruitful empirical findings using the developed tools for them.
\begin{itemize}
 \item Using multilingual dependency treebanks, we quantitatively demonstrate the universalness of center-embedding avoidance.
       We found that most syntactic constructions across languages can be covered within highly restricted bounds on the degree of center-embedding, such as one, or zero, when relaxing the condition of the size of embedded constituent.
 \item From the perspective of parsing {\it algorithms}, the above findings mean that a left-corner parser can be utilized as a tool for exploiting universal constraints during parsing.
       We verified this ability of the parser empirically by comparing the growth of stack depth when analyzing sentences on treebanks with those of existing algorithms, and showed that only the behavior of the left-corner parser is consistent across languages.
 \item Based on these observations, we examined whether the found syntactic constraints help in finding the syntactic patterns (grammars) in the given sentences through experiments on unsupervised grammar induction, and found that our method often boosts the performance from the baseline, and competes with the current state-of-the-art method in a number of languages.
\end{itemize}

We believe the presented study will be the starting point of many future inquiries.
As we have mentioned several times, our choice of dependency grammars for the representation was motivated by its cross-linguistic suitability as well as its computational tractability.
Now we have evidences on the language universality of center-embedding avoidance.
We consider thus one exciting direction would be to explore unsupervised learning of constituent structures exploiting our found constraint, which has not been solved yet with traditional PCFG-based methods.
Note that unlike the dependency length bias, which is only applicable for dependency-based models, our constraint is conceptually free from grammar formalisms.


As we have mentioned in Chapter \ref{chap:1}, recently there has been a growing interest on the tasks of grounding, or semantic parsing.
Also, another direction of grammar induction in particular with more sophisticated grammar formalisms, such as CCG, has been initiated with some success.
There remains many open questions in these settings, e.g., on the necessary amount of seed knowledge to make learning tractable \cite{bisk-hockenmaier:2015:ACL-IJCNLP,AAAI159835}.
Arguably, the system with less initial seed knowledge or less assumption on the specific task is preferable (by keeping accuracies).
We hope our introduced constraint helps in reducing the required assumption, or improving the performance in those more general grammar induction tasks.
Finally, we suspect that the study of child language acquisition would also have to be discussed within the setting of grounding, i.e., with some kind of distant supervision or perception.
Although we have not explored the cognitive plausibility of the presented learning and parsing methods, our empirical finding that when learning from relatively short sentences a severe stack depth constraint (relaxed depth one) often improves the performance may become an appealing starting point for exploring computational models of child language acquisition with human-like memory constraints.

\appendix

\chapter{Analysis of Left-corner PDA}

\label{chap:app:analyze-pda}

\REVISE{
This appendix contains the proof of Theorem \ref{thoerem:bg:stack-depth}, which establishes the connection between the stack depth of the left-corner PDA and the degree of center-embedding.
For proving this, we first need to extend the notion of center-embedding for a {\it token} as follows:
\begin{newdef}
 \label{def:bg:embedding-token}
 Given a sentence and token $e$ (not the initial token) in the sentence, we write the derivation from $S$ to $e$ as follows with the minimal number of $\Rightarrow$:
 \begin{align}
  S \Rightarrow_{lm}^* v \underline{A} \alpha &\Rightarrow_{lm}^+ v w_1 \underline{B_1} \alpha \Rightarrow_{lm}^+ v w_1 \underline{C_1} \beta_1 \alpha \nonumber\\ 
  &\Rightarrow_{lm}^+ v w_1 w_2 \underline{B_2} \beta_1 \alpha
  \Rightarrow_{lm}^+ v w_1 w_2 \underline{C_2} \beta_2 \beta_1 \alpha \nonumber\\
  &\Rightarrow_{lm}^+ \cdots \label{eq:bg:embed-token-degree} \\
  &\Rightarrow_{lm}^+ v w_1 \cdots w_{m_e} \underline{B_{m_e}} \beta_{{m_e}-1} \cdots \beta_1 \alpha
  \Rightarrow_{lm}^{\color{red}{*}} v w_1 \cdots w_{m_e} \underline{C_{m_e}} \beta_{m_e} \beta_{{m_e}-1} \cdots \beta_1 \alpha \nonumber \\
  &\Rightarrow_{lm}^{\color{red}{*}} v w_1 \cdots w_{m_e} {\color{red}{x' \underline{E}}} \beta_{m_e} \beta_{{m_e}-1} \cdots \beta_1 \alpha
  \Rightarrow_{lm} v w_1 \cdots w_{m_e} {\color{red}{x' e}} \beta_{m_e} \beta_{{m_e}-1} \cdots \beta_1 \alpha \nonumber,
 \end{align}
 where the underlined symbol is the expanded symbol by the following $\Rightarrow$.
 Then, the degree of center-embedding for token $e$ is:
 \begin{itemize}
  \item $m_e - 1$ if $C_{m_e} = E$ (i.e., $x' = \varepsilon$) or $B_{m_e} = C_{m_e} = E$ (i.e., $\beta_{m_e} = x' = \varepsilon$); and
  \item $m_e$ otherwise.
 \end{itemize}
 The degree of the token at the beginning of the sentence is defined as 0.
\end{newdef}

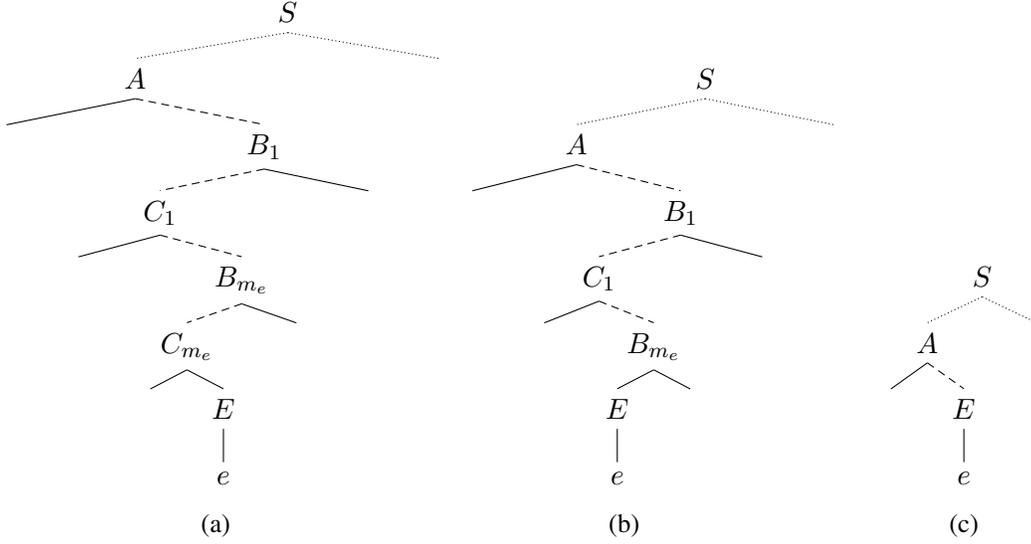
\begin{figure}[t]
 \begin{minipage}[t]{.4\linewidth}
  \centering
  \begin{tikzpicture}[sibling distance=12pt]
   \tikzset{level distance=25pt}
   \Tree[.$S$ \edge[densely dotted];
    [.$A$
     {\color{white}{A}}
     \edge[densely dashed];
     [.$B_1$
      \edge[densely dashed];
      [.$C_1$
       {\color{white}{A}}
       \edge[densely dashed];
       [.$B_{m_e}$
        \edge[densely dashed];
        [.$C_{m_e}$ {\color{white}{A}} [.$E$ $e$ ] ] {\color{white}{A}}
       ]
      ]
      {\color{white}{A}}
     ]
    ]
    \edge[densely dotted]; {\color{white}{A}}
   ]
  \end{tikzpicture}
  \subcaption{}\label{fig:bg:branch-for-token:a}
 \end{minipage}
 \begin{minipage}[t]{.3\linewidth}
  \centering
  \begin{tikzpicture}[sibling distance=12pt]
   \tikzset{level distance=25pt}
   \Tree[.$S$ \edge[densely dotted];
    [.$A$
     {\color{white}{A}}
     \edge[densely dashed];
     [.$B_1$
      \edge[densely dashed];
      [.$C_1$
       {\color{white}{A}}
       \edge[densely dashed];
       [.$B_{m_e}$
        [.$E$ $e$ ] {\color{white}{A}}
       ]
      ]
      {\color{white}{A}}
     ]
    ]
    \edge[densely dotted]; {\color{white}{A}}
   ]
  \end{tikzpicture}
  \subcaption{}\label{fig:bg:branch-for-token:b}
 \end{minipage}
 \begin{minipage}[t]{.28\linewidth}
  \centering
  \begin{tikzpicture}[sibling distance=12pt]
   \tikzset{level distance=25pt}
   \Tree[.$S$ \edge[densely dotted];
    [.$A$
     {\color{white}{A}}
     \edge[densely dashed];
     [.$E$ $e$ ]
    ]
    \edge[densely dotted]; {\color{white}{A}}
   ]
  \end{tikzpicture}
  \subcaption{}\label{fig:bg:branch-for-token:c}
 \end{minipage}
 \caption{Three types of realizations of Eq. \ref{eq:bg:embed-token-degree}.
 Dashed edges may consist of more than one edge (see Figure \ref{fig:bg:branch-hide} for example) while dotted edges may not exist (or consist of more than one edge).
 (a) $E$ is a right child of $C_{m_e}$ and thus the degree of center-embedding is $m_e$.
 (b) $E$ is a left child of $B_{m_e}$ (i.e., $C_{m_e} = E$) and the degree is $m_e-1$; when $m=1$, $C_1 = E$ and thus no center-embedding occurs.
 (c) $B_{m_e} = C_{m_e} = E$; note this happens only when $m_e=1$ (see body).
 }
 \label{fig:bg:branch-for-token}
\end{figure}
\begin{figure}[t]
 \centering
 \begin{minipage}[t]{.4\linewidth}
  \centering
  \begin{tikzpicture}[sibling distance=12pt]
   \tikzset{level distance=25pt}
   \Tree[.$A$
    [.$A'$ \edge[roof]; {~~~~~} ]
    [.$A_1$
     [.$A_1'$ \edge[roof]; {~~~~~} ]
     [.$A_2$
      [.$A_2'$ \edge[roof]; {~~~~~} ]
      $B_1$
     ]
    ]
   ]
  \end{tikzpicture}
  \subcaption{}\label{fig:bg:branch-hide:a}
 \end{minipage}
 \begin{minipage}[t]{.4\linewidth}
  \centering
  \begin{tikzpicture}[sibling distance=12pt]
   \tikzset{level distance=25pt}
   \Tree[.$B_1$
    [.$B_{11}$
     [.$B_{12}$
      $C_1$
      [.$B_{12}'$ \edge[roof]; {~~~~~} ]
     ]
     [.$B_{11}'$ \edge[roof]; {~~~~~} ]
    ]
    [.$B_{1}'$ \edge[roof]; {~~~~~} ]
   ]
  \end{tikzpicture}
  \subcaption{}\label{fig:bg:branch-hide:b}
 \end{minipage}
 \caption{(a) Example of realization of a path between $A$ and $B_1$ in Figure \ref{fig:bg:branch-for-token}. (b) The one between $B_1$ and $C_1$.}
 \label{fig:bg:branch-hide}
\end{figure}
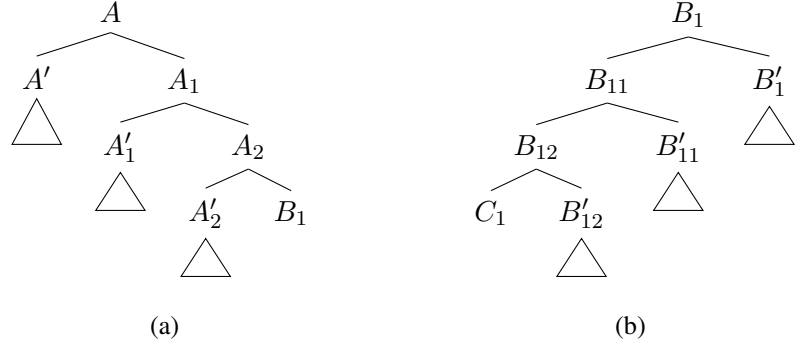

The main difference of Eq. \ref{eq:bg:embed-token-degree} in this definition from Eq. \ref{eq:bg:embed-depth} is that instead of expanding $C_{m_e}$ to string $x$, we take into account the right edges from $C_{m_e}$ to another nonterminal (preterminal) $E$, which should exist if the requisite in Eq. \ref{eq:bg:embed-depth} that $|x| \geq 2$ is satisfied.
Eq. \ref{eq:bg:embed-token-degree} explains a zig-zag path from the start symbol $S$ to a token (terminal $e$), which can be classified into three cases in Figure \ref{fig:bg:branch-for-token}.
Definition \ref{def:bg:embedding-token} determines the degree of center-embedding of that token depending on the structure of this path, which will be explained further below.
\begin{itemize}
 \item Given terminal $e$, the derivation of the form in Eq. \ref{def:bg:embedding-token} is deterministic, and each $B_{i}$ or $C_{i}$ is determined as a turning point on a zig-zag path;
       see e.g., a path from $c$ to $S$ in Figure \ref{fig:2:depth-2}.
       $A$ is the starting point, which might be identical to $S$.
       This is indicated with dotted edges in Figure \ref{fig:bg:branch-for-token};
       Figure \ref{fig:2:depth-2} is such a case.
 \item We allow an empty transition from $B_{m_e}$ to $C_{m_e}$ and $C_{m_e}$ to $E$ at the last transitions in Eq. \ref{def:bg:embedding-token}, which are important to define the degree in the case where the preterminal $E$ for token $e$ is not the right child of the parent ($C_{m_e} = E$), or no center-embedding is involved (i.e., $B_{m_e} = C_{m_e} = E$ and $m_e=1$).
       Figure \ref{fig:bg:branch-for-token:b} is the complete case without empty transitions, while Figures \ref{fig:bg:branch-for-token:b} and \ref{fig:bg:branch-for-token:c} involve empty transitions.
       Figure \ref{fig:bg:branch-for-token:b} with $m_e=1$, where the degree is $m_e - 1 = 0$, is an example of the parse in Figure \ref{fig:2:not-center-embedding}, where no center-embedding is involved ($b$ corresponds to $e$ in Figure \ref{fig:bg:branch-for-token:b}).
       Figure \ref{fig:bg:branch-for-token:c} is the case where the empty transition from $B_{m_e}$ to $C_{m_e}$ occurs.
       Note that this pattern only occurs for $m_e=1$, which includes the derivation to the last token of the sentence, where the path is always right edges from $S$ (or $A$) to $B_1$ (or $E$) and the degree is 0.
       This is because the derivation with an empty transition from $B_{m_e}$ to $C_{m_e}$ indicates $B_{m_e} = E$, though when $m>1$, it safely reduces to the case of $m_e-1$ in Figure \ref{fig:bg:branch-for-token:a}.
 \item Given a CFG parse, the maximum value of the degree in Definition \ref{def:bg:embedding-token} among tokens in the sentence is identical to the degree of center-embedding defined for that parse (Definition \ref{def:bg:embed-depth}).
\end{itemize}

We next prove the following lemma, which is closely connected to Theorem \ref{thoerem:bg:stack-depth}.
\begin{newlemma}
 \label{lemma:bg:before-shift}
 Given token $e$ (not the initial token) in the sentence, let $m'_e$ be the degree of center-embedding of it, and $\delta_e$ be the stack depth before it is shifted for recognizing that parse on the left-corner PDA.
 Then, $\delta_e = m'_e + 1$. 
\end{newlemma}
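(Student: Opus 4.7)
By Lemmas \ref{lemma:bg:shift-reduce} and \ref{lemma:bg:after-reduce}, since $e$ is not the initial token the action immediately preceding its shift is a reduce, so the stack consists entirely of incomplete items $A_1/B_1,\ldots,A_\delta/B_\delta$ (bottom to top) with $\delta=\delta_e$. The plan is to set up a natural correspondence between these stack items and the nested levels of the zig-zag decomposition in Eq.~\ref{eq:bg:embed-token-degree}, then read off $\delta_e$ using the case split of Figure~\ref{fig:bg:branch-for-token}.

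First I would establish a structural invariant by induction on the number of PDA actions executed so far: at every configuration reached immediately after a reduce, the chain $(A_i,B_i)_{i=1}^{\delta}$ traces the alternating right-then-left spine from $S$ down to the leftmost nonterminal of the current leftmost-derivation string, so that $A_1=S$, each $A_{i+1}$ lies on the leftmost descent starting at $B_i$, and each $B_i$ is the currently-predicted right sibling at that level. The invariant is preserved by Prediction (which pushes a fresh incomplete item whenever a new constituent is entered from the left) and by Composition (which collapses the top two items exactly when a predicted right sibling is recognized bottom-up, i.e., when an embedded constituent is exited).

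Applied to the configuration just before shifting $e$, the invariant identifies the chain $A_1,\ldots,A_\delta$ with the still-open ancestors $A,B_1,C_1,B_2,C_2,\ldots$ appearing in Eq.~\ref{eq:bg:embed-token-degree}. The case split of Figure~\ref{fig:bg:branch-for-token} then determines $\delta_e$: in case (a), the preterminal $E$ lies strictly inside the right descent of $C_{m_e}$, so an additional Prediction is required to enter $C_{m_e}$'s inner structure before $e$, giving $\delta_e=m_e+1=m'_e+1$; in cases (b) and (c), $E$ sits flush on the left edge of $B_{m_e}$, no such extra Prediction has occurred yet, and $\delta_e=m_e=(m_e-1)+1=m'_e+1$.

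The main obstacle will be the Composition step of the invariant, since this is the unique rule that decreases the stack depth mid-parse and thus the only place the chain can lose a link. I would argue that no Composition can occur among the reduce actions immediately preceding the shift of $e$ that prematurely collapses a level whose predicted constituent still dominates $e$---such a collapse would force a shorter zig-zag decomposition, contradicting the minimality of $m_e$ in Eq.~\ref{eq:bg:embed-token-degree}. A careful case analysis of the last reduce action, combined with Lemma~\ref{lemma:bg:shift-reduce}, should close this gap.
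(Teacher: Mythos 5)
Your overall strategy is the same as the paper's---identify the stack contents just before $e$ is consumed with the turning points $A, B_1, C_1, \ldots$ of the zig-zag path in Eq.~\ref{eq:bg:embed-token-degree} and then do the case split of Figure~\ref{fig:bg:branch-for-token}---but the step that carries all the weight is exactly the one you defer, and the sketch you give for it does not work. The paper proves the identification constructively: it simulates the PDA along $e$'s path and shows the stack is literally $A/B_1~C_1/B_2~\cdots~C_{m_e-1}/B_{m_e}~(C_{m_e}/E)$ (its Eq.~\ref{eq:bg:config-before}), the key point being that an item $C_i/B_{i+1}$ can only be discharged by {\sc Composition} once the \emph{left-corner subtree of} $B_{i+1}$ is complete, and that subtree contains $e$ (since $C_{i+1}$, hence $E$, is a left descendant of $B_{i+1}$); so none of these items can be popped before $e$ is read, while everything outside them has already been absorbed. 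Your substitute argument---that a premature {\sc Composition} ``would force a shorter zig-zag decomposition, contradicting the minimality of $m_e$''---is off target: the decomposition in Eq.~\ref{eq:bg:embed-token-degree} is fixed by the given parse and the token $e$ (the minimality there only pins down the derivation steps), and the PDA run for that parse is likewise fixed, so there is no alternative decomposition to derive a contradiction from. What is needed, and what you never establish, is the dominance argument above (or, equivalently, a fully proved invariant characterizing the stack after every reduce), so as written the proof has a genuine hole at its center.

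There is also a concrete error in the invariant you propose to induct on: the bottom item's recognized nonterminal is \emph{not} $S$ in general. After the very first reduce the stack is $B/C$ with $B$ the parent of the first preterminal, and more generally, whenever the recognized prefix is still a proper (transitive) left corner of $S$, the bottom item is rooted at that left corner, not at $S$ (this is why the paper's statement uses $A$ from Eq.~\ref{eq:bg:embed-token-degree} and explicitly argues that the $S$-to-$A$ portion contributes nothing to the depth, since left corners are completed before {\sc Prediction} introduces anything above them). Your invariant as stated is therefore false from the first step and the induction would not go through; fixing it requires precisely the kind of bookkeeping the paper does in its case-by-case simulation. The final arithmetic in your three cases ($\delta_e=m_e+1$ in case (a), $\delta_e=m_e$ in cases (b) and (c)) agrees with the paper, but it rests on the unproven identification.
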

\begin{proof}
 The path from $S$ to every token in the sentence except the beginning of the sentence can be classified into three cases in Figure \ref{fig:bg:branch-for-token}.
 We show that in every case between the stack depth $\delta_e$ before $e$ is shifted and the degree of center-embedding $m'_e$, $\delta_e = m'_e + 1$ holds.

 Note first that in all cases, the existence of edges from $S$ to $A$ (i.e., whether $S=A$ or not) does not affect the stack depth $\delta_e$.
 This is due to the basic order of building a parse in the left-corner PDA, which always completes a left subtree first, and then expands it with {\sc Prediction}.
 Thus, in the following, we ignore the existence of $S$, and focus on the stack depth at $e$ during building a subtree rooted at $A$.
 
 \begin{itemize}
  \item[(a)] The path from $C_{m_e}$ to $E$ exists (Figure \ref{fig:bg:branch-for-token:a}):
             In this case, the degree of center-embedding $m'_e = m_e$.
             Before shifting $e$, the following stack configuration occurs:
             \begin{equation}
              A/B_1 ~ \underbrace{C_1/B_2 ~ C_2/B_3 ~ \cdots ~ C_{m_e-1}/B_{m_e} ~ C_{m_e}/E}_{m_e}, \label{eq:bg:config-before}
             \end{equation}
             This can be shown as follows.

             The PDA first makes symbol $A/B_1$.
             Note that the path from $A$ to $B_1$ may contain many nonterminals as shown in Figure \ref{fig:bg:branch-hide:a}.
             During processing these nodes, the PDA first builds a subtree rooted at $A'$, then performs {\sc Prediction}, which results in symbol $A/A_1$.
             After that, $A'_1$ is built with $A/A_1$ being remained on the stack, and then connect them with {\sc Composition}, which results in $A/A_2$.
             Finally $A/B_1$ remains on the stack after repeating this process.

             Then, $C_1/B_2$ is made on the stack in the similar manner, but both symbols remain on the stack, since $A/B_1$ cannot be combined with another subtree unless it is complete (without a predicted node).
             There may exist many nonterminals between $B_1$ and $C_1$ as in Figure \ref{fig:bg:branch-hide:b}, but they does not affect the configuration of the stack;
             for example, $B_{12}$ is first introduced after a subtree rooted at $C_1$ is complete.
             This indicates that the stack accumulates symbols $C_i/B_{i+1}$ as the number of right edges between them increases.
             Finally, after building a subtree rooted at the left child of $C_{m_e}$, it is converted to $C_{m_e}/E$ by {\sc Prediction}, resulting in the stack configuration of Eq. \ref{eq:bg:config-before}.
             This occurs just before $e$ is shifted on the stack by {\sc Scan}.
  \item[(b)] $C_{m_e} = E$ (Figure \ref{fig:bg:branch-for-token:b}):
             In this case $m'_e = m_e - 1$.
             Before shifting $e$, the stack configuration is:
             \begin{equation}
              A/B_1 ~ \underbrace{C_1/B_2 ~ C_2/B_3 ~ \cdots ~ C_{m_e-1}/B_{m_e}}_{m_e-1}. \label{eq:bg:config-before:b}
             \end{equation}
             $e$ is shifted on this stack by {\sc Shift}, and then {\sc Composition} is performed between $C_{m_e-1}/B_{m_e}$ and $E$.
             Thus $\delta_e = m_e = m'_e + 1$.
  \item[(c)] $B_1 = C_1 = E$ (Figure \ref{fig:bg:branch-for-token:c}):
             The stack configuration before shifting $e$ is apparently $A/E$.
             $m' = 0$, so $\delta_e = m' + 1$ holds.
 \end{itemize}
\end{proof}

Now the proof of Theorem \ref{thoerem:bg:stack-depth} is immediate from Lemma \ref{lemma:bg:before-shift}.
\begin{proof}[Proof of Theorem \ref{thoerem:bg:stack-depth}]
 The relationship between Definitions \ref{def:bg:embed-depth} and \ref{def:bg:embedding-token} is that the maximum value of the degree given by Definition \ref{def:bg:embedding-token} for each token is the same as the degree of a parse.
 Given $e, \delta_e, m'_e$ in Lemma \ref{lemma:bg:before-shift}, $\delta_e = m'_e+1$.
 Let $e^* = \arg\max_{e} \delta_e$.
 ``The maximum value of the stack depth after a reduce transition'' in Theorem \ref{thoerem:bg:stack-depth} can be translated to the maximum value {\it before} a reduce transition, which is $\delta_{e^*}$.
 Thus, $\delta_{e^*} = m'_{e^*} + 1$.
 Arranging, $m'_{e^*} = \delta_{e^*} - 1$.
\end{proof}
}

\chapter{Part-of-speech tagset in Universal Dependencies}
\label{chap:app:ud-pos}
Universal Dependencies (UD) uses the following 17 part-of-speech (POS) tags.
\begin{itemize}
 \item {\sc adj}: adjective
 \item {\sc adp}: adposition
 \item {\sc adv}: adverb
 \item {\sc aux}: auxiliary verb
 \item {\sc conj}: coordinating conjunction
 \item {\sc det}: determiner
 \item {\sc intj}: interjection
 \item {\sc noun}: noun
 \item {\sc num}: numeral
 \item {\sc pron}: pronoun
 \item {\sc verb}: verb
 \item {\sc part}: particle
 \item {\sc pron}: pronoun
 \item {\sc sconj}: subordinating conjunction
 \item {\sc punct}: punctuation
 \item {\sc sym}: symbol
 \item {\sc x}: other
\end{itemize}

\bibliographystyle{acl}
\bibliography{dissertation}

\end{document}